\newtheorem{theorem}{Theorem}
\newcommand{\R}{\mathbb{R}}
\title{Toward Understanding Why Adam Converges Faster Than SGD for Transformers}
\author{
  Yan Pan\\
  Carnegie Mellon University\\
  \texttt{ypan2@andrew.cmu.edu}\\
  \And
  Yuanzhi Li\\
  Carnegie Mellon University\\
  \texttt{yuanzhil@andrew.cmu.edu}\\
}
\begin{document}

\maketitle

\begin{abstract}
    While stochastic gradient descent (SGD) is still the most popular optimization algorithm in deep learning, adaptive algorithms such as Adam have established empirical advantages over SGD in some deep learning applications such as training transformers.
    However, it remains a question that why Adam converges significantly faster than SGD in these scenarios.
    In this paper, we propose one explanation of why Adam converges faster than SGD using a new concept \emph{directional sharpness}.
    We argue that the performance of optimization algorithms is closely related to the directional sharpness of the update steps, and show SGD has much worse directional sharpness compared to adaptive algorithms.
    We further observe that only a small fraction of the coordinates causes the bad sharpness and slow convergence of SGD, and propose to use coordinate-wise clipping as a solution to SGD and other optimization algorithms.
    We demonstrate the effect of coordinate-wise clipping on sharpness reduction and speeding up the convergence of optimization algorithms under various settings.
    We show that coordinate-wise clipping improves the local loss reduction when only a small fraction of the coordinates has bad sharpness.
    We conclude that the sharpness reduction effect of adaptive coordinate-wise scaling is the reason for Adam's success in practice and suggest the use of coordinate-wise clipping as a universal technique to speed up deep learning optimization.
\end{abstract}

\section{Introduction\label{sec:intro}}

Stochastic gradient descent (SGD)~\cite{robbins1951stochastic,bottou1991stochastic} is one of the most widely used optimization algorithms for deep learning, due to its simplicity and efficiency on various large-scale neural networks. However, in some tasks, such as training transformers~\cite{vaswani2017attention,devlin2019bert}, which are powerful models for natural language processing and other domains, SGD often performs poorly compared to adaptive variants of stochastic gradient methods. Adaptive algorithms, such as Adagrad~\cite{duchi2011adaptive}, Adam~\cite{kingma2015adam}, and AMSGrad~\cite{reddi2018convergence}, adjust the learning rate for each parameter based on the magnitude and history of the gradients, which can help them exploit the local geometry of the objective function and escape from saddle points or plateaus. While adaptive algorithms have shown empirical advantages over SGD in many applications~\cite{goodfellow2016deep,zhang2020adaptive,duchi2013estimation}, the theoretical understanding of their superior performance in these tasks is limited~\cite{zhang2020adaptive,danilova2022recent}. The best known non-convex convergence rate for AMSGrad~\cite{reddi2018convergence} only matches the best convergence rate of SGD but does not improve upon it~\cite{zhou2018convergence, ghadimi2013stochastic}. While pursuing a better general convergence rate for adaptive algorithms is a possible but challenging direction, a more realistic and relevant question is what makes Adam so effective and SGD so ineffective on certain architectures and tasks, such as transformers on language tasks. We aim to identify some properties of transformers that give rise to this phenomenon, and to find some quantities that can indicate the performance of different optimization algorithms in practice. Such insights could then be used to guide the selection and design of faster and more robust optimization algorithms for deep learning.

In this paper, we propose one possible explanation for why Adam converges faster than SGD in practice, especially for transformers. We begin by revisiting a classic simple example. Consider minimizing the diagonal quadratic function $f : \R^d \to \R$ given by $f(x) = x^\top A x$, where $A_{11} = 100$ and $A_{ii} = 1$ for all $i > 1$. The gradient is given by $\nabla f(x) = (200x_1, 2x_2, \dots, 2x_d)$ and the Hessian has spectral norm $200$. If we run gradient descent, then by standard convex optimization analysis, we can choose a learning rate at most $\frac{1}{100}$ for any initial point, which will result in slow convergence. However, if we run adaptive algorithms, signSGD, or simply clip the first coordinate, we can use a much larger learning rate and converge in a few steps. Although this example is much simpler than practical applications of adaptive algorithms, it illustrates the key idea that coordinate-wise scaling can help adaptive algorithms to adjust their step size on different coordinates and exploit the curvature of the function. We wonder if there are similar phenomena in real-world neural networks.

Inspired by this example, we study the local geometry of transformers in~\Cref{sec:directional_sharpness}. Instead of analyzing the global convergence and trajectory of optimization algorithms, we focus on the simpler question of finding a good update direction in a fixed local geometry. We decompose the goal of locally minimizing the objective function into two components: \emph{gradient correlation}, which measures the alignment of the update direction with the negative gradient, and \emph{directional sharpness}, which measures the curvature of the function along the update direction. We argue that the directional sharpness of the update direction is a more useful indicator of the performance of optimization algorithms, as high sharpness usually implies low performance.
%, and is more important than the gradient correlation in optimization. 
Empirically, we observe through experiments that the update directions of SGD have much higher directional sharpness compared to adaptive algorithms. By studying more algorithms, we observe that in general, algorithms with high directional sharpness converge much slower than adaptive algorithms, which typically have low directional sharpness. We also visualize the corresponding landscape along the update directions, and our results show that algorithms with low directional sharpness can generally achieve a better local loss reduction if optimal step sizes are chosen.

We investigate the cause of SGD's high directional sharpness and find that it is mainly due to the imbalanced distribution of gradient across coordinates. We observe that only a small fraction of the coordinates account for most of SGD's directional sharpness and we infer that it is because of the positive correlation between the Hessian and gradient coordinates. To address this issue, we propose to use coordinate-wise clipping as a simple and effective technique to improve the convergence and directional sharpness of optimization algorithms. The intuition behind clipping is that when a few coordinates have large gradients and bad smoothness, clipping prevents them from dominating the update direction and inflating the directional sharpness. Theoretically, we show that clipping improves the worst-case directional sharpness and enables a better local loss reduction with a larger step size. Empirically, we show that clipping can consistently reduce the directional sharpness, which often leads to a better local function reduction and improves the convergence speed of various optimization algorithms including adaptive algorithms. We demonstrate our findings through two experiments under different settings and show that our observations are robust across different tasks, models, and iterations. Based on the experiments, we argue that the landscape of optimization algorithms in local geometry is a useful proxy for the global convergence speed. We conclude that the \textbf{adaptive coordinate-wise scaling} of Adam can effectively balance the trade-off between optimizing gradient correlation and directional sharpness, and that this ability is the key to Adam's fast convergence in deep learning training. 

Our main contributions can be summarized as follows:
\vspace{-0.5em}
\begin{enumerate}
\item We identify directional sharpness as a key indicator of the performance of optimization algorithms in local geometry, and show that adaptive algorithms have low directional sharpness compared to SGD, especially when training transformers.
\item We propose coordinate-wise clipping as a simple, effective and \textbf{universal} technique to improve the directional sharpness and convergence speed of various optimization algorithms, and provide theoretical and empirical support for its benefits.
\end{enumerate}

\section{Related Work}
\textbf{General Convergence Rates of Adaptive Algorithms.}
Adaptive algorithms have long been studied and applied in deep learning~\cite{armijo1966minimization,polyakintroduction,duchi2011adaptive,kingma2015adam,tieleman2012lecture,reddi2018convergence}.
Several previous work has proved convex and non-convex convergence rates for Adagrad~\cite{duchi2011adaptive,li2019convergence,defossez2020simple,ward2019adagrad} and Adam or AMSGrad~\cite{de2018convergence,reddi2018convergence,fang2019convergence,chen2019convergence,zhou2018convergence,phuong2019convergence,zou2018convergence}.
The best known non-convex convergence rate for Adagrad is $O(\frac{\log T}{\sqrt{T}})$~\cite{li2019convergence,defossez2020simple} and $O(\frac{1}{\sqrt{T}})$ for AMSGrad~\cite{zhou2018convergence}. While the result by~\cite{zhou2018convergence} matches the non-convex convergence rate $O(\frac{1}{\sqrt{T}})$ of SGD~\cite{ghadimi2013stochastic}, there is no theoretical proof that Adam can converge asymptotically faster than SGD for general functions~\cite{danilova2022recent}.
Therefore, there is still a significant gap of work between the theoretical understanding of Adam and its empirical fast performance.

\textbf{Faster Convergence Rates Under Certain Settings.}
Another line of work focused on specific settings that Adam might work better than SGD.
Adaptive algorithms can work asymptotically better when the stochastic gradients are sparse~\cite{duchi2011adaptive,zhou2018convergence} or when there is a sparse set of noise~\cite{bernstein2018signsgd}.
\cite{zhang2020adaptive} proved that global clipping methods outperforms SGD when the stochastic gradients have heavy-tail noise, argued that Adam can also deal with heavy-tail noise effectively, and designed a new algorithm based on coordinate-wise clipping.

\textbf{Coordinate-Wise Clipping.}
Both global clipping~\cite{pascanu2013difficulty,zhang2020adaptive} and coordinate-wise clipping~\cite{goodfellow2016deep} are commonly used in practice with SGD.
While global norm clipping and normalization has been studied both theoretically and empirically~\cite{pascanu2013difficulty,levy2016power,hazan2015beyond,zhang2020adaptive}, there has been very little research on coordinate-wise clipping methods.
The most relevant work is \cite{zhang2020adaptive}, where the authors use coordinate-wise clipping to propose algorithms CClip and ACClip that works well on transformers in practice.
They use adaptive thresholds updated as momentum parameters and clip the coordinates to the corresponding thresholds.
\cite{zhang2020adaptive} shows that ACClip can perform empirically better than Adam on various transformers.

The coordinate-wise properties of the gradient and Hessian is often used in coordinate descent methods~\cite{wright2015coordinate,shi2016primer,richtarik2014iteration}.
Recently, due to its ability to deal with heavy-tailed noise~\cite{zhang2020adaptive}, coordinate-wise clipping has been applied in differentially private coordinate descent methods as it adapts to the coordinate-wise imbalance of the objective~\cite{mangold2022differentially,mangold2023high,pasande2022stochastic,pichapati2019adaclip}.
In particular, \cite{pasande2022stochastic} designs a strategy to choose an adaptive clipping threshold based on the mean of the gradients, while we use the distribution of the gradients to select a threshold that clips exactly a constant fraction of the gradients.

Our work is inspired by the use of coordinate-wise clipping in algorithm design in~\cite{zhang2020adaptive}, but we propose different explanations of the effectiveness of coordinate-wise clipping with new empirical evidence.
We highlight important differences between our work and the analysis of CClip and ACClip algorithms in~\cite{zhang2020adaptive}.
First, we propose different explanations for the performance of clipping. \cite{zhang2020adaptive} claims that clipping can deal with heavy-tailed noise in transformers, while we discover directional sharpness as a quantitative metric that directly relates to loss minimization and whose properties can be verified easily.
Second, while CClip and typical coordinate-wise clipping methods choose thresholds independent to the gradient, we choose an adaptive clipping threshold based on the distribution of the gradient.
Most importantly, while~\cite{zhang2020adaptive} focus on designing a new algorithm that can outperform Adam, we aim to propose coordinate-wise clipping as a meta algorithm, such that every optimization can use and improve its performance. Then, every algorithm can beat itself if clipping is added as a new unit, similar to the role of momentum in deep learning.
% TODO: performance of clipping algorithms

\section{Directional Sharpness of Optimization Algorithms\label{sec:directional_sharpness}}
In this section, we introduce a new measurement \textbf{directional sharpness} that indicates the performance of optimization algorithms.
We show that minimizing the term is extremely important to fast convergence of optimization algorithms and argue that it is closely related to the slow convergence of SGD.

\subsection{From Quadratic Taylor Expansion to Directional Sharpness}
In convex and non-convex optimization, a typical proof strategy is to consider the quadratic Taylor expansion of the objective function
\begin{equation}
    f(x_{t+1}) = f(x_t) + \underbrace{\nabla f(x_t)^\top (x_{t+1} - x_t)}_{\text{gradient correlation}} + \frac{1}{2}\underbrace{(x_{t+1} - x_t)^\top \nabla^2 f(x_t)(x_{t+1} - x_t)}_{\text{directional sharpness}} + O(\eta^3)
    \label{eq:taylor}
\end{equation}
where $x_{t+1} - x_t$ is the update step of the optimization algorithm and $\eta$ is the step size.
In order to get $f(x_{t+1}) \le f(x_t)$ in expectation, the optimization algorithm should minimize the two terms that depends on the update step, which we respectively denote \emph{gradient correlation}, which measures the alignment of the update direction with the negative gradient, and \emph{directional sharpness}, which measures the curvature of the function along the update direction.
To bound the second-order term, the default method in convex and non-convex optimization is to assume that the objective function is $L$-smooth~, which equivalently says $\|\nabla^2 f(x)\|_2 \le L$ for every $x$~\cite{bubeck2015convex}, where $\|\cdot\|_2$ is the spectral norm. 
The local Hessian spectral norm is often called the \emph{sharpness} of the function in deep learning~\cite{cohen2021gradient}.
If we have $L$ as the global upper bound on the spectral norm of the Hessian, we would have
\begin{equation}
    \frac{1}{2}(x_{t+1} - x_t)^\top \nabla^2 f(x_t)(x_{t+1} - x_t) \le \frac{1}{2}\|\nabla^2 f(x_t)\|_2\|x_{t+1} - x_t\|_2^2 \le \frac{L}{2} \|x_{t+1} - x_t\|_2^2.
\end{equation}
Then we have the following inequality, which is one of the most frequently used lemma in optimization proofs~\cite{bubeck2015convex,ghadimi2013stochastic,reddi2018convergence,zhou2018convergence}
\begin{equation}\label{eq:smooth}
    f(x_{t+1}) \le f(x_t) + \nabla f(x_t)^\top (x_{t+1} - x_t) + \frac{L}{2} \|x_{t+1} - x_t\|_2^2.
\end{equation}
If the function is $L$-smooth, the loss can decrease when the first-order term is negative and the norm of the update step is sufficiently small, since the second-order term is quadratic in the step size and the first-order term is linear. This can be guaranteed by using a small learning rate, and this leads to the convergence proofs of many optimization algorithms.
However, there are disadvantages of the smoothness assumption in theoretical proofs.
For example, the Hessian can adapt to the geometry of the trajectory and can vary significantly for different algorithms~\cite{cohen2021gradient,cohen2022adaptive}, so using a global upper bound in the convergence proof might not be fair for some algorithms.
Furthermore, even if the local geometry and Hessian are fixed, the update direction $x_{t+1} - x_t$ is also extremely important to minimizing the second-order term.
The current bound assumes that we are choosing the worst direction possible, but typically optimization algorithm might find better directions in probability.
We could probably believe that if a good direction is chosen, the second-order term can be much lower than the global upper bound, so the bound need not be tight.

Motivated by the definition of sharpness and the above observations, we define the \emph{directional sharpness} of a function $f$ at $x$ in the direction $v \in \mathbb{R}^d, \|v\|_2 = 1$ as $v^\top \nabla^2 f(x) v$.
The directional sharpness at $x_t$ in the update direction is extremely important to minimizing $f(x_{t+1})$.
Since directional sharpness is quadratic in the step size $\eta$ and gradient correlation is linear, if we consider \Cref{eq:taylor} as a quadratic function of $\eta$, a lower directional sharpness implies the potential to take a larger step size and possibly lead to a larger local reduction of the objective function.
In contrast, if the directional sharpness is large, we have no choice but to take a tiny step, as otherwise the loss would blow up due to the second-order term.
This implies that having a low directional sharpness can sometimes be a more desirable property for update directions than having a high gradient correlation.

Although our definition is motivated by the sharpness definition in deep learning, we highlight important differences between them.
Sharpness describes the \textbf{worst-case directional sharpness} and is the supremum of directional sharpness over all directions.
However, directional sharpness consider the sharpness in the specific \textbf{update direction} of an iterative optimization algorithm, and can be much lower than the sharpness if the direction is ``good''.
The concept of sharpness is typically associated with the landscape and generalization of neural networks, such as in Sharpness-Aware Minimization~\cite{foret2021sharpness} and Edge of Stability~\cite{cohen2021gradient,cohen2022adaptive}.
We are only interested in optimization of the objective function in the empirical risk minimization problem, or the loss on the training set.

\subsection{Directional Sharpness and Update Directions}
We study the update step of different optimization algorithms under the same trajectory and local geometry using pseudo-update steps to compute the momentum in order to rule out the impact of trajectory.
We compute the directional sharpness of different optimization algorithms and visualize the optimization landscape in the update direction of a variety of optimization algorithms in~\Cref{fig:landscape_example_sgd_machine_translation,fig:landscape_example_adam_machine_translation,fig:landscape_example_sgd_autoregressive} and~\Cref{tab:sharpness}.
The details of the experiment is described in~\Cref{sec:exp_main} and~\Cref{sec:exp_appendix}.
Empirically, we observe that there can be a significant gap between the directional sharpness in the update direction of different optimization algorithms.
In particular, the directional sharpness is \textbf{much lower for adaptive algorithms} than for SGD.

Based on the observation, we argue that minimizing the directional sharpness is more important for fast convergence of optimization algorithms as compared to minimizing the gradient correlation.
The update step of SGD has the best correlation with the actual gradient, so the loss decrease faster when the step size is very small, since in this case the linear term dominates the quadratic term in~\Cref{eq:taylor}.
However, because of the large directional sharpness, when the step size increases the quadratic term grows faster than the linear term, so the loss reaches the local minima in the direction after a very small step size.
For adaptive algorithms, the directional sharpness is much lower than SGD, so they have the potential to use a much larger step size and the optimal step could give a much lower loss compared to SGD.

\begin{figure}[h]
    \begin{subfigure}{0.32\textwidth}
        \includegraphics[width=\textwidth]{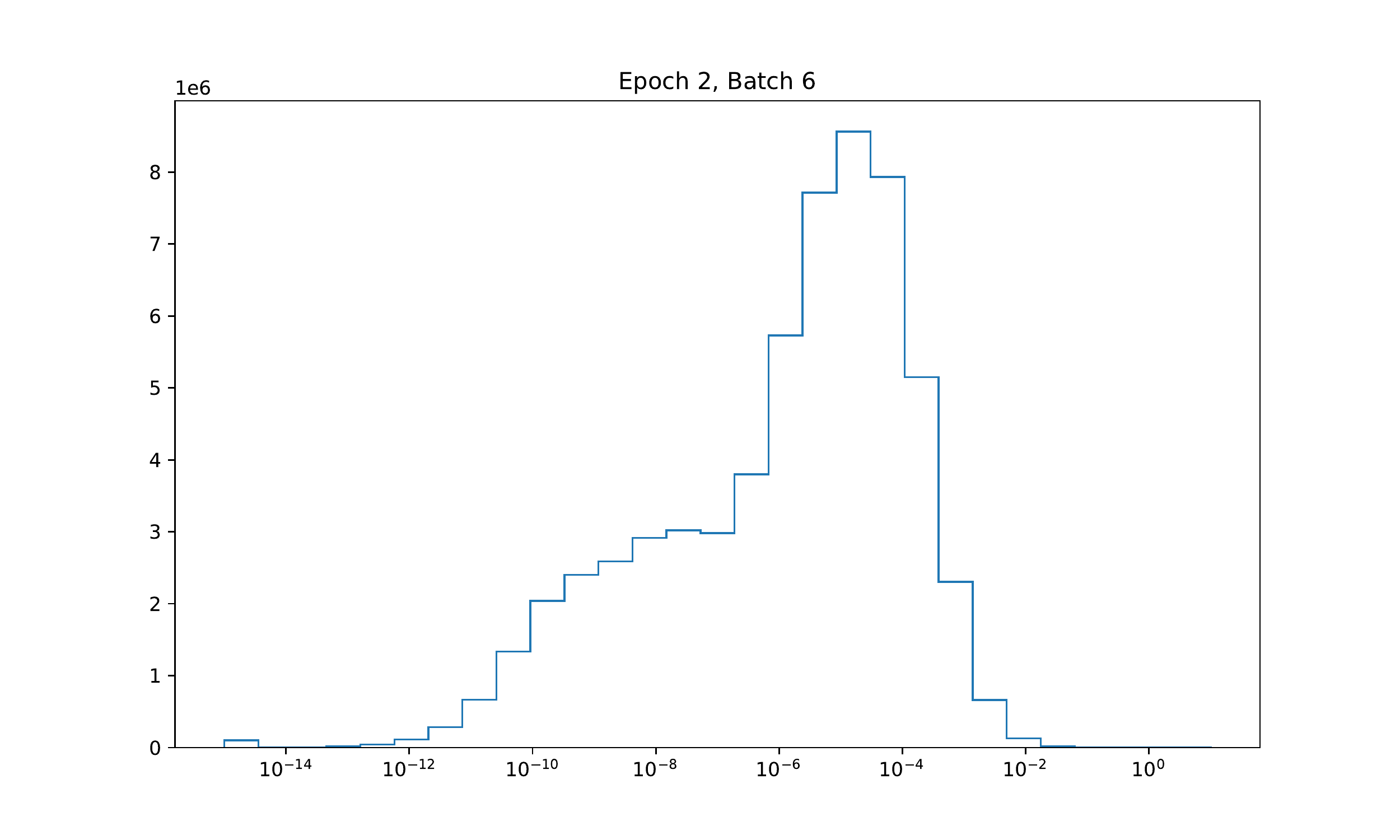}
        \caption{SGD}
    \end{subfigure}
    \begin{subfigure}{0.32\textwidth}
        \includegraphics[width=\textwidth]{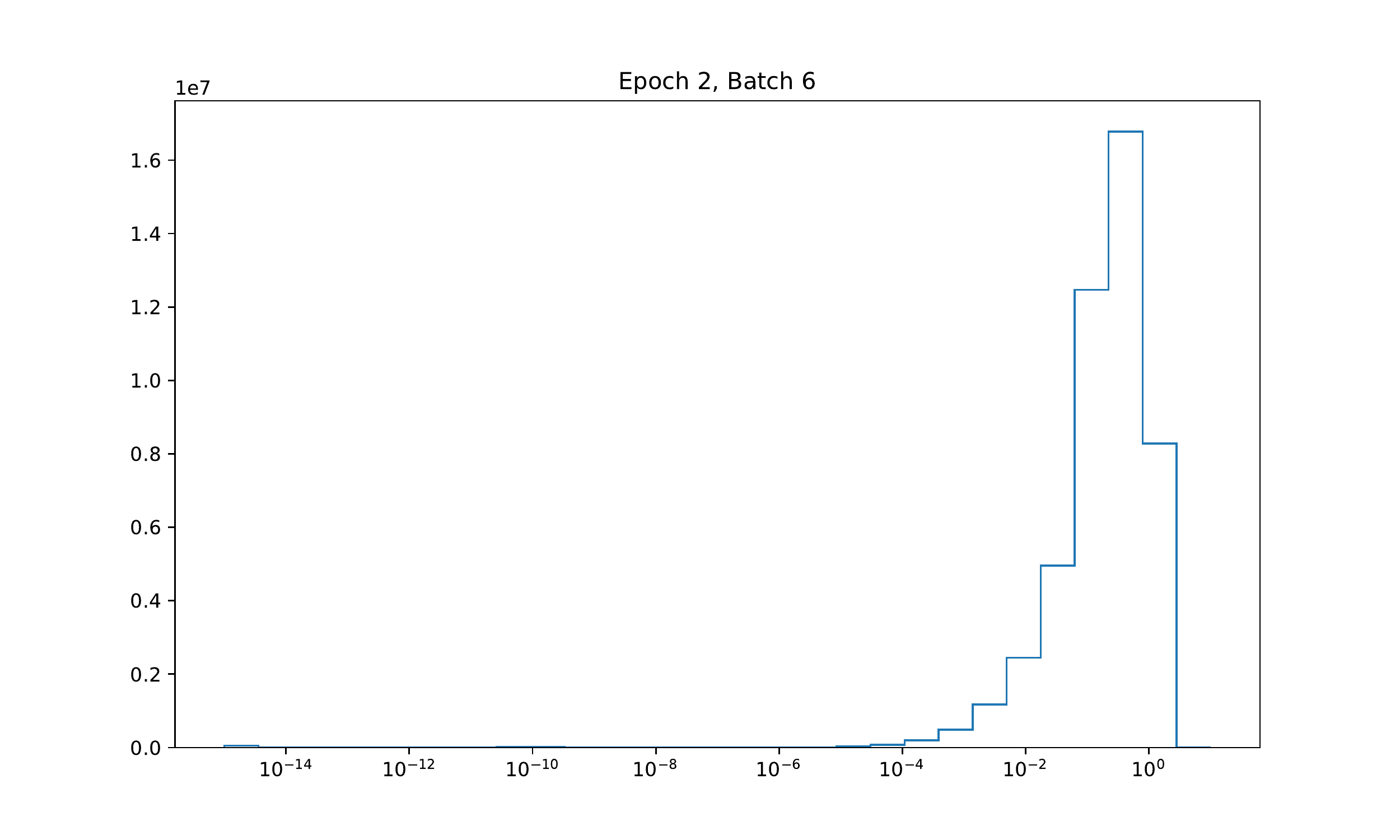}
        \caption{Adam}
    \end{subfigure}
    \begin{subfigure}{0.32\textwidth}
        \includegraphics[width=\textwidth]{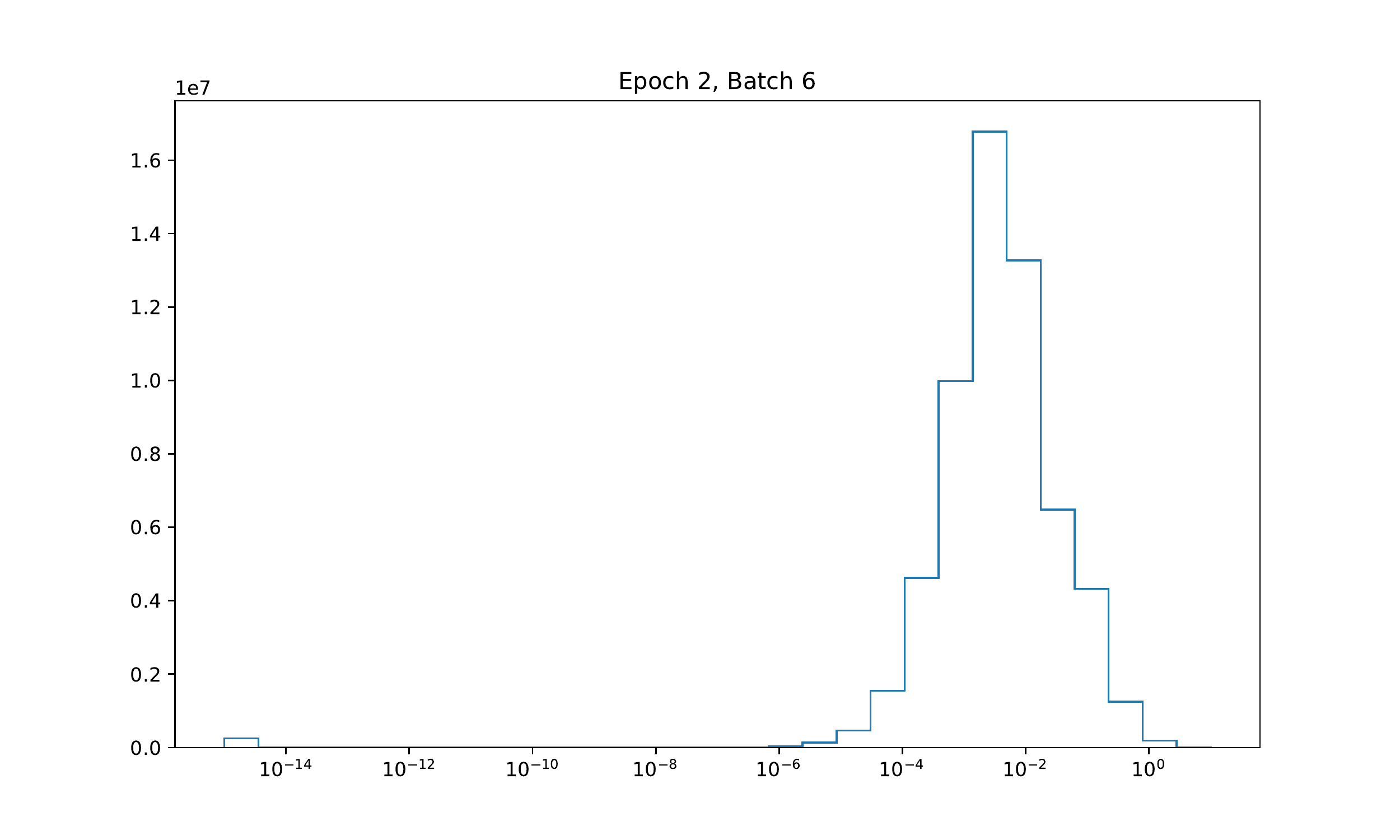}
        \caption{Adafactor}
    \end{subfigure}
    \caption{Histogram of update step distribution over coordinates for SGD, Adam, and Adafactor on machine translation.\label{fig:hist}}
\end{figure}
\begin{table}[h]
    \centering
    \begin{tabular}{|l|l|l|}
        \hline
        \textbf{Algorithm} & \textbf{Sharpness} & \textbf{Ratio to SGD}\\\hline
        SGD & $8.674583$ & $1$\\\hline
        SGD Clip 10\% & $0.527104$ & $0.060764$ \\\hline
        Adam & $0.252707$ & $0.029131$ \\\hline
        Adam Clip 50\% & $0.000574$ & $6.617 \times 10^{-5}$\\\hline
        Adafactor & $5.999 \times 10^{-5}$ & $6.916 \times 10^{-6}$ \\\hline
        Adafactor Clip 50\% & $2.051 \times 10^{-7}$ & $2.364 \times 10^{-8}$\\\hline
        Lion & $0.118202$ & $0.013626$ \\\hline
        Normalized SGD & $0.722253$ & $0.083261$ \\\hline
        Normalized SGD Clipping & $0.179141$ & $0.020651$ \\\hline
    \end{tabular}
    \caption{The sharpness of different optimization algorithms when trained on machine translation, in the same experiment and iteration as~\Cref{fig:landscape_example_sgd_machine_translation}.
    The directional sharpness of different optimization algorithms varies significantly.
    For example, the directional sharpness of SGD can be more than $10^7$ times the directional sharpness of Adafactor with clipping.
    Furthermore, clipping almost always improve the directional sharpness of optimization algorithms.
    \label{tab:sharpness}}
\end{table}

\begin{figure}[h]
    \centering
    \begin{subfigure}{\textwidth}
    \includegraphics[width=0.32\textwidth]{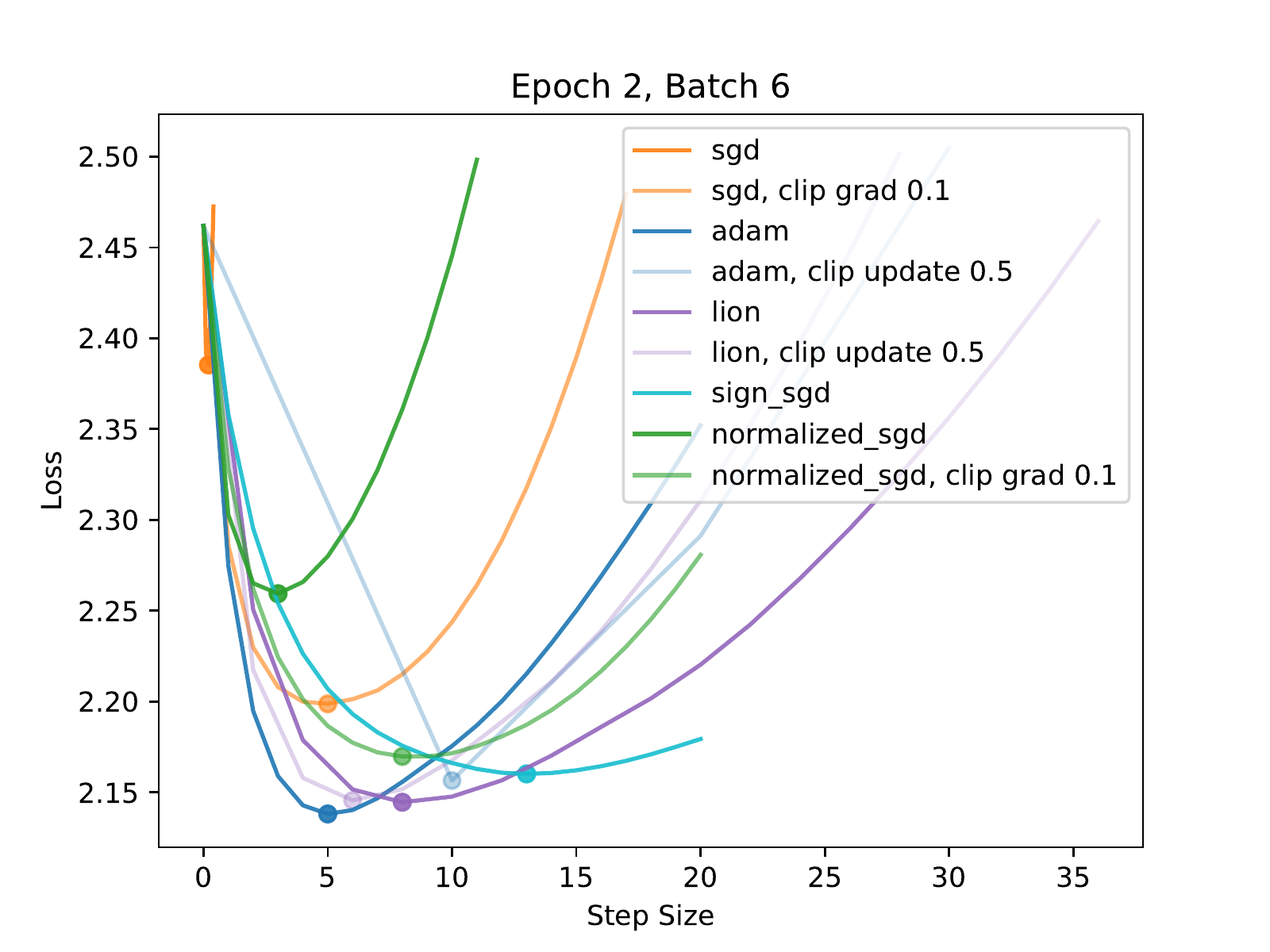}
    \includegraphics[width=0.32\textwidth]{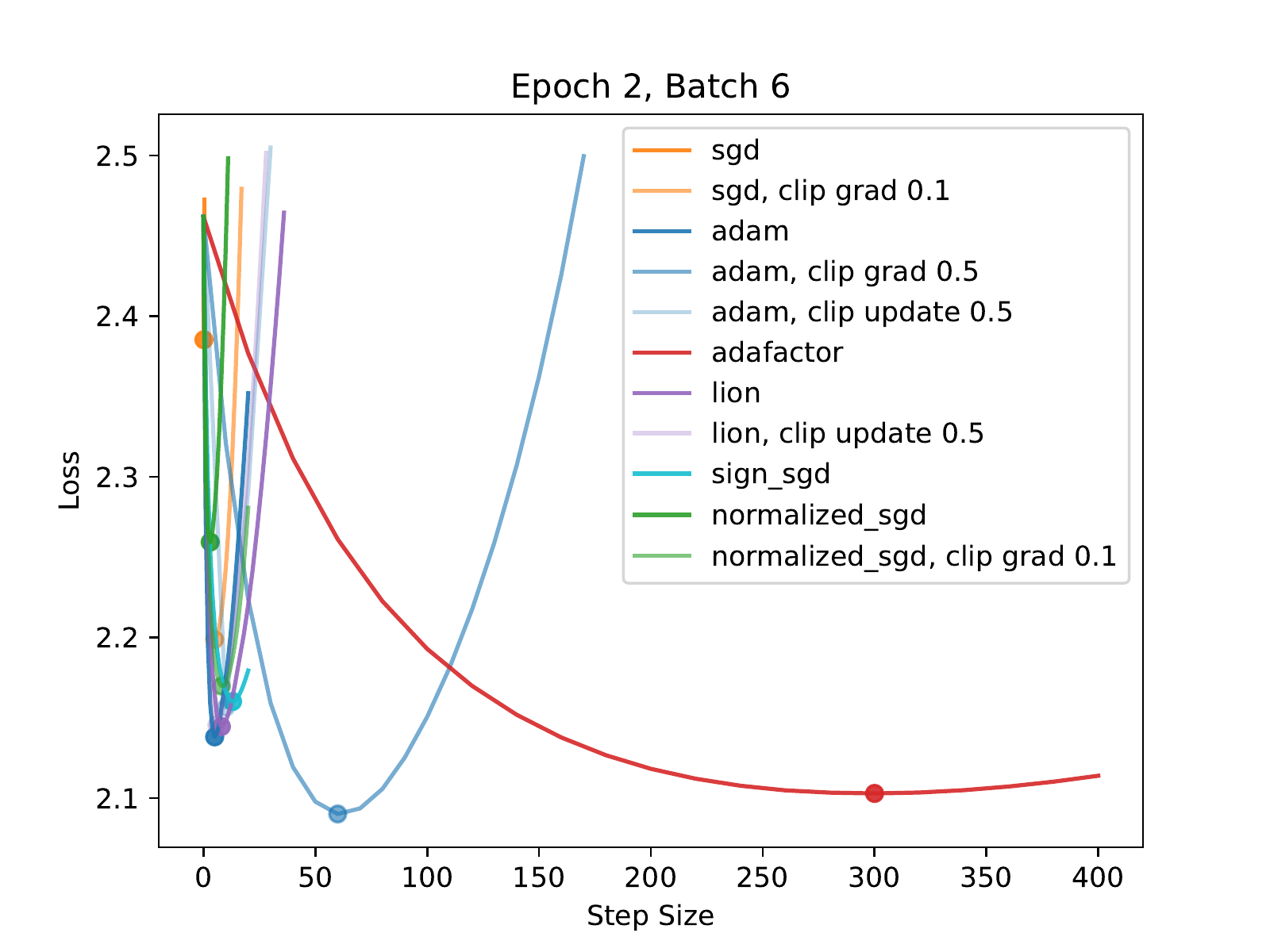}
    \includegraphics[width=0.32\textwidth]{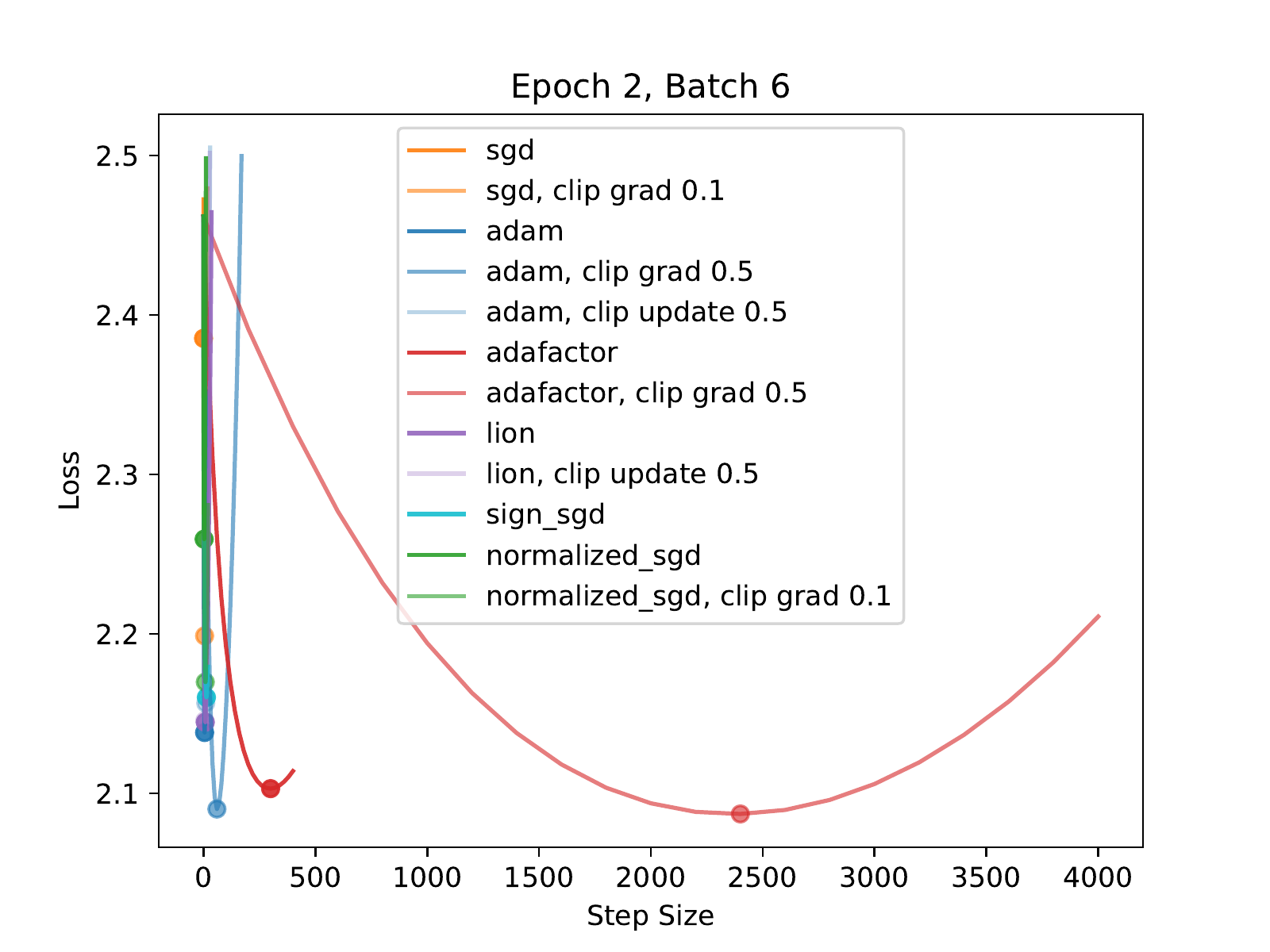}
    \end{subfigure}
    \begin{subfigure}{\textwidth}
    \includegraphics[width=0.32\textwidth]{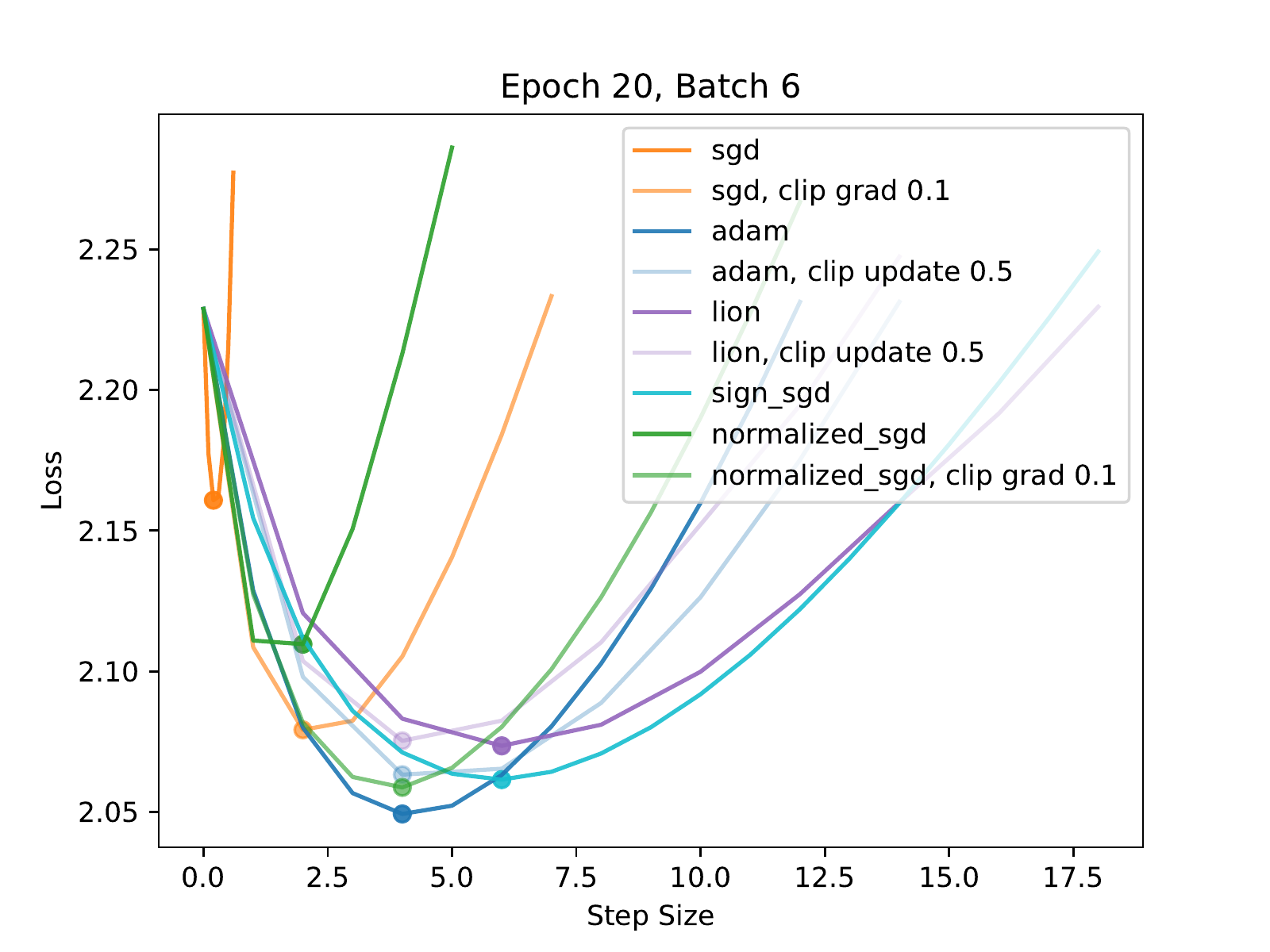}
    \includegraphics[width=0.32\textwidth]{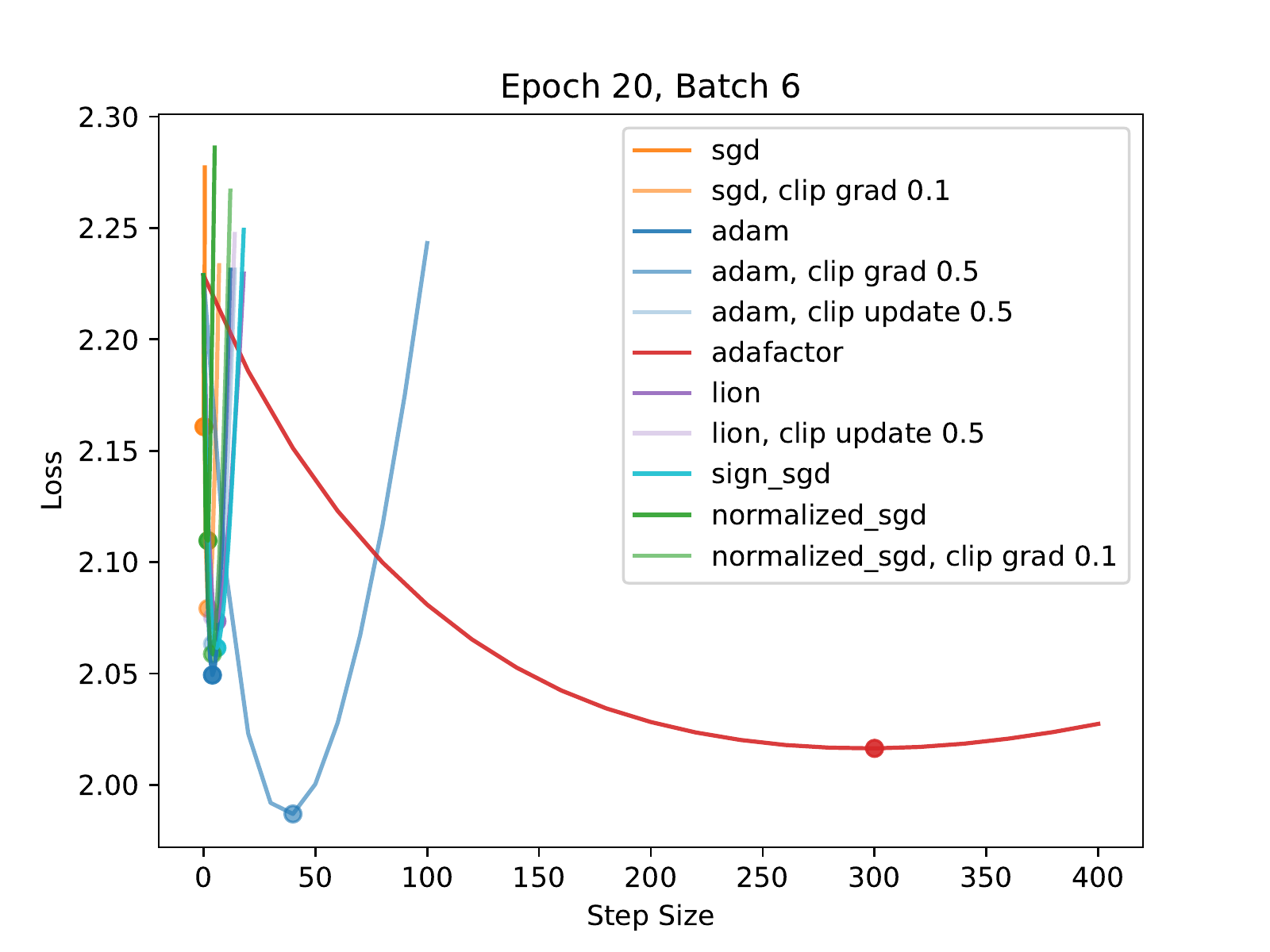}
    \includegraphics[width=0.32\textwidth]{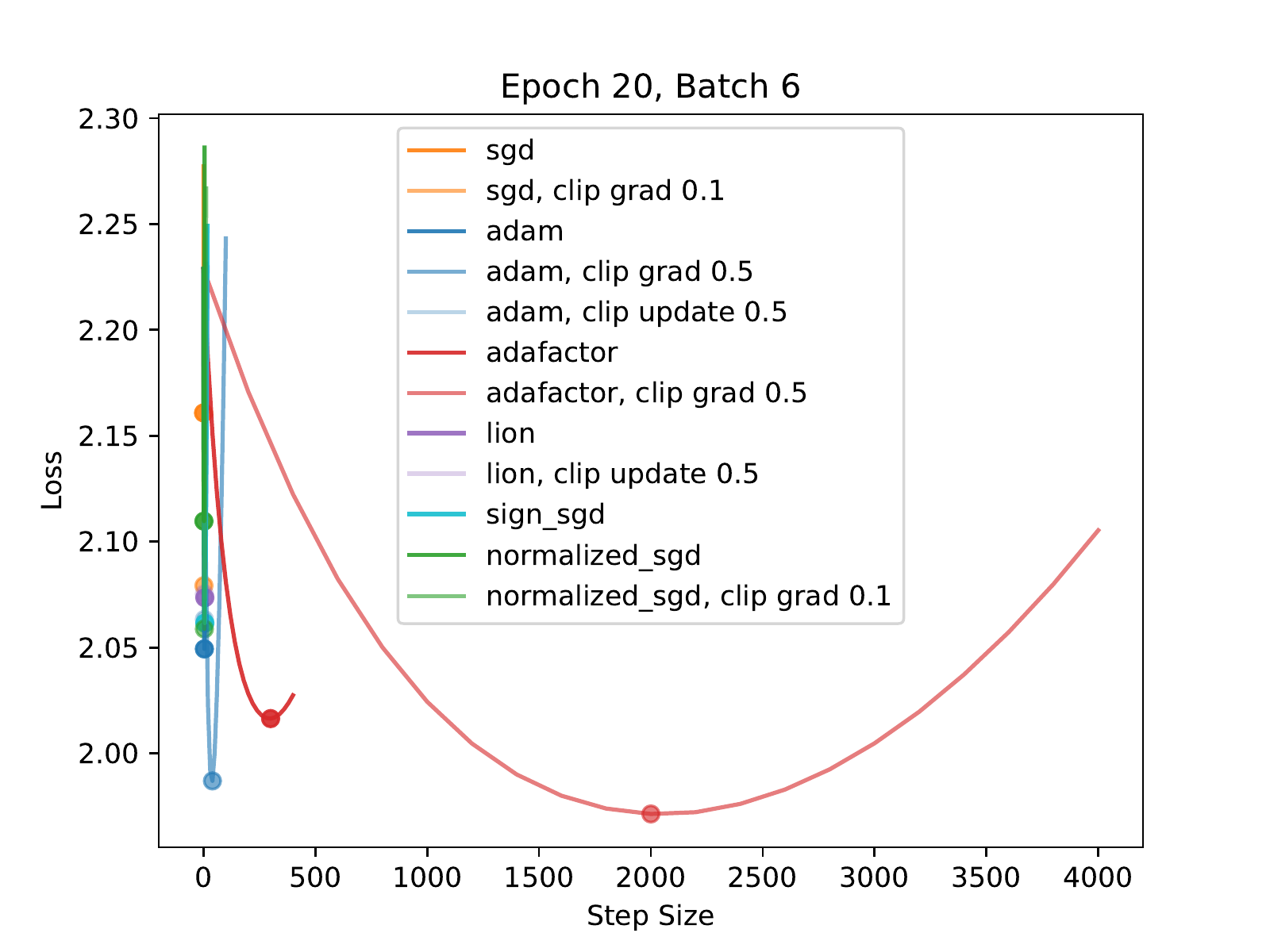}
    \end{subfigure}
    \caption{The loss landscape in different update directions on machine translation in SGD geometry. The step size is the learning rate normalized by the update step $\ell_2$ norm. The plots of clipped and unclipped variants of the same algorithm have the same color with different opacity.\label{fig:landscape_example_sgd_machine_translation}}
\end{figure}

\begin{figure}[h]
    \centering
    \begin{subfigure}{\textwidth}
    \includegraphics[width=0.32\textwidth]{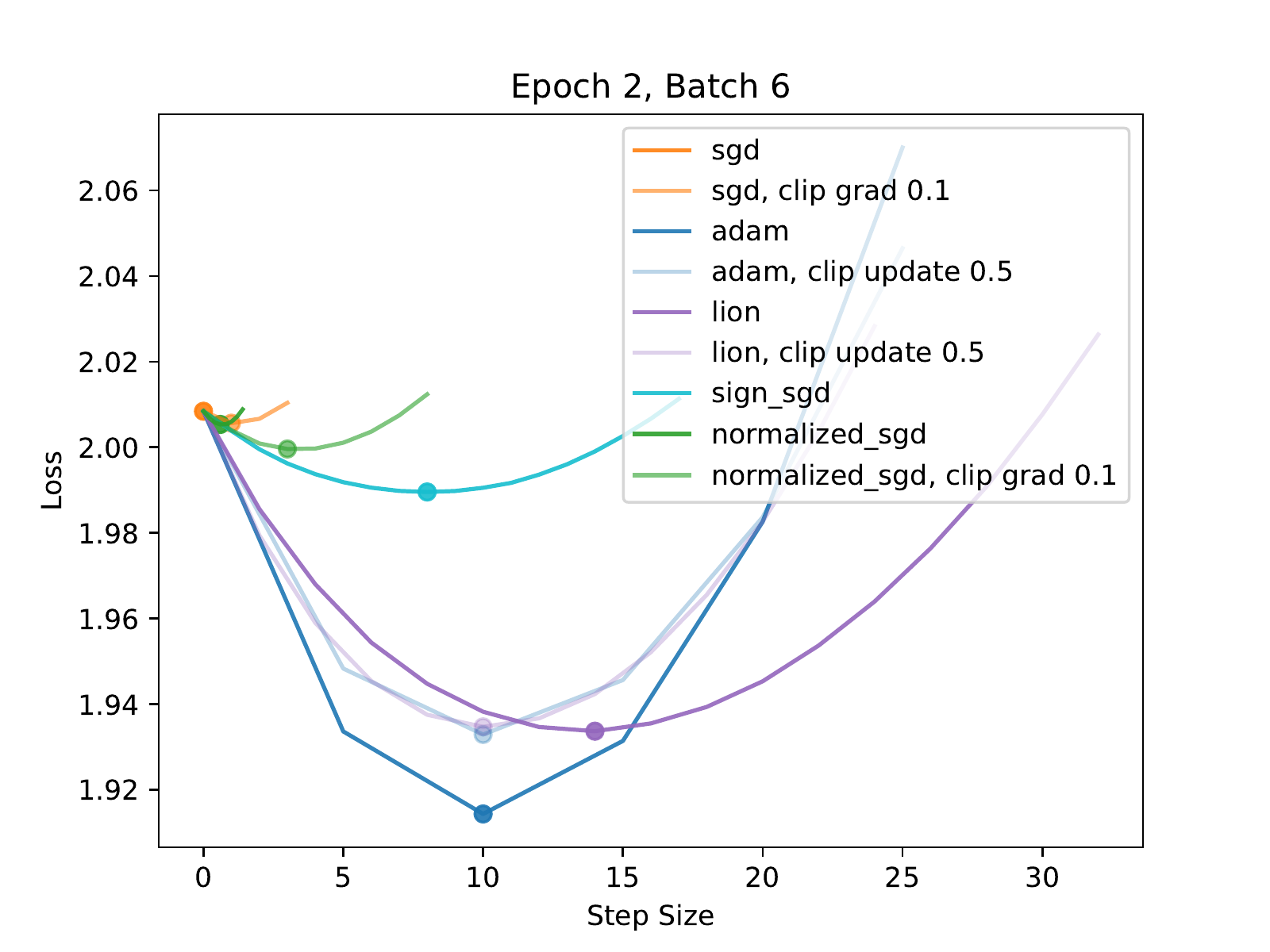}
    \includegraphics[width=0.32\textwidth]{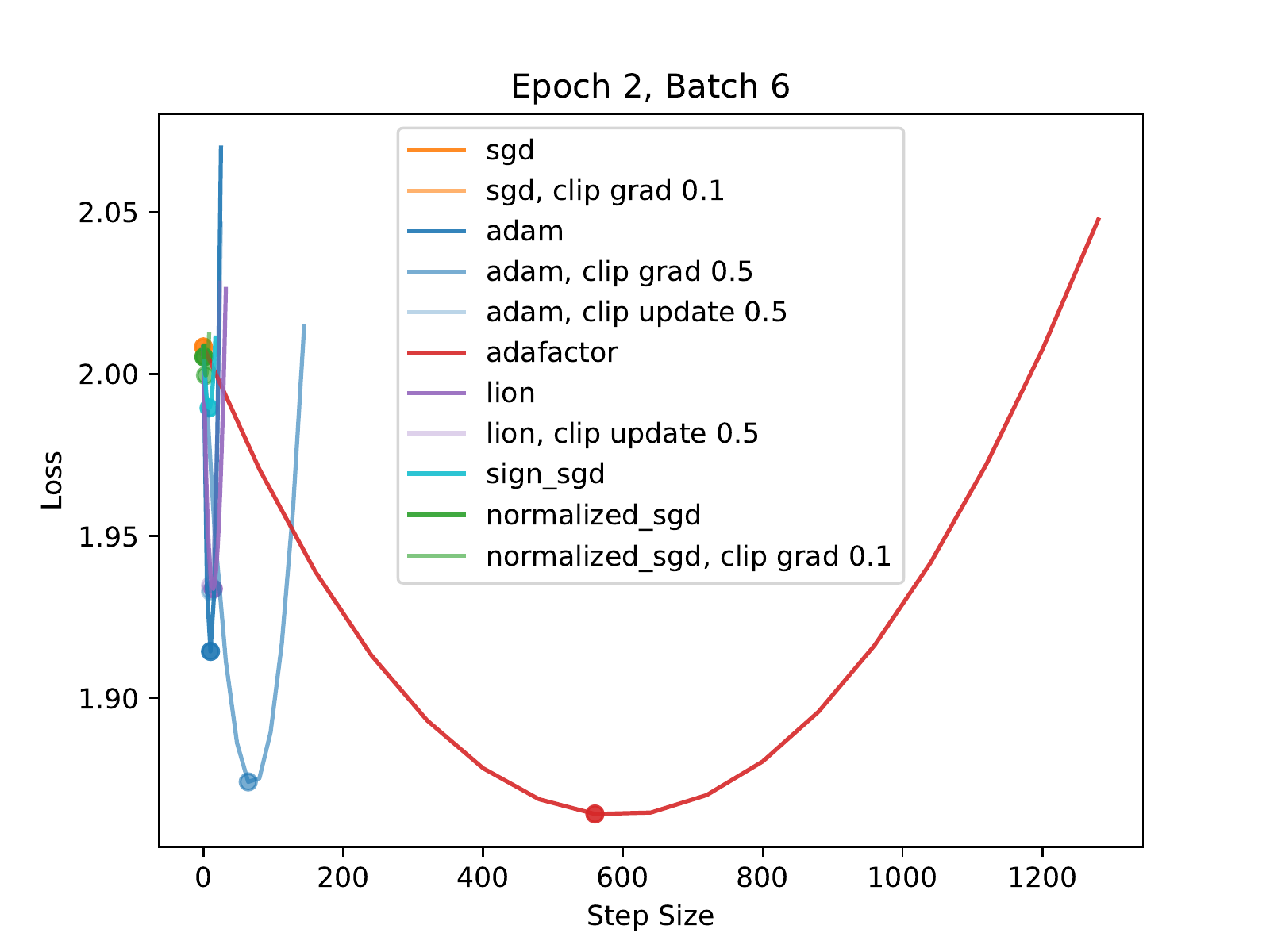}
    \includegraphics[width=0.32\textwidth]{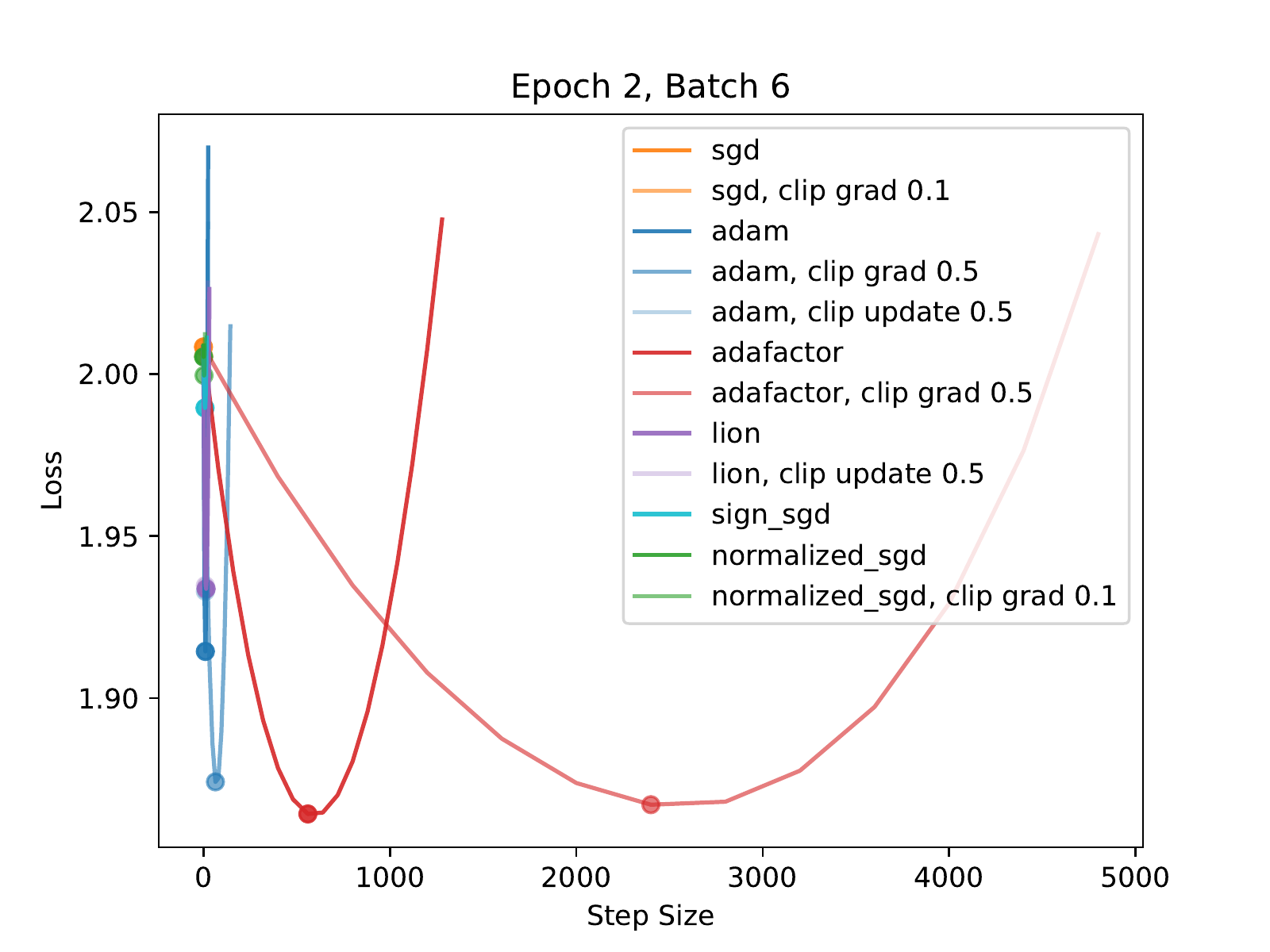}
    \end{subfigure}
    \caption{The loss landscape in different update directions on machine translation in Adam geometry.\label{fig:landscape_example_adam_machine_translation}}
\end{figure}

\begin{figure}[h]
    \centering
    \begin{subfigure}{\textwidth}
    \includegraphics[width=0.32\textwidth]{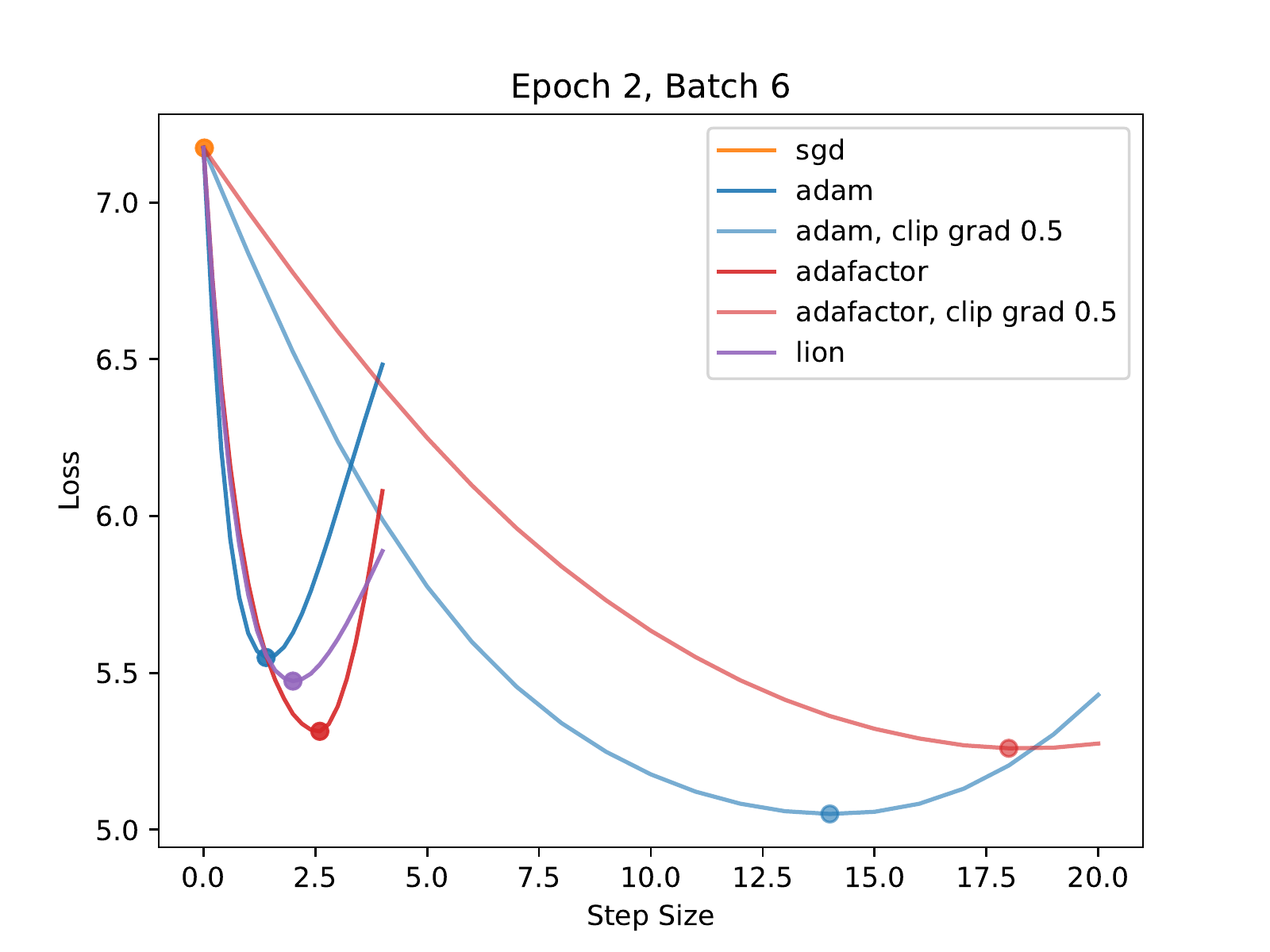}
    \includegraphics[width=0.32\textwidth]{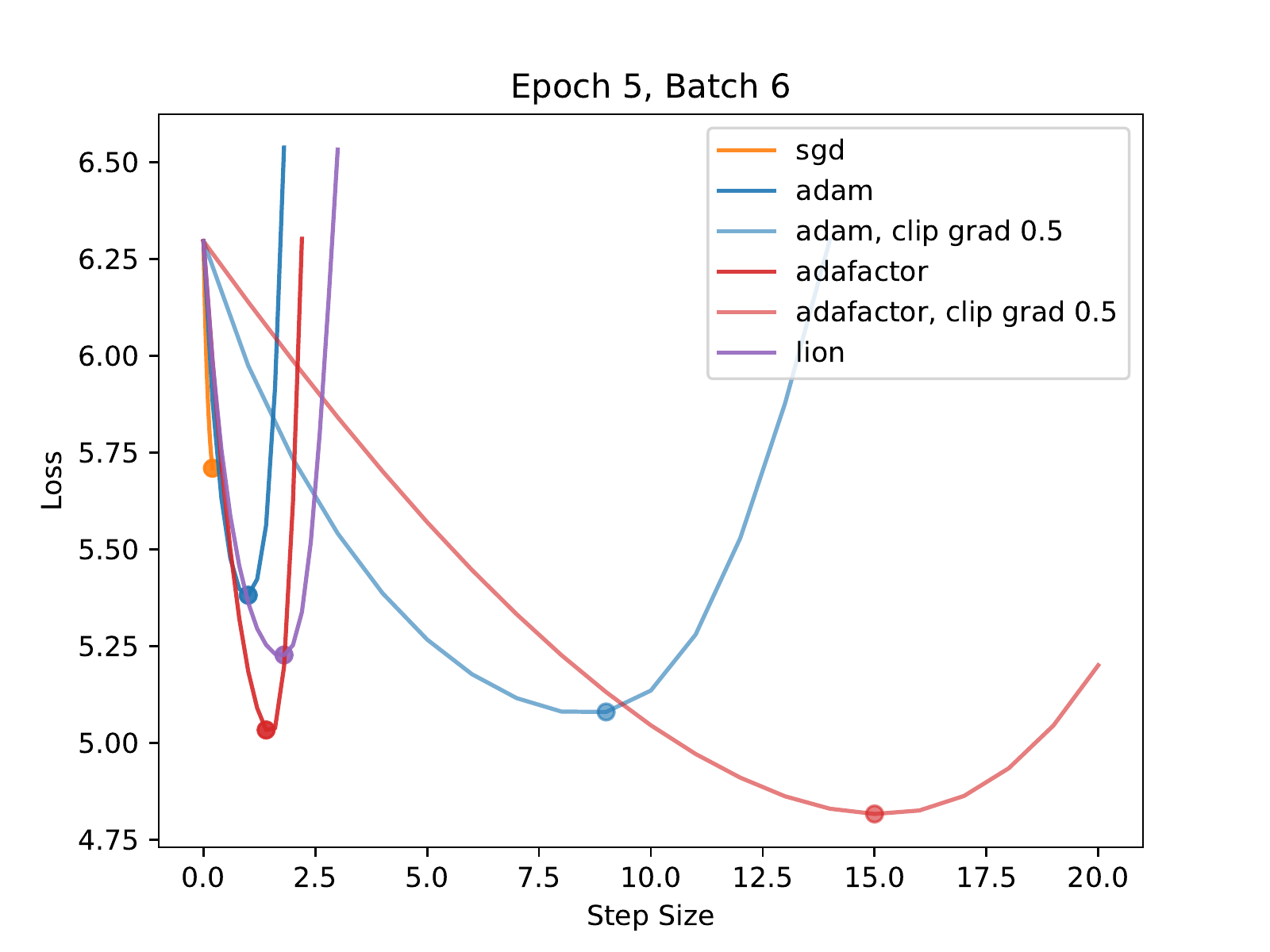}
    \includegraphics[width=0.32\textwidth]{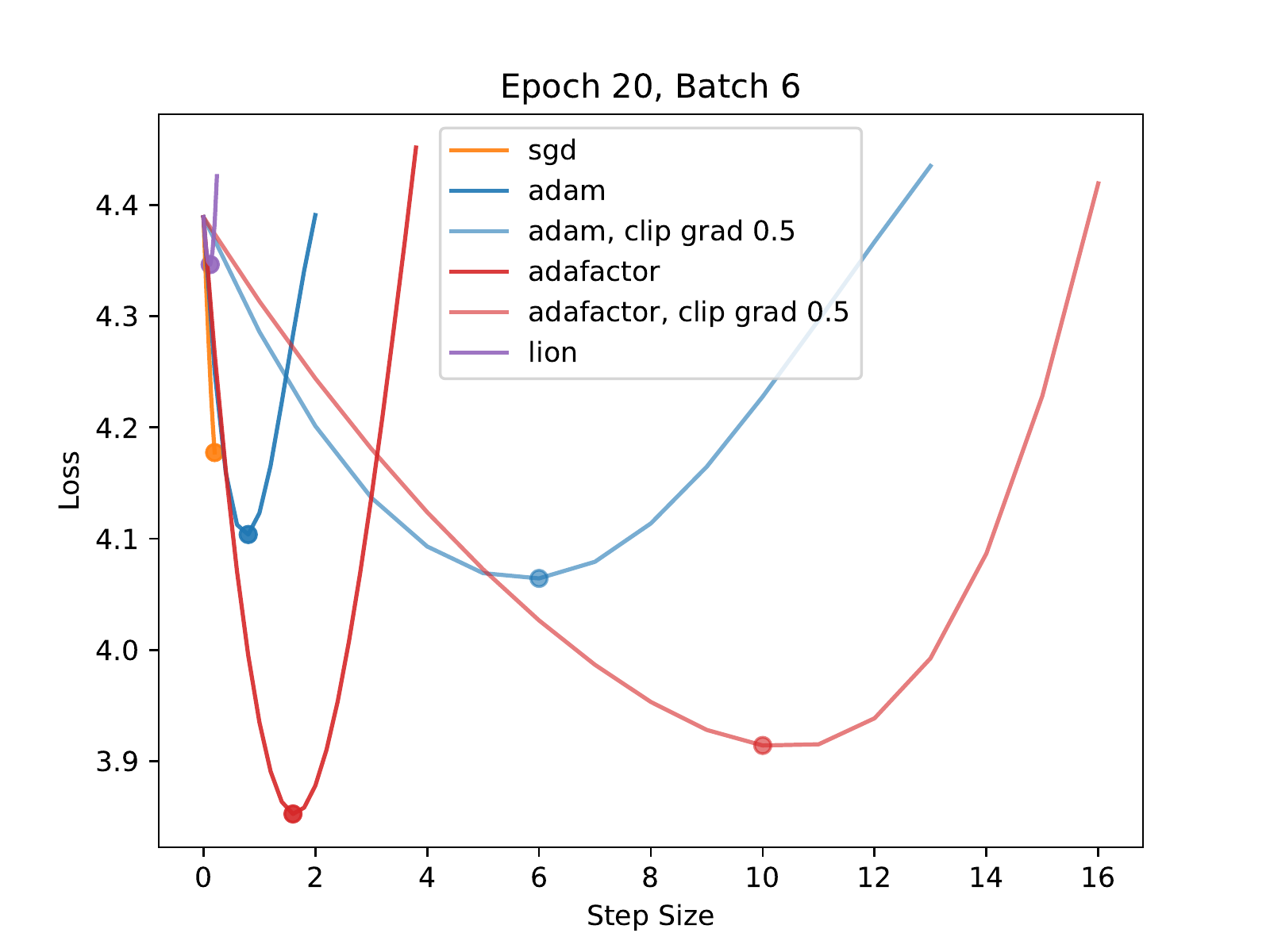}
    \end{subfigure}
    \caption{The loss landscape in different update directions on autoregressive language modeling in SGD geometry.\label{fig:landscape_example_sgd_autoregressive}}
\end{figure}

In order to explain the sharpness reduction effect of adaptive algorithms, since the strategy for adaptive algorithms is to find a coordinate-wise scaling of the gradient, we investigate the distribution of gradient norm across different coordinates. We visualize a histogram of the absolute value of SGD momentum coordinates in \Cref{fig:hist}.
We observe that the gradients are distributed unevenly across the coordinates, with half of the coordinates have absolute value ranging from $10^{-12}$ to $10^{-6}$, but also exists an innegligible portion of coordinates that can be as high as $10^{-4}$ to $10^{-2}$, contributing to most of the gradient norm.
The histogram suggests that the gradients are concentrated on a small fraction of the coordinates, and this small fraction of coordinates can contribute to a large portion of sharpness, making optimization hard.
For adaptive algorithms, since they already used some forms of scaling, the imbalanced gradient distribution will not be as severe as SGD.
As a result, they would have better convergence rate.

In~\Cref{sec:resnet}, we do a simple experiment with ResNet~\cite{he2016deep} on image classification that shows the property might be related to the transformer architecture.
In particular, the directional sharpness of adaptive algorithms might be worse than SGD for ResNets.
This is consistent with empirical observations of the performance of adaptive algorithms in vision tasks, that it is often slower than SGD.
% TODO: simple experiment on ResNet, show that this is not the case for ResNet

\section{Coordinate-wise Clipping\label{sec:clip}}
\subsection{Coordinate-wise Clipping Improves Directional Sharpness}
We propose to use \emph{coordinate-wise clipping} as a solution to the aforementioned imbalanced distribution of gradient based on our experimental findings.
We observe that the sharpness is also concentrated in the large coordinates in the gradient, and clipping those coordinates can significantly decrease directional sharpness.
Although clipping can decrease gradient correlation, since the dependence on the clipped entry is quadratic for the second-order term and linear for the first-order term, it might not be beneficial to use these coordinates.
The use of clipping in optimization algorithms is a trade-off between improving gradient correlation and reducing directional sharpness.
By clipping the top coordinates in the gradient, although gradient correlation decreases, the directional sharpness can decrease even more to make up the loss.

We consider using clipping on a variety of optimization algorithms, including SGD, normalized SGD, signSGD, Adam~\cite{kingma2015adam}, Adafactor~\cite{shazeer2018adafactor}.
We demonstrate that coordinate-wise clipping significantly reduces the sharpness of adaptive algorithms and speeds up the optimization process.
Specifically, at every iteration $t$, we compute the threshold $\tau_t$ for the top $k\%$ gradients in terms of the absolute value, and clip the gradient coordinates $g_{t,i}$ to $\hat{g}_{t,i} = \mathrm{sgn}(g_{t,i})\min\{|g_{t,i}|, \tau_t\}$ based on their sign.
Then, the clipped gradient $\hat{g}_t$ is used to update the momentum term.
For adaptive algorithms, we make a slight modification of the use of clipped gradient $\hat{g}_t$, that we only update the momentum in the numerator, that is proportional to the update step, using the clipped gradient. The momentum in the denominator is still updated using the original gradient $g_t$.
This is because if we update both terms with the clipped gradient, the normalization effect of adaptive algorithms will cancel out with the clipping of the denominator, so the scaling of the update step will be insufficient.
Examples of SGD momentum and Adam with coordinate-wise clipping are shown in~\Cref{fig:clip_alg}.
We also considered clipping the update step for adaptive algorithms, but since the update steps are already scaled based on the gradient, clipping the update step does not appear to be beneficial.

For clipping threshold, we use a small clipping fraction of 10\% for SGD and normalized SGD since they do not have coordinate-wise scaling in their algorithms. Hence, we can observe a significant improvement with a small clipping fraction.
For Adam and Adafactor, since they already did coordinate-wise scaling in the original algorithm, we use a large clipping fraction of 50\%.
From \Cref{tab:sharpness}, we can see that clipping the top the directional sharpness decrease significantly.
Since we normalize the update step when we compute the directional sharpness, the sharpness reduction effect of coordinate-wise clipping is not due to significant reduction of the norm of the update step, but the improved flatness of the direction.
The landscape visualization in~\Cref{fig:landscape_example_sgd_machine_translation} gives a consistent message, that clipped algorithms can find a direction that has better local reduction of the loss in the local geometry.
\begin{figure}[htb]
\begin{minipage}{0.48\textwidth}
    \begin{algorithm}[H]
        \caption{SGD momentum with clipping}
        \label{alg:sgd_main}
        \begin{algorithmic}
        \Require{initial point $x_0$, learning rate $\eta$, momentum term $\beta$}
        \For{$t \gets 1, \dots, T$}
            \State{$g_t \gets \nabla f(x_t)$ or stochastic gradient}
            \State{$\hat{g}_t \gets \mathbf{clip}(g_t)$}
            \State{$m_t \gets \beta m_{t-1} + (1 - \beta) \hat{g}_t$}
            \State{$x_t \gets x_{t-1} - \eta m_t$}
        \EndFor
        \end{algorithmic}
    \end{algorithm}
\end{minipage}
\hfill
\begin{minipage}{0.48\textwidth}
    \begin{algorithm}[H]
        \caption{Adam with clipping}
        \label{alg:adam_main}
        \begin{algorithmic}
        \Require{initial point $x_0$, learning rate $\eta$, momentum term $\beta_1, \beta_2$, regularization constant $\epsilon$}
        \For{$t \gets 1, \dots, T$}
            \State{$g_t \gets \nabla f(x_t)$ or stochastic gradient}
            \State{$\hat{g}_t \gets \mathbf{clip}(g_t)$}
            \State{$m_t \gets \beta_1 m_{t-1} + (1 - \beta_1) \hat{g}_t$}
            \State{$v_t \gets \beta_2 v_{t-1} + (1 - \beta_2) g_t^2$}
            \State{$x_t \gets x_{t-1} - \eta m_t/\sqrt{v_t + \epsilon}$}
        \EndFor
        \end{algorithmic}
    \end{algorithm}
\end{minipage}
\caption{Example of optimization algorithms with coordinate-wise clipping. Note that for Adam, the clipped gradient is only used in the first order momentum.}
\label{fig:clip_alg}
\end{figure}

Finally we demonstrate that clipping algorithms can converge faster than the original counterpart by directly training transformers with the clipping algorithms, with the loss curve shown in~\Cref{fig:loss}.
According to the result, clipping algorithms can speedup training significantly.
For coordinate-wise scaling algorithms such as Adam, it is possible to consider larger clipping thresholds to improve the convergence of the algorithms.
Our result suggests that clipping can be used as an universal technique in any non-coordinate-wise-scaling algorithms and speed up training.
The new finding can provide insight into designing new optimization algorithms.

\begin{figure}[h]
    \centering
    \begin{subfigure}{0.48\textwidth}
        \centering
        \includegraphics[width=\textwidth]{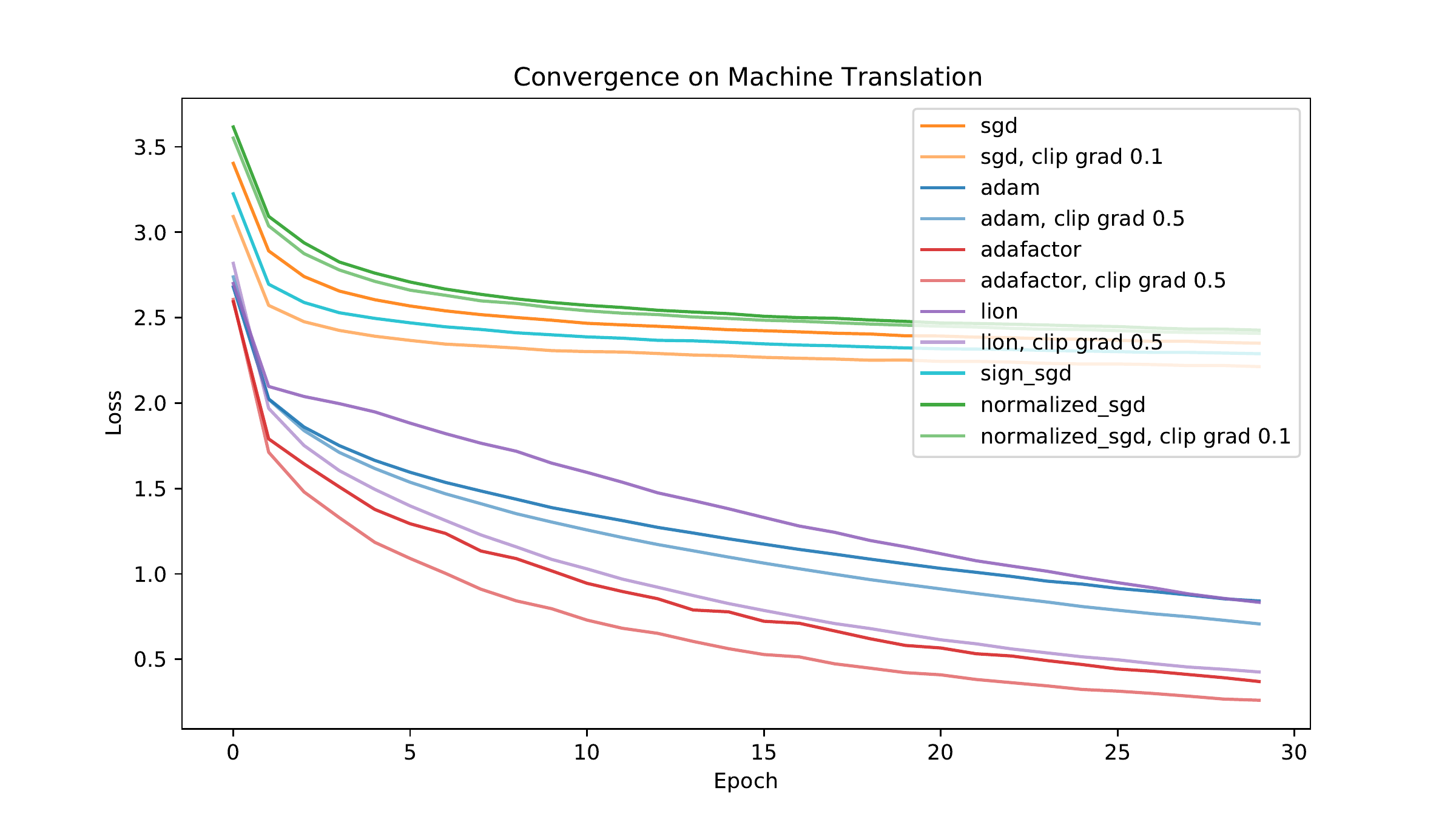}
        \caption{Machine Translation}
    \end{subfigure}
    \begin{subfigure}{0.48\textwidth}
        \centering
        \includegraphics[width=\textwidth]{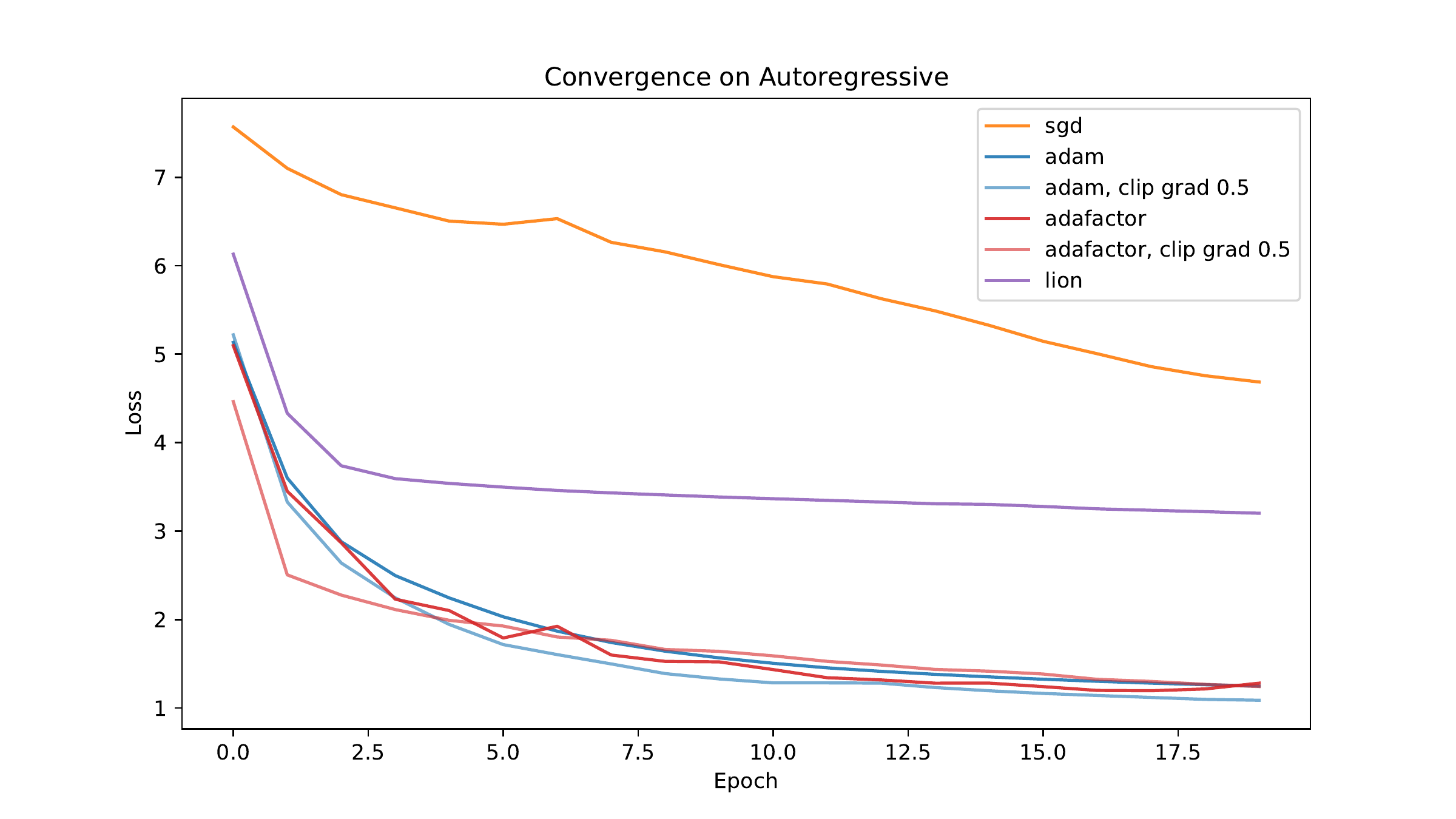}
        \caption{Autoregressive}
    \end{subfigure}
    \caption{Clipped optimization algorithms generally converge faster than the original algorithms. Furthermore, the result is consistent with the landscape analysis in~\Cref{fig:landscape_example_sgd_machine_translation,fig:landscape_example_adam_machine_translation,fig:landscape_example_sgd_autoregressive} and~\Cref{sec:exp_results}, that performance in local geometry is a good indicator of global convergence speed.\label{fig:loss}}
\end{figure}

\subsection{Connection with Coordinate-wise Smoothness}
Based on our experimental findings, we conjecture that there is a \textbf{positive correlation} between the absolute value of Hessian coordinates and gradient coordinates.
The positive correlations is also mentioned in~\cite{zhang2020gradient}, but their proposed correlation is between the norm of Hessian and norm of gradient. We further suggest that there is a positive correlation between the \textbf{coordinates} of gradient and Hessian, and the success of Adam is due to the ability to scale down the bad coordinates and reduce the sharpness through coordinate-wise scaling of the gradient.

We revisit the example given in~\Cref{sec:intro}, that $f(x) = x^\top A x$ and $A_{11} = 100$, $A_{ii} = 1$ for all $i > 1$.
For SGD, the convergence depends on the worst coordinate with smoothness 100, and the gradient is also large in the first coordinate at most of the points since the formula is given as $200x_1$.
This gives us a bad sharpness on the first coordinate.
But if we use clipping, the gradient could not be too large on the first coordinate, so we could choose a larger learning rate even if the Hessian is still unchanged.

A closedly related concept in optimization is the coordinate-wise version of the $L$-smooth assumption in convex and non-convex optimization, typically used in analysis of coordinate descent methods~\cite{wright2015coordinate,shi2016primer,richtarik2014iteration,bernstein2018signsgd,lu2018accelerating}.
Instead of bounding the Hessian with a constant $L$, each coordinates were bounded using different constants $L_1, \dots, L_d$ such that $L_i \le L$ and $\max L_i = L$.
If the gradient has a balanced distribution, the convergence depends on the \textbf{average} of the constants.
Hence, the bound could be better since most of $L_1, \dots, L_d$ could be much less than $L$.
However, if the gradient has an imbalanced distribution, where gradient is concentrated in a small fraction of the coordinates, then the convergence mostly depends on the smoothness of that fraction of coordinates.
Then, clipping works well since it removes the imbalanced distribution of the gradients, ensuring ``uniformity'' of the gradient coordinates.
When only an $\varepsilon$-fraction of coordinates have bad smoothness $L$, with clipping threshold $c_t$, the norm of clipped gradient on the $\varepsilon$-fraction of coordinates is at most $\sqrt{\varepsilon d} c_t$, so the dependence on $L$ is at most $O(\sqrt{\varepsilon} L)$.
Similarly, adaptive algorithm enforce the same constraint on the gradients, removing the correlation between the Hessian and gradient.

In \Cref{sec:hessian}, we justify with an additional simple experiment that suggests only a small fraction of the coordinates has large smoothness.
We approximate the Hessian of the neural network with the Gauss-Newton matrix~\cite{martens2016second,bottou2018optimization,cohen2021gradient} and study the smoothness of the Hessian if we could remove a small fraction of the coordinates.
The result shows that by removing $\le 4\%$ of the coordinates, the smoothness of the neural network improve by a constant factor.
This provides intuition into why coordinate-wise clipping improves the directional sharpness.
Then, under the assumption that we can remove a small fraction of coordinates and achieve a better smoothness, we can formally study the local loss reduction of SGD with clipping, as described by the following informal theorem.
\begin{theorem}[informal]\label{thm:informal}
    Suppose $f$ is non-convex and $L$-smooth, and there exists $0 < \varepsilon < 1$ and $\ell \ll L$ such that for every $x$, after removing $\varepsilon$-fraction of the coordinates, the remaining Hessian has spectral norm at most $\ell$.
    Then, in the worst case, if we run SGD clipping with some optimal step size $\eta \ge \frac{2}{L}$, it achieves better loss reduction than SGD with step size $\eta \le \frac{2}{L}$. In particular, the upper bound on the directional sharpness is at most $O(\sqrt{\varepsilon} L + \ell) \ll L$ compared to $L$ of SGD.
\end{theorem}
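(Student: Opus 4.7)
The plan is to start from the quadratic Taylor expansion in (\ref{eq:taylor}) and compare the best possible per-step loss reduction for SGD and SGD with clipping, both expressed in terms of gradient correlation and directional sharpness. For vanilla SGD with step $x_{t+1} - x_t = -\eta g_t$, the worst-case directional sharpness equals $\|H\| \le L$, so the bound $f(x_{t+1}) - f(x_t) \le -\eta\|g_t\|^2 + \frac{\eta^2 L}{2}\|g_t\|^2$ forces $\eta \le 2/L$ and yields the best achievable reduction $-\|g_t\|^2/(2L)$ at $\eta = 1/L$. This is the baseline to beat.

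The technical core is to bound the directional sharpness of the clipped update direction by $O(\sqrt{\varepsilon}L + \ell)$. Let $S$ be the fixed set of $\varepsilon d$ coordinates whose removal leaves the restricted Hessian with spectral norm $\le \ell$, and write the clipped gradient as $\hat g_t = (\hat g_t)_S + (\hat g_t)_{S^c}$. I would expand
\begin{equation*}
\hat g_t^\top H \hat g_t = (\hat g_t)_S^\top H (\hat g_t)_S + 2(\hat g_t)_S^\top H (\hat g_t)_{S^c} + (\hat g_t)_{S^c}^\top H (\hat g_t)_{S^c},
\end{equation*}
bound the first term by $L\|(\hat g_t)_S\|^2$ via $L$-smoothness, the third by $\ell\|(\hat g_t)_{S^c}\|^2$ using the structural hypothesis, and the cross term via weighted AM\textendash GM as $2|(\hat g_t)_S^\top H(\hat g_t)_{S^c}| \le \alpha L\|(\hat g_t)_S\|^2 + (L/\alpha)\|(\hat g_t)_{S^c}\|^2$. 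Clipping with threshold $c_t$ enforces $\|\hat g_t\|_\infty \le c_t$, so in the worst case $\|(\hat g_t)_S\|^2 \le \varepsilon d \, c_t^2$. Under the Hessian\textendash gradient positive correlation used throughout the paper, the $\varepsilon$-fraction of bad coordinates lie above the clipping threshold and are uniformly reduced to $c_t$, so after normalization $v = \hat g_t/\|\hat g_t\|$ we have $\|v_S\|^2 \le \varepsilon$. Choosing $\alpha = 1/\sqrt{\varepsilon}$ then collapses the bound to $v^\top H v \le L\varepsilon + 2\sqrt{\varepsilon}L + \ell = O(\sqrt{\varepsilon}L + \ell)$.

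With the effective sharpness $\tilde L := O(\sqrt{\varepsilon}L + \ell) \ll L$ in hand, plugging back into the Taylor expansion shows that clipped SGD can use step size as large as $\Theta(1/\tilde L) \gg 2/L$; at the optimal such $\eta$ the per-step reduction is of order $(\nabla f(x_t)^\top \hat g_t)^2 / \tilde L$. I would then argue this exceeds the $\|g_t\|^2/(2L)$ baseline whenever the gradient correlation lost to clipping is small relative to the factor $L/\tilde L$ gained on sharpness, which follows because clipping only truncates the tails of a bounded fraction of entries and, under positive correlation, those tails have already been accounted for in the sharpness term rather than dominating the inner product.

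The main obstacle I expect is making the $\ell_\infty$-to-relative-mass step $\|v_S\|^2 \le \varepsilon$ fully rigorous, since in principle $\|(\hat g_t)_{S^c}\|^2$ could be very small and inflate this ratio. The natural fix is to tie the clipping threshold to the empirical $k$-th largest magnitude so that exactly an $\varepsilon$-sized top set is clipped, and to invoke the positive Hessian\textendash gradient correlation assumption to identify this top set with $S$ up to constants; a back-up route is to phrase the conclusion in a typical-case or expected sense. The cross-term AM\textendash GM, the optimization over $\alpha$, and the final comparison to the baseline are otherwise routine calculations.
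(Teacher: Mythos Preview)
Your proposal is essentially the same approach as the paper's, with only cosmetic differences in the bookkeeping. The paper splits the Hessian into row blocks $P_1,P_2,P_3$ (top $\varepsilon d$ rows, lower-left, lower-right) and applies Cauchy--Schwarz to each piece, whereas you split the \emph{vector} $\hat g_t$ into $(\hat g_t)_S+(\hat g_t)_{S^c}$ and use AM--GM on the cross term; both routes land on the same $2\sqrt{\varepsilon d}\,c_t\cdot L\|\hat g_t\|_2+\ell\|\hat g_t\|_2^2$ bound before normalization. Your lower bound on the gradient correlation is also the same idea as the paper's, which simply writes $\nabla f(x_t)^\top \hat g_t=\sum_i(|\hat g_{t,i}|+|h_{t,i}|)|\hat g_{t,i}|\ge\|\hat g_t\|_2^2$ using that clipping preserves signs.

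The obstacle you flag is exactly the right one, and the paper does not finesse it: it simply \emph{assumes} away the degenerate case by imposing two explicit uniformity constants, $\|\hat g_t\|_2\ge C_1\|\nabla f(x_t)\|_2$ and $\sqrt{d}\,c_t\le C_2\|\hat g_t\|_2$, arguing these are natural for neural-network gradients and that removing them would require transformer-specific analysis. So your proposed fixes (tying the threshold to the empirical quantile, invoking the positive correlation heuristic) are more ambitious than what the paper actually does; if you want to match the paper, just state those two inequalities as hypotheses and the rest of your calculation goes through verbatim with $\tilde L = C_2(2\sqrt{\varepsilon}L+\ell)$ and optimal $\eta=1/\tilde L$.
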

The formal statement and proof are given in \Cref{sec:theory}.
The theorem shed light onto how gradient clipping can improve the loss locally.
Understanding of this phenomenon could be essential in proving convergence rates for Adam or clipping algorithms faster than SGD.

% \begin{assumption}
    % For every $x \in \R^d$, there exists a constant $\varepsilon > 0$ such that there exists we can remove $\varepsilon$-fraction of the coordinates, such that the remaining Hessian matrix has spectral norm at most $\ell$.
% \end{assumption}

% \begin{theorem}
    % If Assumptions hold, then 
% \end{theorem}

\section{Experiment Setups\label{sec:exp_main}}
In this section, we describe the setting of our full experiments.
We demonstrate our findings with two types of experiments, as described in~\Cref{sec:directional_sharpness,sec:clip}.
We explore several different tasks and settings and show our results hold in various setting.
Further discussions of the results are in~\Cref{sec:discussion}

\textbf{Optimization algorithms.} We select a variety of optimization algorithms.
The algorithms all uses momentum in their update steps for a fair comparison.
The baseline algorithm is SGD momentum, which we compare the sharpness of other algorithms with.
For the class of adaptive algorithms, we choose Adam~\cite{kingma2015adam}, Adafactor~\cite{shazeer2018adafactor}, and Lion~\cite{chen2023symbolic}.
Adam is the most popular adaptive algorithm, and Adafactor and Lion both claim to be the state-of-the-art optimization algorithm on some specific tasks~\cite{shazeer2018adafactor,chen2023symbolic}.
We also include signSGD due to its similarity with the Lion optimizer~\cite{chen2023symbolic} and having probably the simplist form of adaptive algorithm.
Note that signSGD is just SGD with clipping threshold 100\%.
To show that the improvement in directional sharpness and convergence speed is more related to coordinate-wise scaling than weight-matrix-wise scaling, we also design an algorithm which we call normalized SGD, that normalizes the square of Frobenius norm of each weight matrix to be proportional to the size of the matrix.
By comparing normalized SGD with SGD clipping, we can see the importance of \textbf{coordinate-wise} scaling in adaptive algorithms and clipping.

\textbf{Tasks.}
We run our experiments on two tasks, including machine translation and autoregressive language modeling, which are two popular tasks in language processing and can be solved efficiently with transformers.
For machine translation, we train a small t5~\cite{raffel2020exploring} model on the opus books English-French dataset~\cite{tiedemann2012parallel}.
For autoregressive, we train a GPT-Neo~\cite{black2021gpt,gao2020pile} model on the stack dataset~\cite{kocetkov2022stack} for Python code generation.
The code generation task is slightly different from natural language tasks such as machine translation since it deals with programming languages.
We will show that most of our results still holds in the setting, suggesting that the observation is more related to properties of the transformer architectures.

\textbf{Directional Sharpness and Landscape.}
We compute the directional sharpness of a variety of optimization algorithms, including SGD, normalized SGD, signSGD, Adam~\cite{kingma2015adam}, Adafactor~\cite{shazeer2018adafactor}, and Lion~\cite{chen2023symbolic}, and visualize the corresponding loss landscape direction, under different local geometry.
We show that SGD has bad sharpness under all of the settings, regardless of the task, model, or local geometry.
In addition, we demonstrate \textbf{clipping can always improve the directional sharpness of optimization algorithms}, and often result in better local loss reduction in the update direction.

\textbf{Global Convergence.}
We also implement clipping algorithms and use them to train different models, and demonstrate that clipping algorithms converge faster in practice.
The result matches the goodness of the direction as measured by the landscape visualization and directional sharpness, that algorithms with better directional sharpness and better local loss reduction in the update direction in the SGD geometry generally converges faster.
We conclude that the \textbf{performance of optimization algorithms in local geometry can be a good indicator of speed of global convergence}.

\section{Conclusion}

In summary, our work provides a new insight of why Adam converges faster than SGD in practice.
In contrast to assumptions on properties of the gradient, we propose to study directional sharpness as an important indicator for the performance of optimization algorithms in deep learning.
We show that adaptive algorithms and clipped optimization algorithms can generally achieve significantly better directional sharpness compared to SGD.
We argue that the slow convergence of SGD is related to the high directional sharpness, caused by a positive coordinate-wise gradient-Hessian correlation.
We propose to use coordinate-wise clipping as a solution to the problem of high sharpness.
We demonstrate the sharpness reduction effect of coordinate-wise clipping and show that it is possible to step into a lower loss in the update direction of clipping algorithms compared to the original algorithms.
We further demonstrate the effectiveness of coordinate-wise clipping in a wide range of optimization algorithms without coordinate-wise scaling, including SGD, normalized SGD, and Adafactor.
We suggest the use of coordinate-wise clipping as a universal technique to speed up any deep learning optimization algorithms.
Our work provide useful explanations and conjectures about the superior performance of Adam and further understanding of the results could be useful in theoretical understanding of the empirical advantage of Adam over SGD.

\bibliographystyle{plain}
\bibliography{references}

\clearpage
\appendix

\clearpage
\section{Convergence of Clipping with Coordinate-wise Smoothness\label{sec:theory}}
We prove the formal statement of~\Cref{thm:informal}.
We assume that $f$ is non-convex and $L$-smooth.
In addition, by result of the experiment in~\Cref{sec:hessian}, we assume that there exists $0 < \varepsilon < 1$ and $\ell \ll L$ such that for every $x$, after removing $\varepsilon$-fraction of the coordinates, the remaining Hessian has spectral norm at most $\ell$.
Finally, we will need an additional assumption that the gradient is some sort of ``uniform'', that it could not be too imbalanced, such that more than $(1 - \varepsilon)$-fraction of the coordinates are 0 or approximately 0.
This assumption is natural since the gradient is from an neural network, so the edge case should not occur.
The clipped part is not too large, that is $\|g_t\|_2 \ge C_1 \|\nabla f(x_t)\|_2$ for some constant $C_1 > 0$.
Without this assumption, the clipping algorithm could not operate on the edge case that all gradient in the $(1-\delta)$-fraction of unclipped coordinates are 0, so $c_t$ will be 0. Then, the remaining gradient will be 0.
We also want the gradient norm to be large compared to $c_t$, that the norm of the remaining part is also comparable to $\sqrt{d}c_t$.
So we assume that $C_2 \|g_t\|_2 \ge \sqrt{d}c_t$ for some constant $C_2$.
In practice, this is controlled by the clipping threshold $c_t$, but simply assuming the clipping fraction does not suffice, since the aforementioned counterexample could always work if the gradient is given by an adversarial.
Since our results in~\Cref{sec:resnet} show that the properties are transformer-specific, removal of these assumptions requires theoretical analysis of the transformer architecture, which we will not discuss in this work.

Then, we show the following version of the gradient descent lemma that establishes the expected loss decrement with respect to the norm of the gradient.

\begin{theorem}[Gradient descent lemma]
    Suppose $f$ is non-convex and $L$-smooth, and there exists $0 < \varepsilon < 1$ and $\ell \ll L$ such that for every $x$, there is a submatrix of $\nabla^2 f(x)$ with size $(1-\varepsilon)d \times (1-\varepsilon)d$ that has spectral norm at most $\ell$.
    Assuming that $\|g_t\|_2 \ge C_1 \|\nabla f(x_t)\|_2$ and $C_2 \|g_t\|_2 \ge \sqrt{d}c_t$.
    Then, in the worst case, if we run SGD that clips the top $\delta$-fraction such that $\delta > \varepsilon$, with step size $\eta \ge \frac{2}{L}$, it achieves loss decrement of
    \[
        f(x_{t+1}) \le f(x_t) - \frac{C_1^2}{(4\sqrt{\varepsilon}L + 2\ell)C_2} \|\nabla f(x_t)\|_2^2
    \]
    which is asymptotically better than SGD with loss decrement $\frac{1}{2L}\|\nabla f(x_t)\|_2^2$.
    In particular, the upper bound on the directional sharpness is at most $O(\sqrt{\varepsilon} L + \ell) \ll L$ compared to $L$ of SGD.
\end{theorem}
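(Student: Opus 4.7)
The plan is to apply the local quadratic expansion
\begin{equation*}
f(x_{t+1}) \le f(x_t) - \eta\,\nabla f(x_t)^\top \hat{g}_t + \tfrac{\eta^2}{2}\, \hat{g}_t^\top \nabla^2 f(\xi)\, \hat{g}_t,
\end{equation*}
obtained from Taylor's integral remainder with $\xi$ on the segment $[x_t, x_{t+1}]$, and then separately (i) upper bound the directional sharpness $\hat{g}_t^\top \nabla^2 f(\xi) \hat{g}_t$ and (ii) lower bound the correlation $\nabla f(x_t)^\top \hat{g}_t$. Treating the resulting right-hand side as a one-dimensional quadratic in $\eta$ and minimizing, while checking that the optimizer $\eta^\ast$ satisfies $\eta^\ast \ge 2/L$ (which is automatic once $\sqrt{\varepsilon}L + \ell \ll L$), yields the stated loss decrement.

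The heart of the argument is the directional-sharpness bound. Let $S$ denote the set of unclipped coordinates and $B$ its complement, the top-$\delta$ fraction. Decompose $\hat{g}_t = u + v$ with $u$ supported on $S$ and $v$ on $B$. Since $\delta > \varepsilon$, the set $B$ is aligned with the complement of the good $(1-\varepsilon)d$ submatrix promised by the assumption, so $u$ lies in a subspace on which the Hessian has spectral norm at most $\ell$, yielding $u^\top \nabla^2 f(\xi) u \le \ell\|u\|_2^2 \le \ell\|\hat{g}_t\|_2^2$. For the clipped part each coordinate has magnitude at most $c_t$ and $|B| \le \delta d$, so $\|v\|_2 \le \sqrt{\delta d}\,c_t \le \sqrt{\delta}\,C_2\|g_t\|_2$ by the threshold assumption, and $L$-smoothness gives $v^\top \nabla^2 f(\xi) v \le L\|v\|_2^2 \le \varepsilon\, C_2^2 L\|g_t\|_2^2$. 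The cross term is controlled by Cauchy-Schwarz, $|2 u^\top \nabla^2 f(\xi) v| \le 2L\|u\|_2\|v\|_2 \le 2\sqrt{\varepsilon}\,C_2 L\|g_t\|_2\|\hat{g}_t\|_2$. Combining the three contributions, absorbing constants, and using $\|\hat{g}_t\|_2 \le \|g_t\|_2$ delivers an upper bound of the form $(4\sqrt{\varepsilon}L + 2\ell)\|\hat{g}_t\|_2^2$ on the directional sharpness, which is the key quantity entering the final constant in the theorem.

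For the first-order term I would exploit the sign-preserving nature of coordinate-wise clipping: $g_t^\top \hat{g}_t = \sum_i |g_{t,i}||\hat{g}_{t,i}| \ge \|\hat{g}_t\|_2^2$. Treating the worst case as the deterministic one $g_t = \nabla f(x_t)$ (otherwise expectations must be taken), this, together with $\|g_t\|_2 \ge C_1\|\nabla f(x_t)\|_2$ and a matching lower bound on $\|\hat{g}_t\|_2$ extracted from $\sqrt{d}c_t \le C_2\|g_t\|_2$, yields a correlation estimate of the schematic form $\nabla f(x_t)^\top \hat{g}_t \ge (C_1/C_2)\|\nabla f(x_t)\|_2 \|\hat{g}_t\|_2$. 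Substituting this and the sharpness bound into the Taylor inequality, optimizing in $\eta$, and simplifying produces the announced bound $f(x_{t+1}) \le f(x_t) - \frac{C_1^2}{(4\sqrt{\varepsilon}L + 2\ell)C_2}\|\nabla f(x_t)\|_2^2$, together with the advertised directional-sharpness estimate $O(\sqrt{\varepsilon}L + \ell)$.

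The main obstacle is the alignment step: the hypothesis supplies only the existence of some good $(1-\varepsilon)d$ submatrix at each point, whereas clipping is defined purely through gradient magnitudes. Making the decomposition $\hat{g}_t = u + v$ rigorous requires an explicit link between the indices of the bad Hessian block and the top-magnitude gradient coordinates, i.e.\ a quantitative form of the positive coordinate-wise gradient-Hessian correlation conjecture discussed in Section~4.2. Without such a link, an adversarial mismatch between the two index sets would destroy the $\sqrt{\varepsilon}L$ scaling and force a dependence closer to $L$. Once this alignment is granted (e.g.\ by promoting it to an explicit assumption), the remaining work is routine algebraic bookkeeping: Cauchy-Schwarz, the one-dimensional minimization in $\eta$, and verifying the $\eta^\ast \ge 2/L$ condition using $\sqrt{\varepsilon}L + \ell \ll L$.
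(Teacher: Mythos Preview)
Your overall plan---Taylor expansion, bound the sharpness, lower-bound the correlation, optimize in $\eta$---matches the paper, and your first-order estimate $\nabla f(x_t)^\top \hat g_t \ge \|\hat g_t\|_2^2$ is exactly what the paper uses. But the ``alignment obstacle'' you flag is an artifact of decomposing along the wrong index set. You split $\hat g_t = u + v$ along \emph{clipped versus unclipped} coordinates, which forces you to assume the clipped set contains the bad Hessian coordinates. The paper instead splits the \emph{Hessian} along its bad $\varepsilon d$ coordinates, writing $\nabla^2 f = P_1 + P_2 + P_3$ with $P_3$ the good $(1-\varepsilon)d\times(1-\varepsilon)d$ block and $P_1, P_2$ carrying the remaining rows and columns. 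The observation you missed is that after clipping to threshold $c_t$, \emph{every} coordinate of $\hat g_t$ satisfies $|\hat g_{t,i}|\le c_t$: the unclipped ones were already below $c_t$ by definition of the threshold. Hence the restriction of $\hat g_t$ to \emph{any} $\varepsilon d$ coordinates---in particular the bad-Hessian ones---has norm at most $\sqrt{\varepsilon d}\,c_t$, regardless of whether those coincide with the large-gradient coordinates. This gives $|\hat g_t^\top P_j \hat g_t|\le \sqrt{\varepsilon d}\,c_t\cdot L\|\hat g_t\|_2$ for $j=1,2$ and $|\hat g_t^\top P_3 \hat g_t|\le \ell\|\hat g_t\|_2^2$; combining with $\|\hat g_t\|_2\le \sqrt d\,c_t$ and $\sqrt d\,c_t\le C_2\|\hat g_t\|_2$ yields the sharpness bound $C_2(2\sqrt\varepsilon L+\ell)\|\hat g_t\|_2^2$ with no alignment hypothesis needed.

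Two smaller points confirm that your decomposition is the culprit. First, your bound $\|v\|_2\le\sqrt{\delta d}\,c_t$ is then silently replaced by $\sqrt{\varepsilon}$, but $\delta>\varepsilon$, so this goes the wrong way; in the paper's route the $\sqrt\varepsilon$ arises honestly from the size of the Hessian block, not the clipping fraction. Second, your detour through a $(C_1/C_2)$-type correlation estimate is unnecessary: once you have $\nabla f(x_t)^\top \hat g_t\ge \|\hat g_t\|_2^2$ and the sharpness bound above, simply minimize $-\eta\|\hat g_t\|_2^2 + \tfrac12\eta^2 C_2(2\sqrt\varepsilon L+\ell)\|\hat g_t\|_2^2$ in $\eta$ and invoke $\|\hat g_t\|_2\ge C_1\|\nabla f(x_t)\|_2$ only at the very end.
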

\begin{proof}
    Without loss of generality, assume the first $\varepsilon d$ coordinates can be clipped.
    Since the Hessian is always symmetric, we can define
    \[
        \nabla^2 f(x_t) = \begin{bmatrix}
            A_t & B_t\\
            B_t^\top & H_t
        \end{bmatrix}
    \]
    where $A_t \in \R^{\varepsilon d \times \varepsilon d}$, $H_t \in \R^{(1-\varepsilon)d \times (1-\varepsilon)d}$, $B_t \in \R^{\varepsilon d \times (1 - \varepsilon)d}$.
    We define
    \[
        P_1 := \begin{bmatrix}
            A_t & C_t\\
            0 & 0
        \end{bmatrix}
        \qquad
        P_2 := \begin{bmatrix}
            0 & 0\\
            C_t^\top & 0
        \end{bmatrix}
        \qquad
        P_3 := \begin{bmatrix}
            0 & 0\\
            0 & B_t
        \end{bmatrix}
    \]
    so $\nabla^2 f(x_t) = P_1 + P_2 + P_3$.
    Then, we can bound the directional sharpness as
    \begin{align*}
        |g_t^\top \nabla^2 f(x_t) g_t|
        &= |g_t^\top P_1g_t + g_t^\top P_2g_t + g_t^\top P_3g_t|\\
        &\le \|g_t I_{i \le \varepsilon d}\|_2 \|P_1 g_t\|_2 + \|g_t I_{i \le \varepsilon d}\|_2 \|P_2^\top g_t\|_2 + \|P_3\|_2 \|g_t\|_2^2\\
        &\le 2\sqrt{\varepsilon d} c_t \cdot L \|g_t\|_2 + \ell \|g_t\|_2^2 \\
        &\le 2\sqrt{\varepsilon d} c_t \cdot L \|g_t\|_2 + \ell \|g_t\|_2^2\\
        &\le (2\sqrt{\varepsilon} L + \ell) \sqrt{d}c_t\|g_t\|_2.
    \end{align*}
    Then, if we normalize according $\|g_t\|_2$, we would have the directional sharpness is at most $C_2 (2 \sqrt{\varepsilon} L + \ell)$.
    In this case, the directional sharpness is $O(\sqrt{\varepsilon}L + \ell) \ll L$.

    Then, we work on the gradient descent lemma.
    \begin{align*}
        f(x_{t+1})
        &\le f(x_t) - \eta \nabla f(x_t)^\top g_t + \frac{1}{2} \eta^2 g_t^\top \nabla^2 f(\xi_t) g_t\\
        &= f(x_t) - \eta \sum_{i=1}^d |\nabla f(x_t)_i| |g_{t,i|} + \eta^2 (\sqrt{\varepsilon}L + \ell/2) \sqrt{d} c_t\|g_t\|_2\\
        &= f(x_t) - \eta \sum_{i=1}^d (|g_{t,i|} + |h_{t,i|}) |g_{t,i|} + \eta^2 (\sqrt{\varepsilon}L + \ell/2) \sqrt{d} c_t\|g_t\|_2\\
        &= f(x_t) - \eta \|g_t\|_2^2 - \eta \sum_{i=1}^d |h_{t,i|} c_t + \eta^2 (\sqrt{\varepsilon}L + \ell/2) \sqrt{d} c_t\|g_t\|_2\\
        &\le f(x_t) - \eta \|g_t\|_2^2 + \eta^2 (\sqrt{\varepsilon}L + \ell/2) \sqrt{d} c_t\|g_t\|_2
    \end{align*}
    Then, we use assumptions that $g_t$ is uniform, so
    \begin{align*}
        f(x_{t+1}) \le f(x_t) - \eta \|g_t\|_2^2 + \eta^2 (\sqrt{\varepsilon}L + \ell/2) C_2 \|g_t\|_2^2.
    \end{align*}
    By choosing $\eta = \frac{1}{(2\sqrt{\varepsilon}L + \ell)C_2}$, we have
    \begin{align*}
        f(x_{t+1}) &\le f(x_t) - \frac{1}{(4\sqrt{\varepsilon}L + 2\ell)C_2} \|g_t\|_2^2\\
        &\le f(x_t) - \frac{C_1^2}{(4\sqrt{\varepsilon}L + 2\ell)C_2} \|\nabla f(x_t)\|_2^2.
    \end{align*}
    We know that for gradient descent, the optimal learning rate is obtained by choosing $\eta = \frac{1}{L}$~\cite{bubeck2015convex}, in which case we would have
    \begin{align*}
        f(x_{t+1}) &\le f(x_t) - \eta \|\nabla f(x_t)\|_2^2 + \frac{L\eta^2}{2}\|\nabla f(x_t)\|_2^2\\
        &\le f(x_t) - \frac{1}{2L} \|\nabla f(x_t)\|_2^2.
    \end{align*}
    This finishes the proof for the gradient descent lemma for SGD clipping.
\end{proof}

\clearpage
\section{Experimental Details\label{sec:exp_appendix}}

\subsection{Tasks, Datasets, and Models}
The details of the dataset, training set size, and model we use are in \Cref{tab:tasks}.
For machine translation, we use a batch size of 1024 and we randomly select a subset of 10240 data as our training set, so we have 10 batches each epoch.
Since we're mainly interested in minimizing the training loss, we do not use any test or validation sets, nor any evaluation metrics other than the cross-entropy loss.
For machine translation, we use the English to French opus books dataset~\cite{tiedemann2012parallel} and t5 model~\cite{raffel2020exploring}.
For autoregressive, we use the GPT-Neo model~\cite{black2021gpt,gao2020pile} pretrained on Code Clippy dataset.
We use ``the-stack-smol'' version of the stack dataset~\cite{kocetkov2022stack}.
In order to evaluate the function in a offline setting, we generate fixed masks with probability 0.15 at the beginning of the training and does not generate new masks whenever we collate the data.

\begin{table}[h]
    \centering
    \begin{tabular}[h]{|l|l|l|l|}
        \hline
        \textbf{Task} & \textbf{Dataset} & \textbf{Batch Size} & \textbf{Model} \\\hline
        Machine Translation & opus books~\cite{tiedemann2012parallel} & 1024 & t5-small~\cite{raffel2020exploring}\\\hline
        Autoregressive & the-stack-smol~\cite{kocetkov2022stack} & 1000 & GPT-Neo~\cite{black2021gpt,gao2020pile}\\\hline
    \end{tabular}
    \caption{Details of the tasks, datasets, training set sizes, and models we use for the two different experiments.\label{tab:tasks}}
\end{table}

\subsection{Optimization Algorithms and Clipping Methods}

We use 6 optimization algorithms, including Adam~\cite{kingma2015adam}, SGD, signSGD, normalized SGD, Adafactor~\cite{shazeer2018adafactor}, and Lion~\cite{chen2023symbolic}.
The reason for selecting these algorithms are described in~\Cref{sec:exp_main}.

We use momentum for all of the optimization algorithms to rule out any potential effect of momentum. The clipped optimization algorithms are described in~\Cref{alg:sgd,alg:normalized_sgd,alg:sign_sgd,alg:adam,alg:adafactor_matrix,alg:adafactor_vector,alg:lion}.
Notice that for Adam and Adafactor, we only clip the gradient in the nominator of the final update step, since otherwise the scaling effect could cancel out or even increase the norm.
Adafactor is originally used with the relative step sizes $\alpha_t$, but in certain cases we use a fixed learning rate in place of $\alpha_t$.
In the algorithms, we assume $\mathbf{clip}(g)$ calculates the clipping threshold $\tau$ for the top $k\%$ coordinates and returns $\tilde{g}$ where $\tilde{g_i} = \mathrm{sgn}(g_i)\min\{|g_i|, \tau\}$.
We use a large fraction $10\%$ and $50\%$ respectively for non-coordinate-wise scaling algorithms and adaptive algorithms, to better demonstrate the effectiveness of clipping.
However, significant but weaker effects can also be observed by setting a very small value such as $0.1\%$.

We also test clipping the update step instead of the gradient for Adam and Lion. The results are also shown in the landscape visualization. However, since the update steps are already scaled based on the gradient, clipping the update step does not improve the result significantly.

\begin{algorithm}
    \caption{SGD with momentum}
    \label{alg:sgd}
    \begin{algorithmic}
    \Require{initial point $x_0$, learning rate $\eta$, momentum term $\beta$}
    \For{$t \gets 1, \dots, T$}
        \State{$g_t \gets \nabla f(x_t)$}
        \State{$\hat{g}_t \gets \mathbf{clip}(g_t)$}
        \State{$m_t \gets \beta m_{t-1} + (1 - \beta) \hat{g}_t$}
        \State{$x_t \gets x_{t-1} - \eta m_t$}
    \EndFor
    \end{algorithmic}
\end{algorithm}

\begin{algorithm}
    \caption{Normalized SGD with momentum for weight matrices and vectors}
    \label{alg:normalized_sgd}
    \begin{algorithmic}
    \Require{initial point $x_0 \in \mathbb{R}^{m \times n}$, learning rate $\eta$, momentum term $\beta$}
    \For{$t \gets 1, \dots, T$}
        \State{$g_t \gets \nabla f(x_t)$}
        \State{$\hat{g_t} \gets \mathbf{clip}(g_t)$}
        \State{$m_t \gets \beta m_{t-1} + (1 - \beta) \hat{g_t}$}
        \State{$v_t \gets \frac{m_t}{\|m_t\|_2} \cdot \sqrt{mn}$}
        \State{$x_t \gets x_{t-1} - \eta v_t$}
    \EndFor
    \end{algorithmic}
\end{algorithm}

\begin{algorithm}
    \caption{Sign SGD with momentum}
    \label{alg:sign_sgd}
    \begin{algorithmic}
        \Require{}
        \For{$t \gets 1, \dots, T$}
            \State{$g_t \gets \nabla f(x_t)$}
            \State{$\hat{g}_t \gets \mathbf{clip}(g_t)$}
            \State{$m_t \gets \beta m_{t-1} + (1 - \beta) \hat{g}_t$}
            \State{$x_t \gets x_{t-1} - \eta \cdot \mathrm{sgn}(m_t)$}
        \EndFor
    \end{algorithmic}
\end{algorithm}

\begin{algorithm}
    \caption{Adam~\cite{kingma2015adam}}
    \label{alg:adam}
    \begin{algorithmic}
    \Require{initial point $x_0$, learning rate $\eta$, momentum term $\beta_1, \beta_2$, regularization constant $\epsilon$}
    \For{$t \gets 1, \dots, T$}
        \State{$g_t \gets \nabla f(x_t)$ or stochastic gradient}
        \State{$\hat{g}_t \gets \mathbf{clip}(g_t)$}
        \State{$m_t \gets \beta_1 m_{t-1} + (1 - \beta_1) \hat{g}_t$}
        \State{$v_t \gets \beta_2 v_{t-1} + (1 - \beta_2) g_t^2$}
        \State{$x_t \gets x_{t-1} - \eta m_t/\sqrt{v_t + \epsilon}$}
    \EndFor
    \end{algorithmic}
\end{algorithm}

\begin{algorithm}
    \caption{Adafactor for weight matrices~\cite{shazeer2018adafactor}}
    \label{alg:adafactor_matrix}
    \begin{algorithmic}
    \Require{initial point $x_0 \in \mathbb{R}^{m \times n}$, relative step sizes $\rho_t = \min\{10^{-2}, \frac{1}{\sqrt{t}}\}$, second moment decay $\hat{\beta}_{2t} = 1 - t^{-0.8}$, regularization constants $\epsilon_1 = 10^{-30}$ and $\epsilon_2 = 10^{-3}$, clipping threshold $d = 1$, $\mathrm{RMS}(x) := \frac{\|x\|_F}{\sqrt{mn}}$}
    \For{$t \gets 1, \dots, T$}
        \State{$ \alpha_t \gets \max\{\epsilon_2, \mathrm{RMS}(x_{t-1})\} \rho_t$}
        \State{$ G_t \gets \nabla f(x_{t-1})$ or stochastic gradient}
        \State{$ \hat{G_t} \gets \mathbf{clip}(G_t)$}
        \State{$ R_t \gets \hat{\beta}_{2t} R_{t-1} + (1 - \hat{\beta}_{2t})(G_t^2 + \epsilon_1)\mathbf{1}_m$}
        \State{$ C_t \gets \hat{\beta}_{2t} C_{t-1} + (1 - \hat{\beta}_{2t})\mathbf{1}_n^\top (G_t^2 + \epsilon_1)$}
        \State{$ \hat{V_t} \gets R_t C_t / \mathbf{1}_n^\top R_t$}
        \State{$ U_t \gets \hat{G_t} / \sqrt{\hat{V_t}}$}
        \State{$ \hat{U_t} \gets U_t / \max\{1, \mathrm{RMS}(U_t)/d\}$}
        \State{$ x_t \gets x_{t-1} - \alpha_t \hat{U_t}$}
    \EndFor
    \end{algorithmic}
\end{algorithm}

\begin{algorithm}
    \caption{Adafactor for weight vectors~\cite{shazeer2018adafactor}}
    \label{alg:adafactor_vector}
    \begin{algorithmic}
    \Require{initial point $x_0 \in \mathbb{R}^n$, relative step sizes $\rho_t = \min\{10^{-2}, \frac{1}{\sqrt{t}}\}$, second moment decay $\hat{\beta}_{2t} = 1 - t^{-0.8}$, regularization constants $\epsilon_1 = 10^{-30}$ and $\epsilon_2 = 10^{-3}$, clipping threshold $d = 1$, $\mathrm{RMS}(x) := \frac{\|x\|_2}{\sqrt{n}}$}
    \For{$t \gets 1, \dots, T$}
        \State{$ \alpha_t \gets \max\{\epsilon_2, \mathrm{RMS}(x_{t-1})\}\rho_t$}
        \State{$ G_t \gets \nabla f(x_{t-1})$ or stochastic gradient}
        \State{$ \hat{G_t} \gets \mathbf{clip}(G_t)$}
        \State{$ \hat{V_t} \gets \hat{\beta}_{2t}\hat{V}_{t-1} + (1 - \hat{\beta}_{2t})(G_t^2 + \epsilon_1)$}
        \State{$ U_t = \hat{G}_t / \sqrt{\hat{V}_t}$}
        \State{$ \hat{U_t} \gets U_t / \max\{1, \mathrm{RMS}(U_t)/d\}$}
        \State{$ x_t \gets x_{t-1} - \alpha_t \hat{U_t}$}
    \EndFor
    \end{algorithmic}
\end{algorithm}

\begin{algorithm}
    \caption{Lion~\cite{chen2023symbolic}}
    \label{alg:lion}
    \begin{algorithmic}
        \Require{}
        \For{$t \gets 1, \dots, T$}
            \State{$g_t \gets \nabla f(x_t)$ or stochastic gradient}
            \State{$u_t \gets \beta_1 m_{t-1} + (1 - \beta_t) g_t$}
            \State{$u_t \gets \mathrm{sgn}(u_t)$}
            \State{$m_t \gets \beta_2 m_{t-1} + (1 - \beta_2)g_t$}
            \State{$x_t \gets x_{t-1} - \eta u_t$}
        \EndFor
    \end{algorithmic}
\end{algorithm}

\begin{algorithm}

\end{algorithm}

\clearpage
\subsection{Experiment for Directional Sharpness of Optimization Algorithms}

\textbf{Pseudo-Update Step.} Since all algorithms we use has momentum part, we need to compute the momentum term in a different trajectory using ``pseudo-update step.'' Specifically, we compute the momentum term for all the optimization algorithms at time $t$ using the past values of $x_1, \dots, x_{t-1}$, regardless of the optimization algorithm we use to perform the actual update step. The values we computed for the algorithms were only used to visualize the landscape and compare the sharpness, but not used for training. The momentum parameters are set to the default values~\cite{kingma2015adam,shazeer2018adafactor}.

\textbf{Training Optimizer.} We use different training optimizers to compare our results across different local geometry and optimization trajectory. We use SGD momentum with learning rate $2 \times 10^{-4}$ and Adam with learning rate $2 \times 10^{-4}$ as training optimizers. The momentum parameters are set to the default values~\cite{kingma2015adam}.

\textbf{Test Batch.} Since computation on the full-batch objective function is very computationally expensive, we sample a fixed random subset of size 1024 as the test dataset at the beginning of the training, and fix it during all epochs and batches, in order to speed up the landscape visualization process. The losses in all the plots are the losses on the test batch.

\textbf{Landscape Visualization.} To visualize the landscape, we update the weight with the desired update step and compute the loss.
Afterwards, we reset the weight back to the original value before the update, and repeat the above step with a new step size.

\textbf{Directional Sharpness.}
We utilize PyTorch's Hessian-vector product to efficiently compute directional sharpness.
Note that if we compute the directional sharpness as $v^\top \nabla f(x_t) v$, then the sharpness can be negative sometimes.
This is because the second-order Taylor expansion is given as
\[
    f(x_{t+1}) \le f(x_t) - \eta \nabla f(x_t)^\top g_t + \frac{1}{2} \eta^2 g_t^\top \nabla^2 f(\xi_t) g_t
\]
for some $\xi_t$ a linear combination of $x_t$ and $x_{t+1}$.
In general, we could approximate the directional sharpness using $x_t$ instead of $\xi_t$, but in some \textbf{very rare} cases of getting a negative sharpness, we use the following formula to compute a more robust version of directional sharpness
\begin{equation}\label{eq:robust_sharpness}
    v^\top \nabla^2 f(x_t + \delta v) v
\end{equation}
for some small $\delta$ where we choose $\delta = 0.01$.
Then the sharpness becomes positive.
In the experiment results in~\Cref{sec:exp_results}, we guarantee that the SGD sharpness are all positive and large.
We will mark the epochs where SGD sharpness is negative and we use~\Cref{eq:robust_sharpness} to compute the directional sharpness.

\subsection{Experiment for Convergence of Clipped Optimization Algorithms}

We demonstrate the convergence of clipped optimization algorithms.
We manually tune the learning rate to find the best learning rate for the experiments.
The learning rate configuration of our experiment is shown in~\Cref{tab:lr}.

\begin{table}[h]
    \centering
    \begin{tabular}[h]{|l|l|l|}
        \hline
        \textbf{Task} & \textbf{Algorithm} & \textbf{Learning Rate}\\\hline
        \multirow{10}{*}{Machine Translation} 
        & sgd & $1 \times 10^{-3}$ \\\cline{2-3}
        & sgd, clip grad 0.1& $1 \times 10^{-3}$ \\\cline{2-3}
        & adam & $2 \times 10^{-3}$ \\\cline{2-3}
        & adam, clip grad 0.5 & $3 \times 10^{-3}$ \\\cline{2-3}
        & adafactor & relative \\\cline{2-3}
        & adafactor, clip grad 0.5 & $3 \times 10^{-2}$ \\\cline{2-3}
        & lion & $2 \times 10^{-3}$ \\\cline{2-3}
        & sign sgd & $2 \time 10^{-3}$ \\\cline{2-3}
        & normalized sgd & $5 \times 10^{-4}$ \\\cline{2-3}
        & normalized sgd, clip grad 0.1 & $6 \times 10^{-4}$ \\\cline{1-3}
        \multirow{6}{*}{Autoregressive}
        & sgd & $3 \times 10^{-5}$ \\\cline{2-3}
        & adam & $1 \times 10^{-4}$ \\\cline{2-3}
        & adam, clip grad 0.5 & $1.5 \times 10^{-4}$ \\\cline{2-3}
        & adafactor & $5 \times 10^{-4}$ \\\cline{2-3}
        & adafactor, clip grad 0.5 & $5 \times 10^{-3}$ \\\cline{2-3}
        & lion & $2 \times 10^{-4}$ \\\cline{1-3}
    \end{tabular}
    \caption{Learning rate configuration of our experiments. The relative learning rate for Adafactor is defined in~\Cref{alg:adafactor_matrix,alg:adafactor_vector} and~\cite{shazeer2018adafactor}.\label{tab:lr}}
\end{table}

\clearpage
\section{Directional Sharpness Results\label{sec:exp_results}}
In this section we show our experimental result for the directional sharpness of optimization algorithms.
For each of the landscape visualization, we show two plots, where one of them has Adafactor and the other does not. The rest of the plots are the same with different scales.
We repeat each experiment with 3 different random seeds.

\subsection{SGD Trajectory\label{sec:sgd_geometry}}

\begin{figure}[h]
    \centering
    \begin{subfigure}{\textwidth}\centering
        \includegraphics[width=0.32\textwidth]{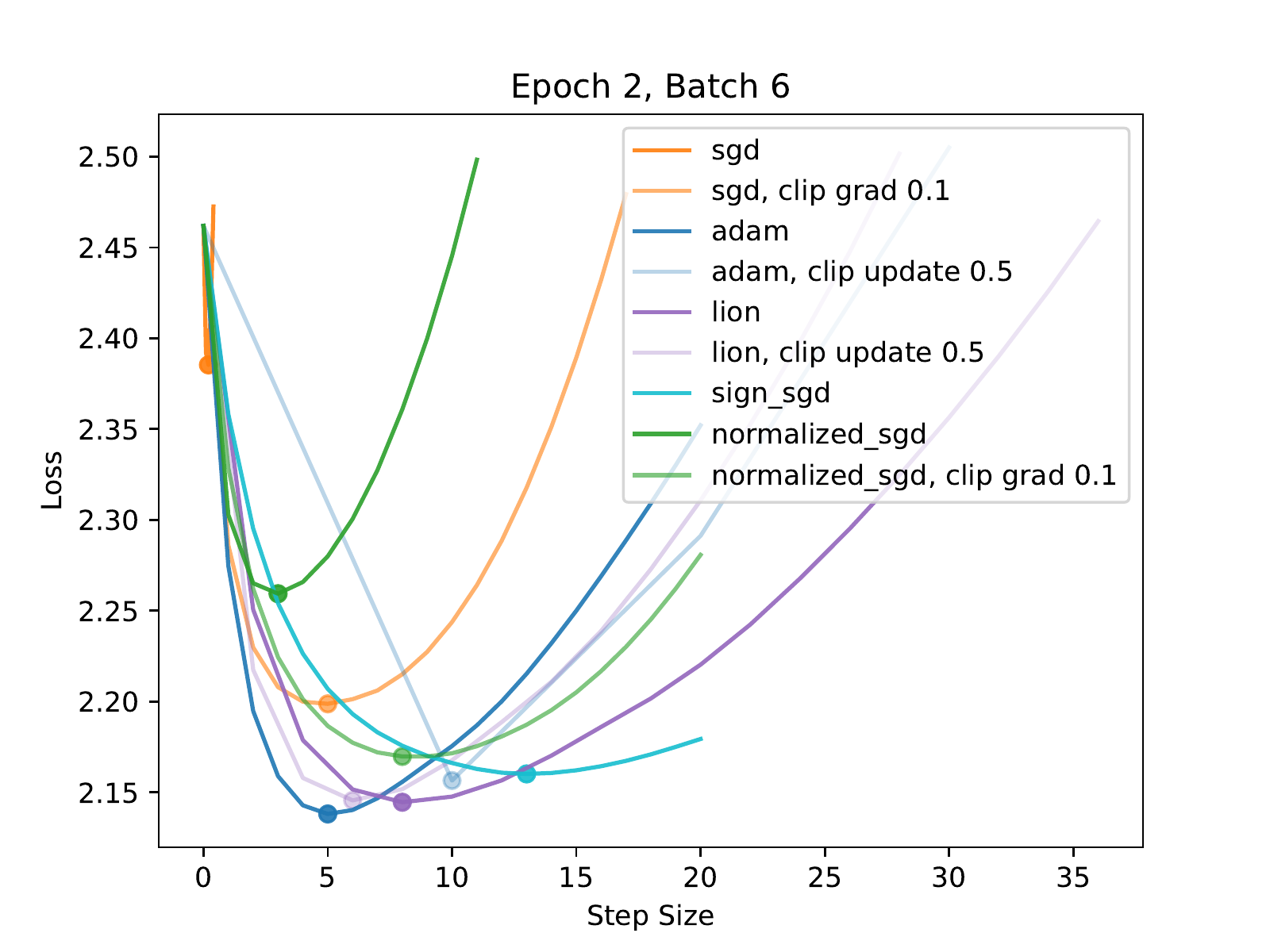}
        \includegraphics[width=0.32\textwidth]{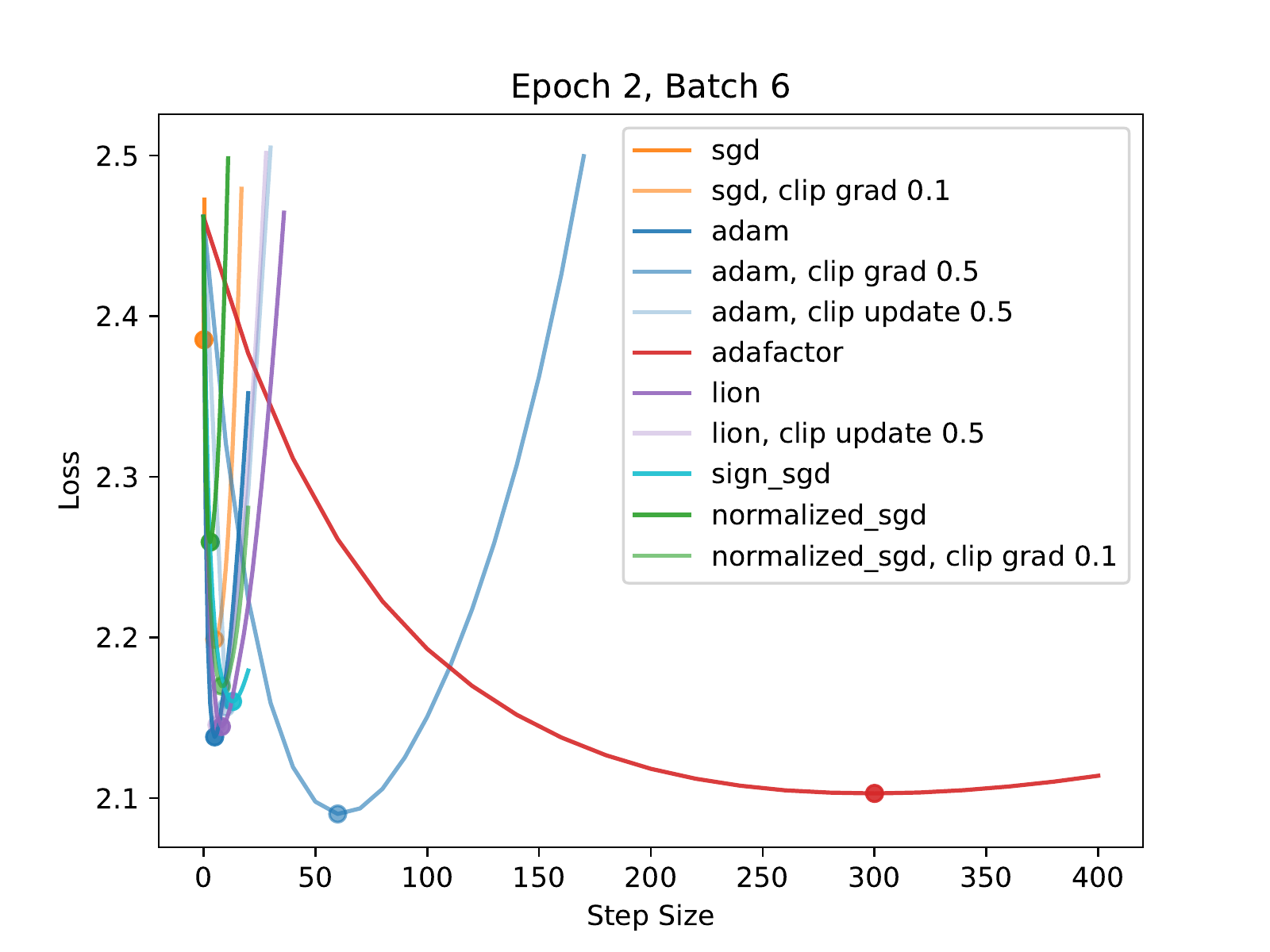}
        \includegraphics[width=0.32\textwidth]{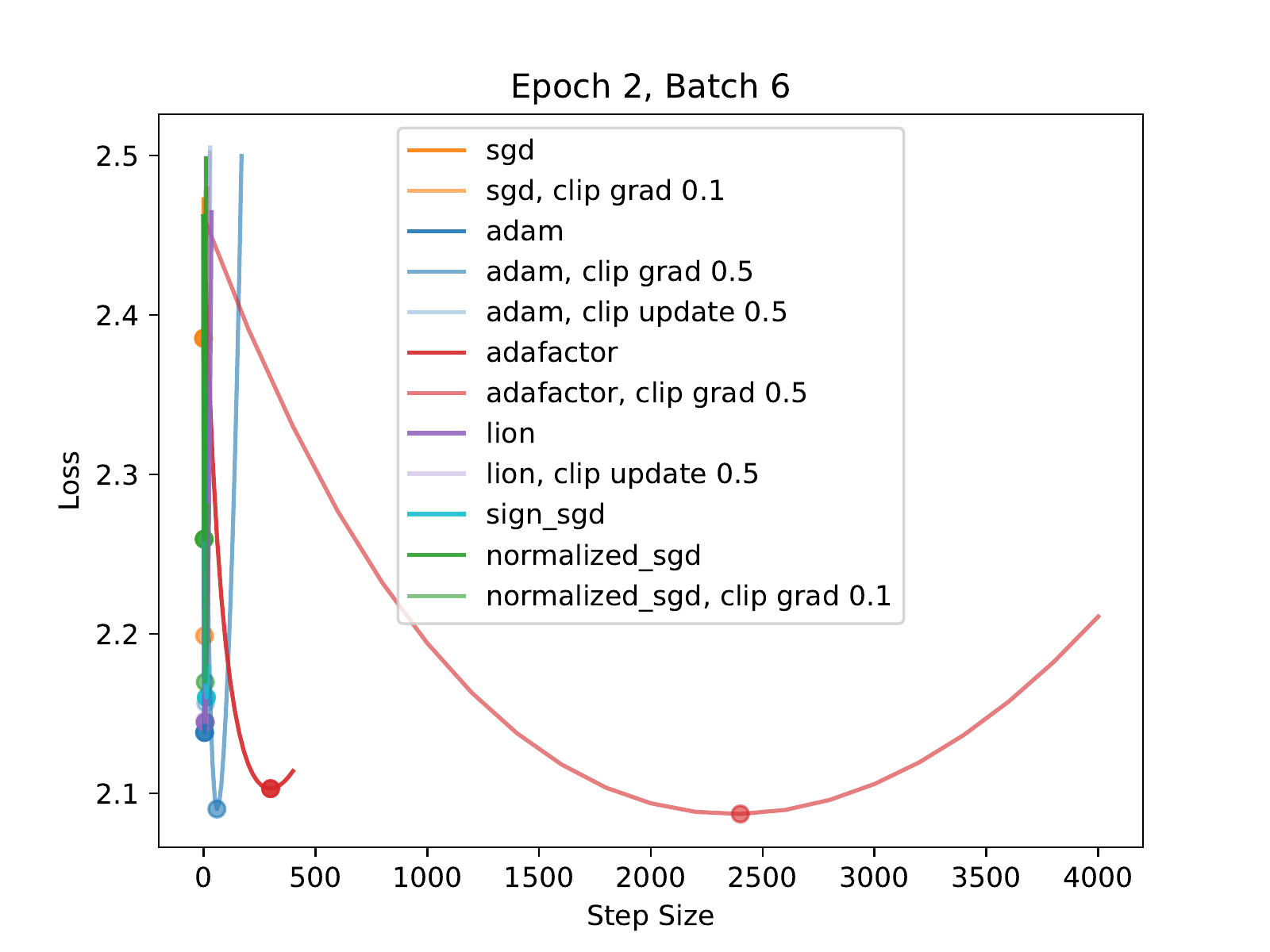}
        \caption{Experiment 1}
    \end{subfigure}
    \begin{subfigure}{\textwidth}\centering
        \includegraphics[width=0.32\textwidth]{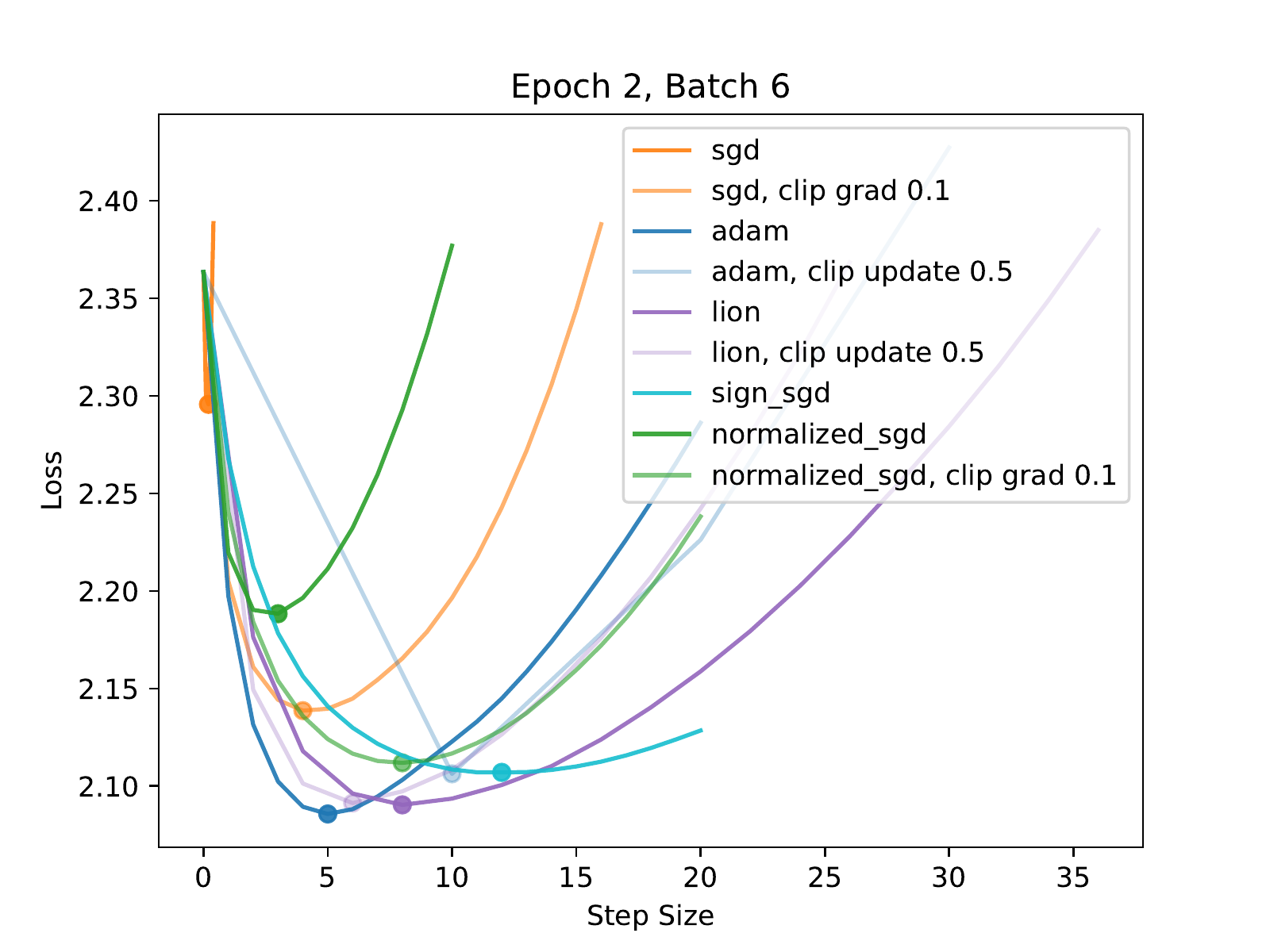}
        \includegraphics[width=0.32\textwidth]{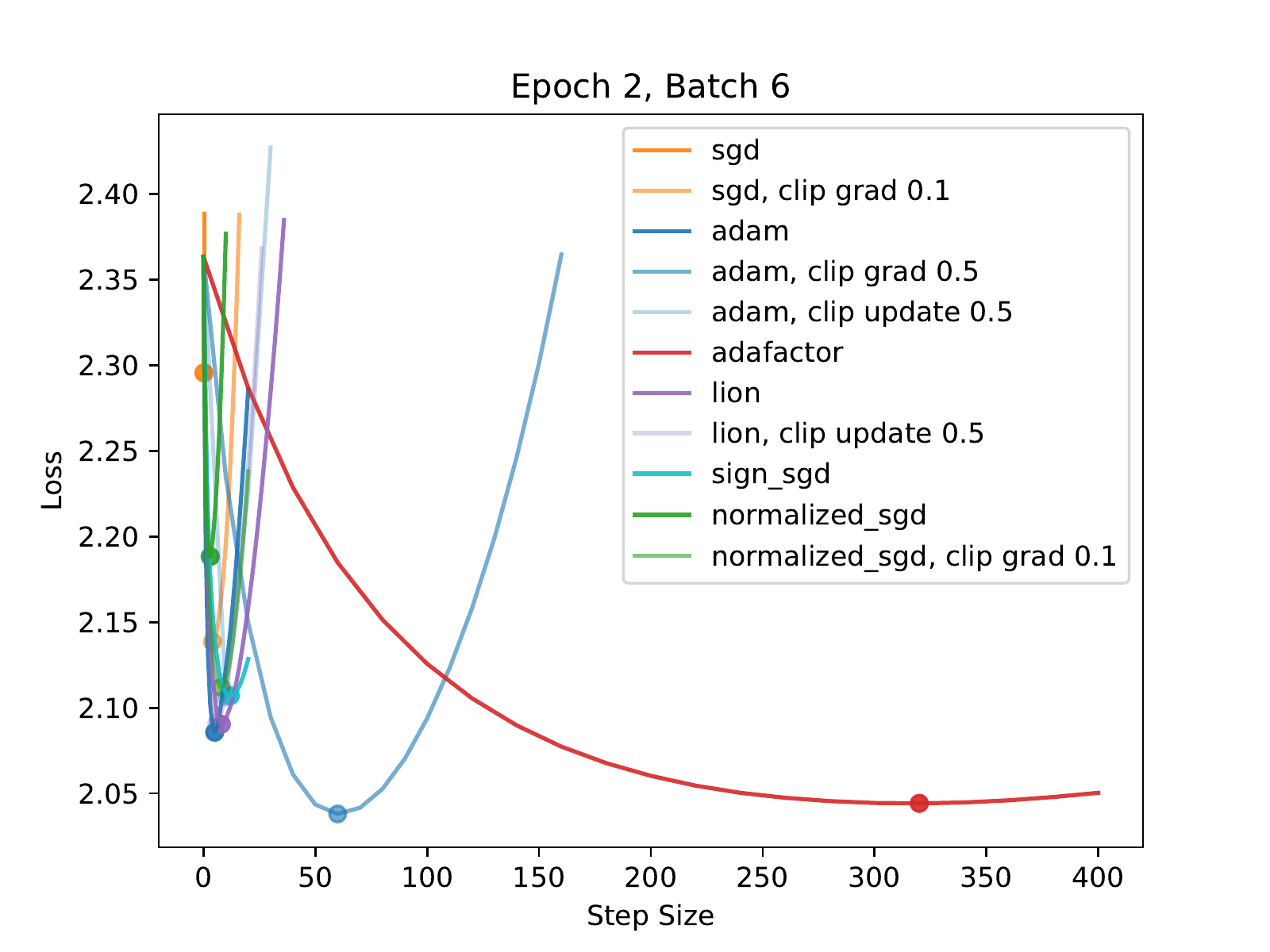}
        \includegraphics[width=0.32\textwidth]{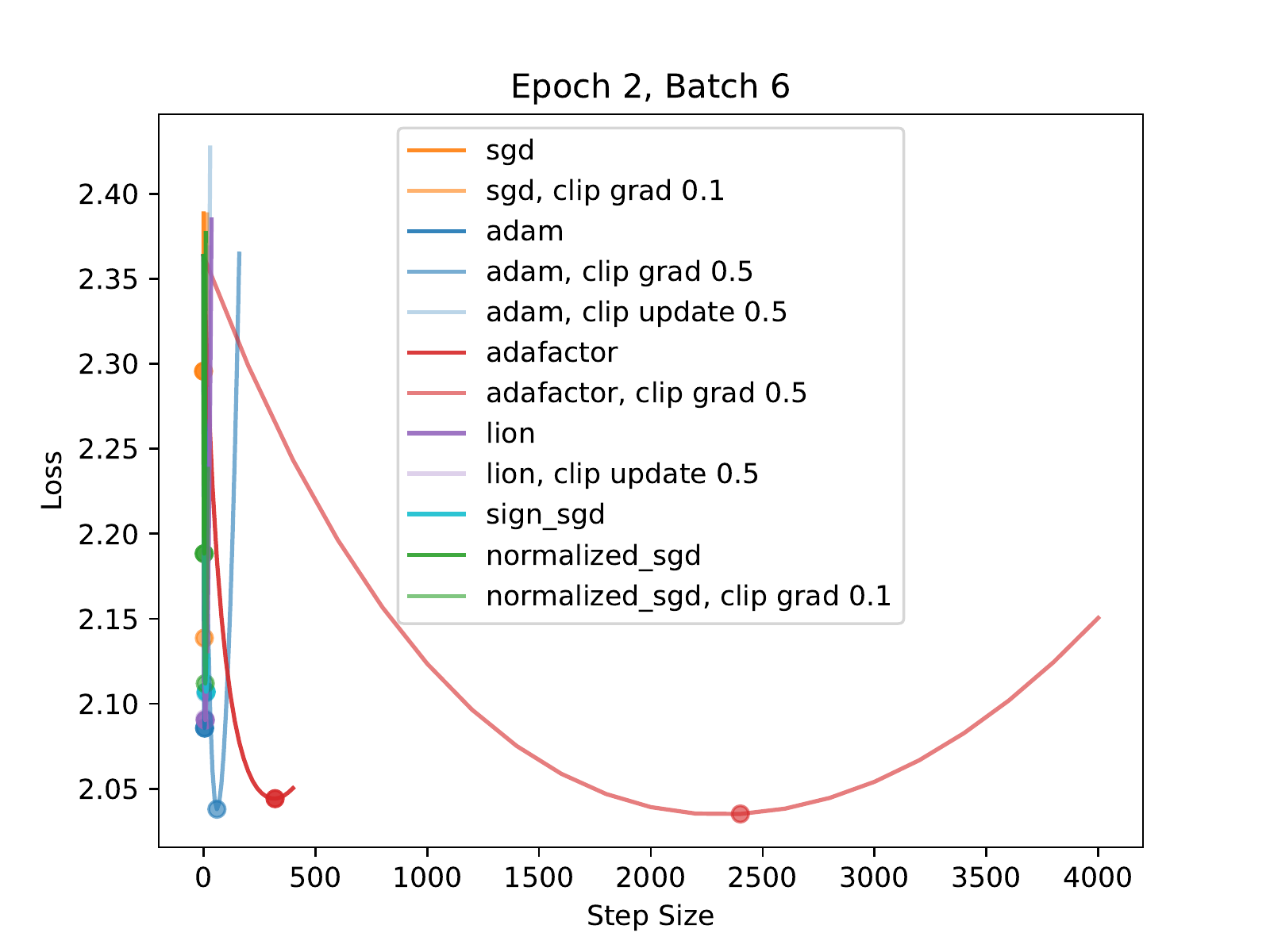}
        \caption{Experiment 2}
    \end{subfigure}
    \begin{subfigure}{\textwidth}\centering
        \includegraphics[width=0.32\textwidth]{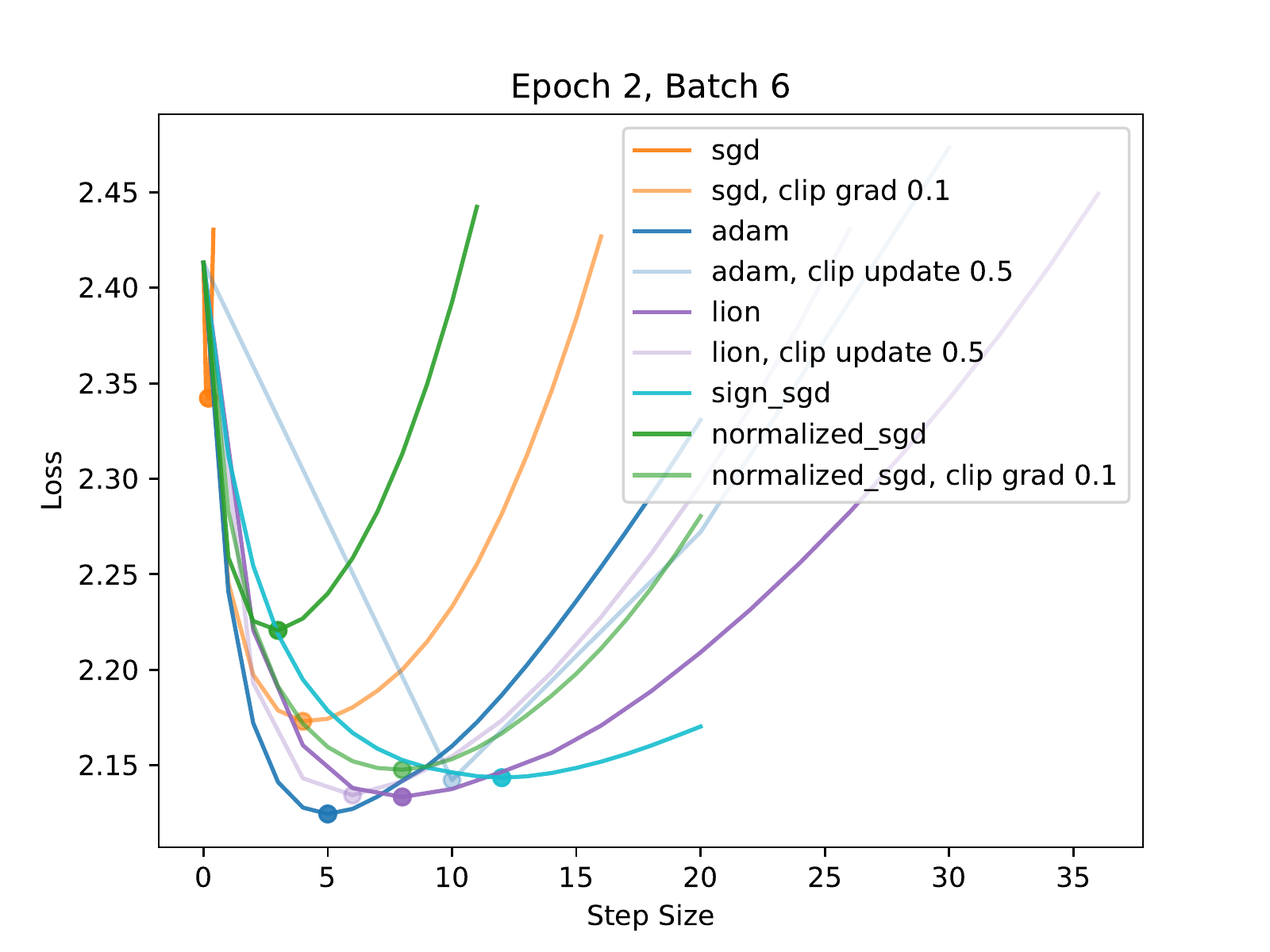}
        \includegraphics[width=0.32\textwidth]{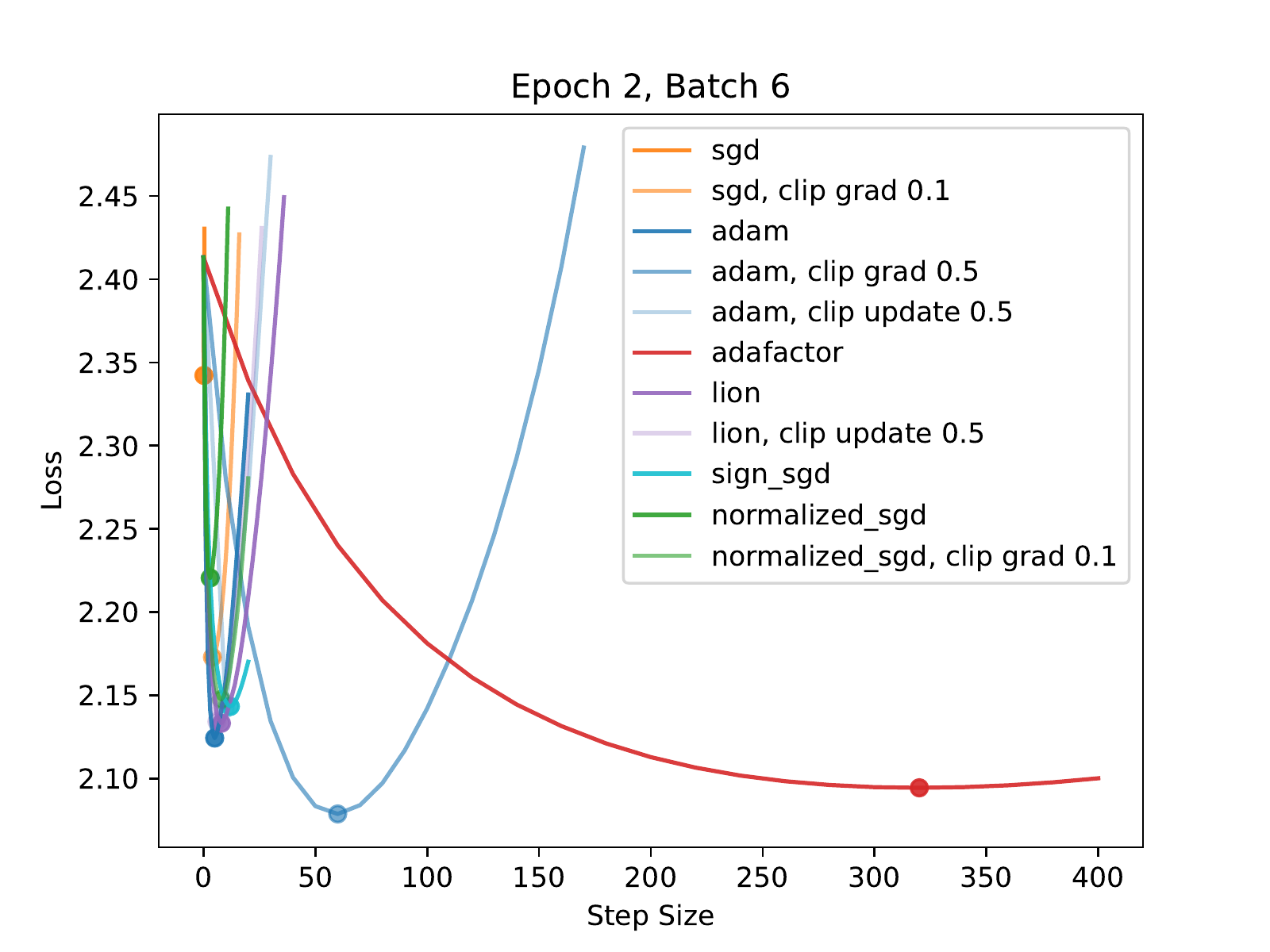}
        \includegraphics[width=0.32\textwidth]{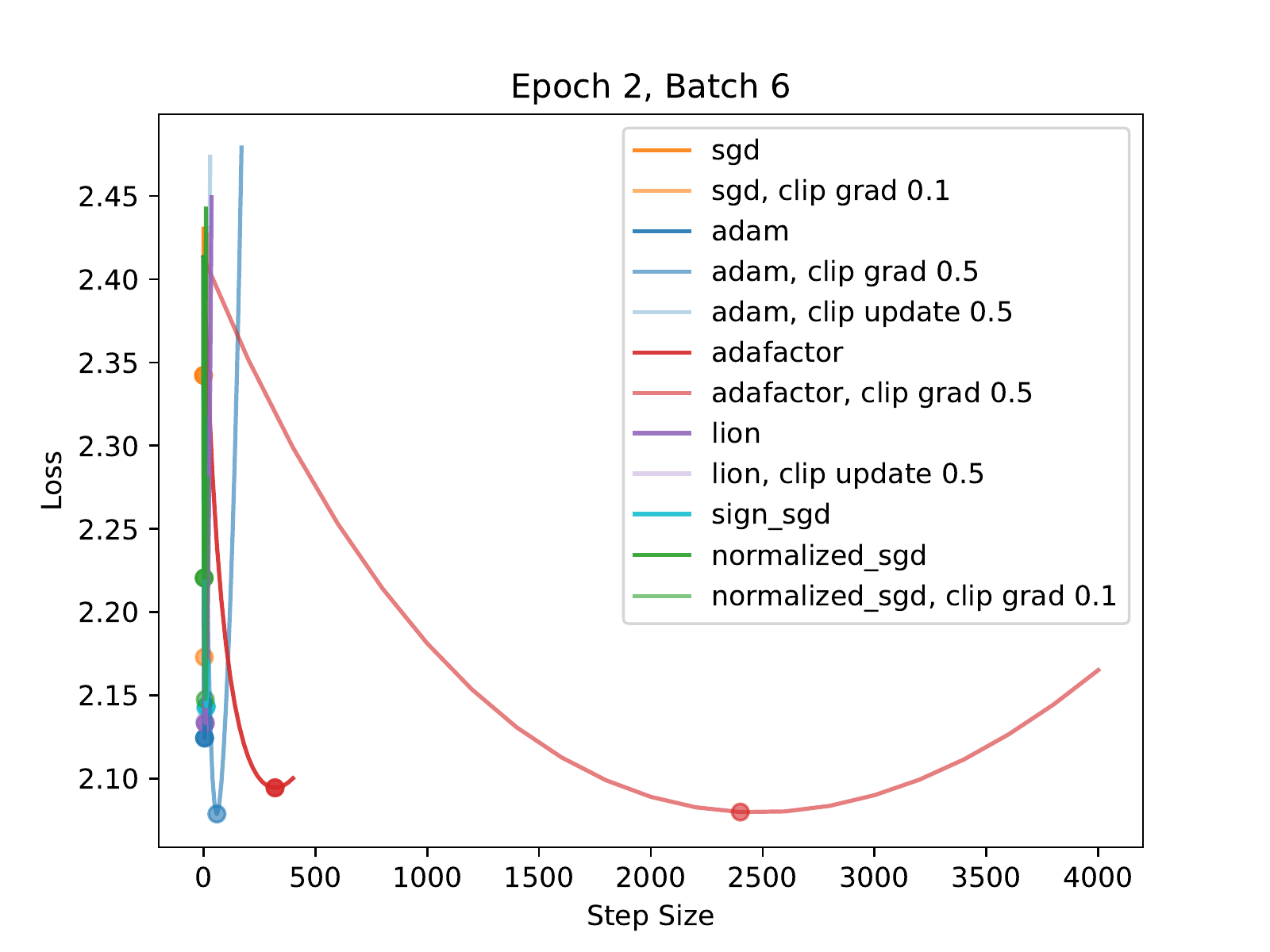}
        \caption{Experiment 3}
    \end{subfigure}
    \caption{Landscape visualization of machine translation in SGD trajectory at Epoch 2.}
\end{figure}

\begin{figure}[h]
    \centering
    \begin{subfigure}{\textwidth}\centering
        \includegraphics[width=0.32\textwidth]{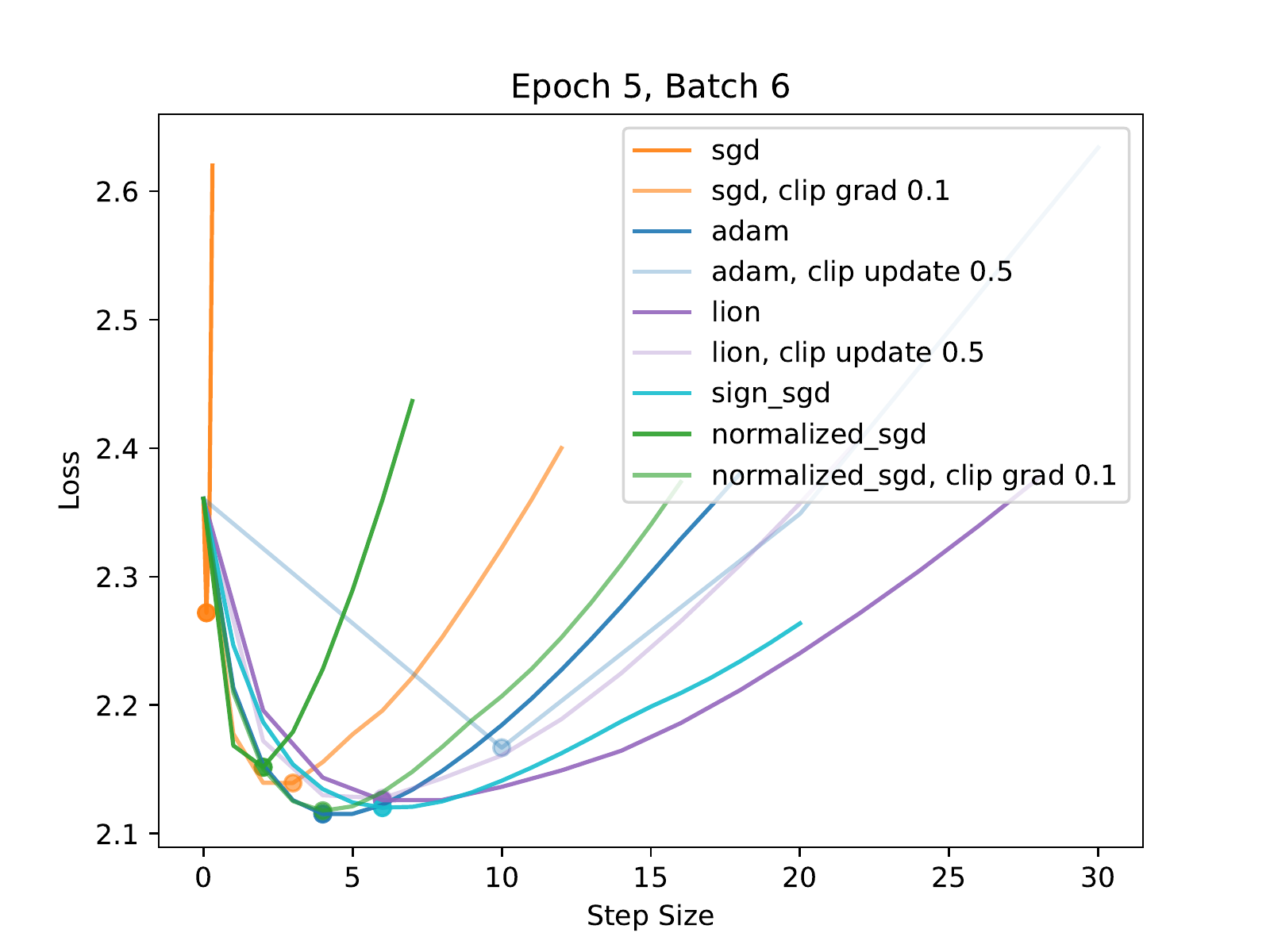}
        \includegraphics[width=0.32\textwidth]{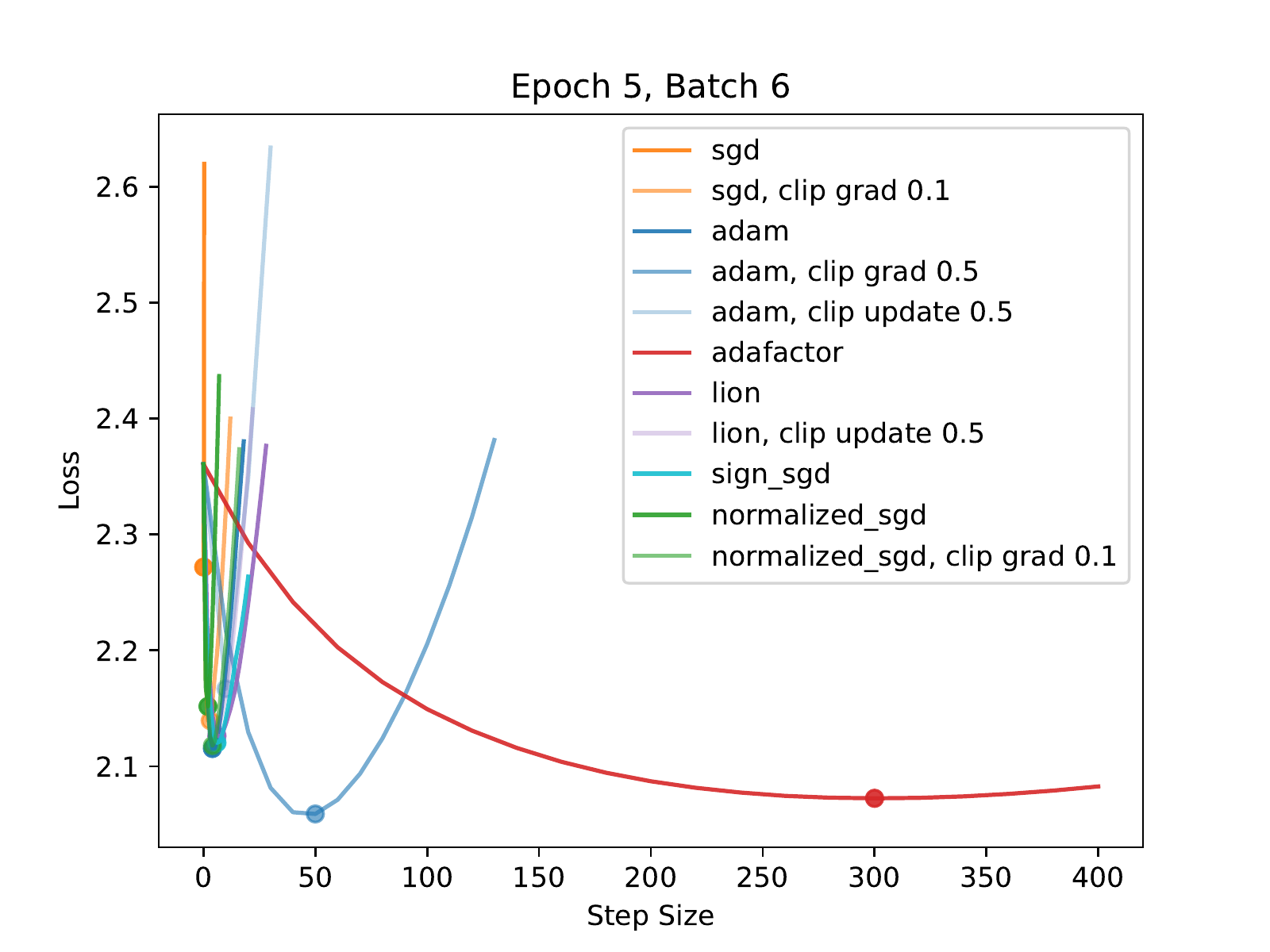}
        \includegraphics[width=0.32\textwidth]{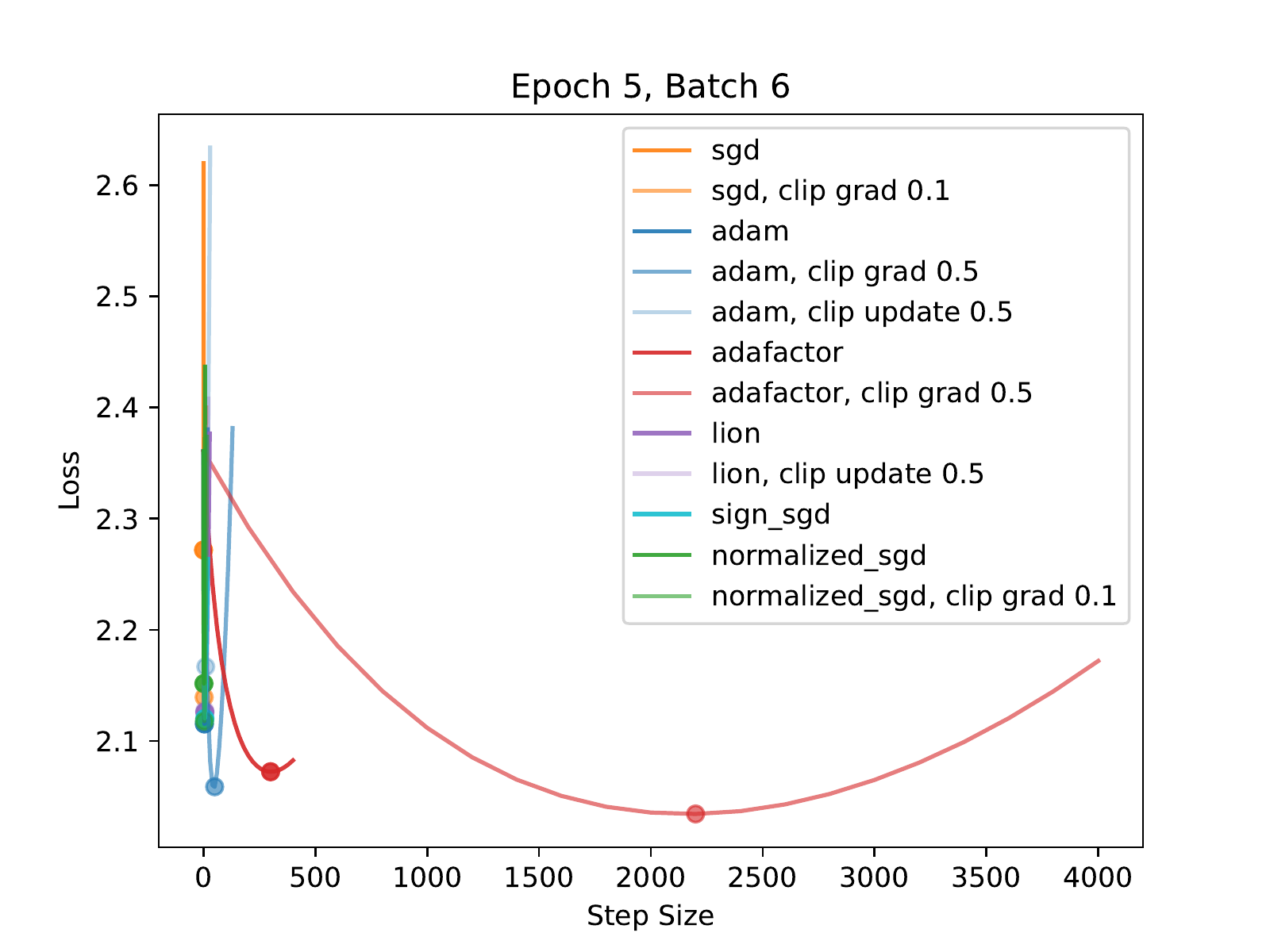}
        \caption{Experiment 1}
    \end{subfigure}
    \begin{subfigure}{\textwidth}\centering
        \includegraphics[width=0.32\textwidth]{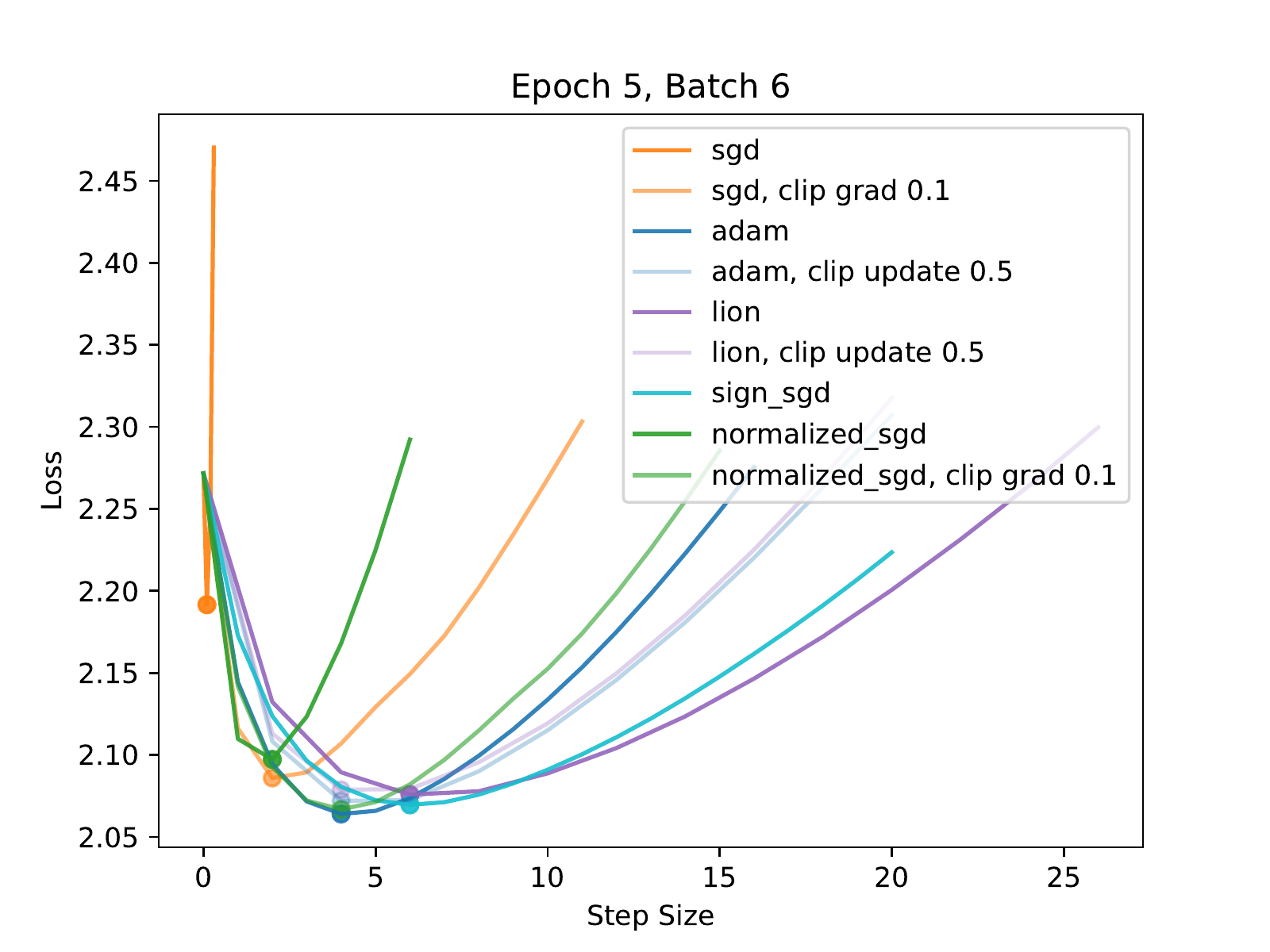}
        \includegraphics[width=0.32\textwidth]{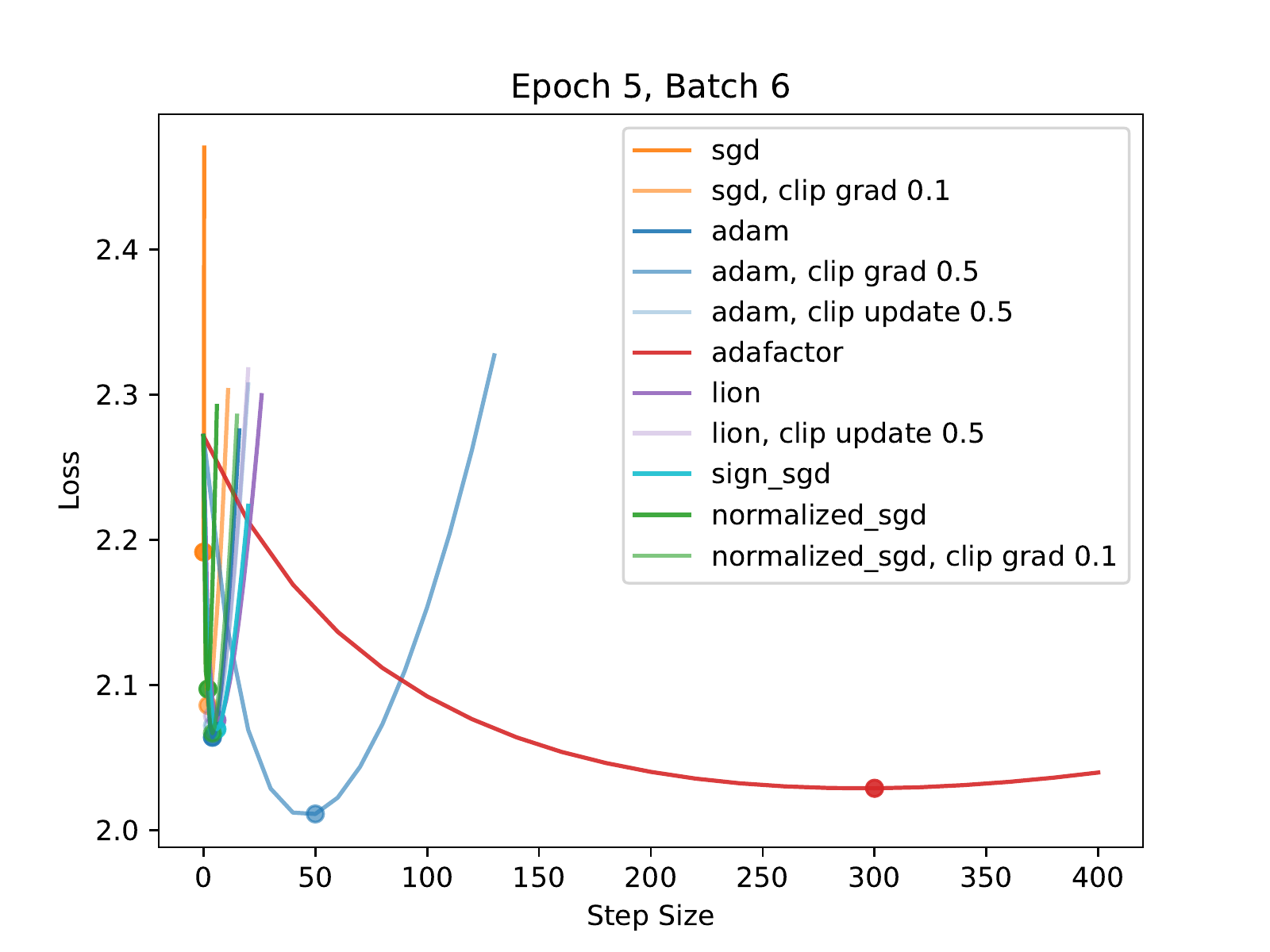}
        \includegraphics[width=0.32\textwidth]{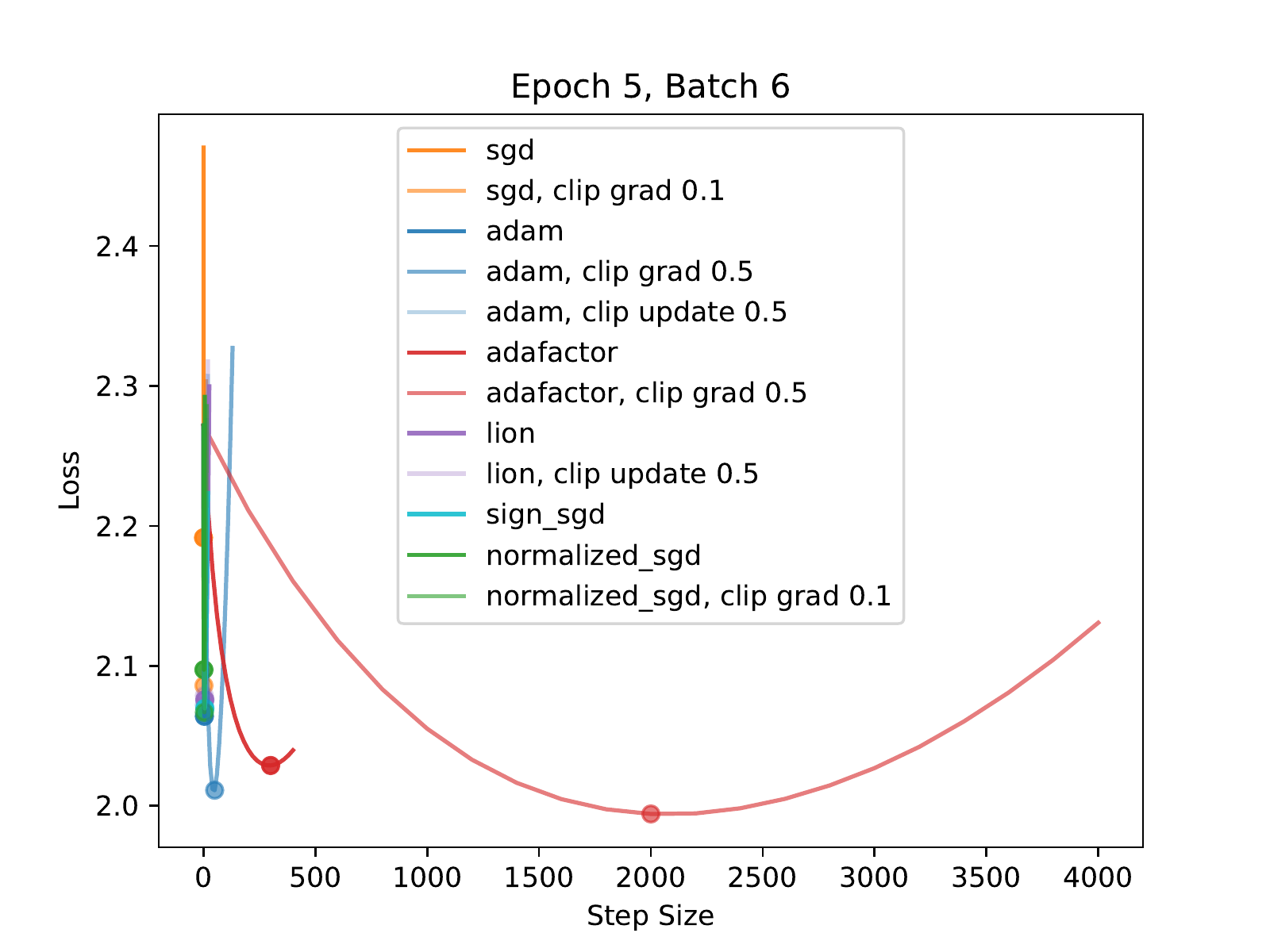}
        \caption{Experiment 2}
    \end{subfigure}
    \begin{subfigure}{\textwidth}\centering
        \includegraphics[width=0.32\textwidth]{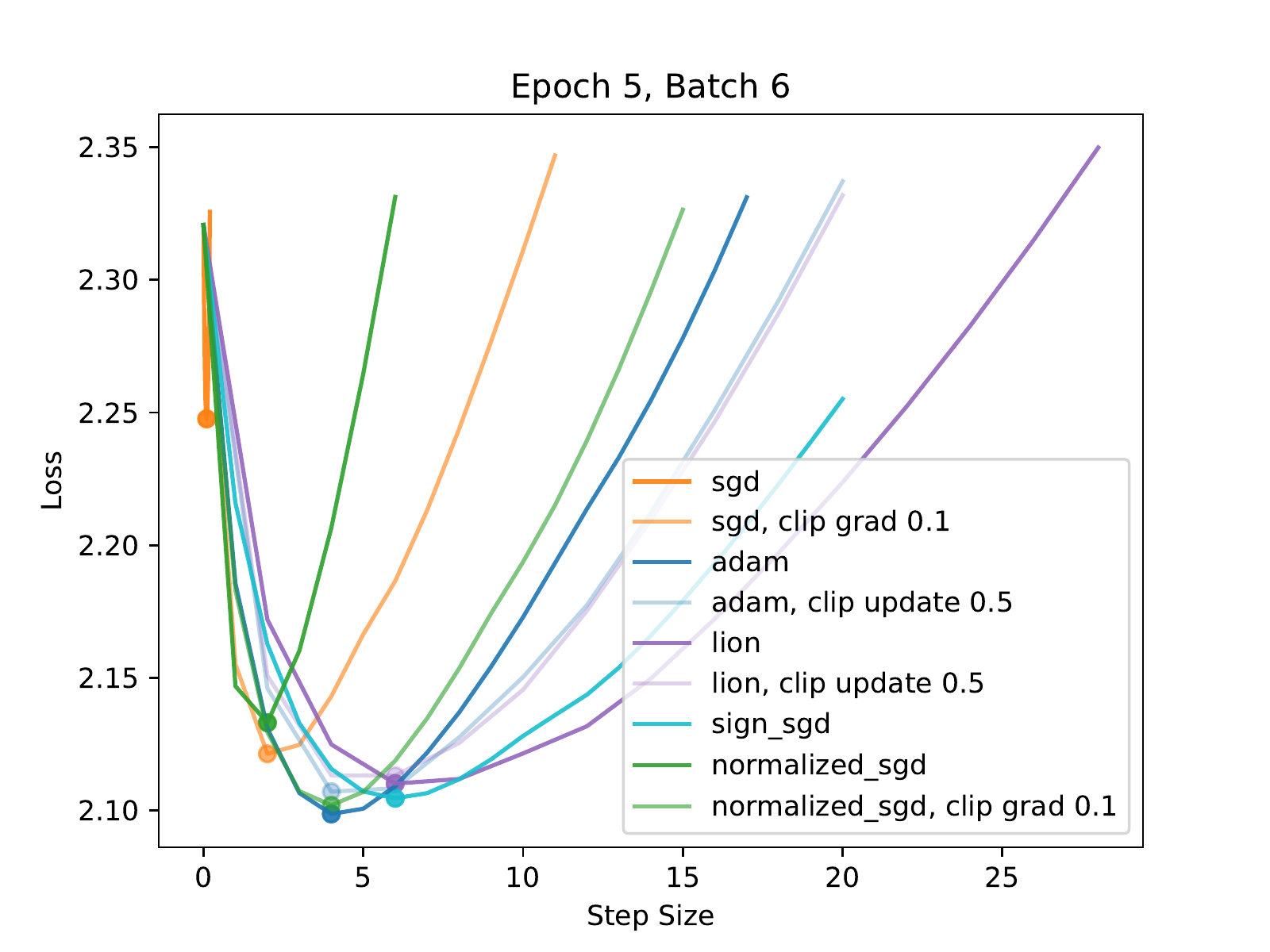}
        \includegraphics[width=0.32\textwidth]{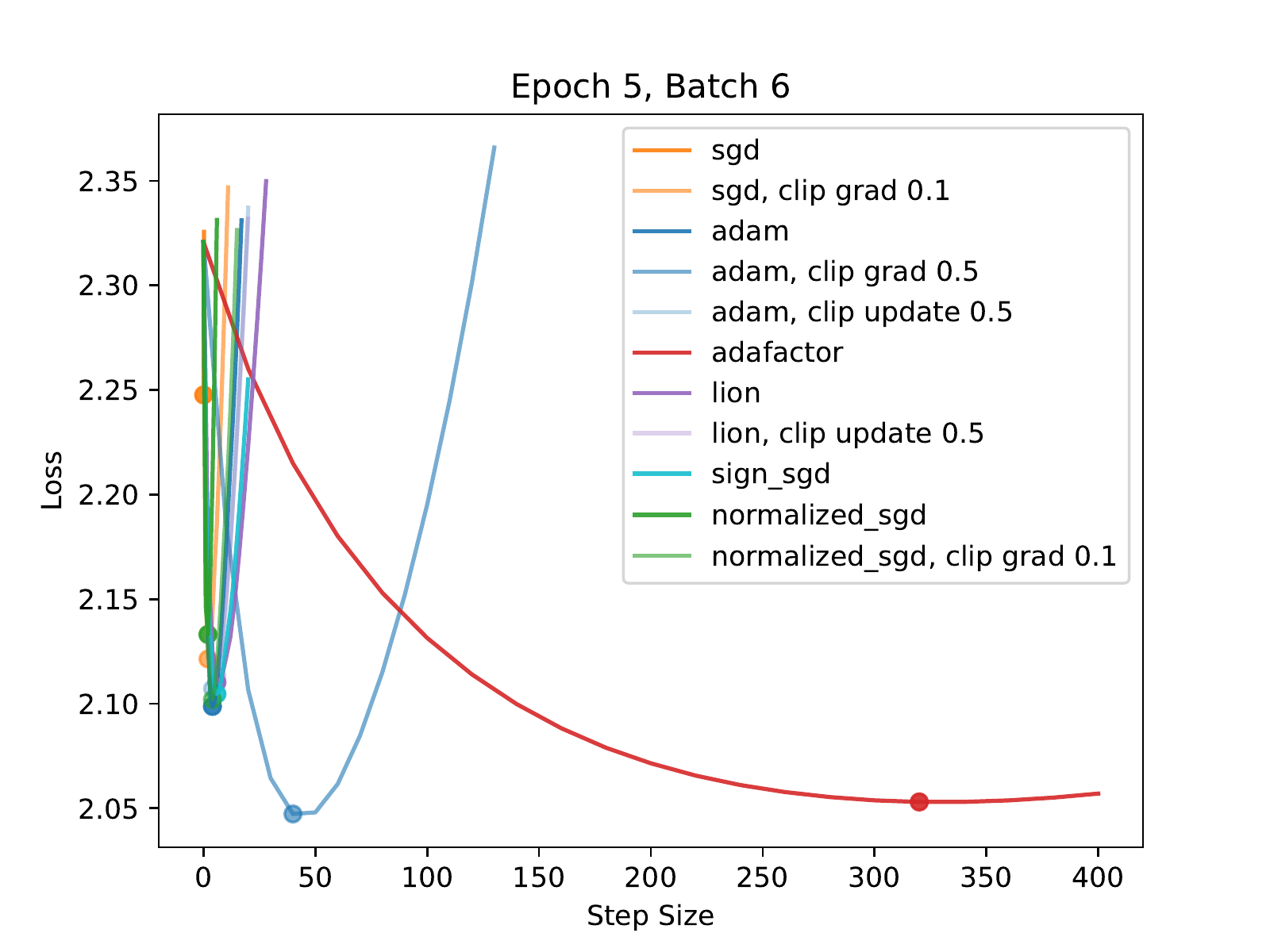}
        \includegraphics[width=0.32\textwidth]{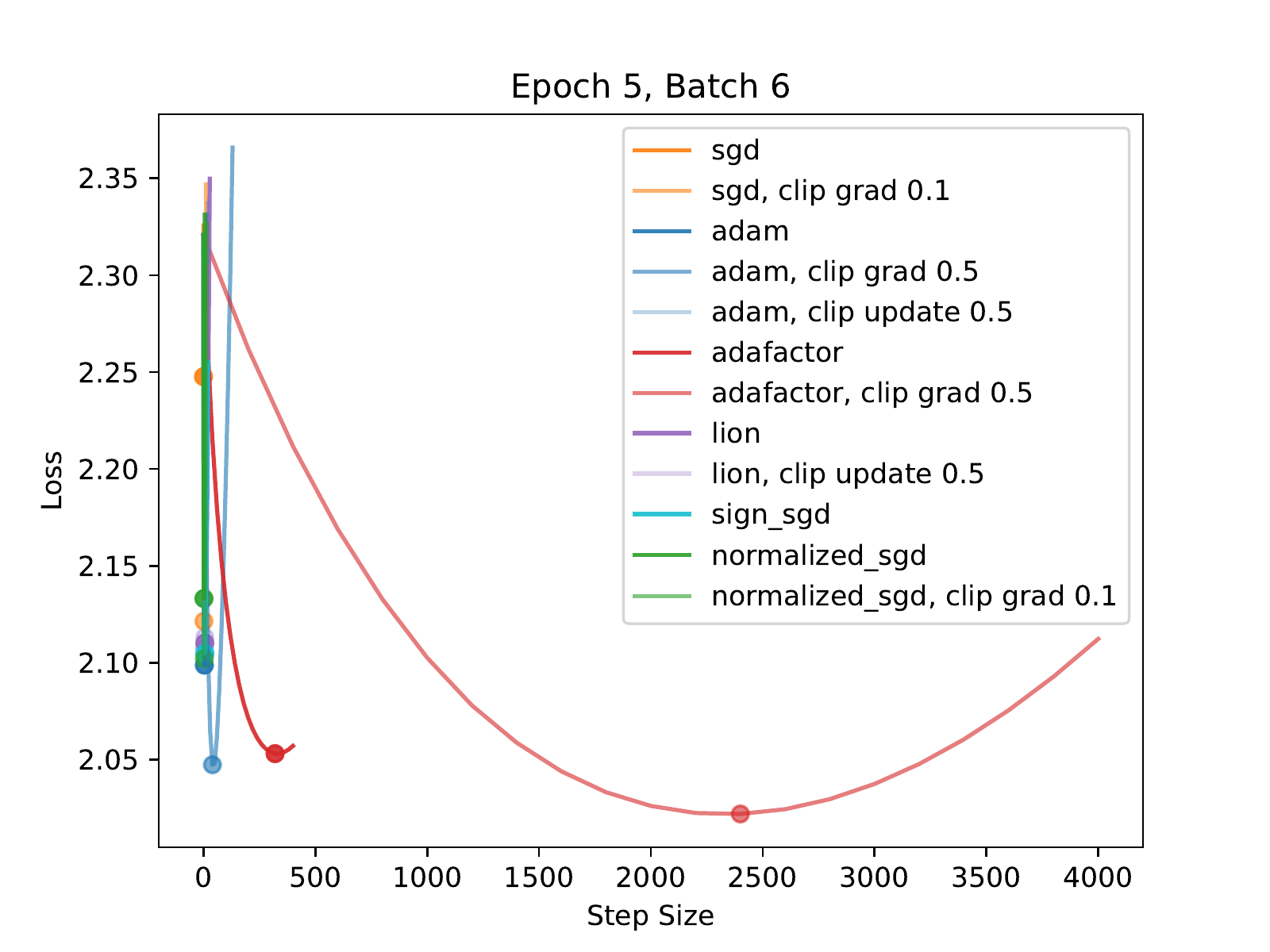}
        \caption{Experiment 3}
    \end{subfigure}
    \caption{Landscape visualization of machine translation in SGD trajectory at Epoch 5.}
\end{figure}

\begin{figure}[h]
    \centering
    \begin{subfigure}{\textwidth}\centering
        \includegraphics[width=0.32\textwidth]{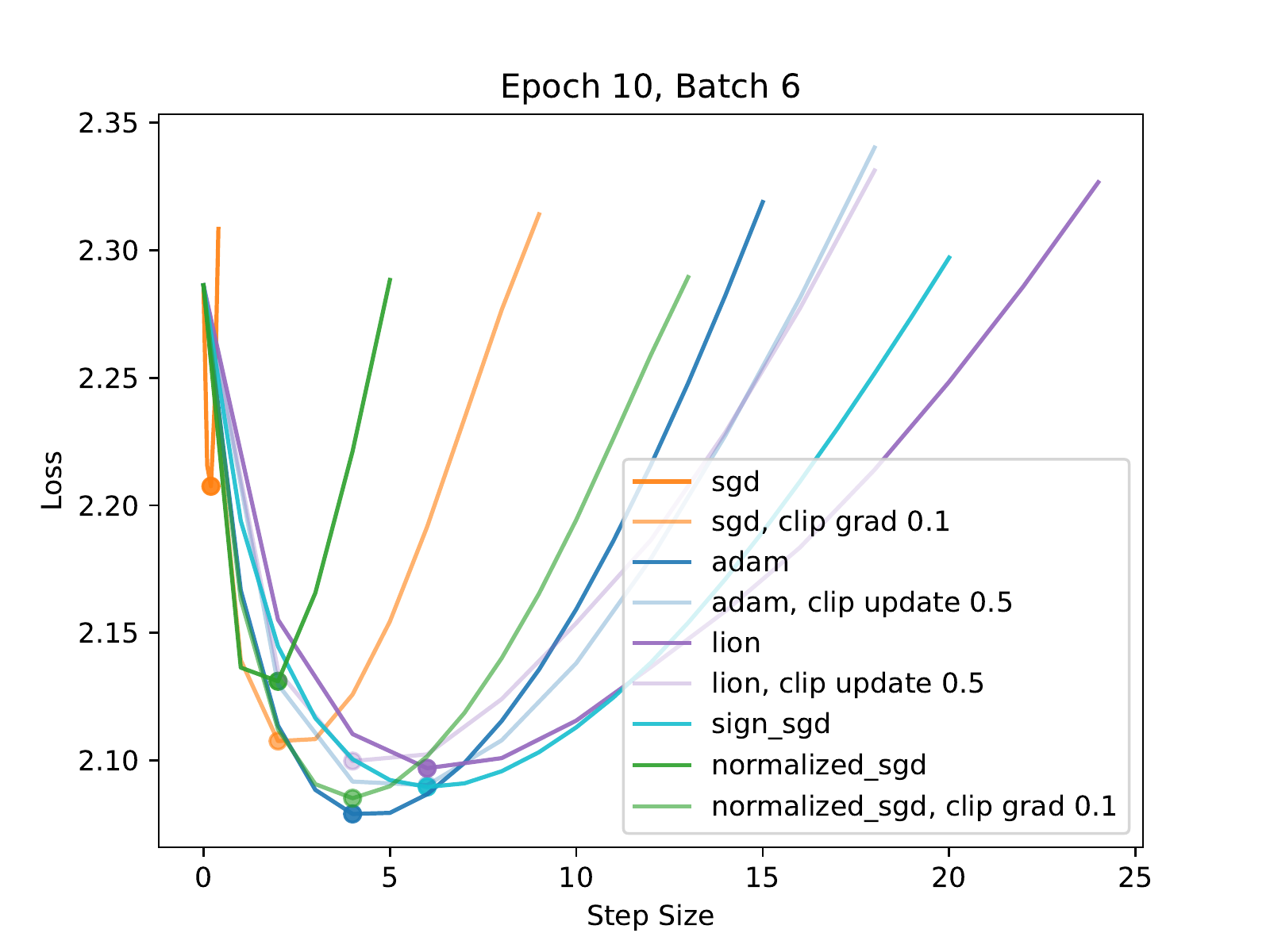}
        \includegraphics[width=0.32\textwidth]{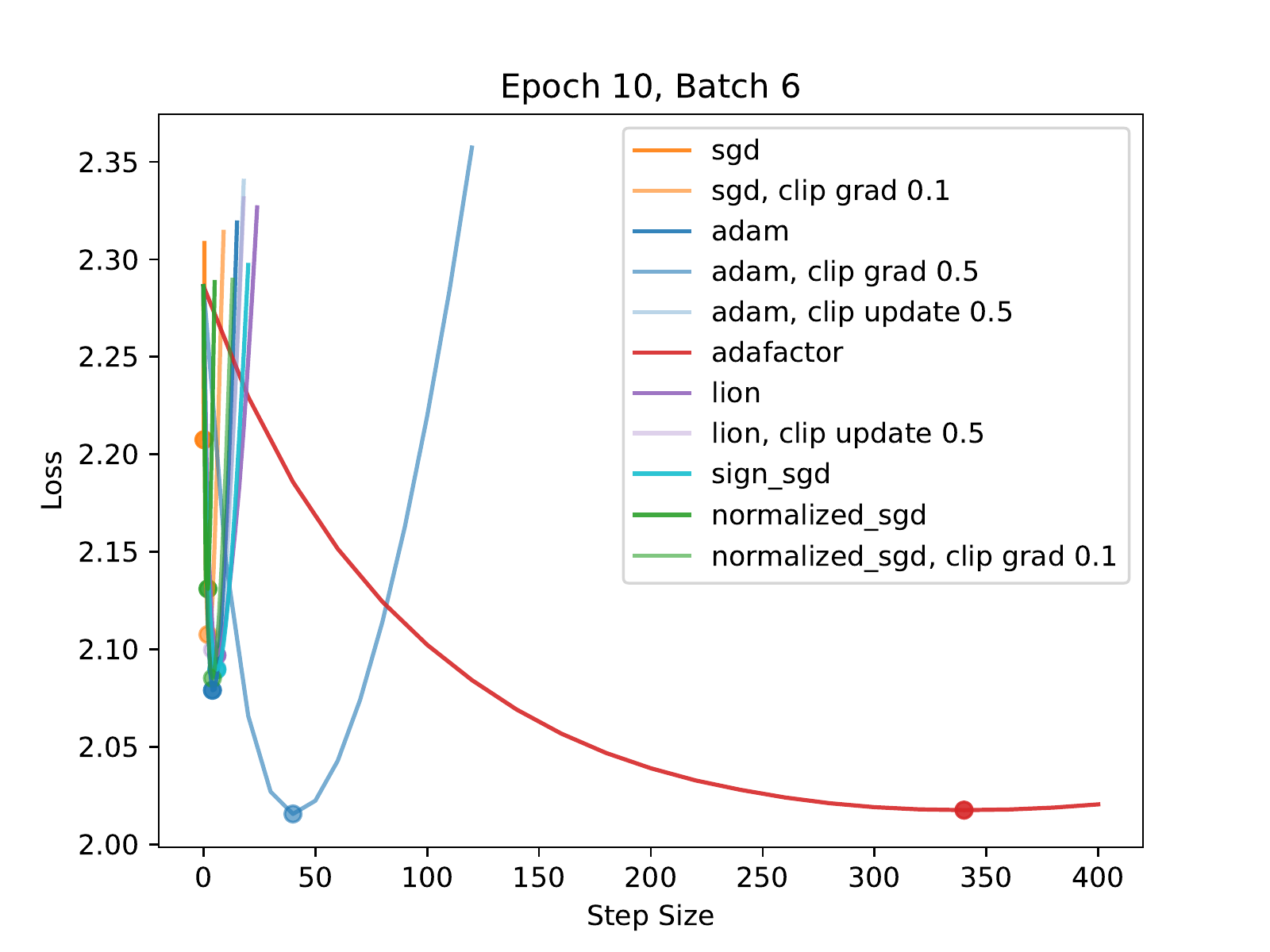}
        \includegraphics[width=0.32\textwidth]{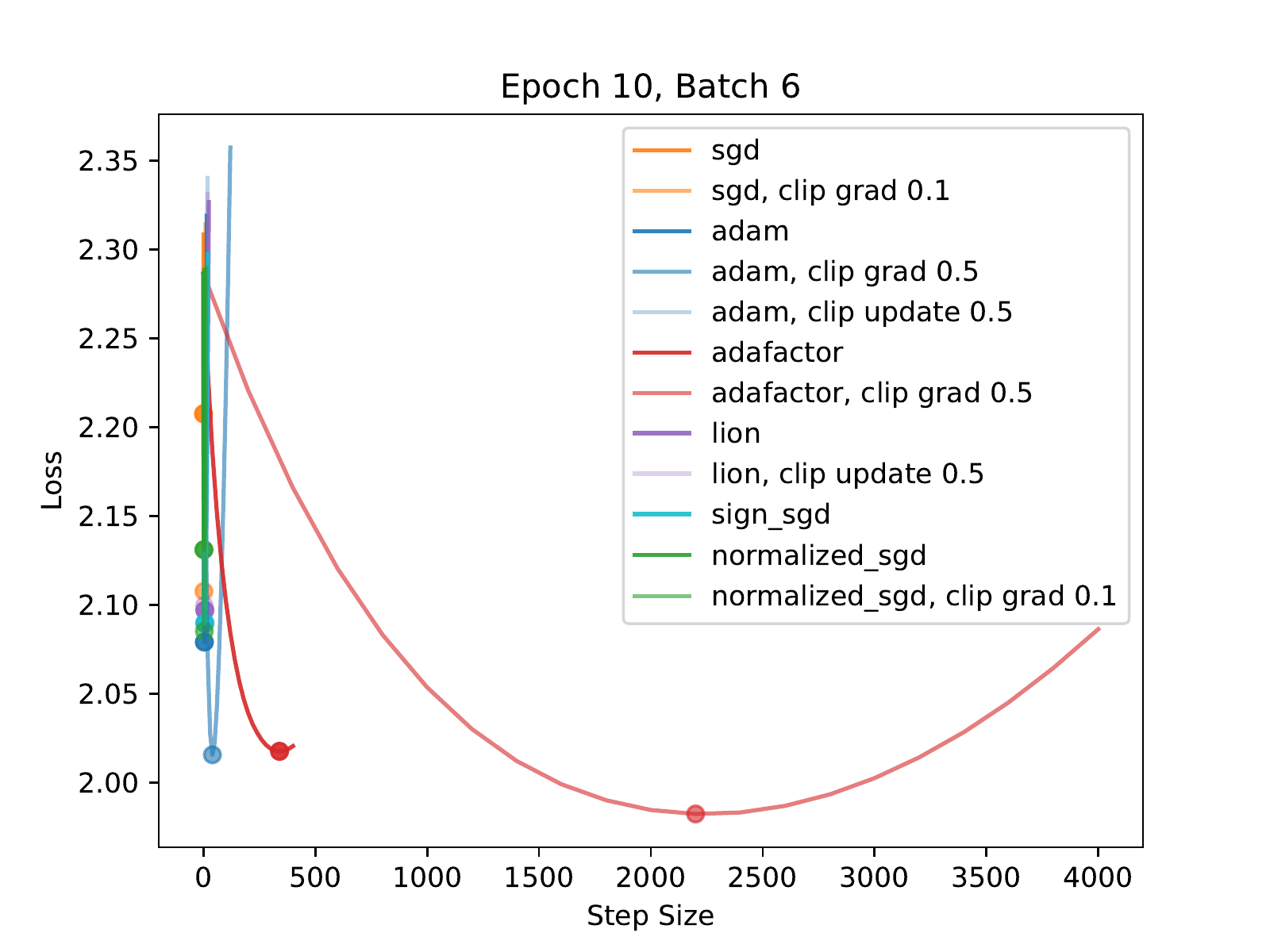}
        \caption{Experiment 1}
    \end{subfigure}
    \begin{subfigure}{\textwidth}\centering
        \includegraphics[width=0.32\textwidth]{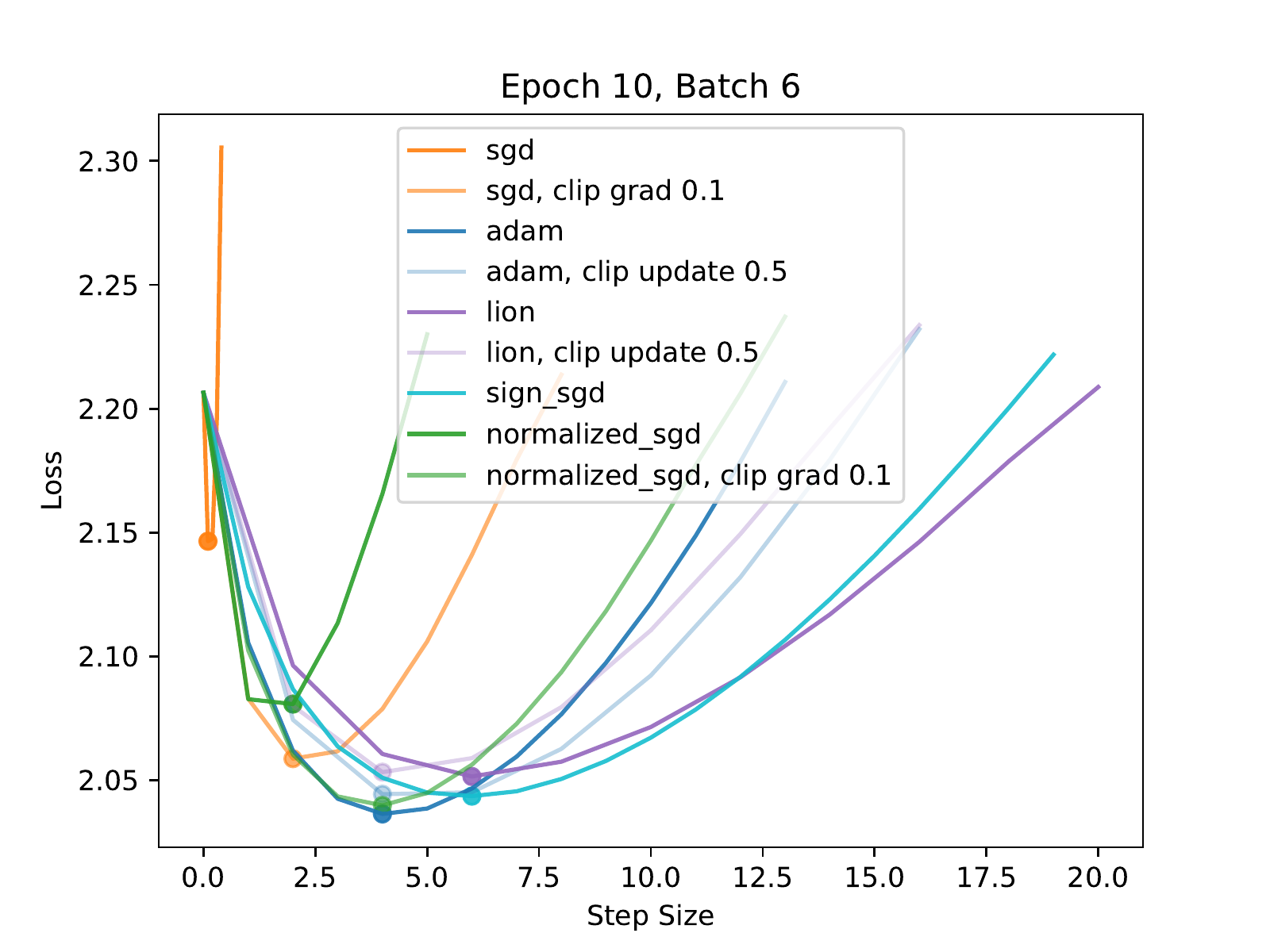}
        \includegraphics[width=0.32\textwidth]{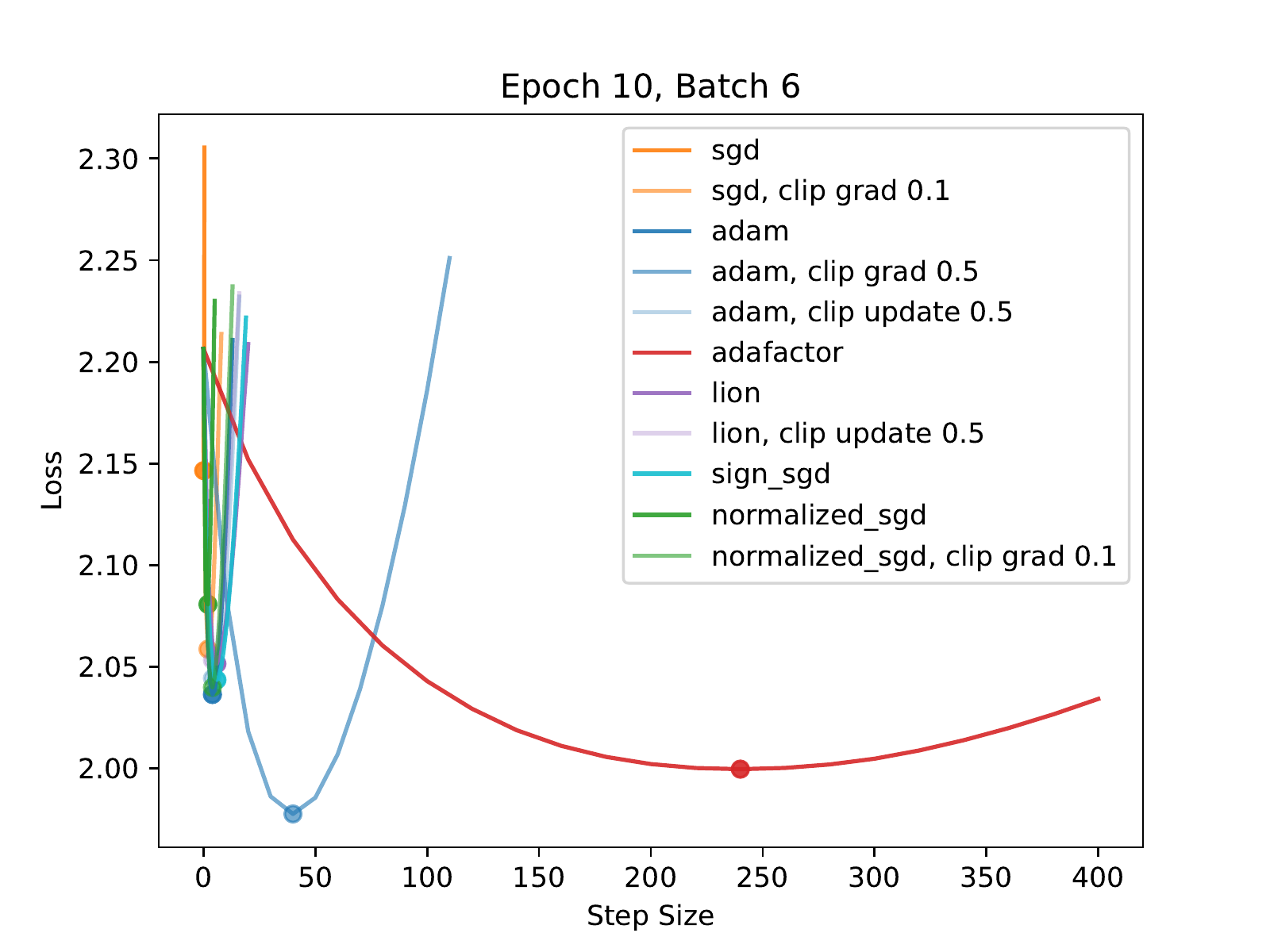}
        \includegraphics[width=0.32\textwidth]{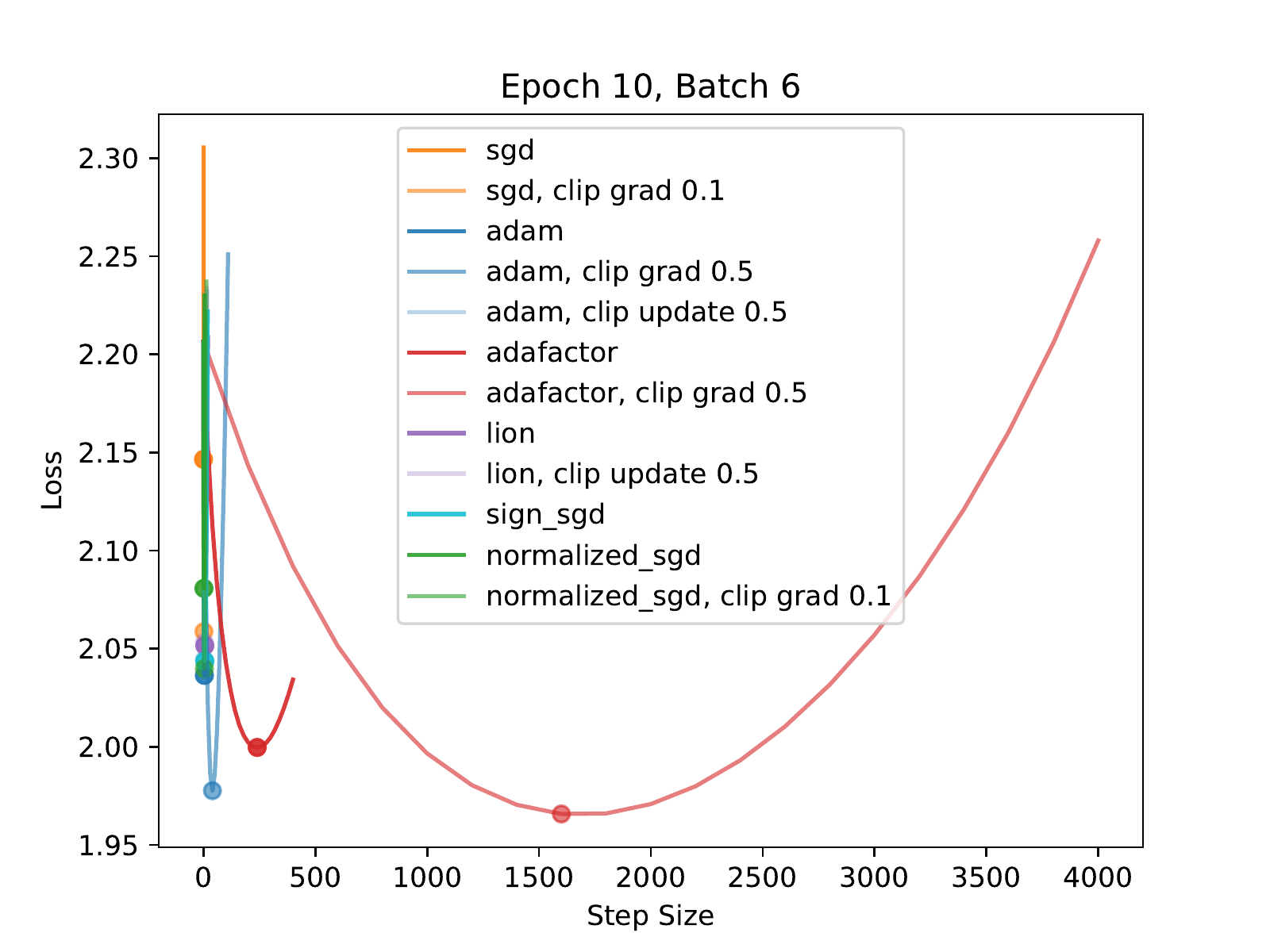}
        \caption{Experiment 2}
    \end{subfigure}
    \begin{subfigure}{\textwidth}\centering
        \includegraphics[width=0.32\textwidth]{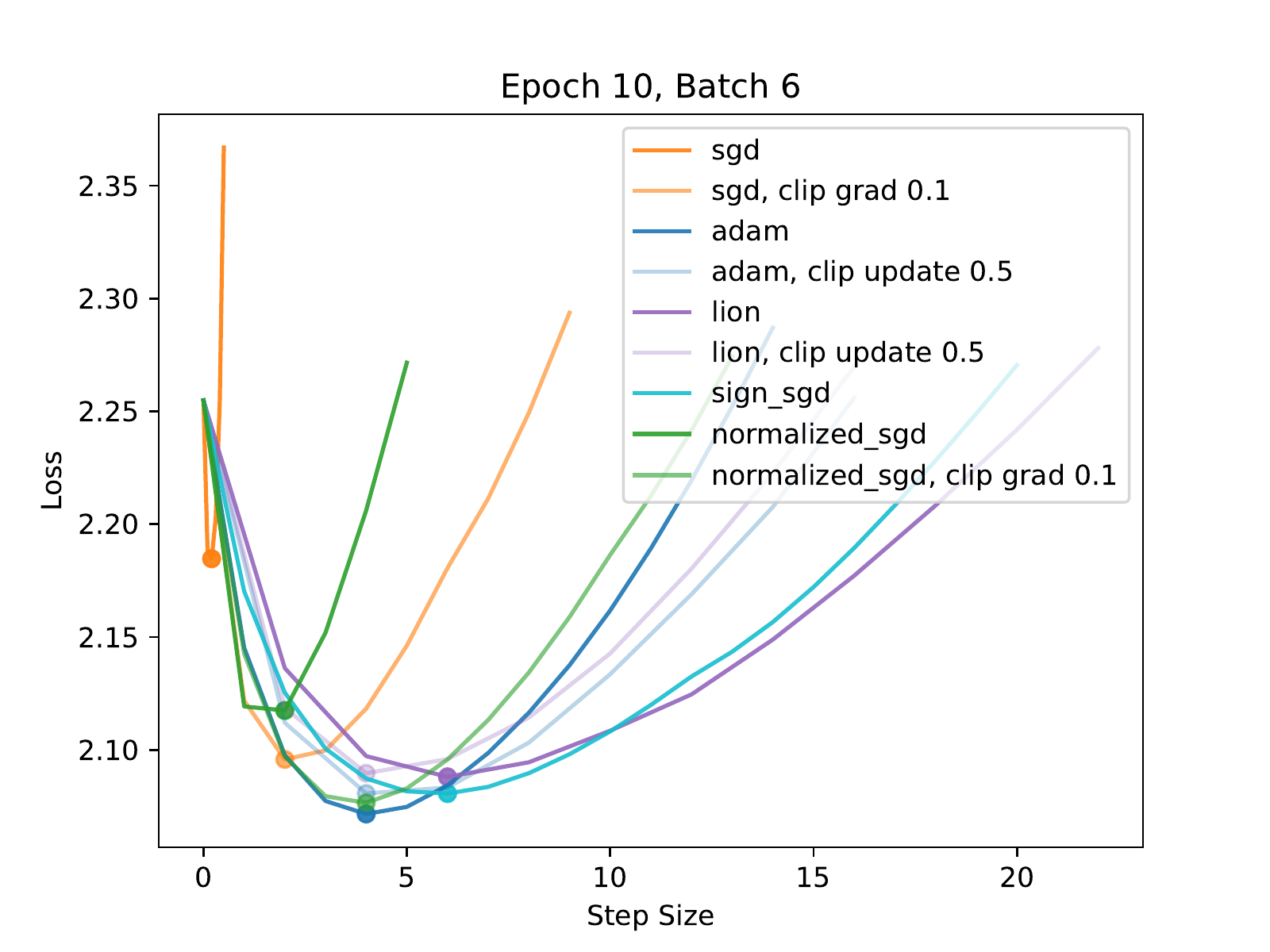}
        \includegraphics[width=0.32\textwidth]{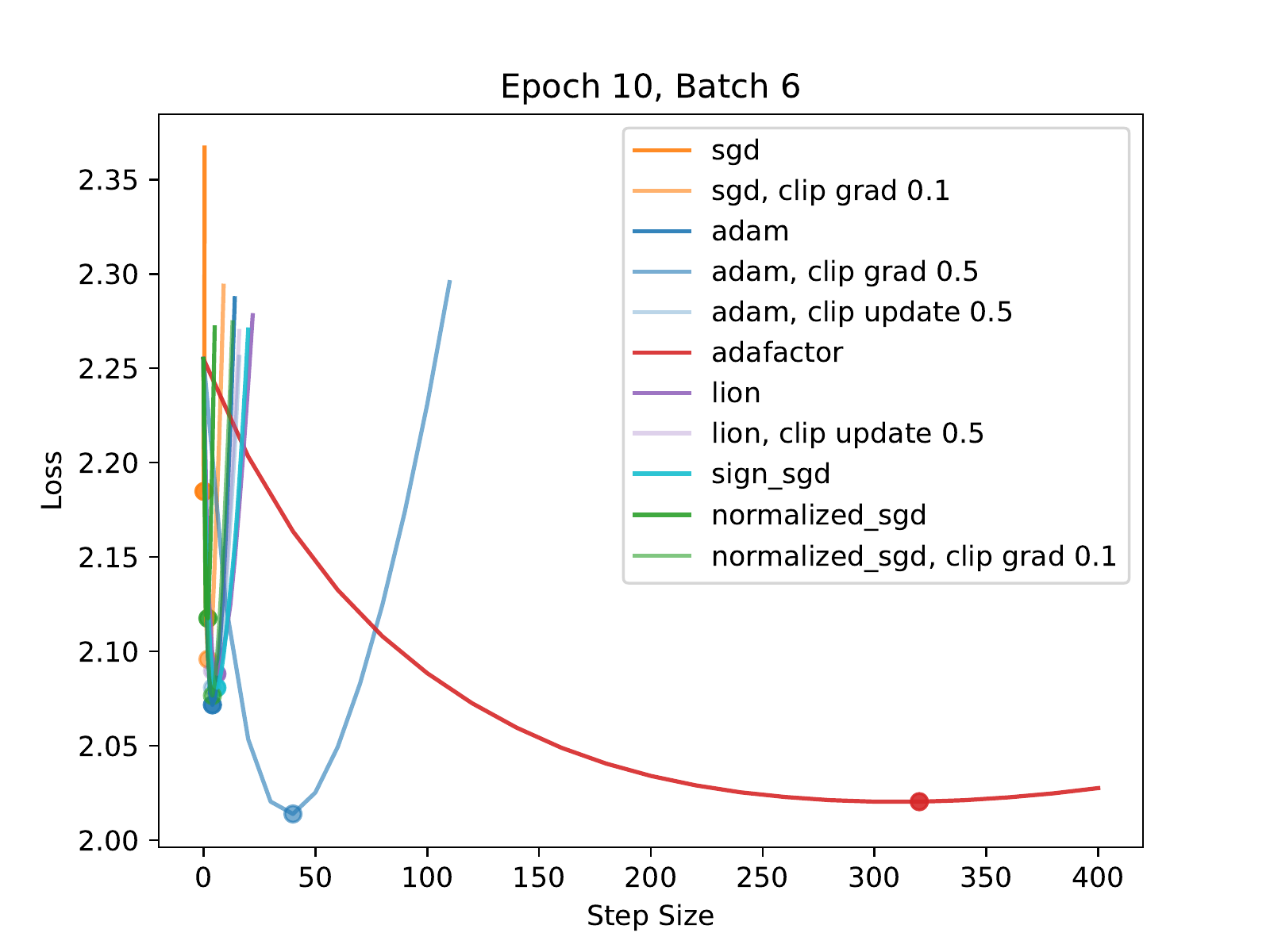}
        \includegraphics[width=0.32\textwidth]{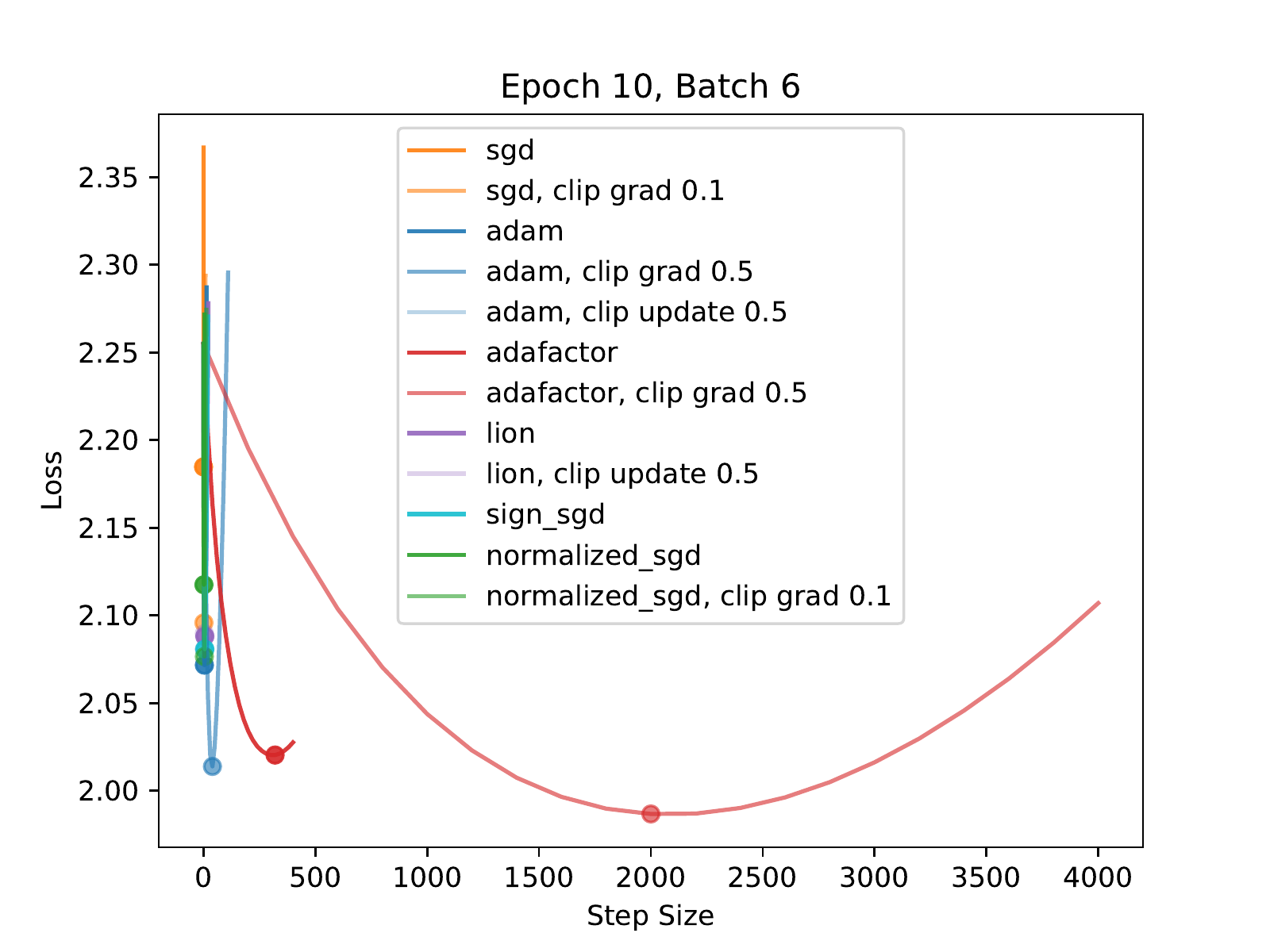}
        \caption{Experiment 3}
    \end{subfigure}
    \caption{Landscape visualization of machine translation in SGD trajectory at Epoch 10.}
\end{figure}

\begin{figure}[h]
    \centering
    \begin{subfigure}{\textwidth}\centering
        \includegraphics[width=0.32\textwidth]{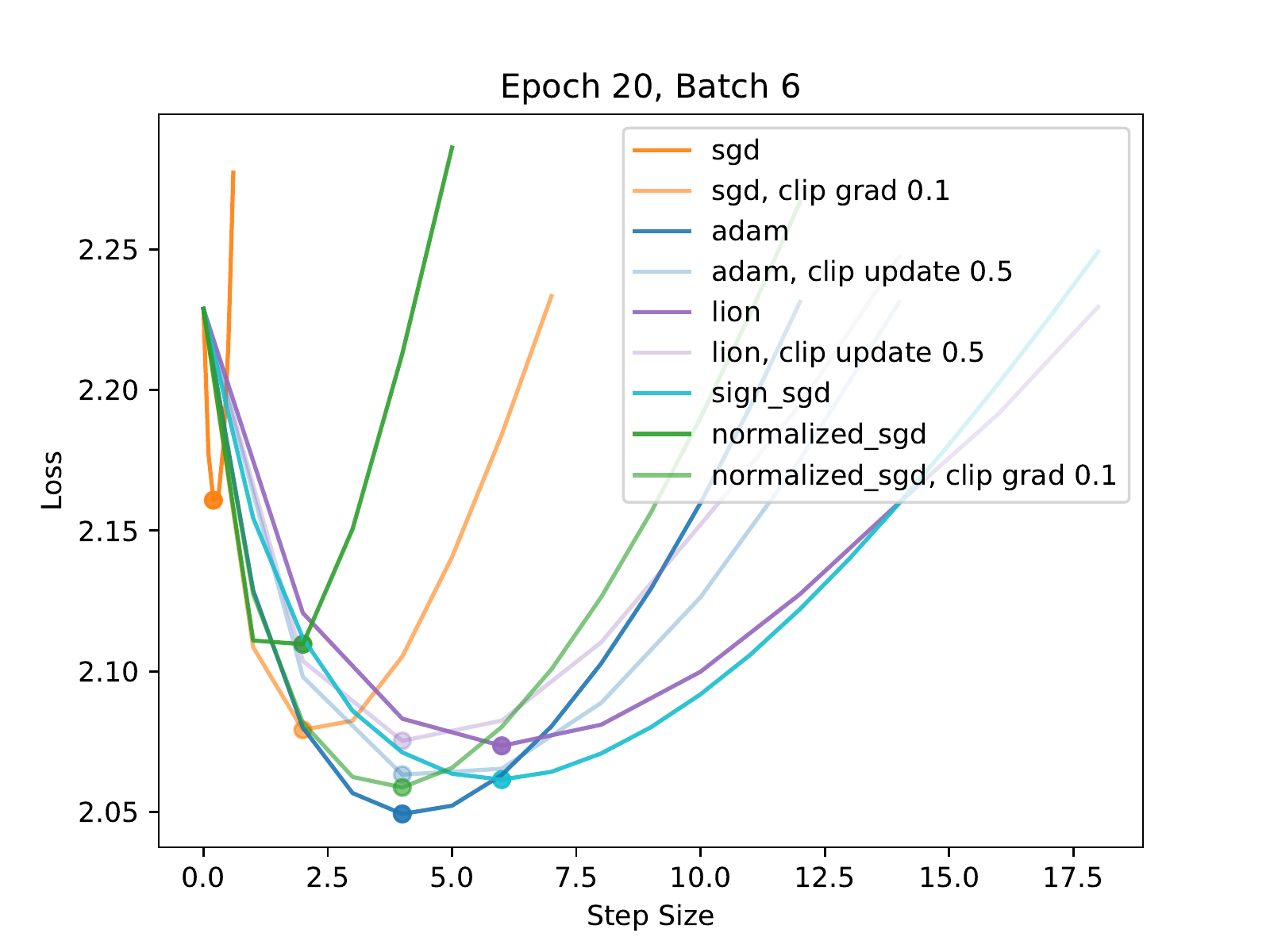}
        \includegraphics[width=0.32\textwidth]{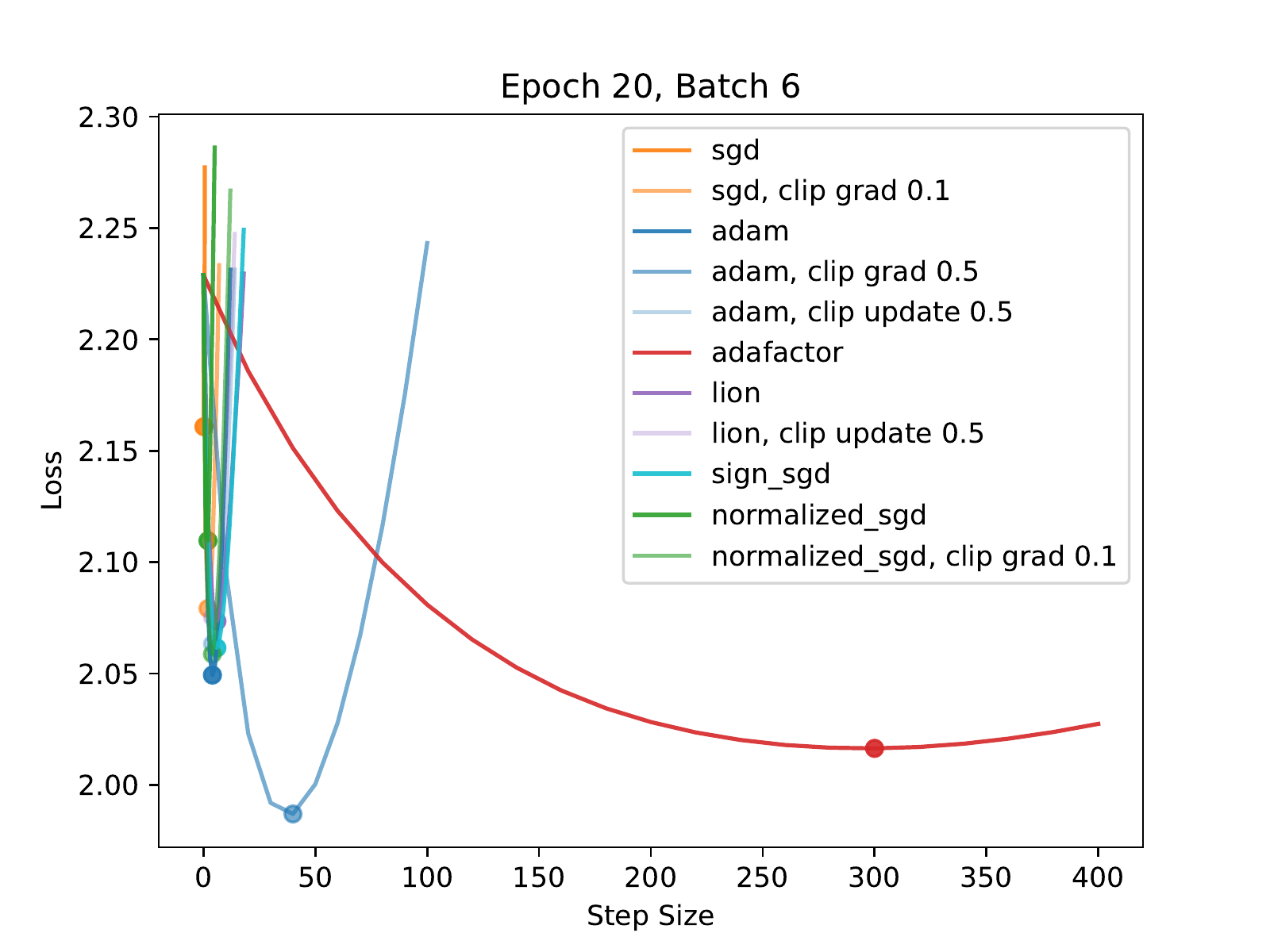}
        \includegraphics[width=0.32\textwidth]{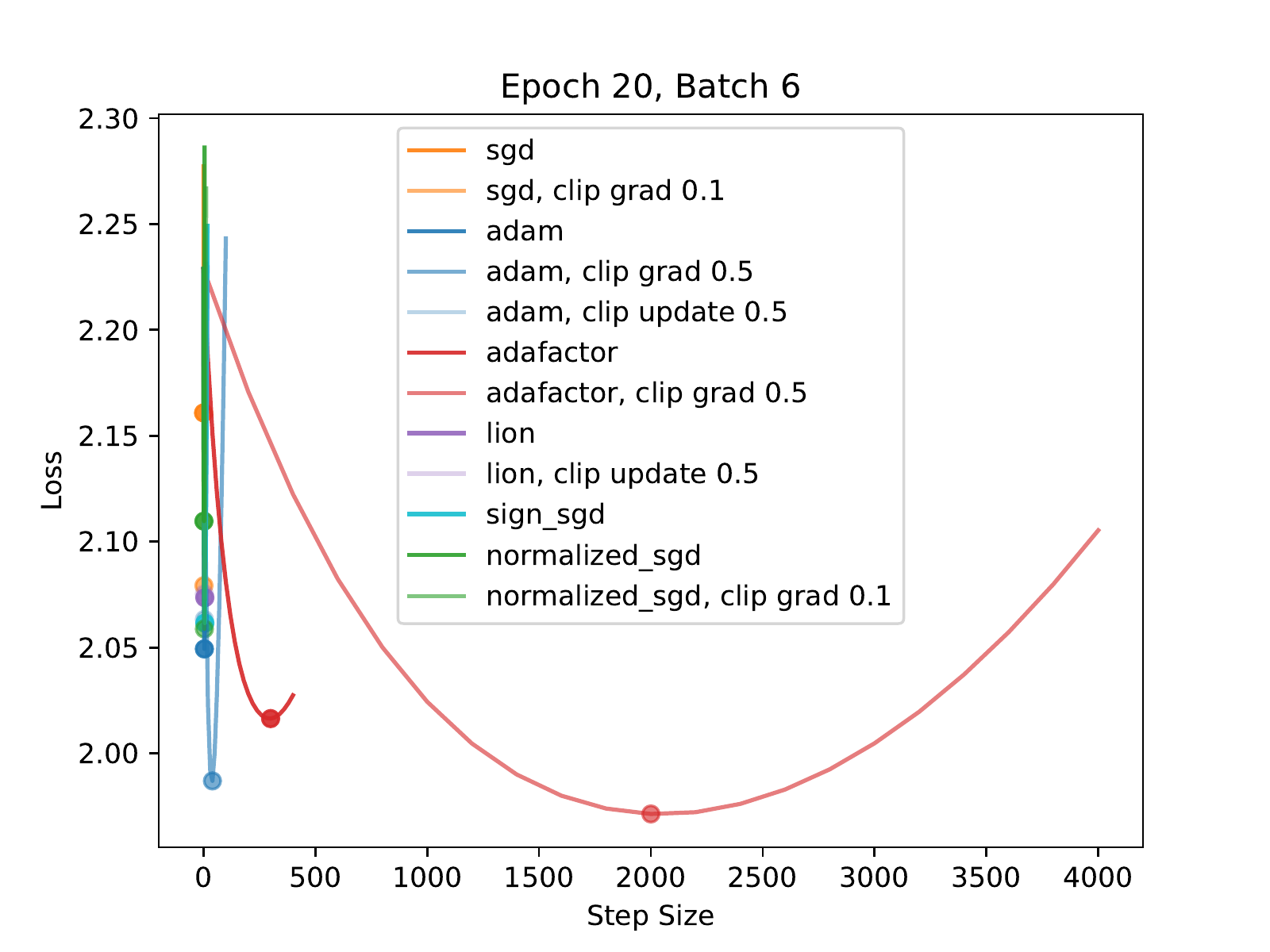}
        \caption{Experiment 1}
    \end{subfigure}
    \begin{subfigure}{\textwidth}\centering
        \includegraphics[width=0.32\textwidth]{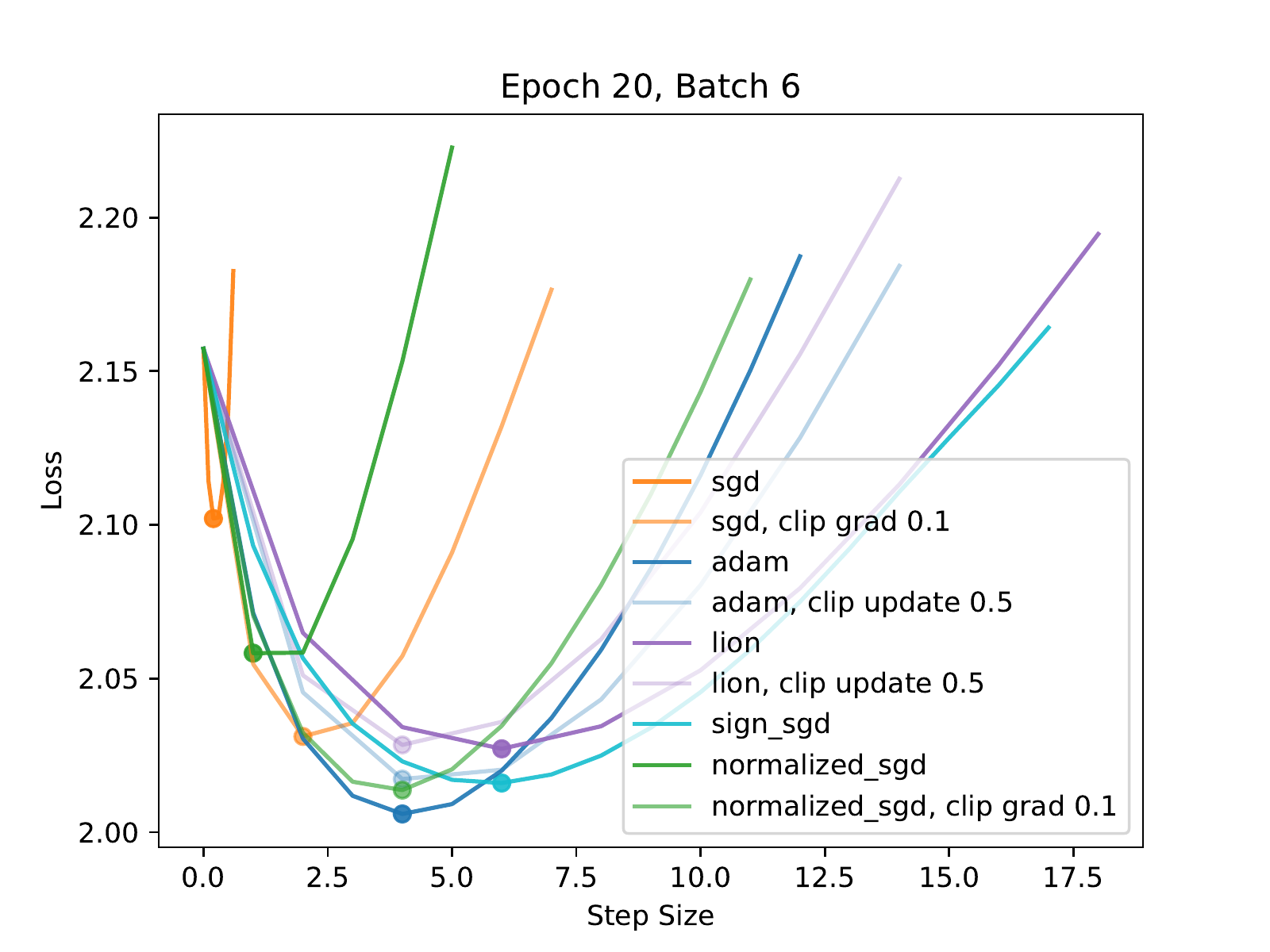}
        \includegraphics[width=0.32\textwidth]{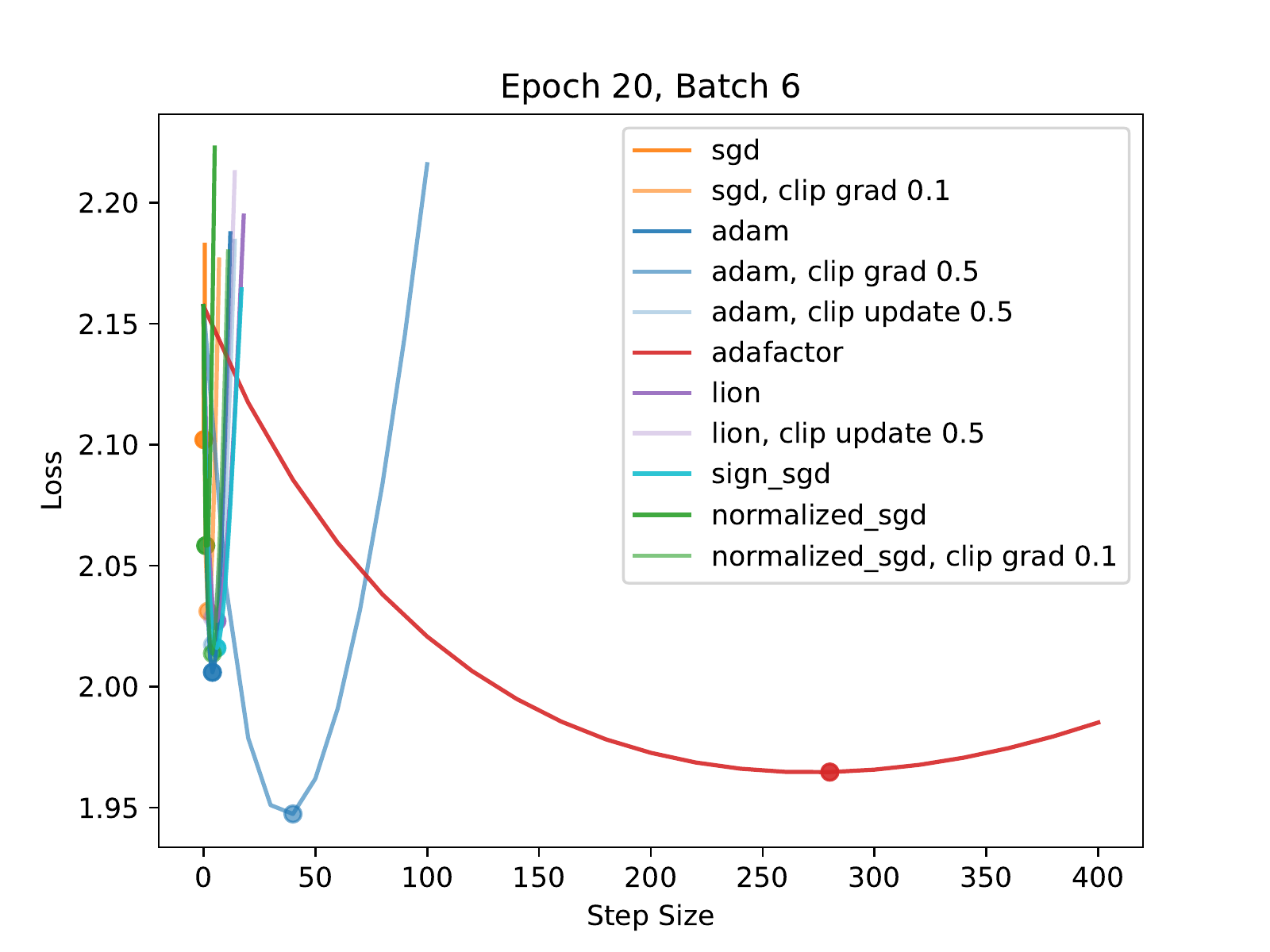}
        \includegraphics[width=0.32\textwidth]{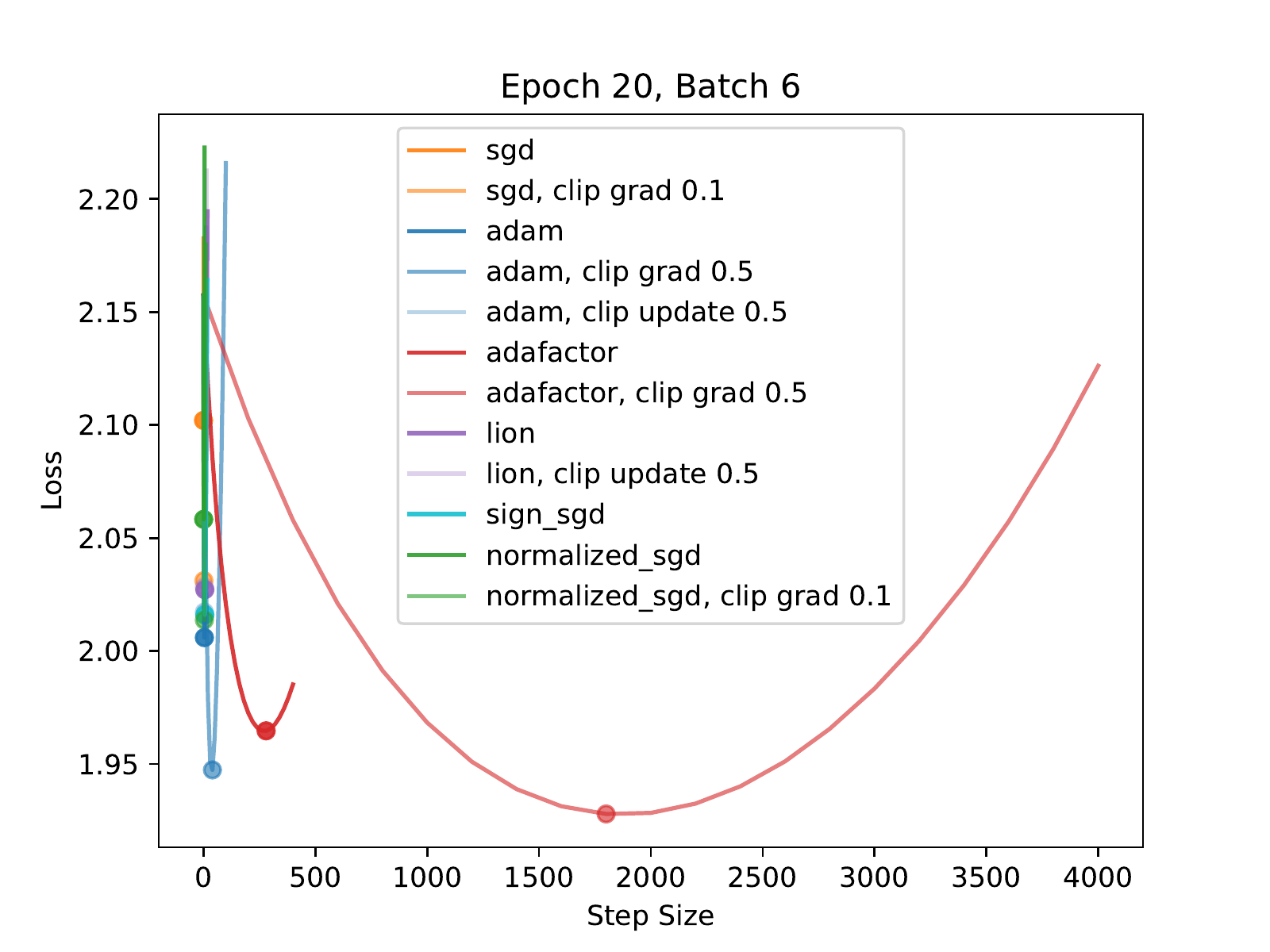}
        \caption{Experiment 2}
    \end{subfigure}
    \begin{subfigure}{\textwidth}\centering
        \includegraphics[width=0.32\textwidth]{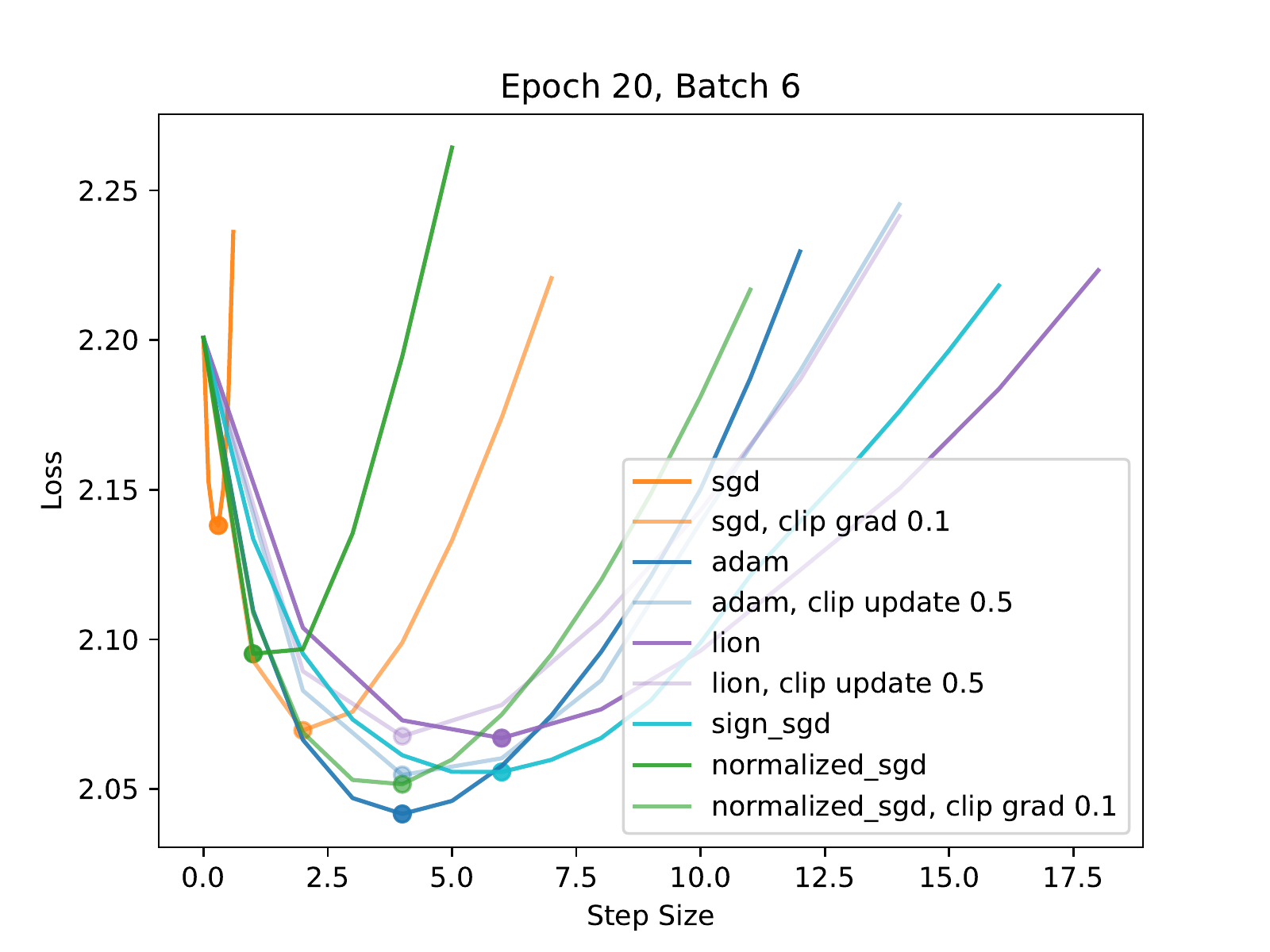}
        \includegraphics[width=0.32\textwidth]{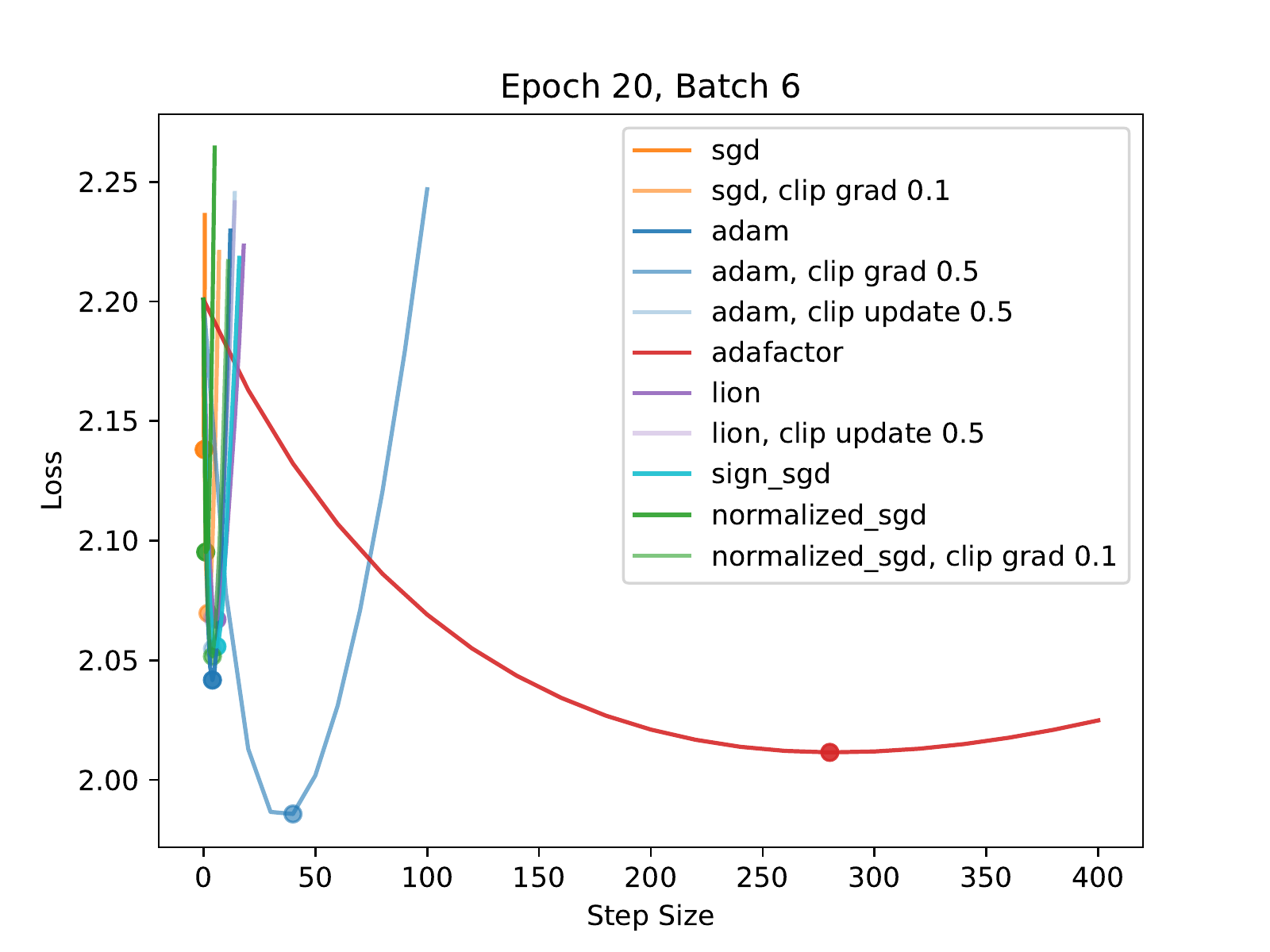}
        \includegraphics[width=0.32\textwidth]{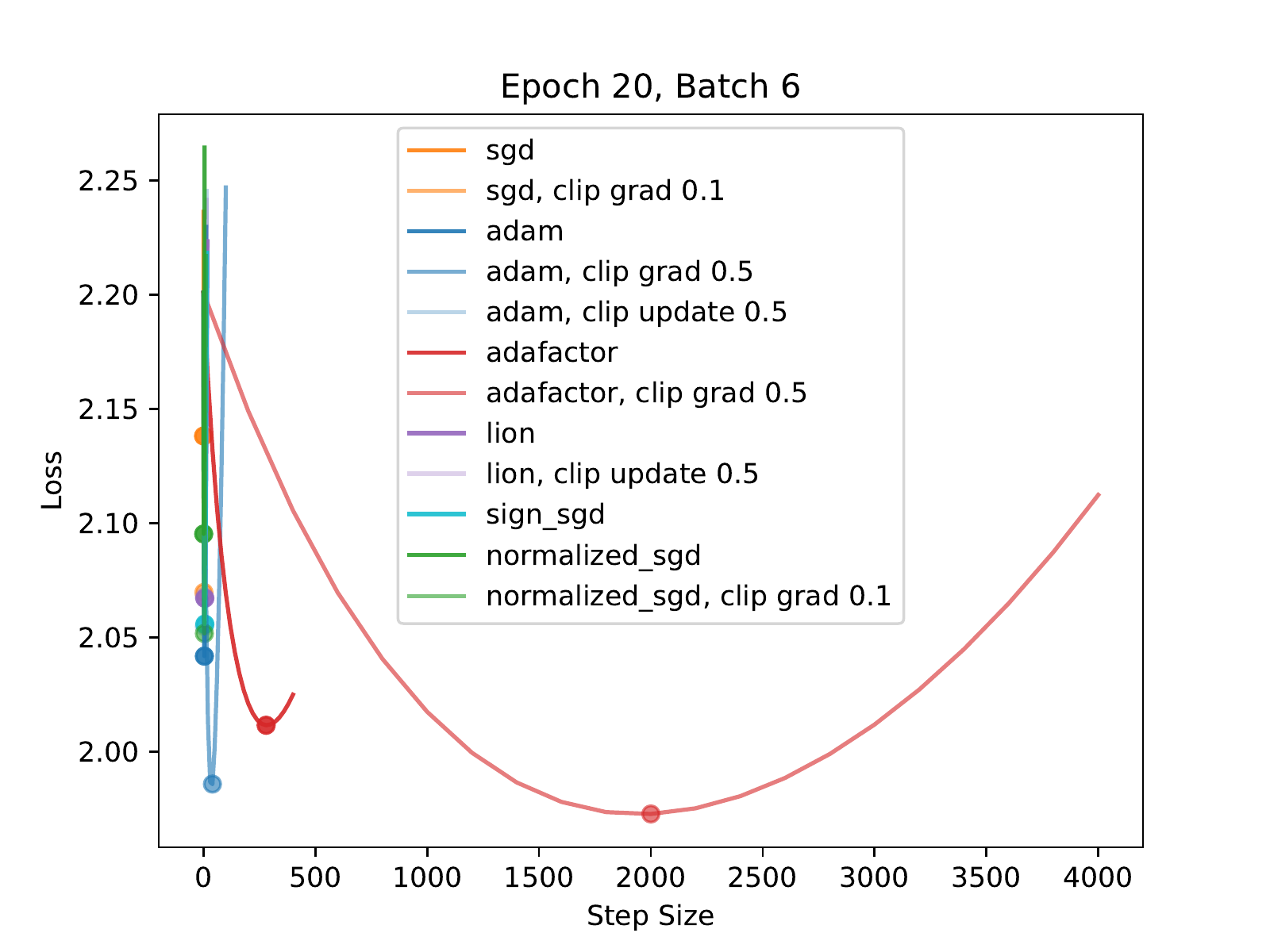}
        \caption{Experiment 3}
    \end{subfigure}
    \caption{Landscape visualization of machine translation in SGD trajectory at Epoch 20.}
\end{figure}

\begin{table}[h]
    \centering
    \begin{tabular}[h]{|l|l|l|l|l|l|}
        \hline
        \textbf{Epoch} & \textbf{Algorithm} & \textbf{Ratio 1} & \textbf{Ratio 2} & \textbf{Ratio 3} & \textbf{Mean}\\\hline
        \multirow{12}{*}{2} & sgd & $1.0$ & $1.0$ & $1.0$ & $1.0$ \\\cline{2-6}
        & sgd, clip grad 0.1 & $0.060764$ & $0.04182$ & $0.041375$ & $0.047986$ \\\cline{2-6}
        & adam & $0.029132$ & $0.020919$ & $0.020288$ & $0.023446$ \\\cline{2-6}
        & adam, clip grad 0.5 & $6.62\times 10^{-5}$ & $4.62\times 10^{-5}$ & $4.56\times 10^{-5}$ & $5.27\times 10^{-5}$ \\\cline{2-6}
        & adam, clip update 0.5 & $0.021666$ & $0.014873$ & $0.014498$ & $0.017012$ \\\cline{2-6}
        & adafactor & $6.91\times 10^{-6}$ & $4.47\times 10^{-6}$ & $3.91\times 10^{-6}$ & $5.1\times 10^{-6}$ \\\cline{2-6}
        & adafactor, clip grad 0.5 & $2.36\times 10^{-8}$ & $1.49\times 10^{-8}$ & $1.25\times 10^{-8}$ & $1.7\times 10^{-8}$ \\\cline{2-6}
        & lion & $0.013626$ & $0.009318$ & $0.00908$ & $0.010675$ \\\cline{2-6}
        & lion, clip update 0.5 & $0.024812$ & $0.016958$ & $0.016522$ & $0.019431$ \\\cline{2-6}
        & sign sgd & $0.009136$ & $0.006336$ & $0.006284$ & $0.007252$ \\\cline{2-6}
        & normalized sgd & $0.083261$ & $0.056086$ & $0.055766$ & $0.065037$ \\\cline{2-6}
        & normalized sgd, clip grad 0.1 & $0.020651$ & $0.01418$ & $0.013987$ & $0.016273$ \\\cline{1-6}
        \multirow{12}{*}{5} & sgd & $1.0$ & $1.0$ & $1.0$ & $1.0$ \\\cline{2-6}
        & sgd, clip grad 0.1 & $0.010348$ & $0.015692$ & $-0.00493$ & $0.007037$ \\\cline{2-6}
        & adam & $0.003186$ & $0.004729$ & $-0.000738$ & $0.002392$ \\\cline{2-6}
        & adam, clip grad 0.5 & $1.04\times 10^{-5}$ & $1.75\times 10^{-5}$ & $8.79\times 10^{-6}$ & $1.22\times 10^{-5}$ \\\cline{2-6}
        & adam, clip update 0.5 & $0.002292$ & $0.003452$ & $-0.000687$ & $0.001686$ \\\cline{2-6}
        & adafactor & $7.61\times 10^{-7}$ & $1.13\times 10^{-6}$ & $5.07\times 10^{-7}$ & $8.0\times 10^{-7}$ \\\cline{2-6}
        & adafactor, clip grad 0.5 & $3.07\times 10^{-9}$ & $5.01\times 10^{-9}$ & $4.24\times 10^{-9}$ & $4.11\times 10^{-9}$ \\\cline{2-6}
        & lion & $0.001448$ & $0.00218$ & $-0.000473$ & $0.001052$ \\\cline{2-6}
        & lion, clip update 0.5 & $0.002629$ & $0.003959$ & $-0.000863$ & $0.001908$ \\\cline{2-6}
        & sign sgd & $0.001542$ & $0.002334$ & $-0.000362$ & $0.001172$ \\\cline{2-6}
        & normalized sgd & $0.013939$ & $0.021917$ & $0.005331$ & $0.013729$ \\\cline{2-6}
        & normalized sgd, clip grad 0.1 & $0.003351$ & $0.005242$ & $-0.000223$ & $0.00279$ \\\cline{1-6}
        \multirow{12}{*}{10} & sgd & $1.0$ & $1.0$ & $1.0$ & $1.0$ \\\cline{2-6}
        & sgd, clip grad 0.1 & $0.038175$ & $0.033345$ & $0.044562$ & $0.038694$ \\\cline{2-6}
        & adam & $0.010894$ & $0.009658$ & $0.013078$ & $0.01121$ \\\cline{2-6}
        & adam, clip grad 0.5 & $5.4\times 10^{-5}$ & $4.83\times 10^{-5}$ & $6.47\times 10^{-5}$ & $5.57\times 10^{-5}$ \\\cline{2-6}
        & adam, clip update 0.5 & $0.008573$ & $0.007475$ & $0.010194$ & $0.008748$ \\\cline{2-6}
        & adafactor & $2.59\times 10^{-6}$ & $3.31\times 10^{-6}$ & $3.09\times 10^{-6}$ & $3.0\times 10^{-6}$ \\\cline{2-6}
        & adafactor, clip grad 0.5 & $1.4\times 10^{-8}$ & $1.69\times 10^{-8}$ & $1.7\times 10^{-8}$ & $1.6\times 10^{-8}$ \\\cline{2-6}
        & lion & $0.005113$ & $0.004561$ & $0.006098$ & $0.005257$ \\\cline{2-6}
        & lion, clip update 0.5 & $0.009256$ & $0.008259$ & $0.011046$ & $0.00952$ \\\cline{2-6}
        & sign sgd & $0.005738$ & $0.00497$ & $0.006808$ & $0.005839$ \\\cline{2-6}
        & normalized sgd & $0.053926$ & $0.046155$ & $0.061938$ & $0.054007$ \\\cline{2-6}
        & normalized sgd, clip grad 0.1 & $0.012715$ & $0.01102$ & $0.014784$ & $0.01284$ \\\cline{1-6}
        \multirow{12}{*}{20} & sgd & $1.0$ & $1.0$ & $1.0$ & $1.0$ \\\cline{2-6}
        & sgd, clip grad 0.1 & $0.046561$ & $-0.009589$ & $0.043289$ & $0.026754$ \\\cline{2-6}
        & adam & $0.012866$ & $-0.002554$ & $0.011917$ & $0.00741$ \\\cline{2-6}
        & adam, clip grad 0.5 & $9.15\times 10^{-5}$ & $-1.97\times 10^{-5}$ & $8.71\times 10^{-5}$ & $5.3\times 10^{-5}$ \\\cline{2-6}
        & adam, clip update 0.5 & $0.010421$ & $-0.002174$ & $0.009732$ & $0.005993$ \\\cline{2-6}
        & adafactor & $3.02\times 10^{-6}$ & $-6.96\times 10^{-7}$ & $2.7\times 10^{-6}$ & $1.68\times 10^{-6}$ \\\cline{2-6}
        & adafactor, clip grad 0.5 & $2.46\times 10^{-8}$ & $-6.03\times 10^{-9}$ & $2.23\times 10^{-8}$ & $1.36\times 10^{-8}$ \\\cline{2-6}
        & lion & $0.006787$ & $-0.001373$ & $0.006381$ & $0.003932$ \\\cline{2-6}
        & lion, clip update 0.5 & $0.012249$ & $-0.002472$ & $0.011518$ & $0.007098$ \\\cline{2-6}
        & sign sgd & $0.006936$ & $-0.001495$ & $0.006453$ & $0.003965$ \\\cline{2-6}
        & normalized sgd & $0.066658$ & $-0.014244$ & $0.06107$ & $0.037828$ \\\cline{2-6}
        & normalized sgd, clip grad 0.1 & $0.015486$ & $-0.00331$ & $0.014405$ & $0.00886$ \\\cline{1-6}
    \end{tabular}
    \caption{Ratio of directional sharpness of optimization algorithms with respect to SGD on the machine translation task in SGD trajectory in 3 experiments.}
\end{table}

\begin{figure}[h]
    \centering
    \begin{subfigure}{0.45\textwidth}
        \centering
        \includegraphics[width=\textwidth]{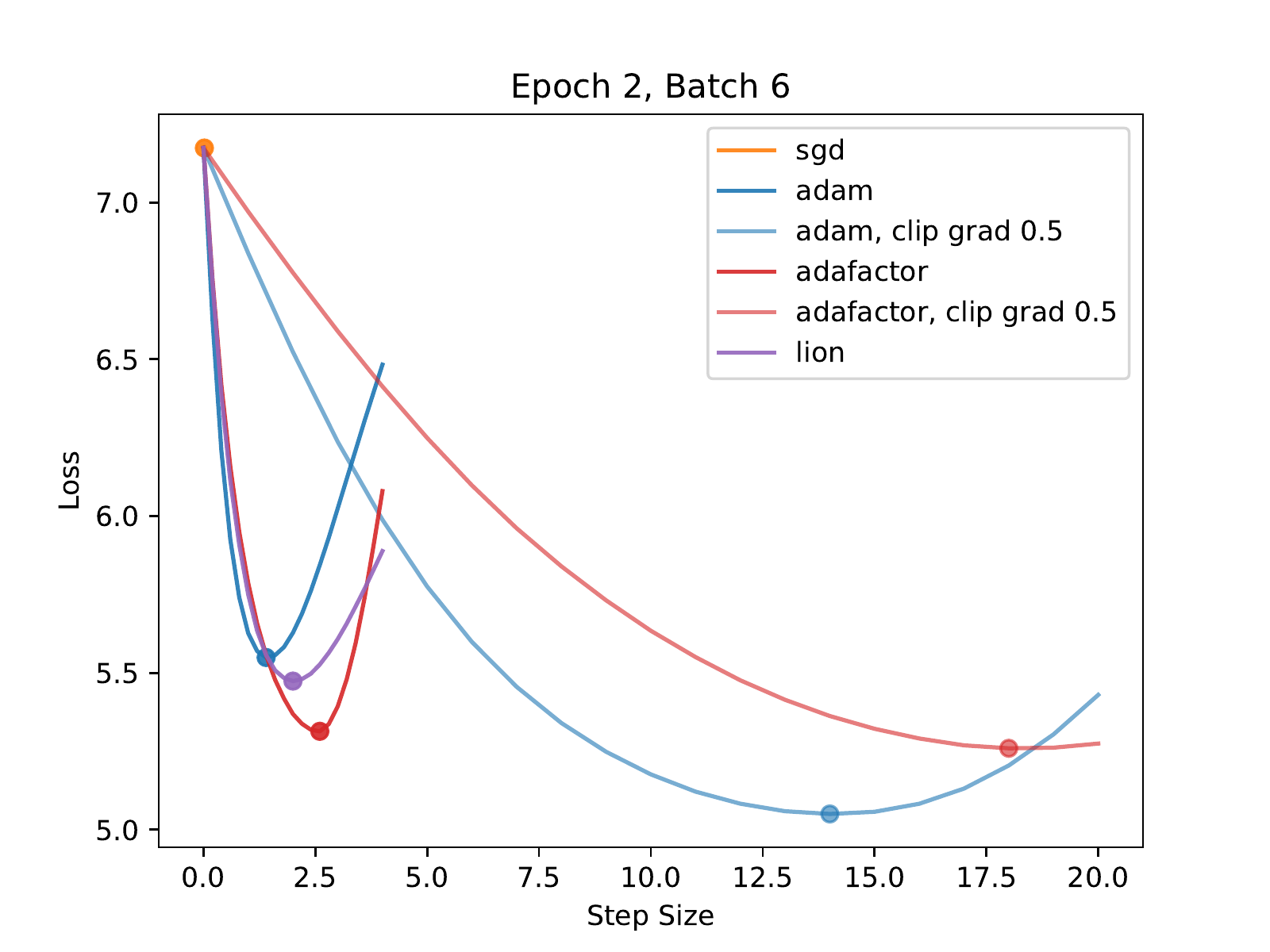}
        \caption{Epoch 2}
    \end{subfigure}
    \begin{subfigure}{0.45\textwidth}
        \centering
        \includegraphics[width=\textwidth]{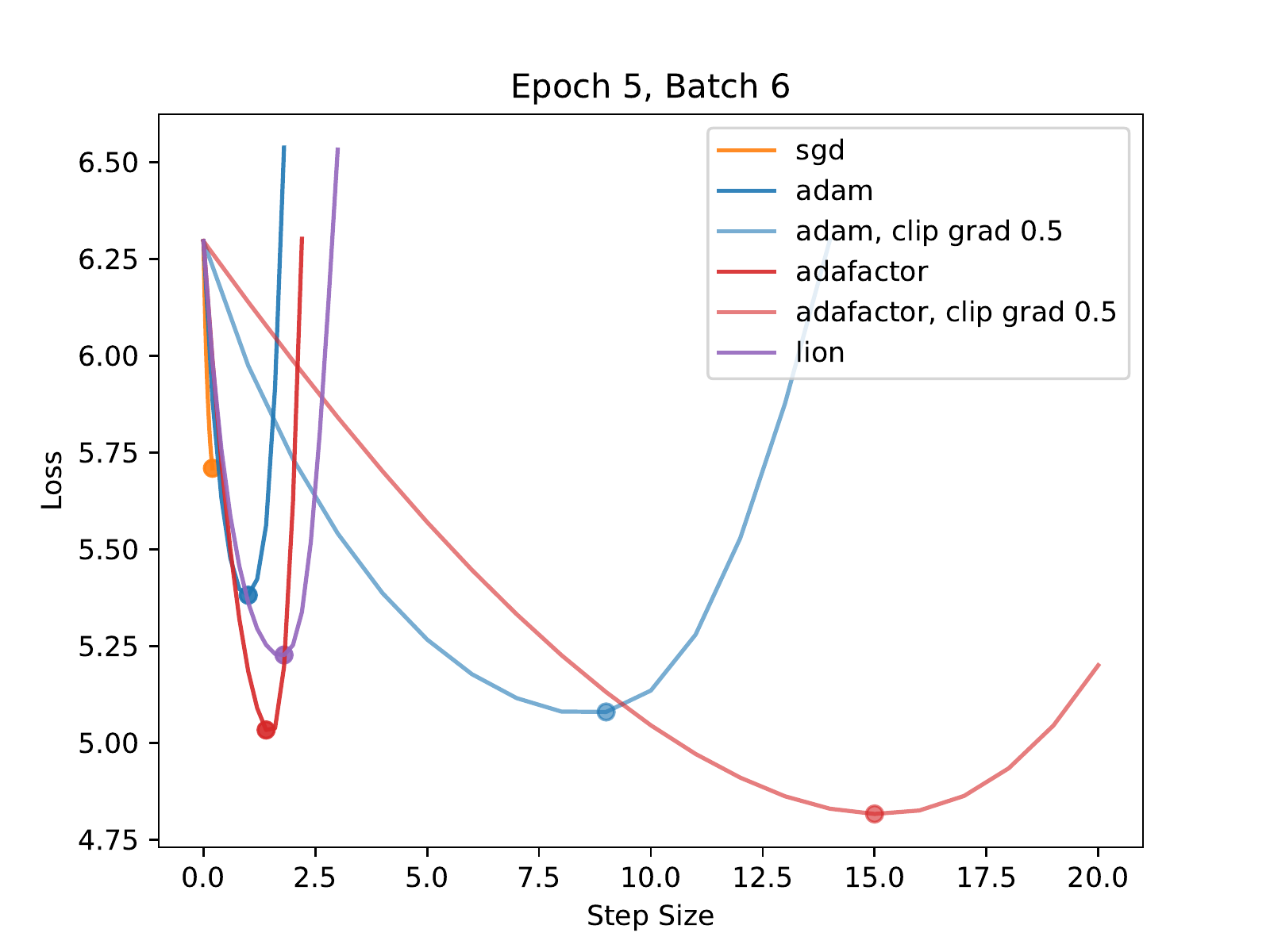}
        \caption{Epoch 5}
    \end{subfigure}
    \begin{subfigure}{0.45\textwidth}
        \centering
        \includegraphics[width=\textwidth]{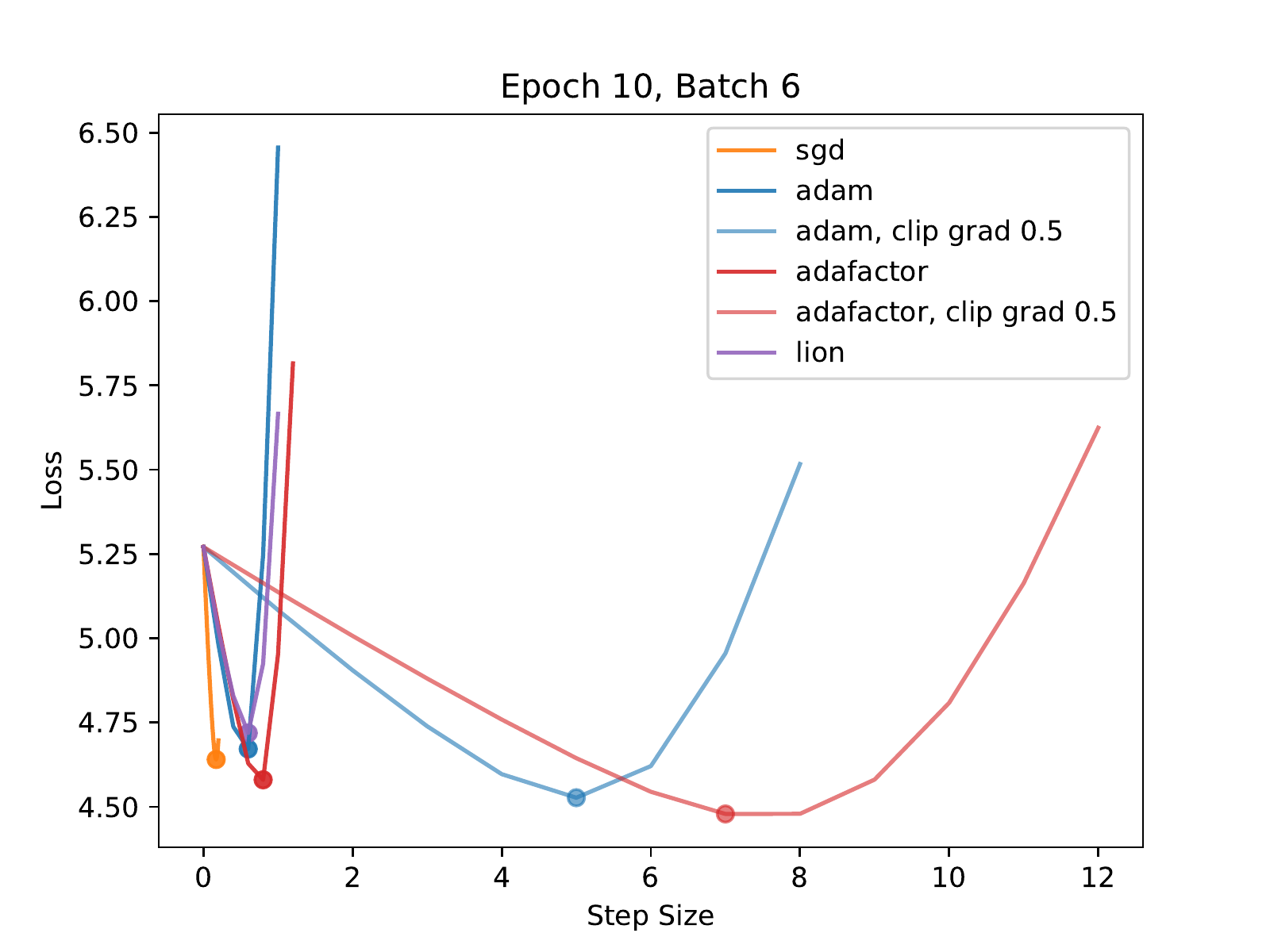}
        \caption{Epoch 10}
    \end{subfigure}
    \begin{subfigure}{0.45\textwidth}
        \centering
        \includegraphics[width=\textwidth]{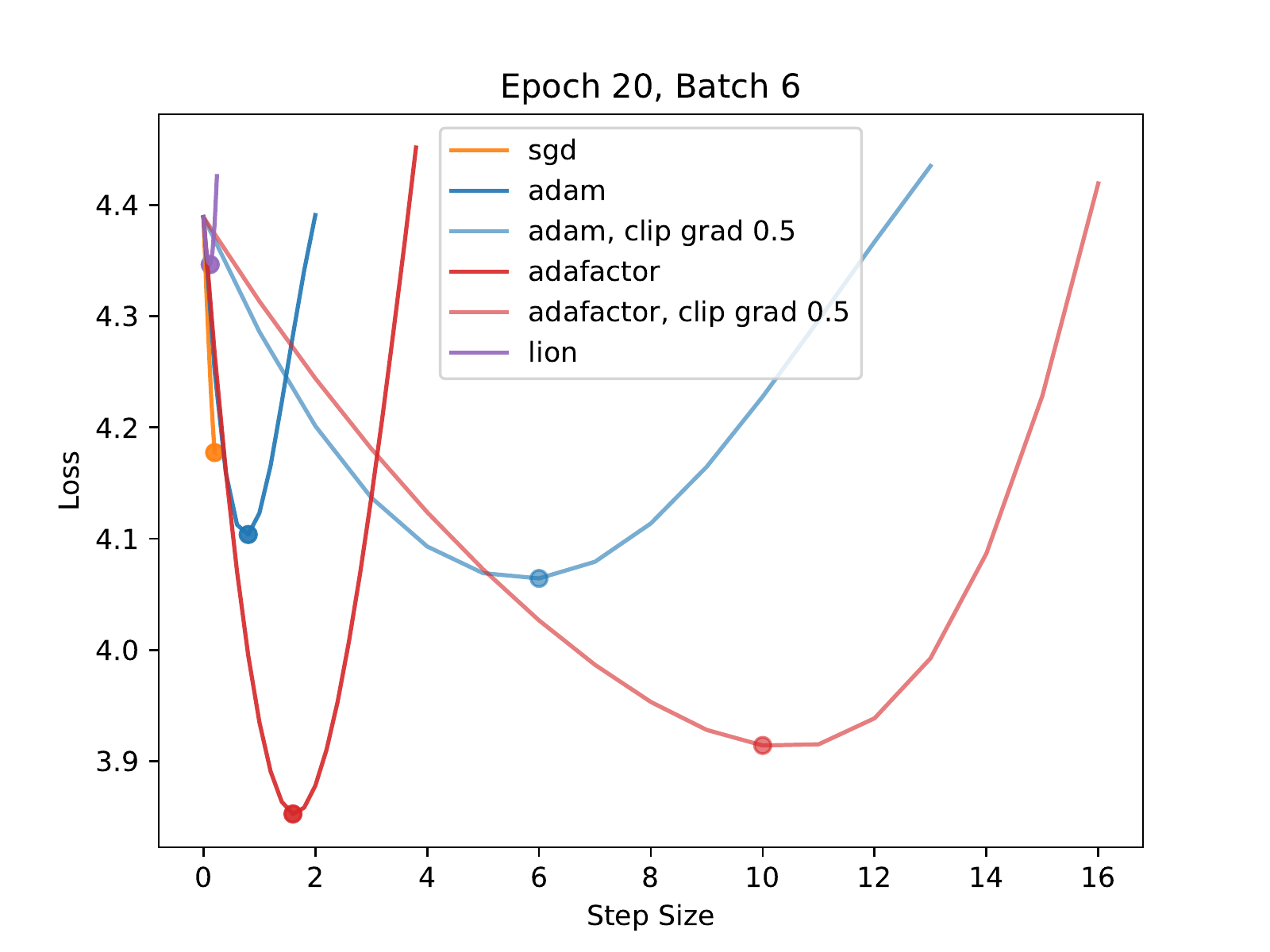}
        \caption{Epoch 20}
    \end{subfigure}
    \caption{Landscape visualization of autoregressive in SGD trajectory.}
\end{figure}

\clearpage
\begin{table}[h]
    \centering
    \begin{tabular}[h]{|l|l|l|}
        \hline
        \textbf{Epoch} & \textbf{Algorithm} & \textbf{Ratio}\\\hline
        \multirow{6}{*}{2} & sgd & $1.0$ \\\cline{2-3}
        & adam & $-0.00031$ \\\cline{2-3}
        & adam, clip grad 0.5 & $2.07\times 10^{-7}$ \\\cline{2-3}
        & adafactor & $0.015206$ \\\cline{2-3}
        & adafactor, clip grad 0.5 & $-8.57\times 10^{-7}$ \\\cline{2-3}
        & lion & $-0.013808$ \\\cline{1-3}
        \multirow{6}{*}{5} & sgd & $1.0$ \\\cline{2-3}
        & adam & $7.3\times 10^{-5}$ \\\cline{2-3}
        & adam, clip grad 0.5 & $5.32\times 10^{-7}$ \\\cline{2-3}
        & adafactor & $0.00028$ \\\cline{2-3}
        & adafactor, clip grad 0.5 & $-5.59\times 10^{-8}$ \\\cline{2-3}
        & lion & $0.002276$ \\\cline{1-3}
        \multirow{6}{*}{10} & sgd & $1.0$ \\\cline{2-3}
        & adam & $0.000661$ \\\cline{2-3}
        & adam, clip grad 0.5 & $4.15\times 10^{-8}$ \\\cline{2-3}
        & adafactor & $0.023778$ \\\cline{2-3}
        & adafactor, clip grad 0.5 & $6.84\times 10^{-7}$ \\\cline{2-3}
        & lion & $0.005283$ \\\cline{1-3}
        \multirow{6}{*}{20} & sgd & $1.0$ \\\cline{2-3}
        & adam & $0.000605$ \\\cline{2-3}
        & adam, clip grad 0.5 & $2.79\times 10^{-7}$ \\\cline{2-3}
        & adafactor & $-0.001095$ \\\cline{2-3}
        & adafactor, clip grad 0.5 & $-2.82\times 10^{-6}$ \\\cline{2-3}
        & lion & $0.03734$ \\\cline{1-3}
    \end{tabular}
    \caption{Ratio of directional sharpness of optimization algorithms with respect to SGD on the autoregressive task in SGD trajectory.}
\end{table}

\clearpage
\subsection{Adam Trajectory\label{sec:adam_geometry}}

\begin{figure}[h]
    \centering
    \begin{subfigure}{\textwidth}\centering
        \includegraphics[width=0.32\textwidth]{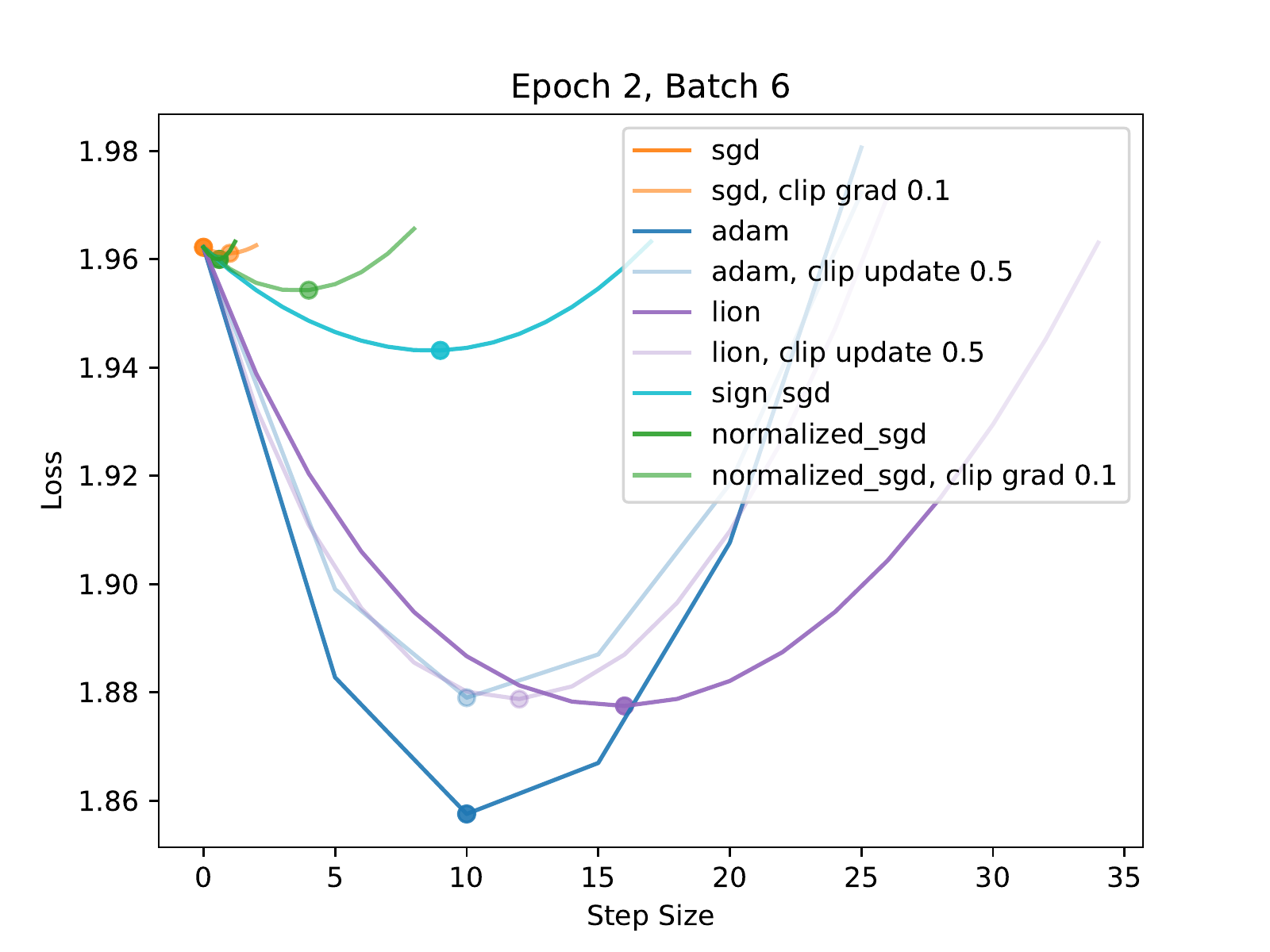}
        \includegraphics[width=0.32\textwidth]{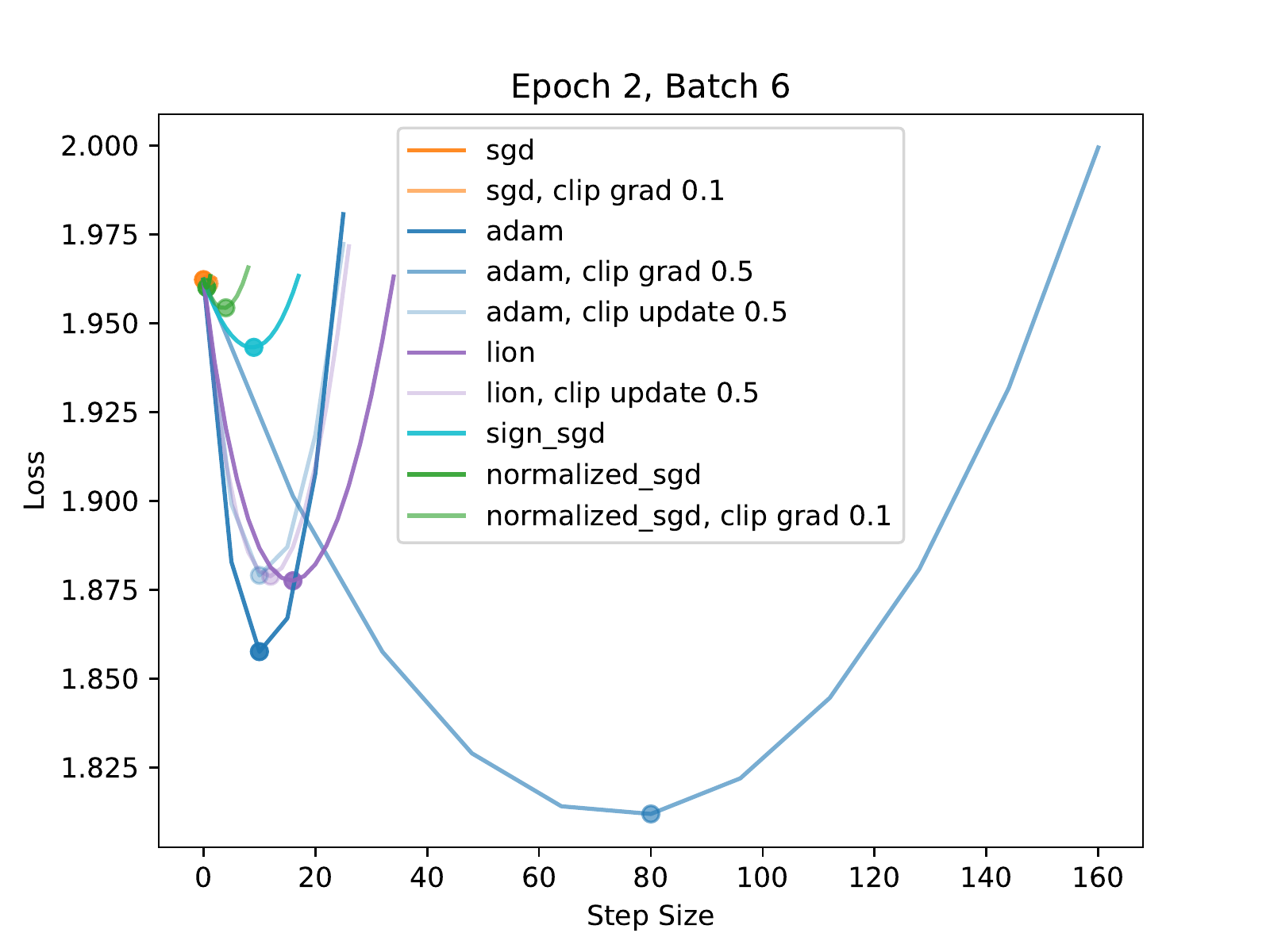}
        \includegraphics[width=0.32\textwidth]{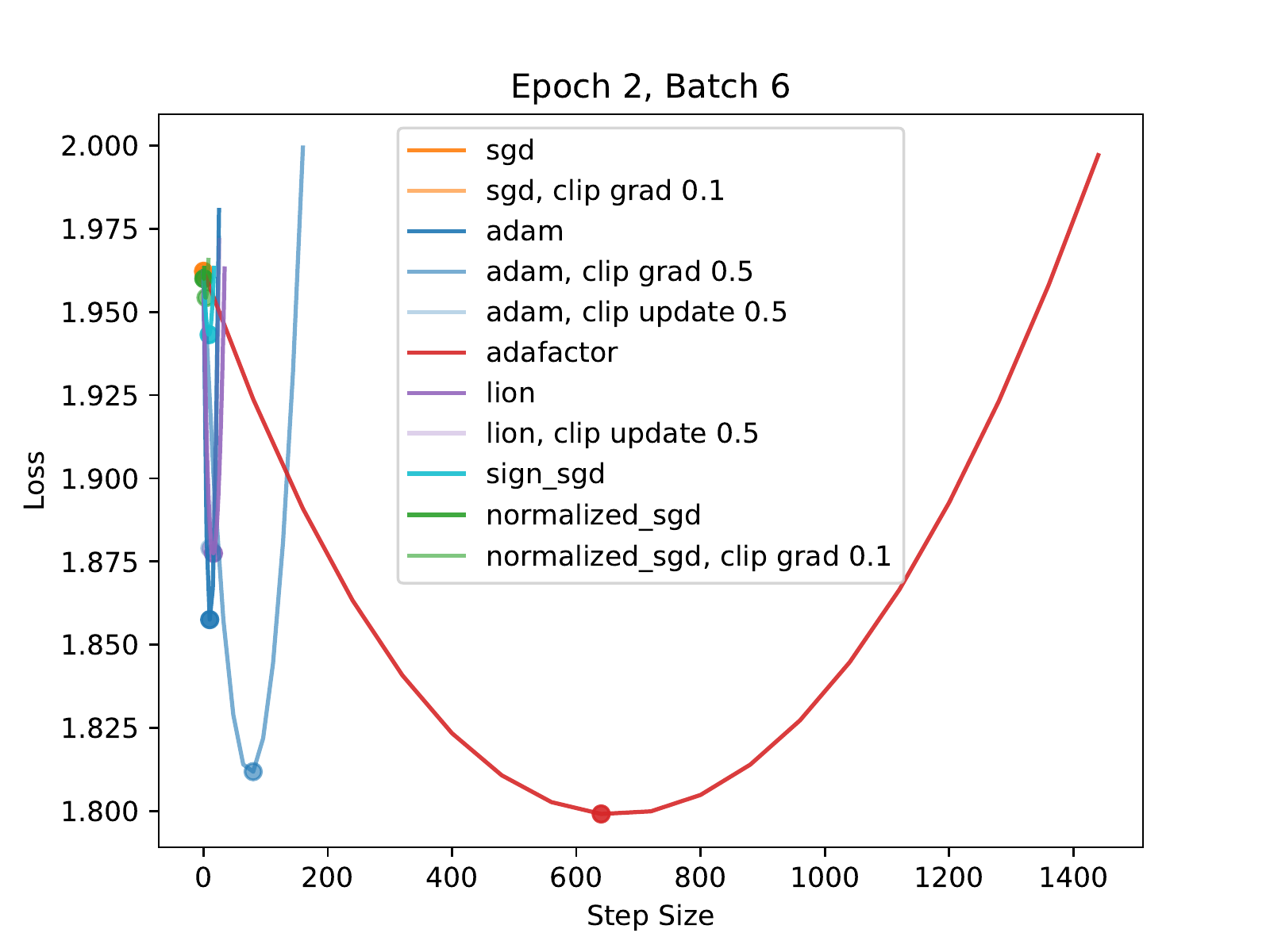}
        \caption{Experiment 1}
    \end{subfigure}
    \begin{subfigure}{\textwidth}\centering
        \includegraphics[width=0.32\textwidth]{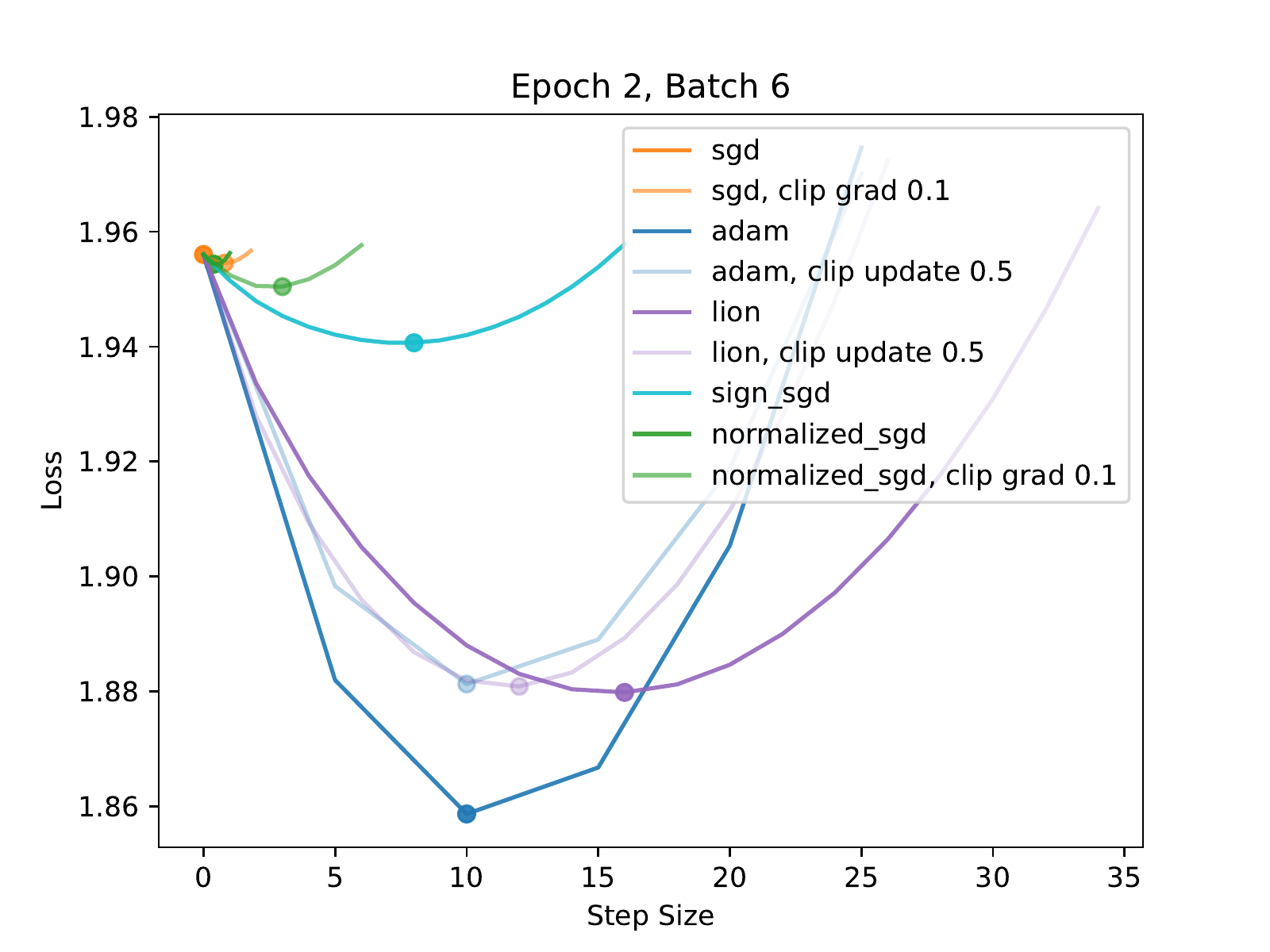}
        \includegraphics[width=0.32\textwidth]{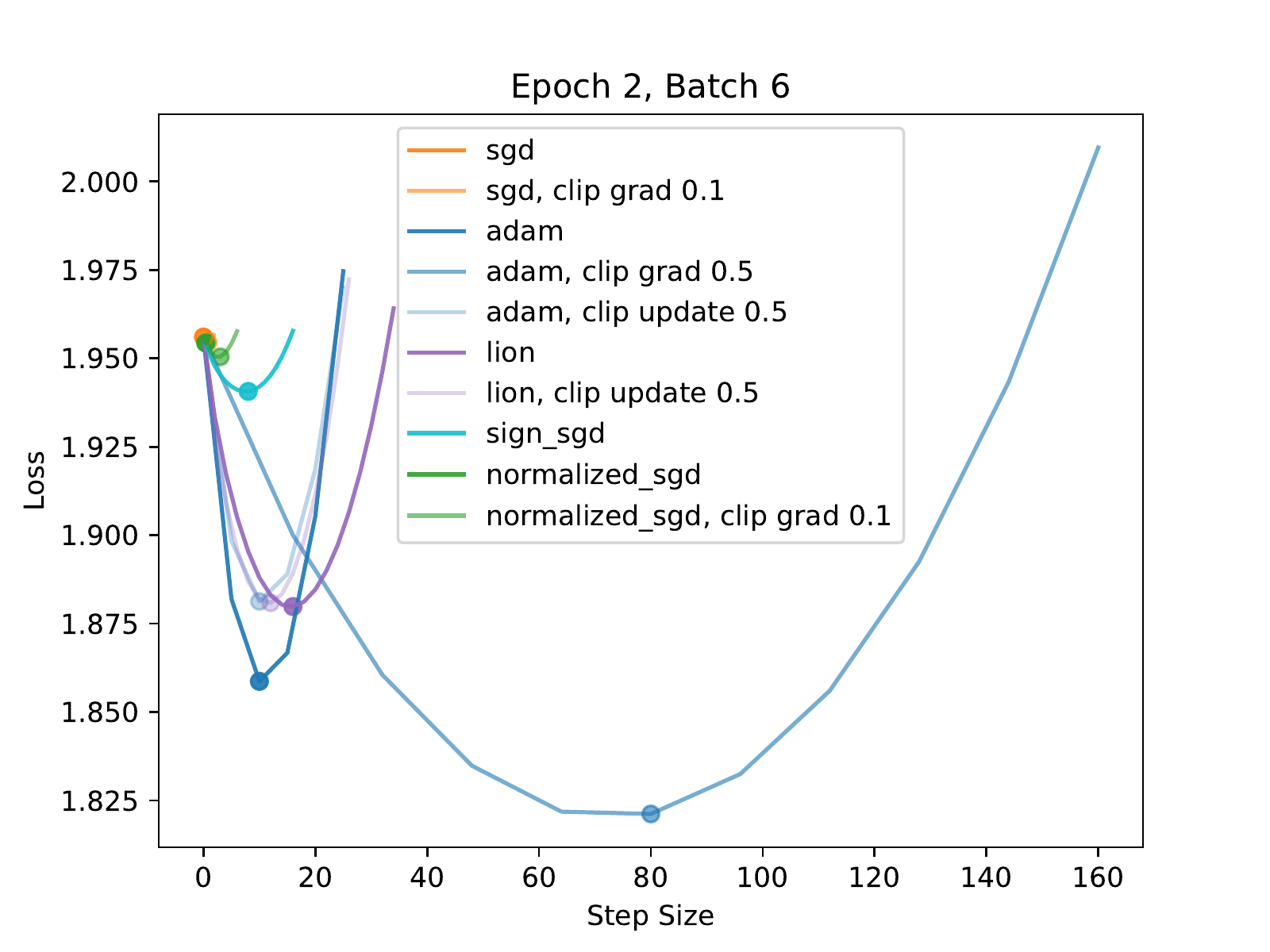}
        \includegraphics[width=0.32\textwidth]{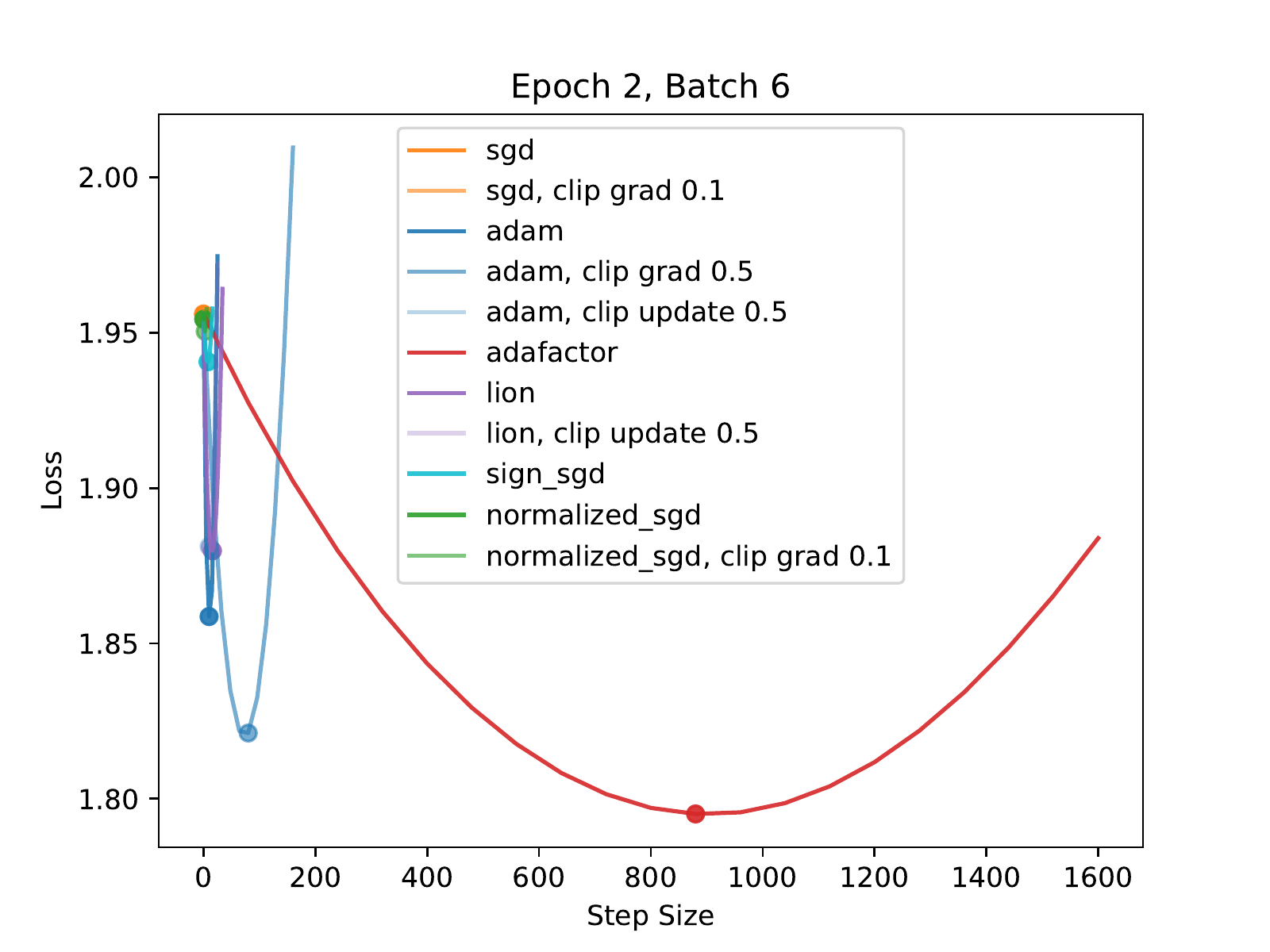}
        \caption{Experiment 2}
    \end{subfigure}
    \begin{subfigure}{\textwidth}\centering
        \includegraphics[width=0.32\textwidth]{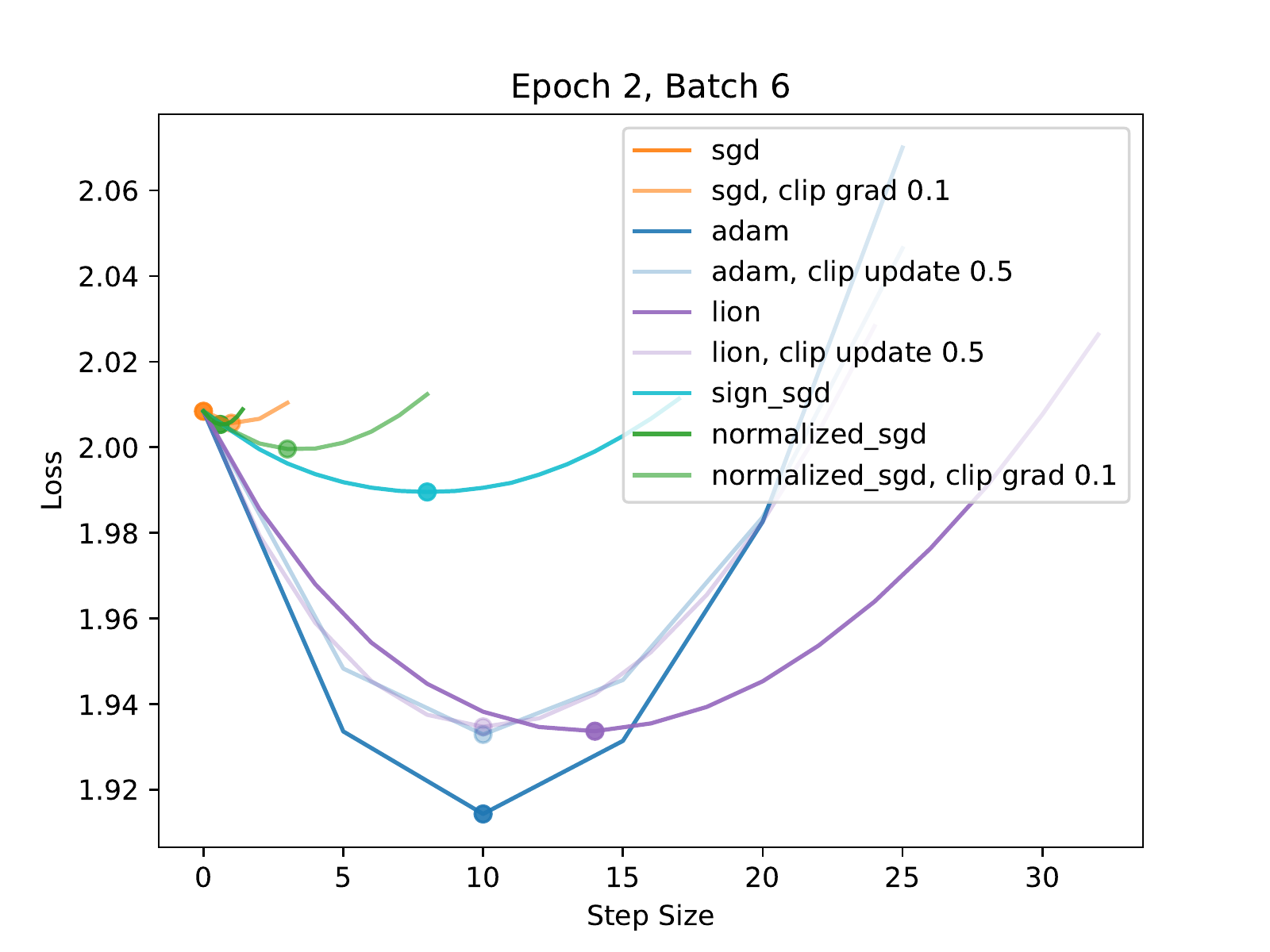}
        \includegraphics[width=0.32\textwidth]{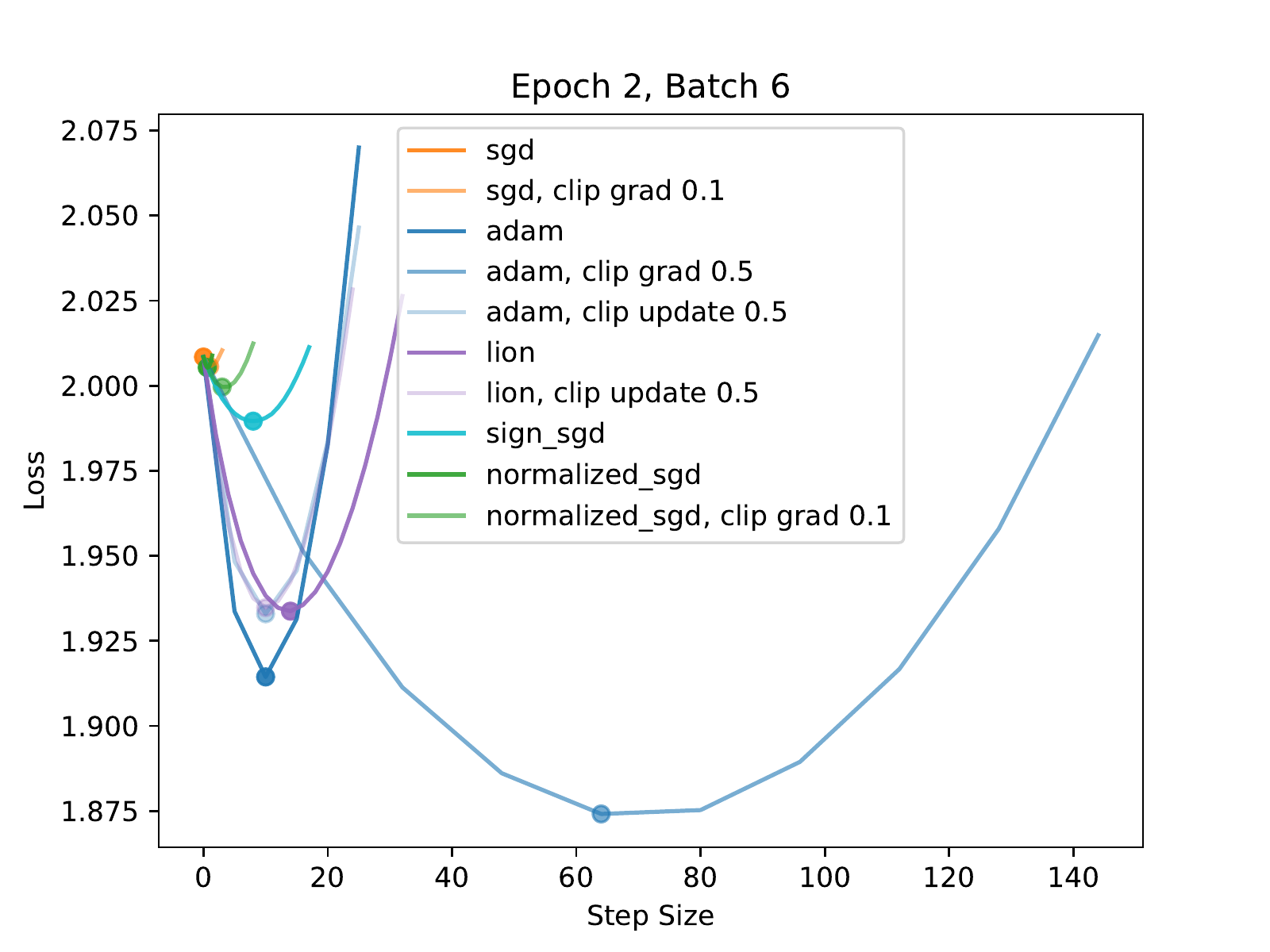}
        \includegraphics[width=0.32\textwidth]{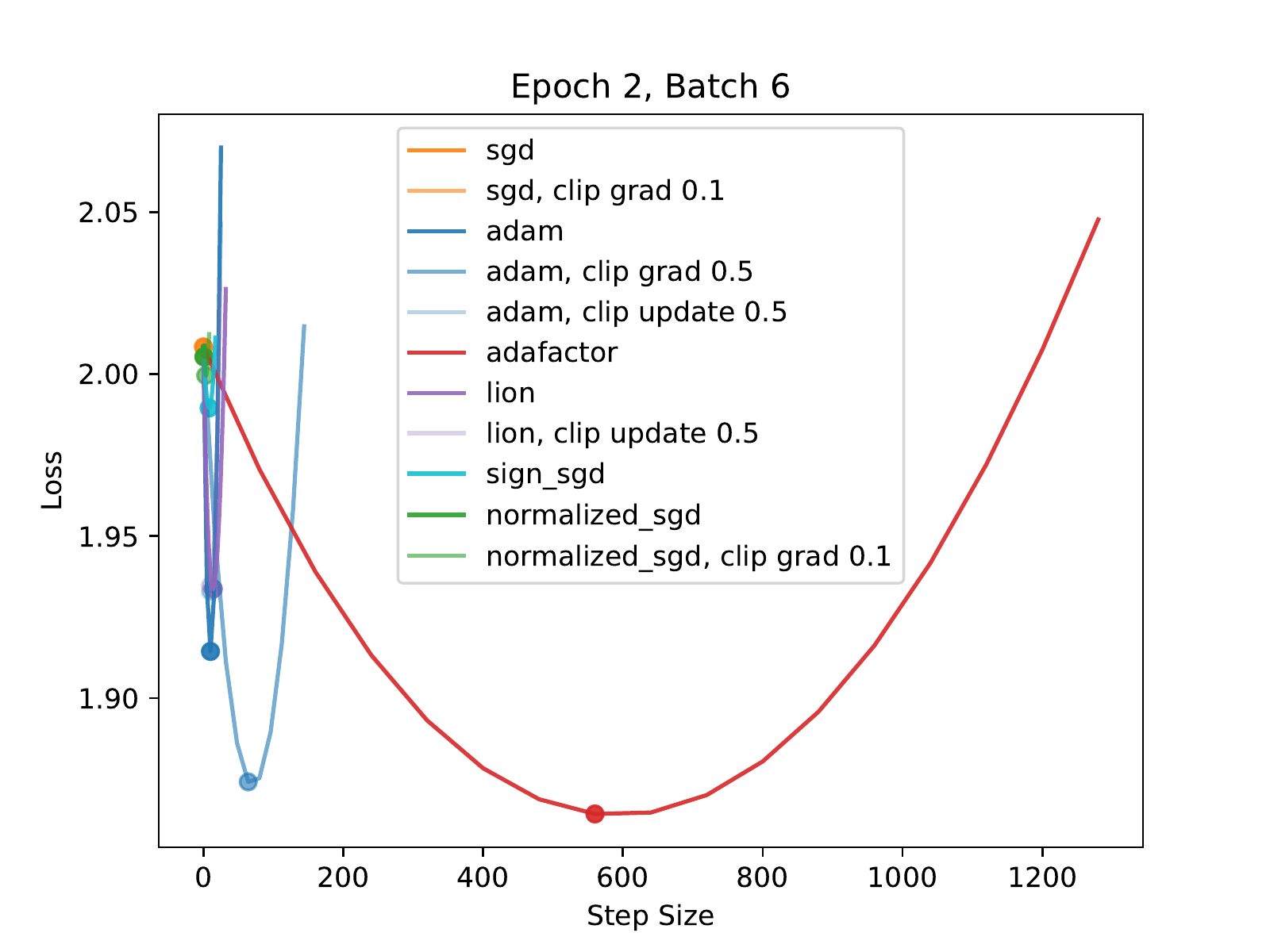}
        \caption{Experiment 3}
    \end{subfigure}
    \caption{Landscape visualization of machine translation in Adam trajectory at Epoch 2.}
\end{figure}

\begin{figure}[h]
    \centering
    \begin{subfigure}{\textwidth}\centering
        \includegraphics[width=0.32\textwidth]{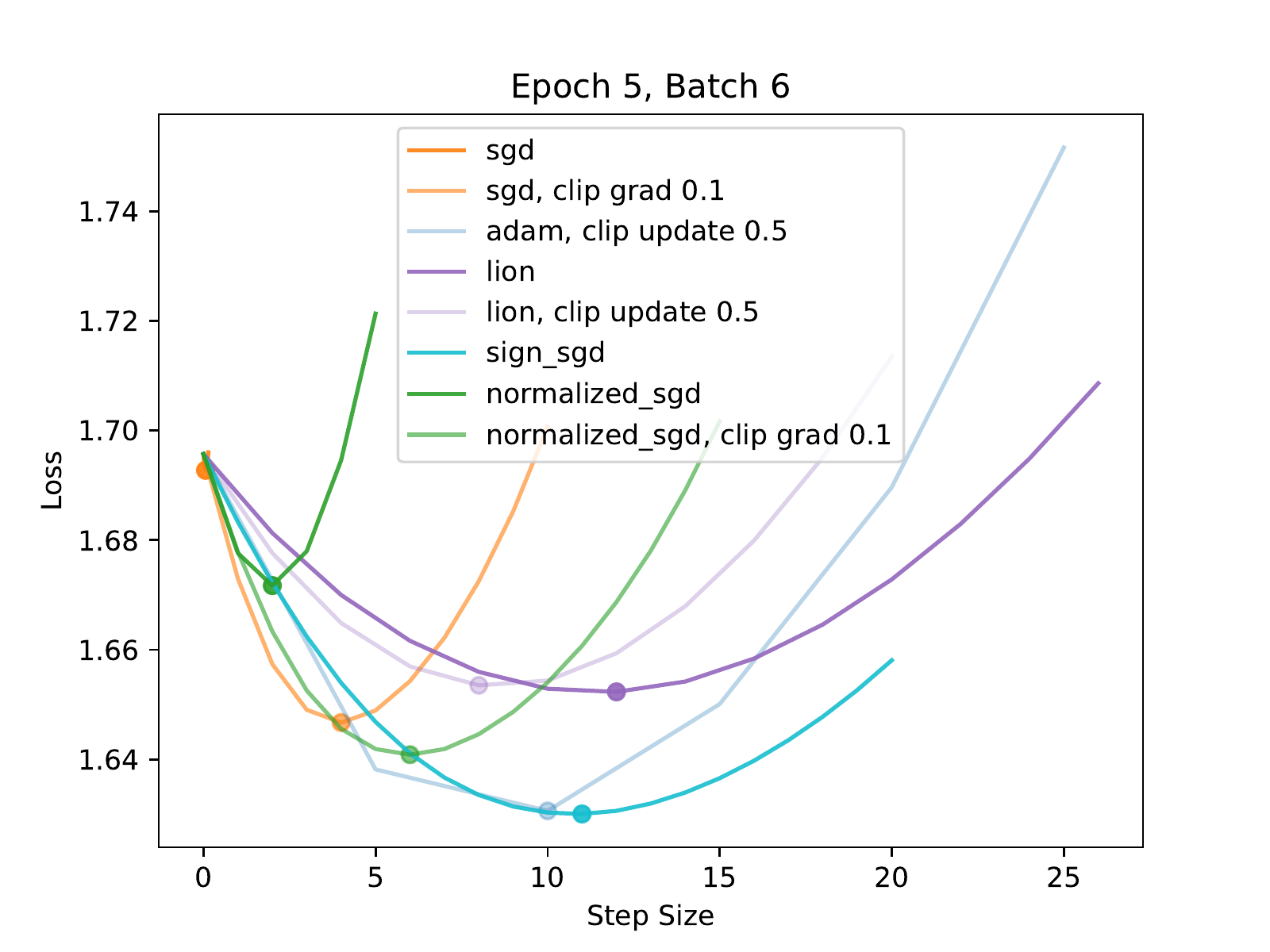}
        \includegraphics[width=0.32\textwidth]{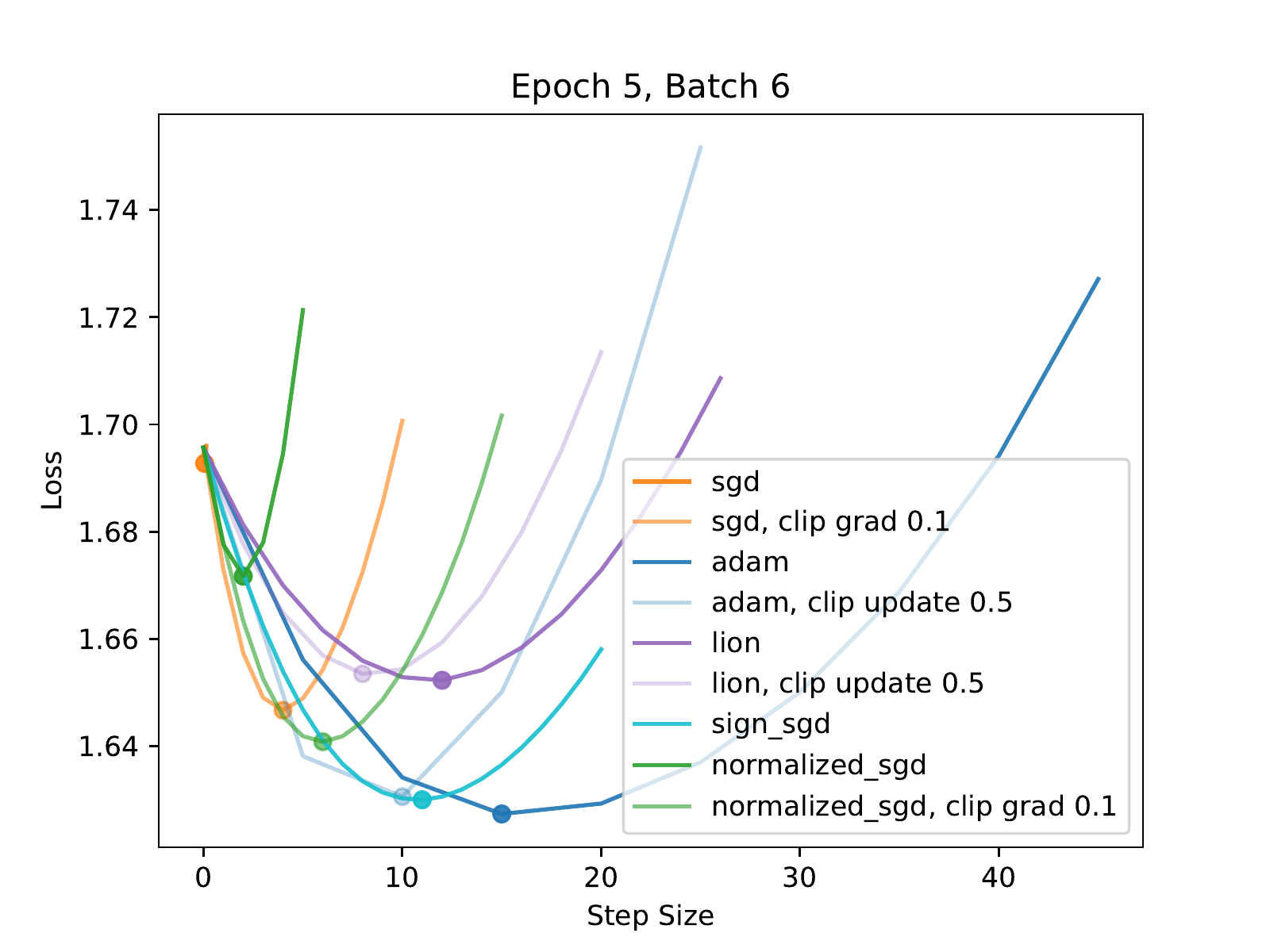}
        \includegraphics[width=0.32\textwidth]{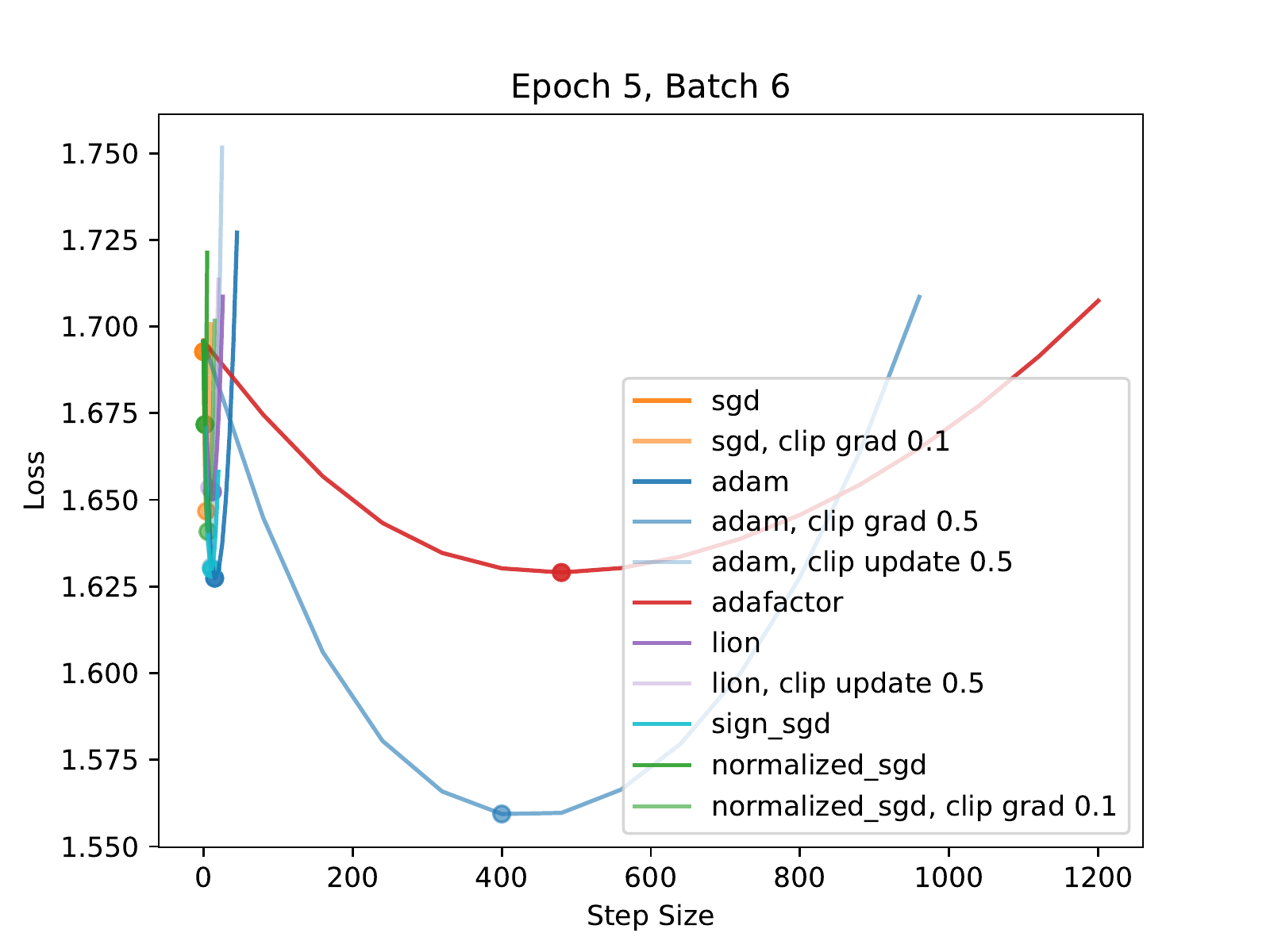}
        \caption{Experiment 1}
    \end{subfigure}
    \begin{subfigure}{\textwidth}\centering
        \includegraphics[width=0.32\textwidth]{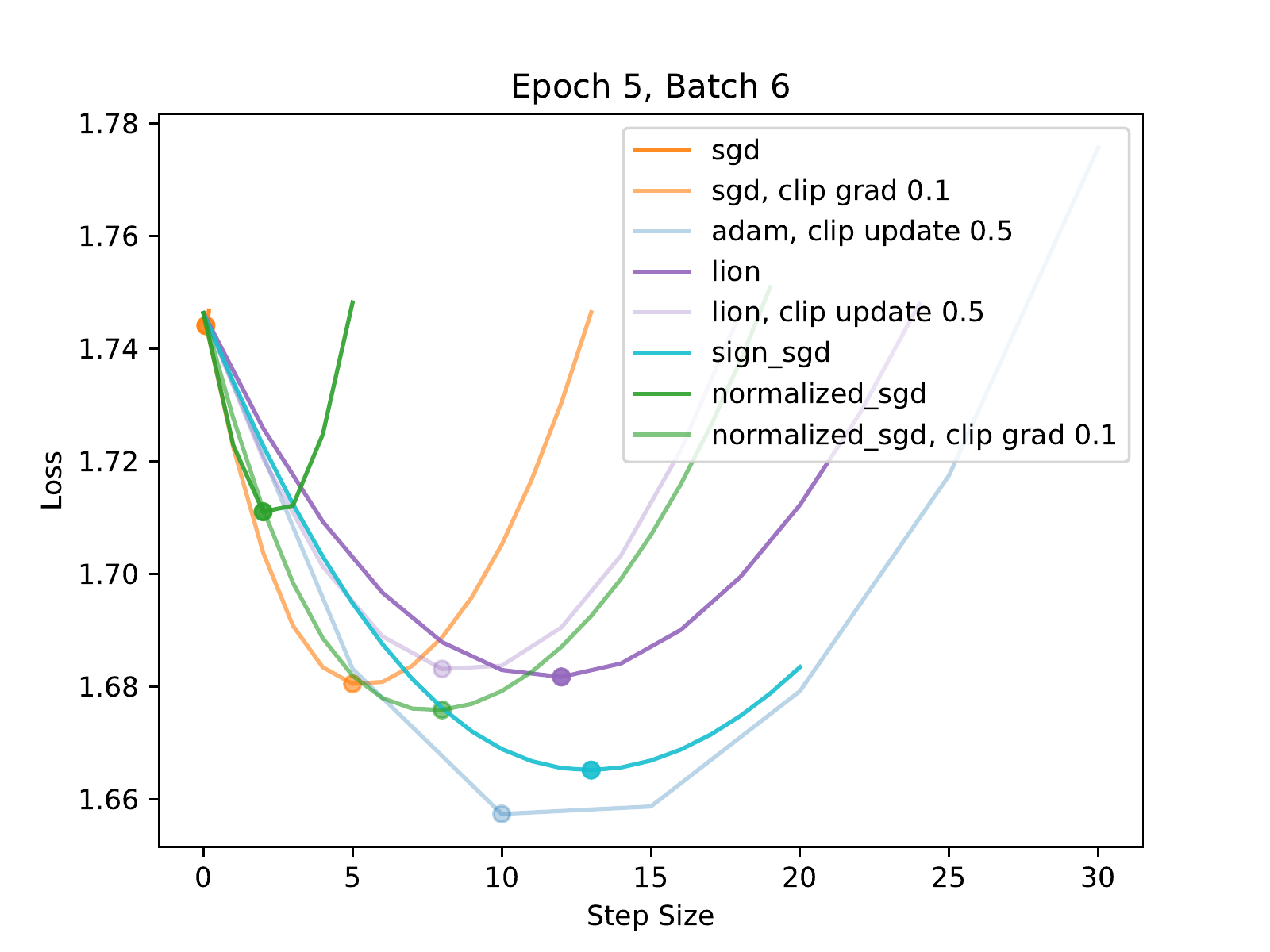}
        \includegraphics[width=0.32\textwidth]{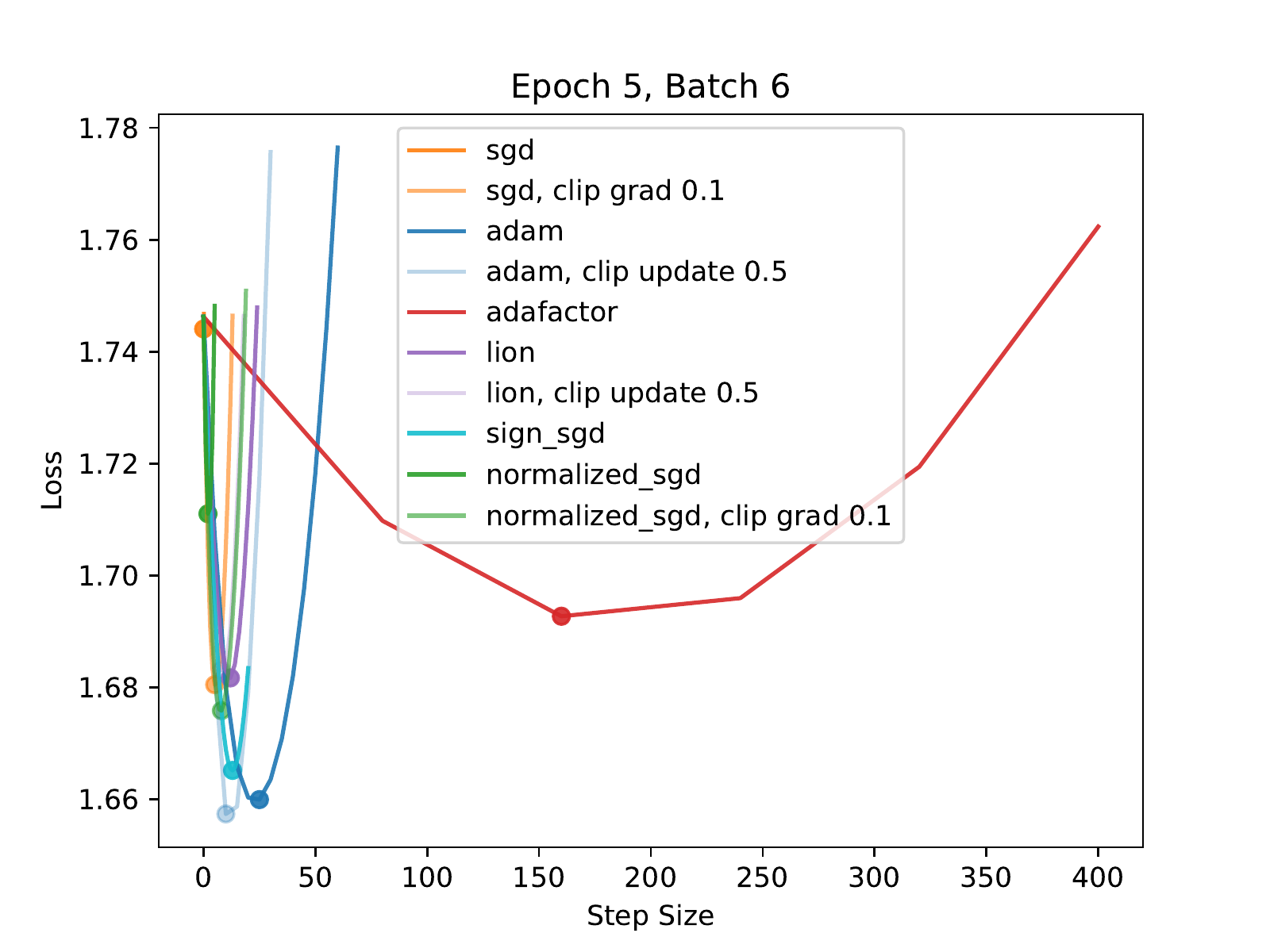}
        \includegraphics[width=0.32\textwidth]{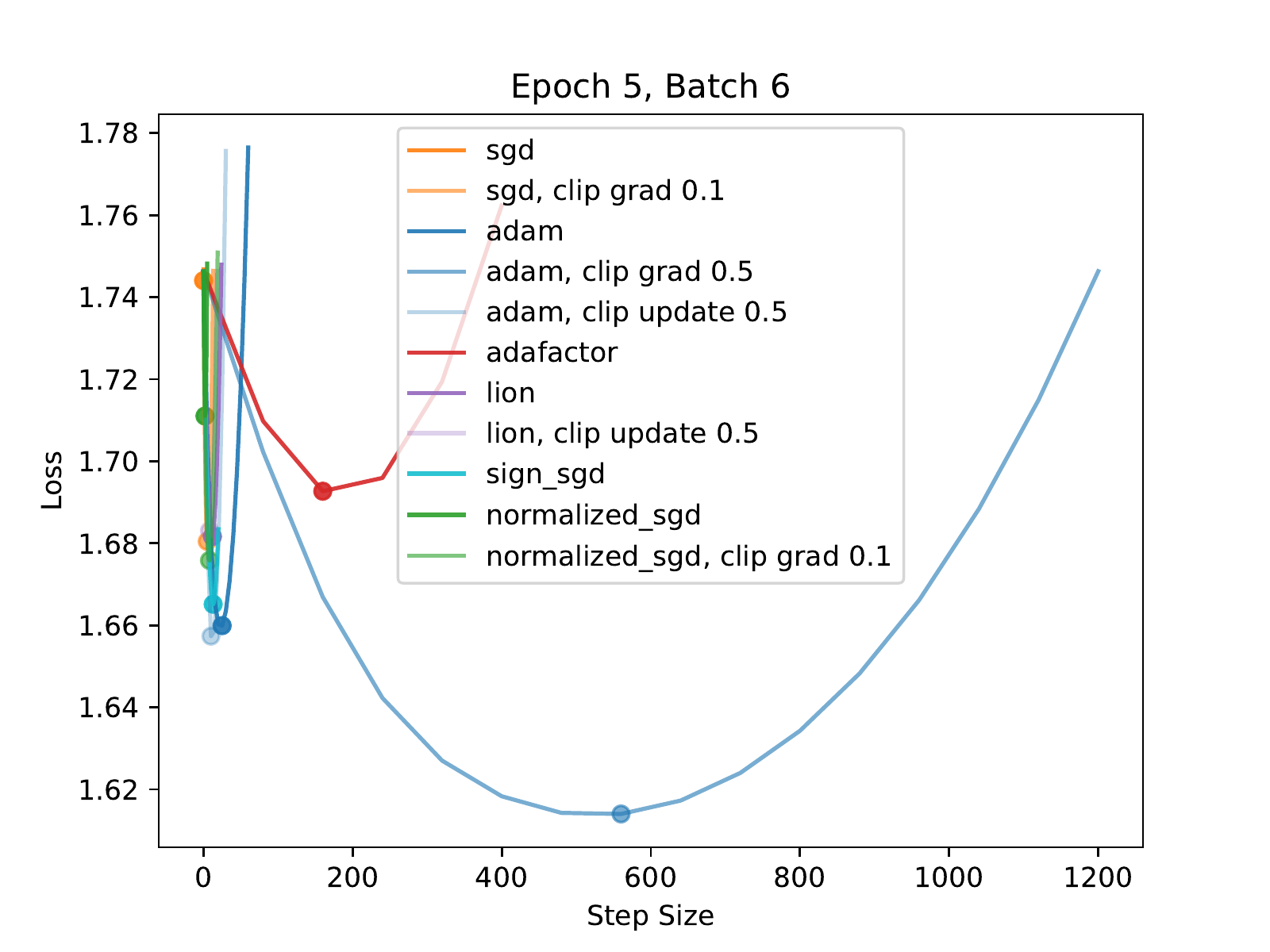}
        \caption{Experiment 2}
    \end{subfigure}
    \begin{subfigure}{\textwidth}\centering
        \includegraphics[width=0.32\textwidth]{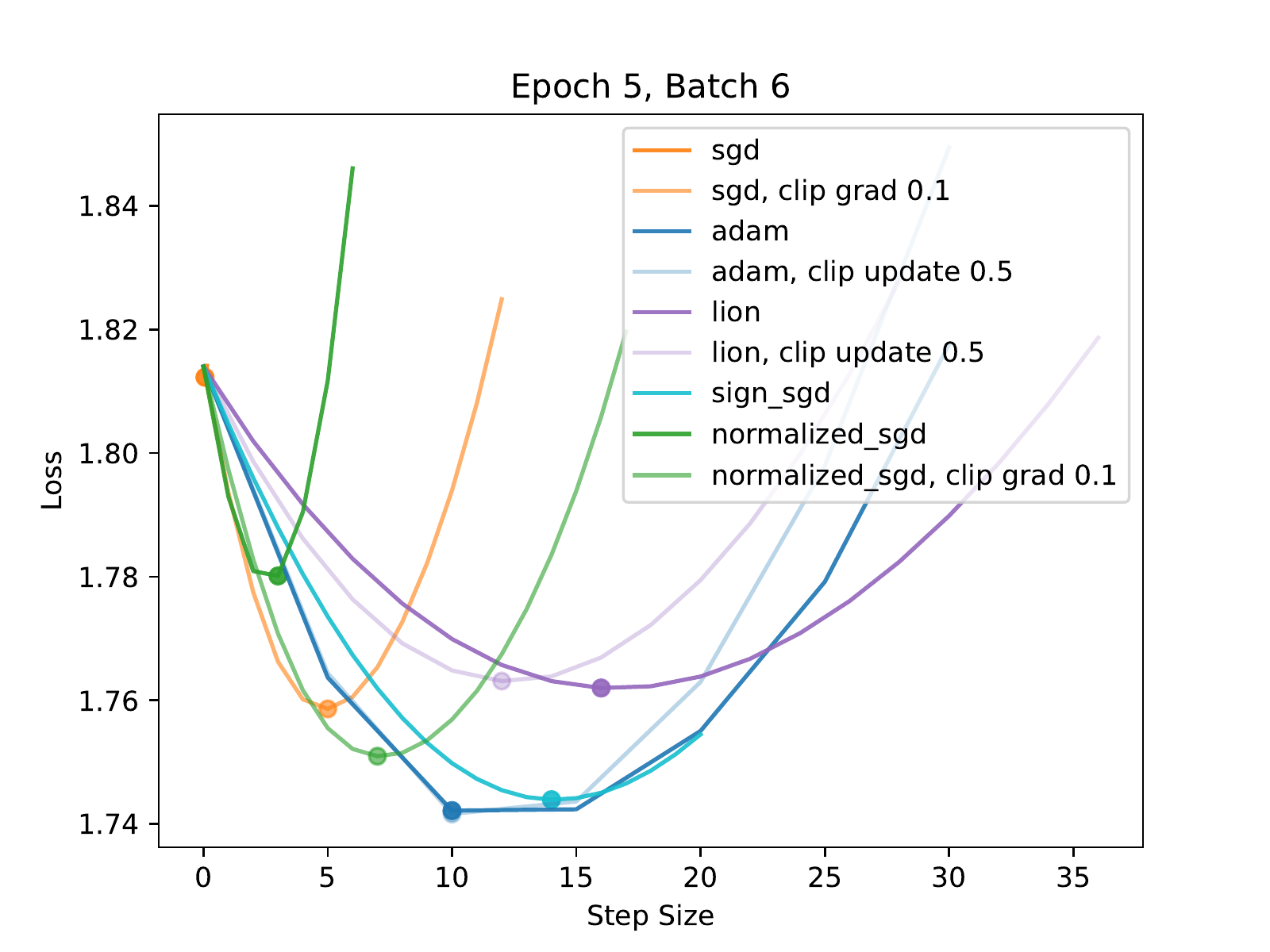}
        \includegraphics[width=0.32\textwidth]{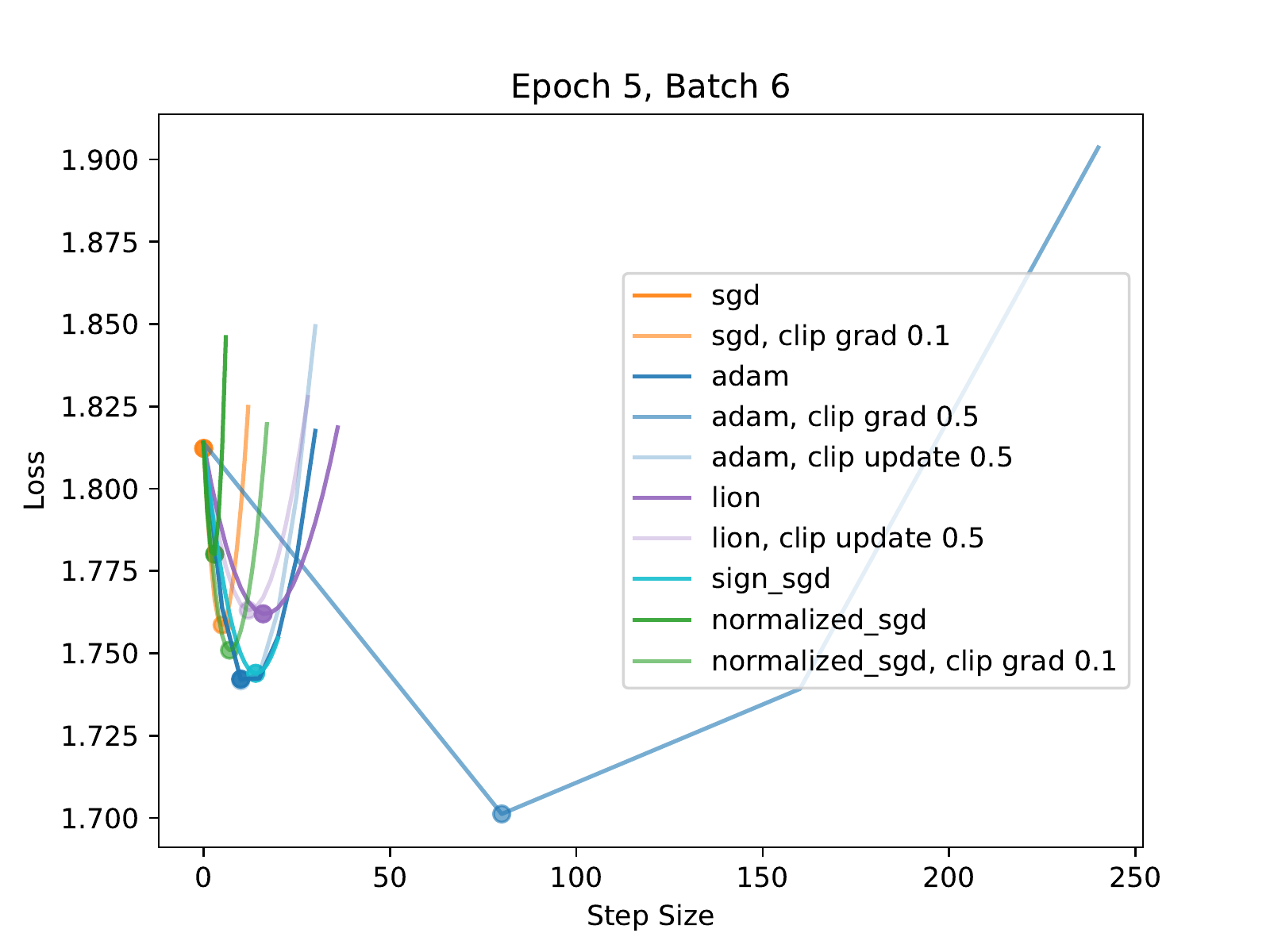}
        \includegraphics[width=0.32\textwidth]{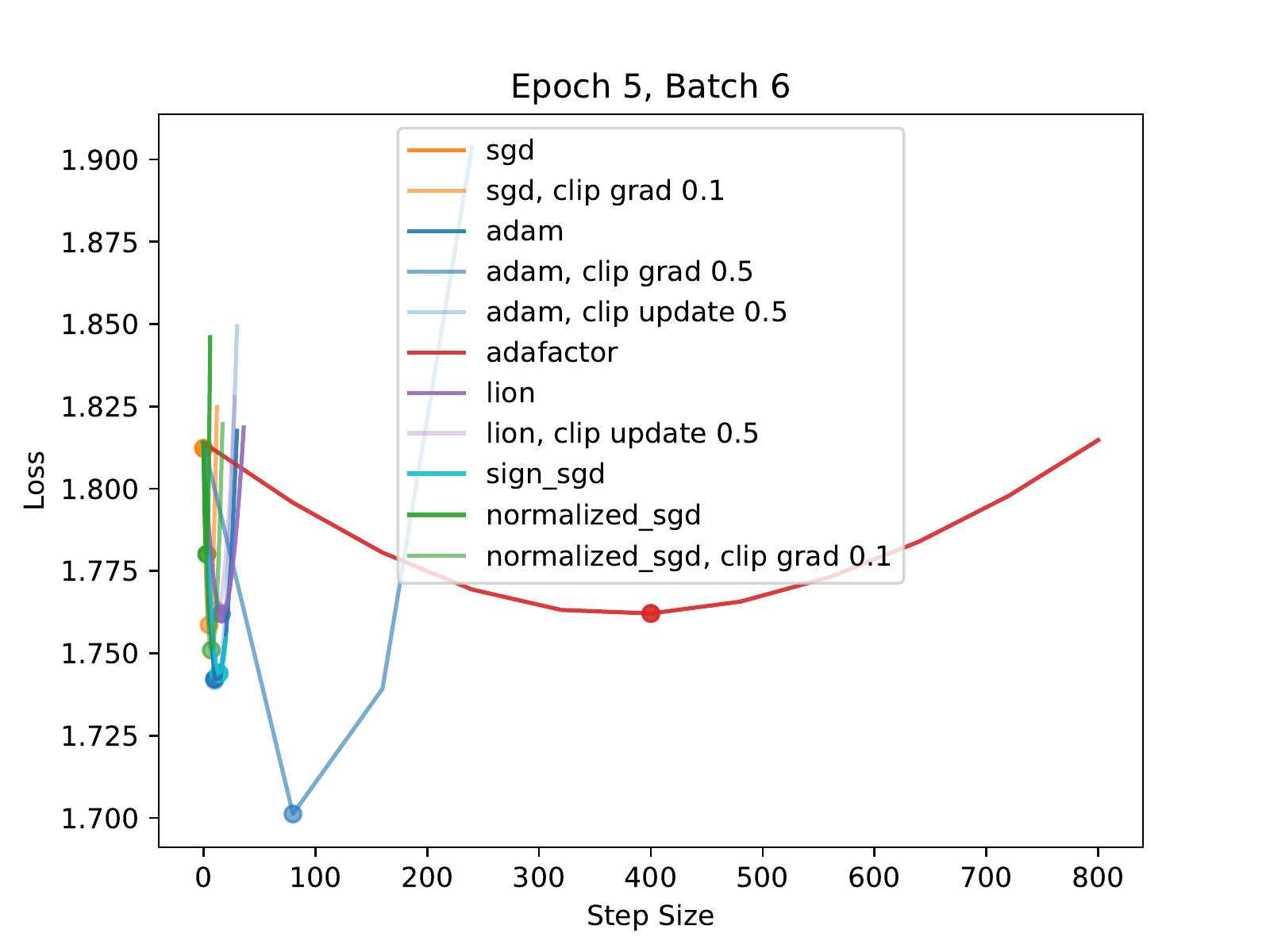}
        \caption{Experiment 3}
    \end{subfigure}
    \caption{Landscape visualization of machine translation in Adam trajectory at Epoch 5.}
\end{figure}

\begin{figure}[h]
    \centering
    \begin{subfigure}{\textwidth}\centering
        \includegraphics[width=0.32\textwidth]{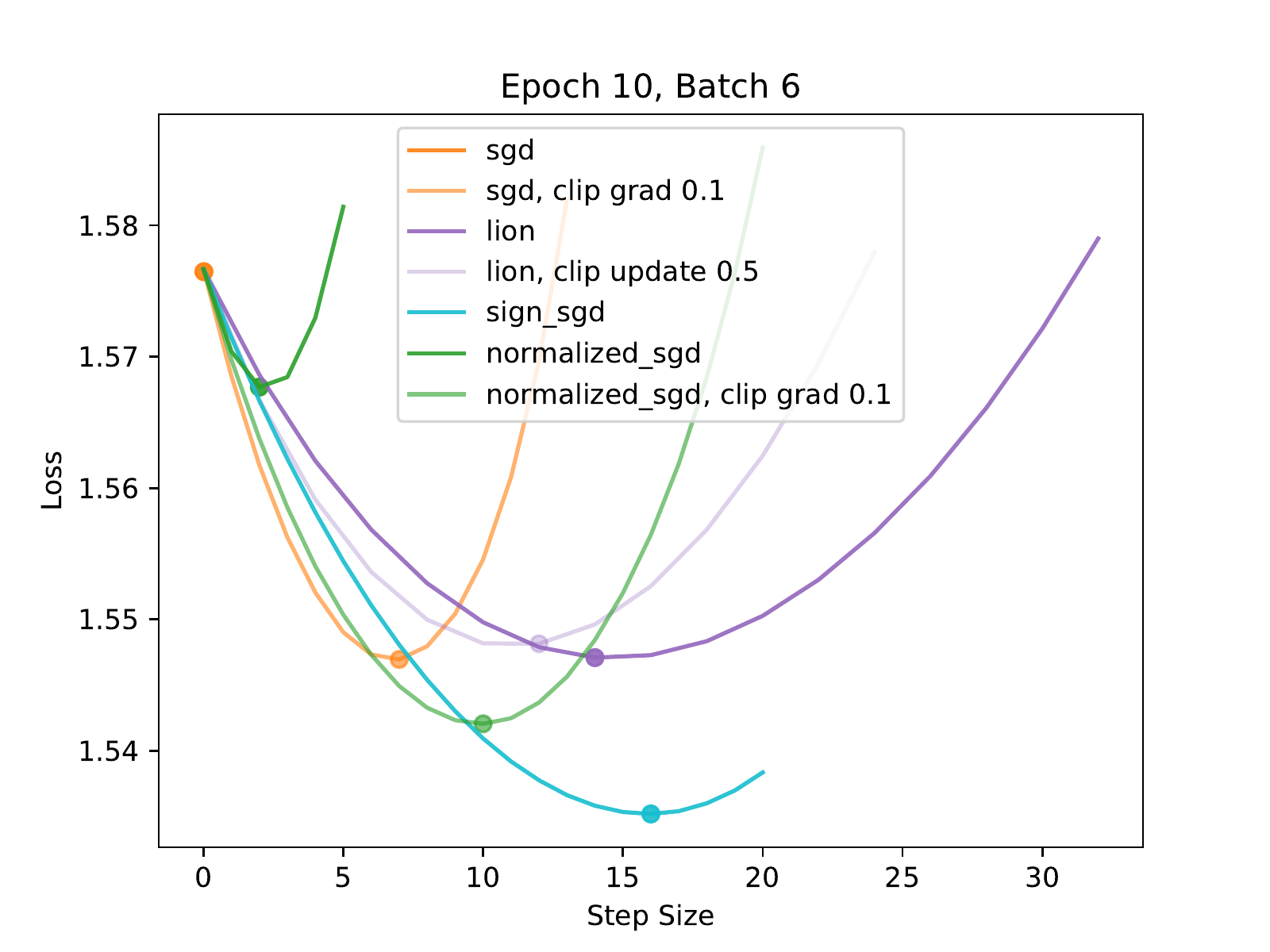}
        \includegraphics[width=0.32\textwidth]{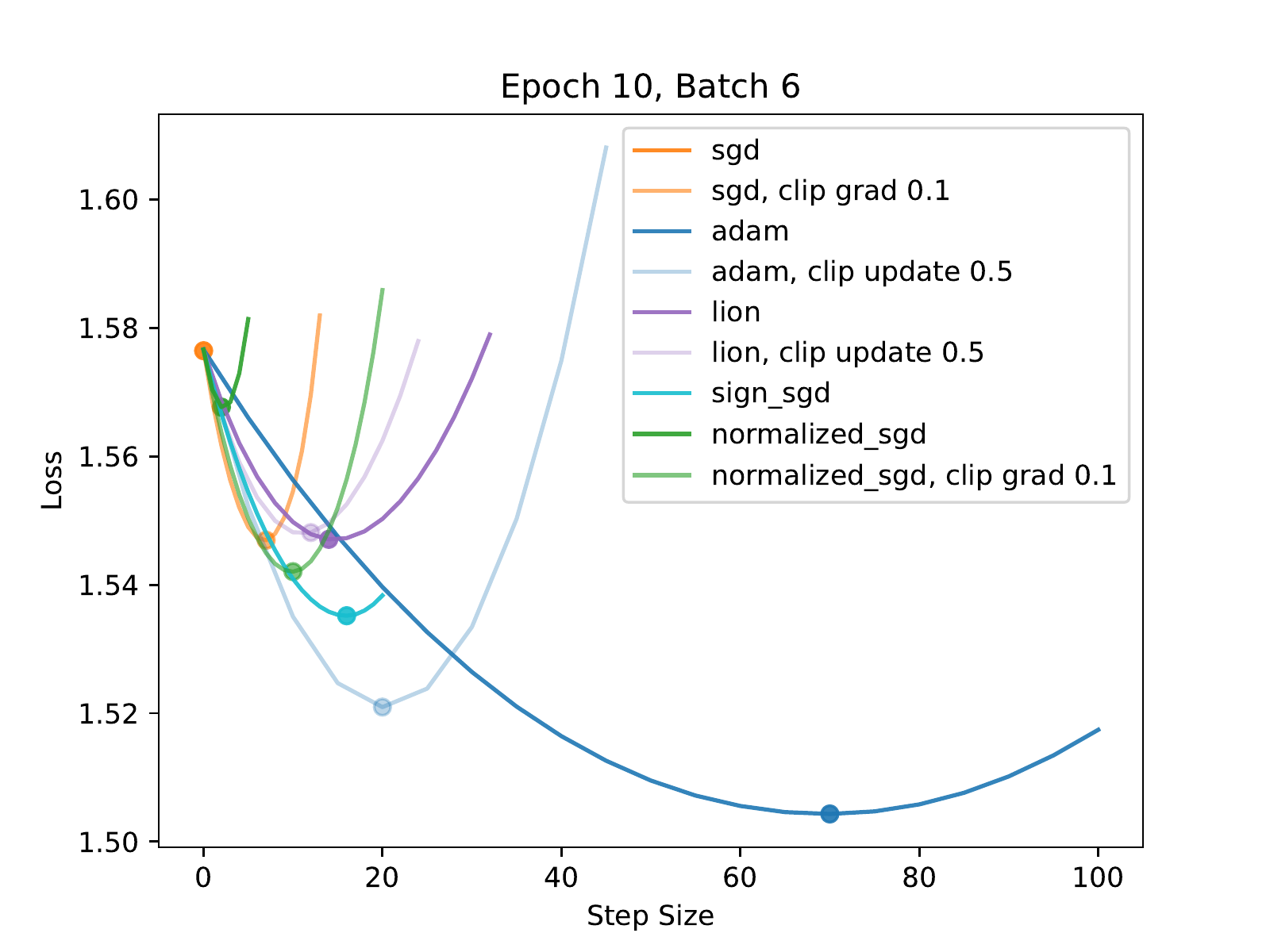}
        \includegraphics[width=0.32\textwidth]{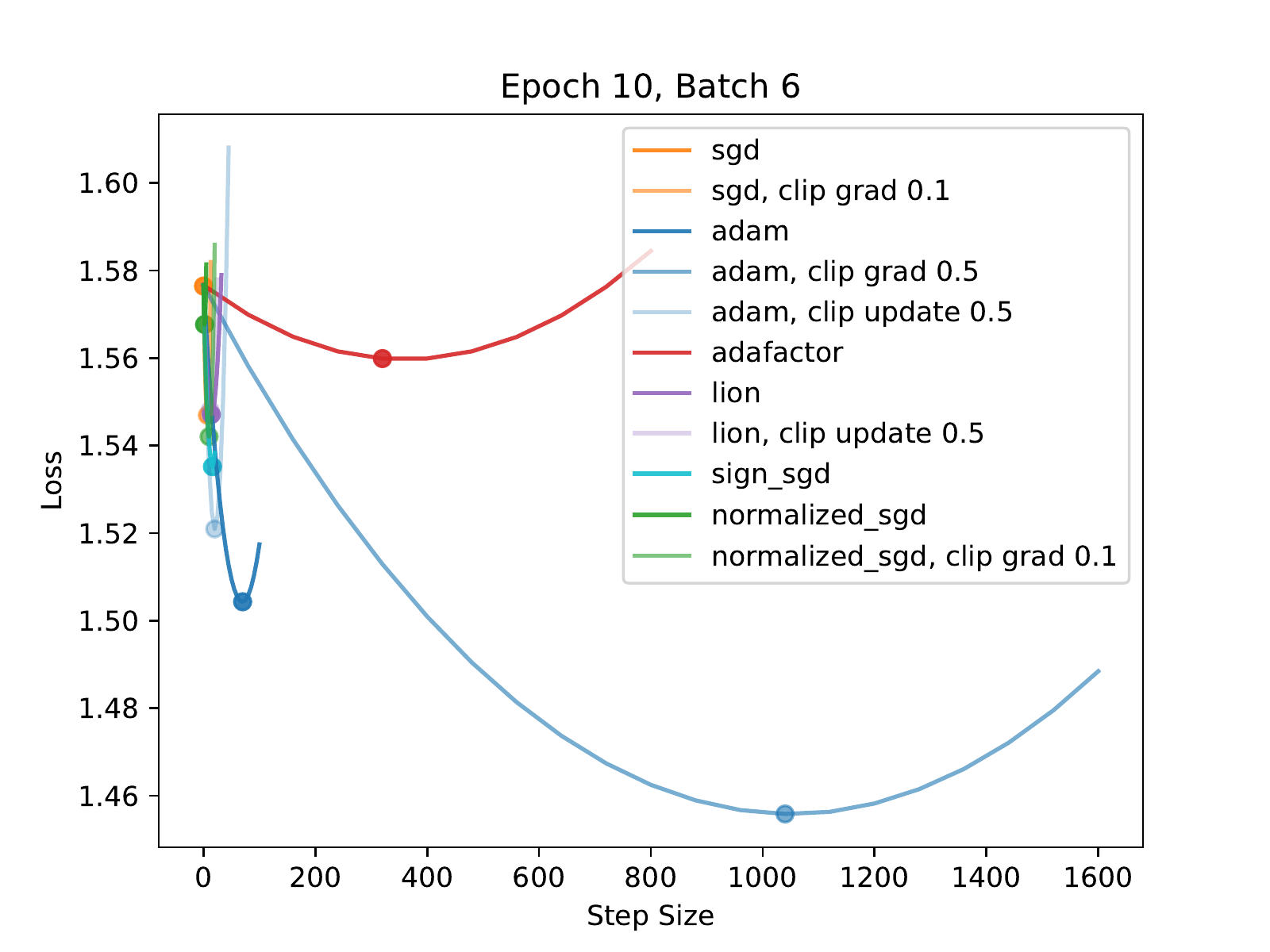}
        \caption{Experiment 1}
    \end{subfigure}
    \begin{subfigure}{\textwidth}\centering
        \includegraphics[width=0.32\textwidth]{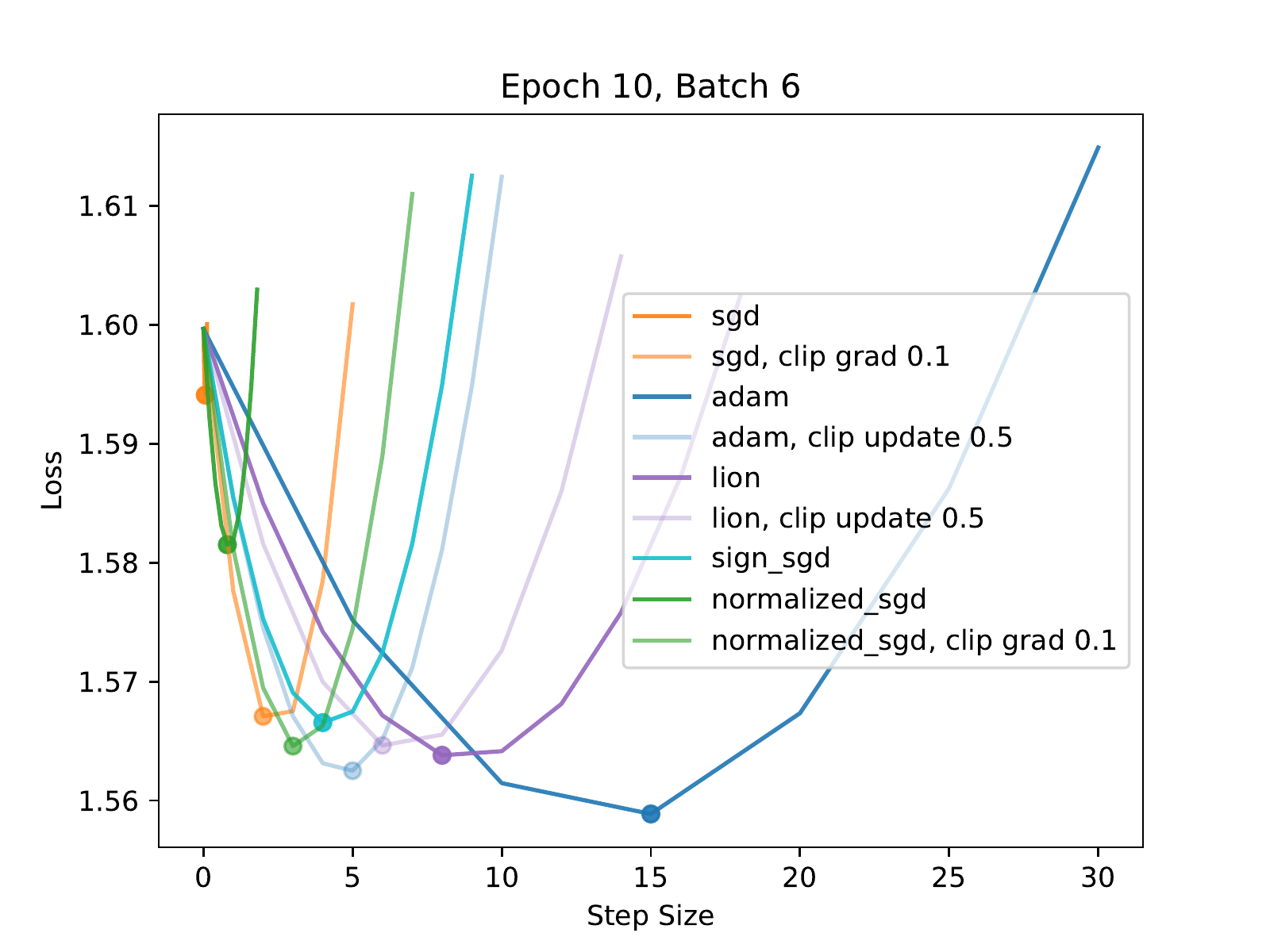}
        \includegraphics[width=0.32\textwidth]{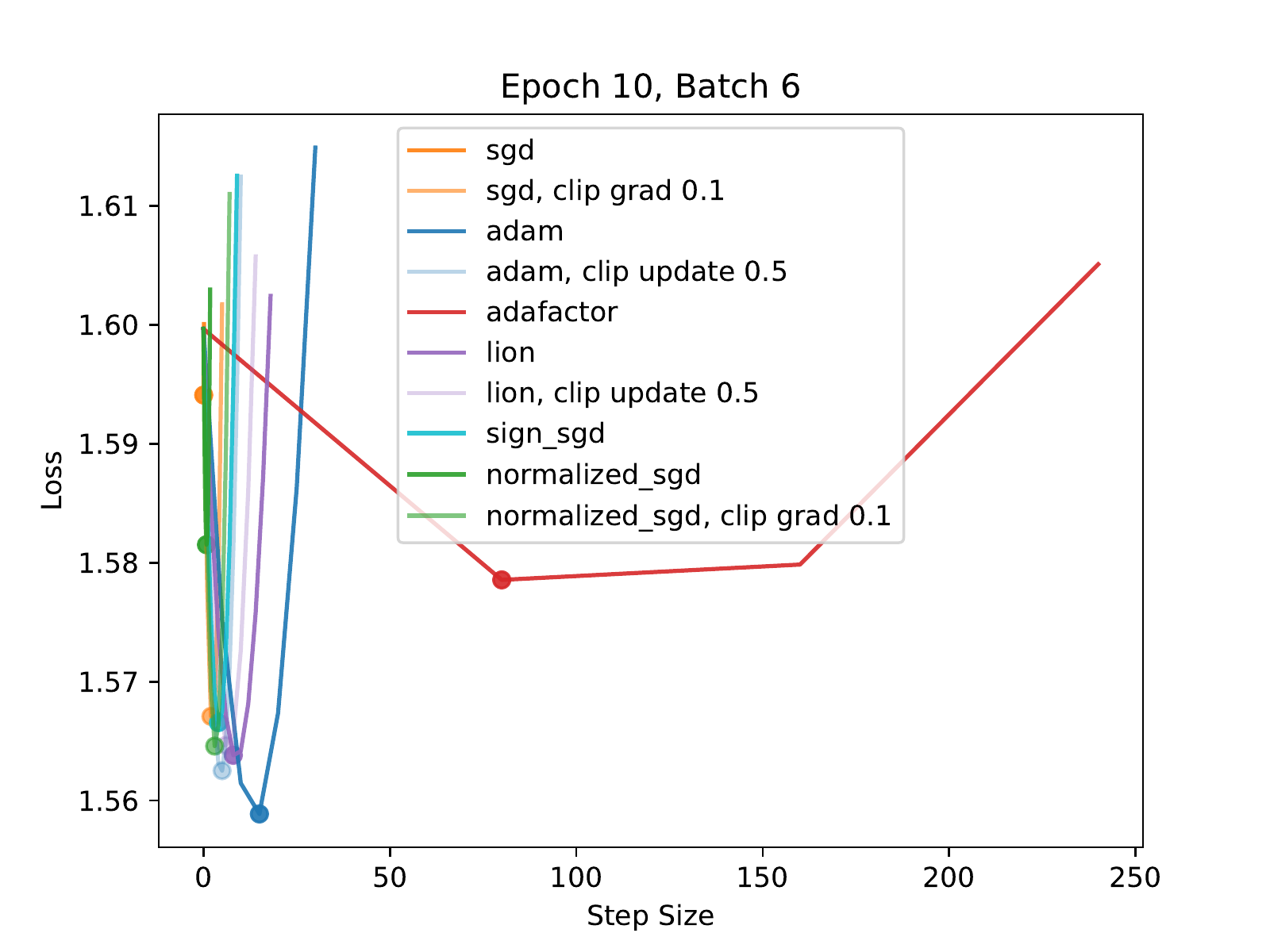}
        \includegraphics[width=0.32\textwidth]{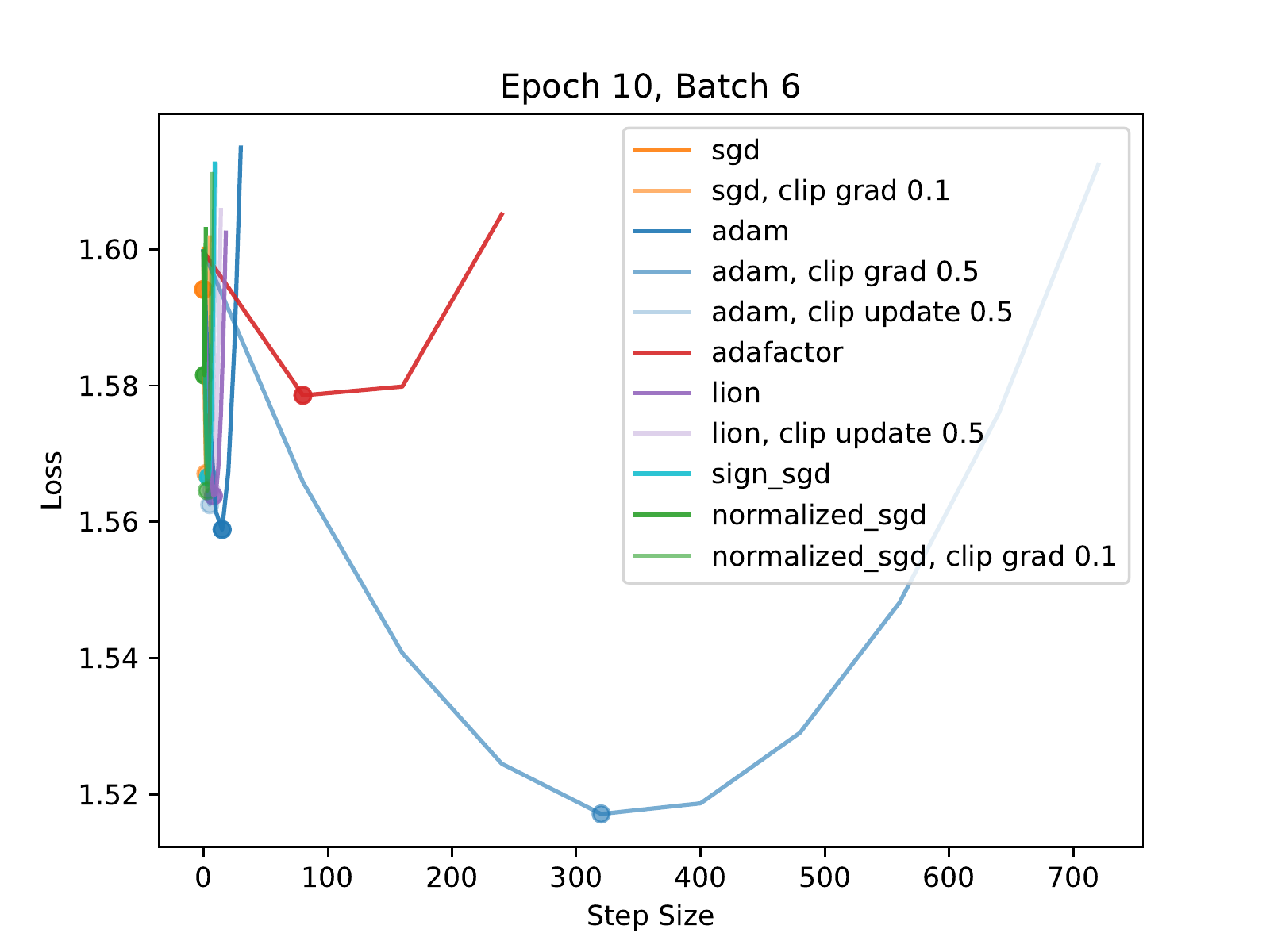}
        \caption{Experiment 2}
    \end{subfigure}
    \begin{subfigure}{\textwidth}\centering
        \includegraphics[width=0.32\textwidth]{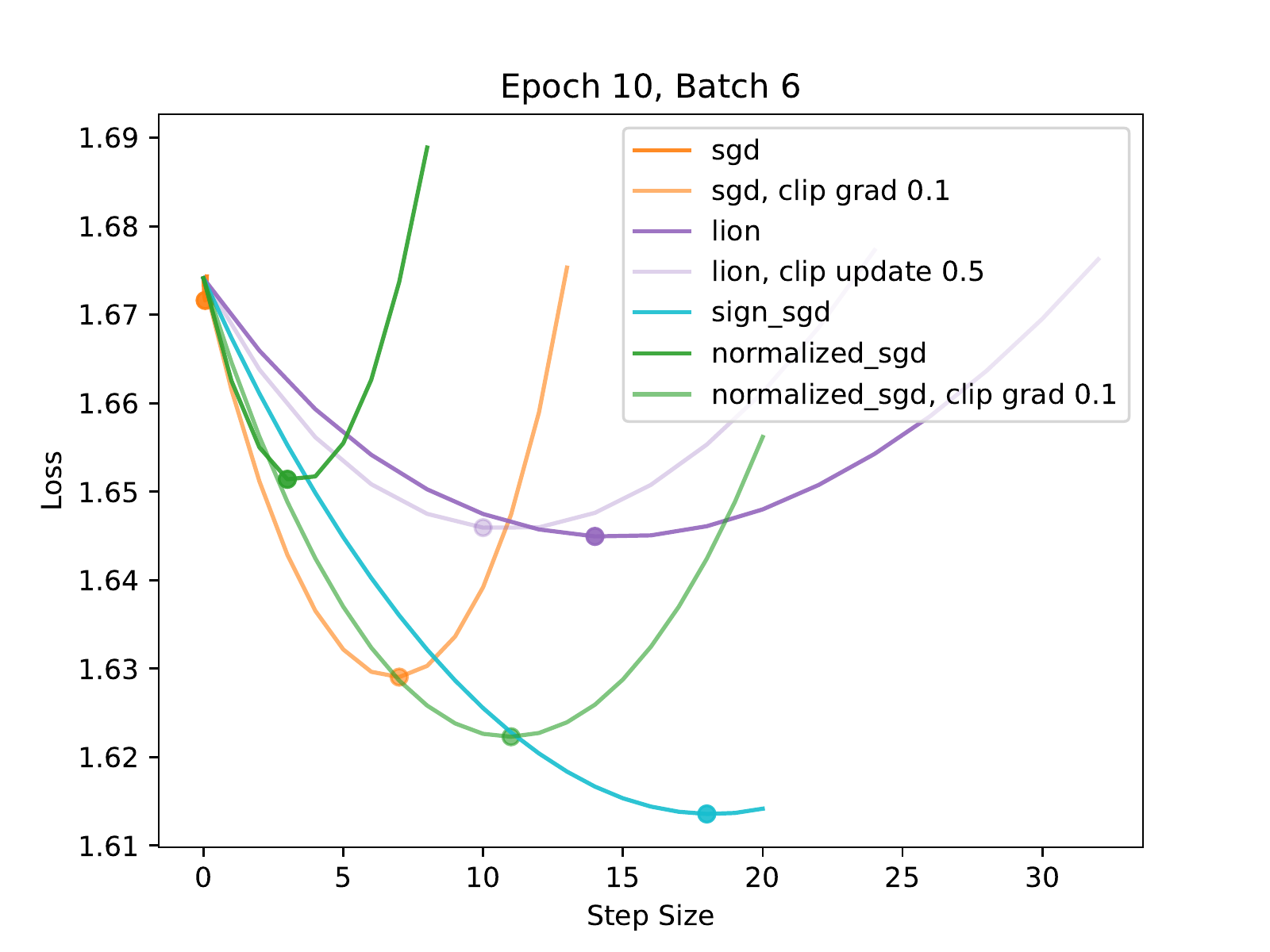}
        \includegraphics[width=0.32\textwidth]{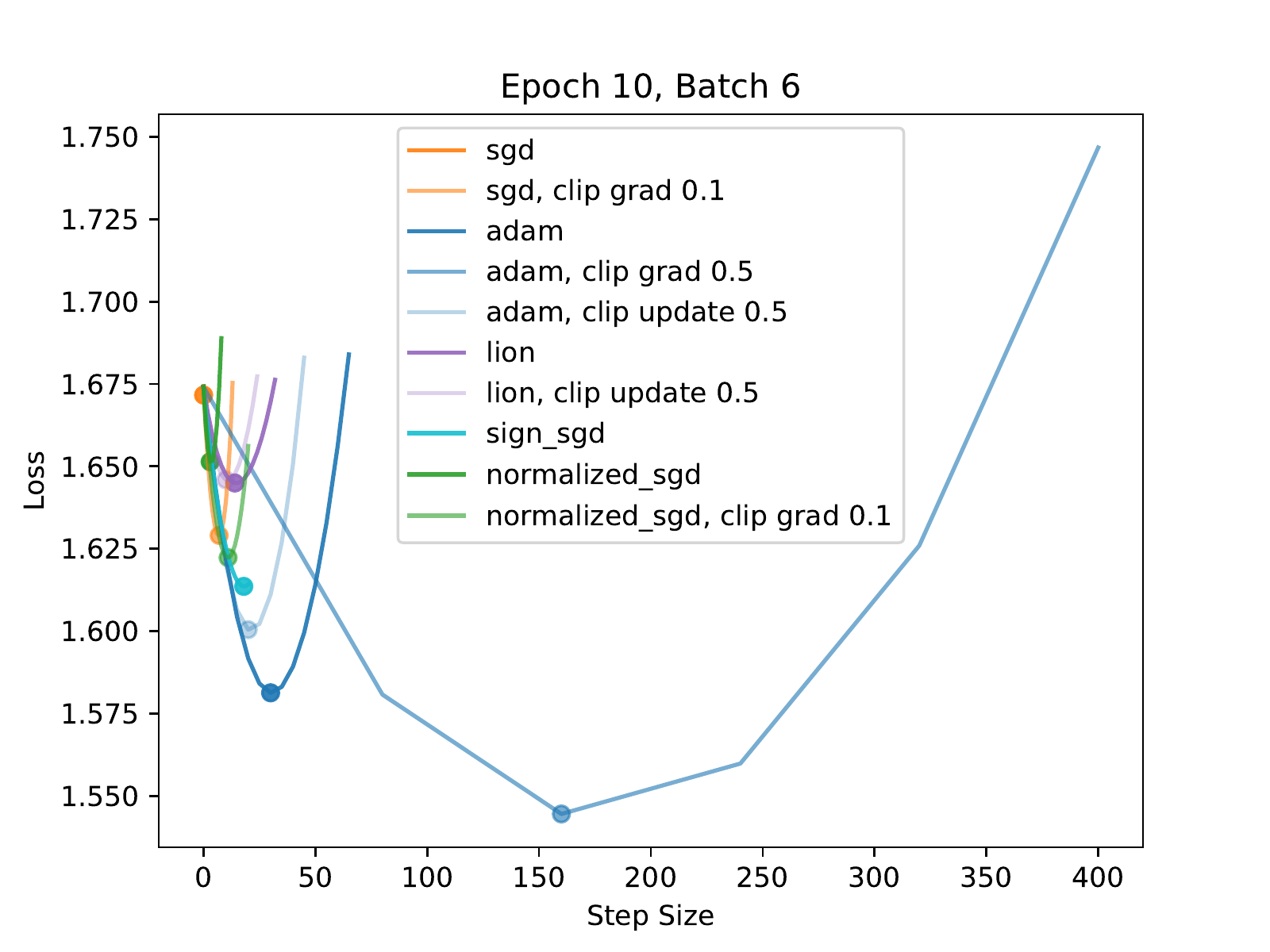}
        \includegraphics[width=0.32\textwidth]{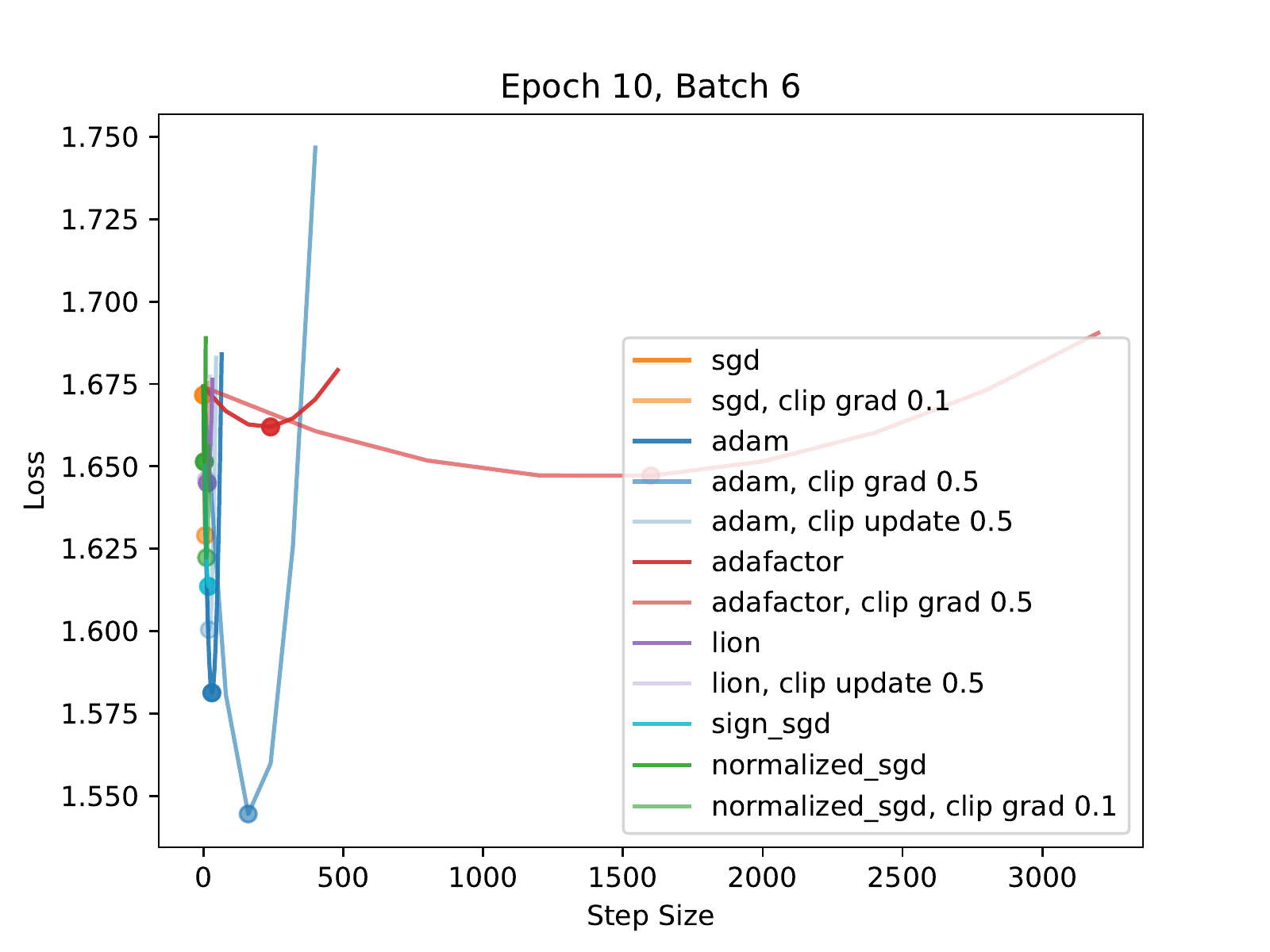}
        \caption{Experiment 3}
    \end{subfigure}
    \caption{Landscape visualization of machine translation in Adam trajectory at Epoch 10.}
\end{figure}

\begin{figure}[h]
    \centering
    \begin{subfigure}{\textwidth}\centering
        \includegraphics[width=0.32\textwidth]{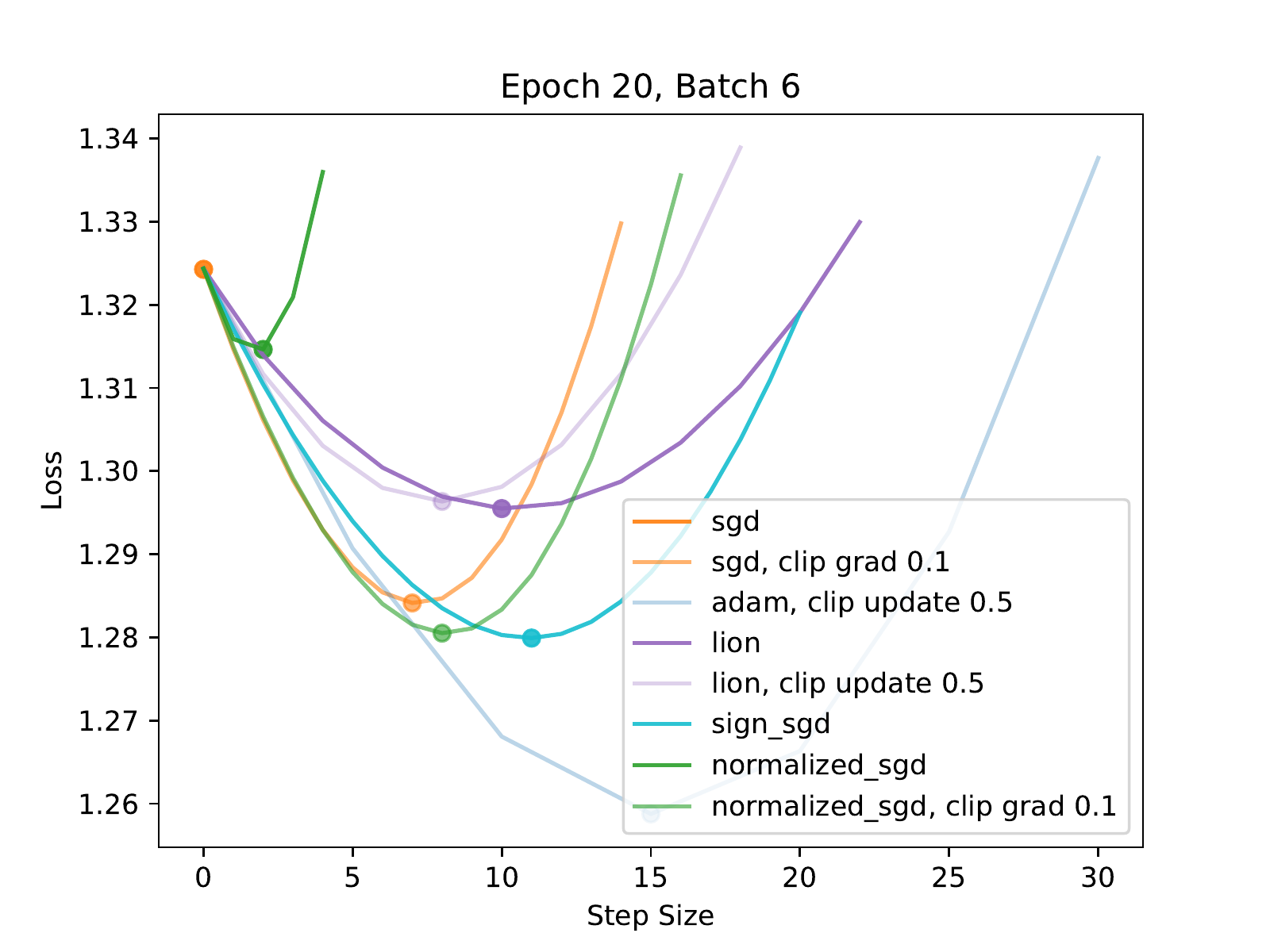}
        \includegraphics[width=0.32\textwidth]{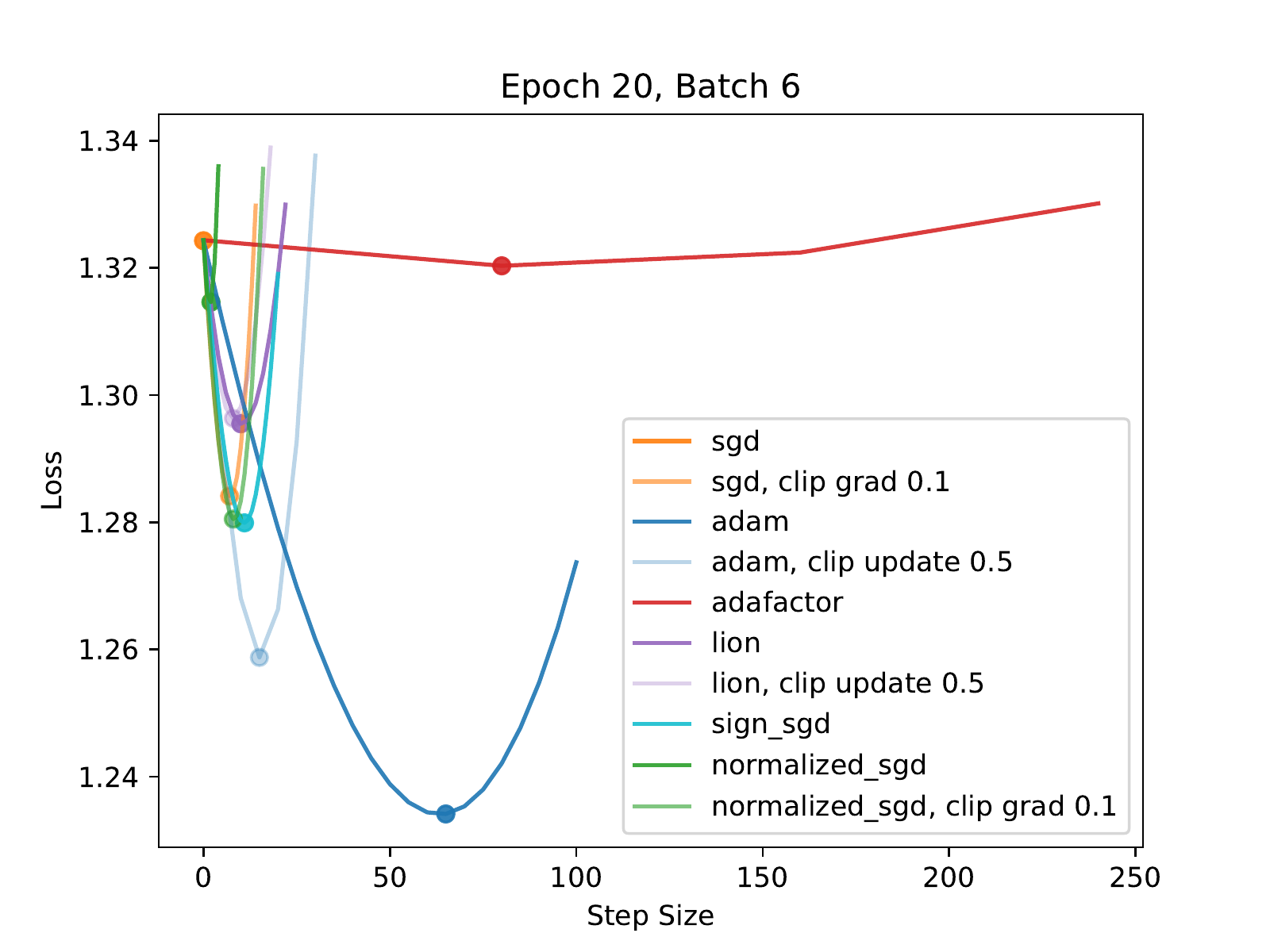}
        \includegraphics[width=0.32\textwidth]{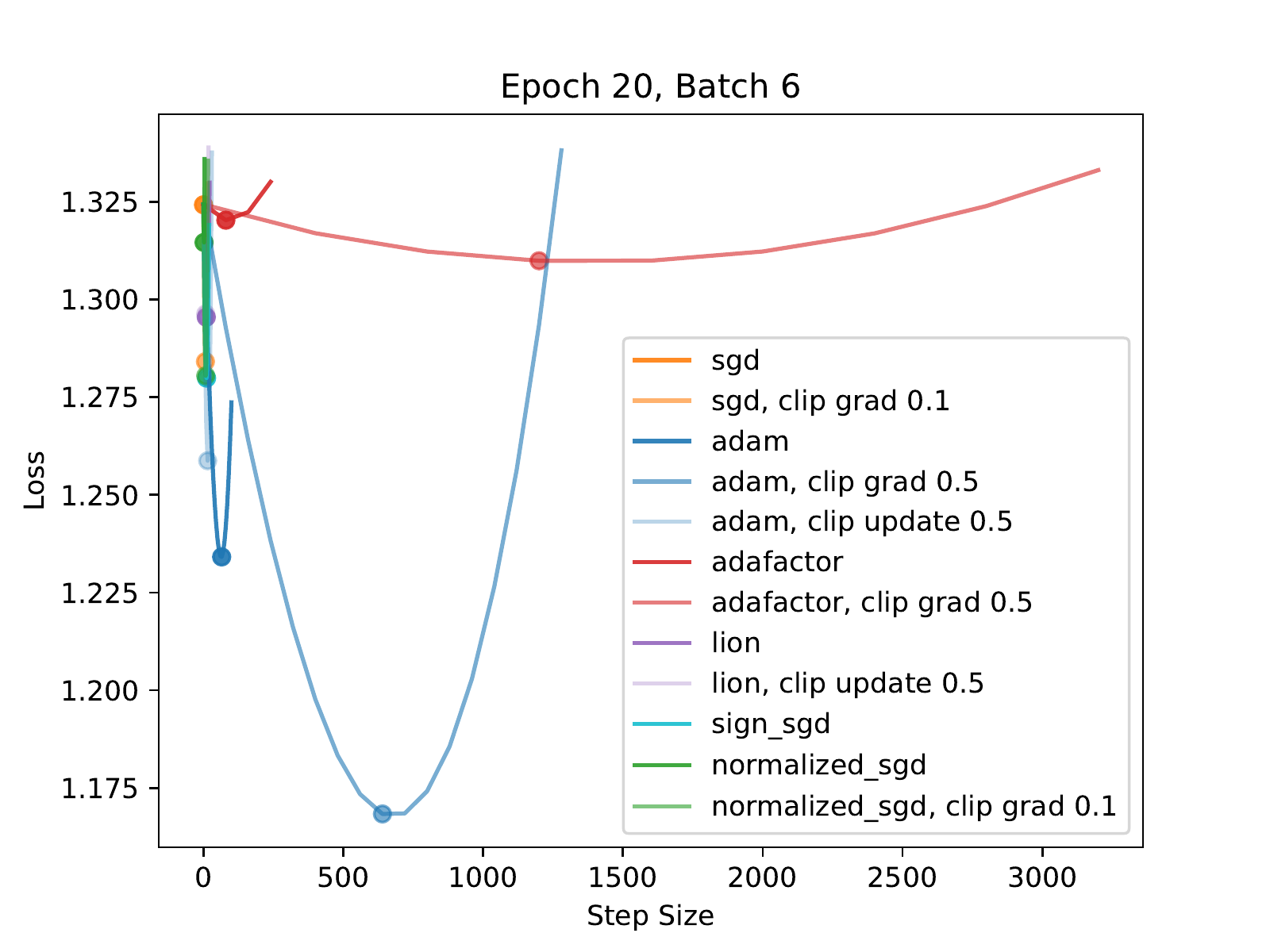}
        \caption{Experiment 1}
    \end{subfigure}
    \begin{subfigure}{\textwidth}\centering
        \includegraphics[width=0.32\textwidth]{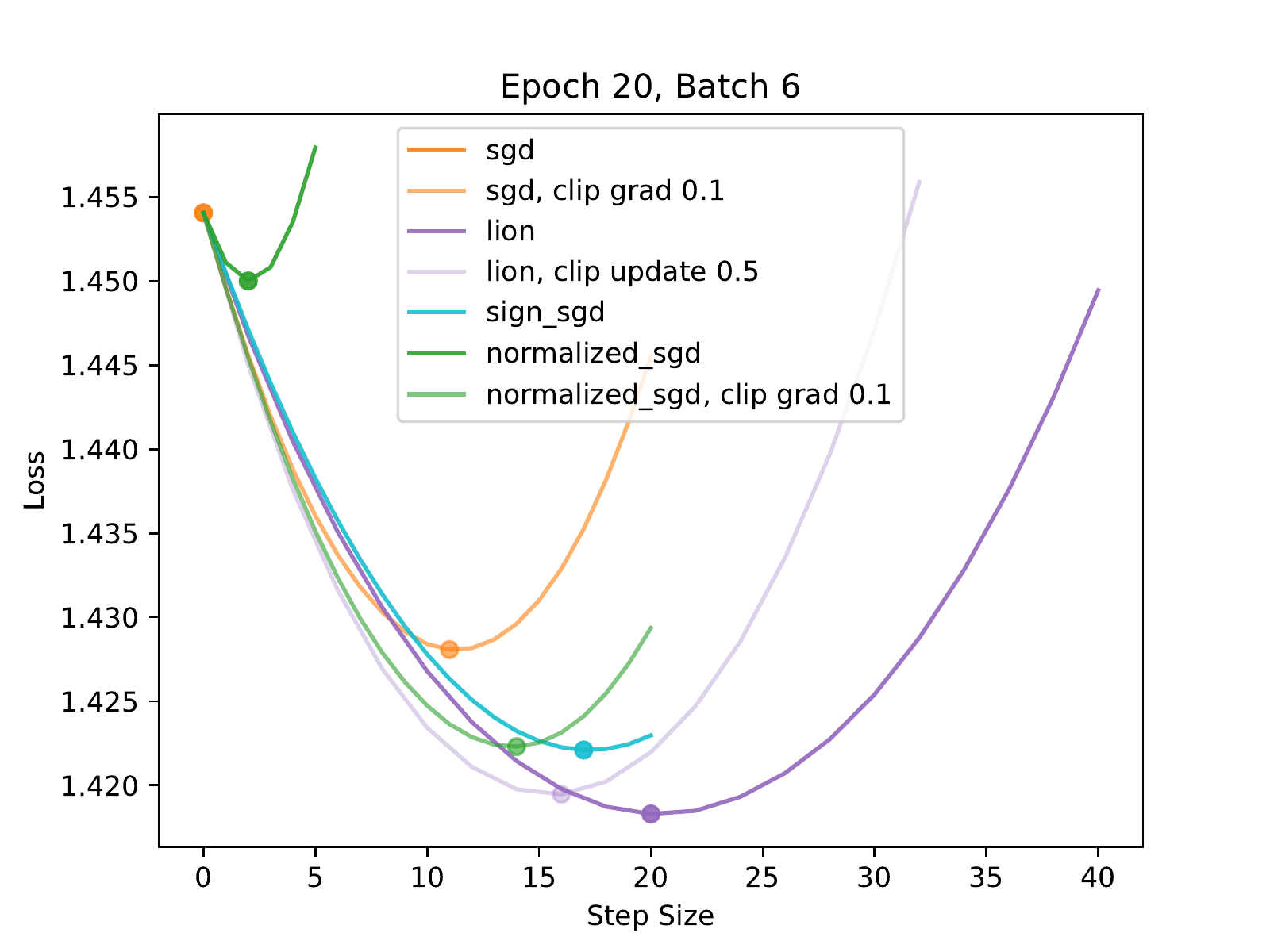}
        \includegraphics[width=0.32\textwidth]{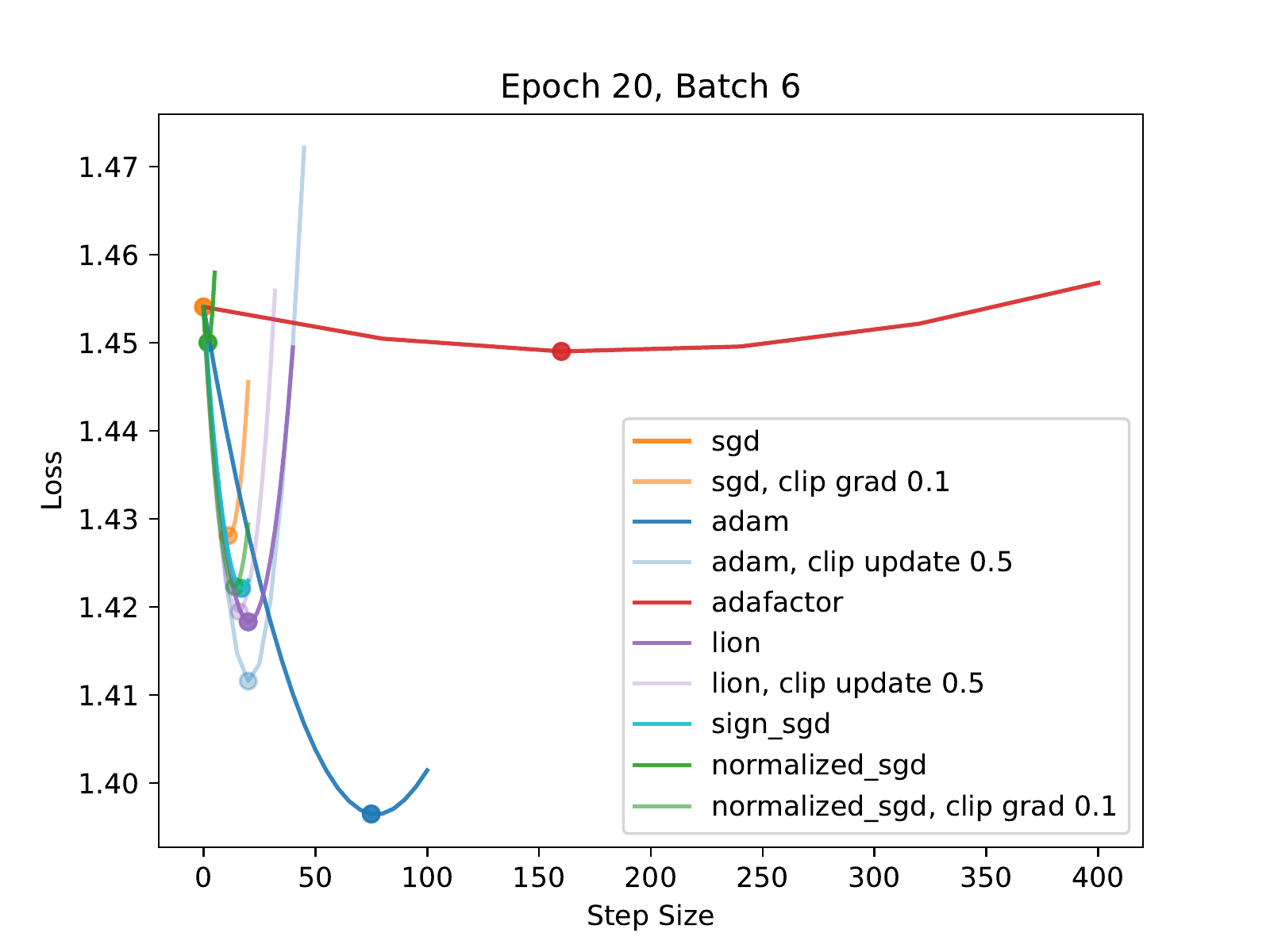}
        \includegraphics[width=0.32\textwidth]{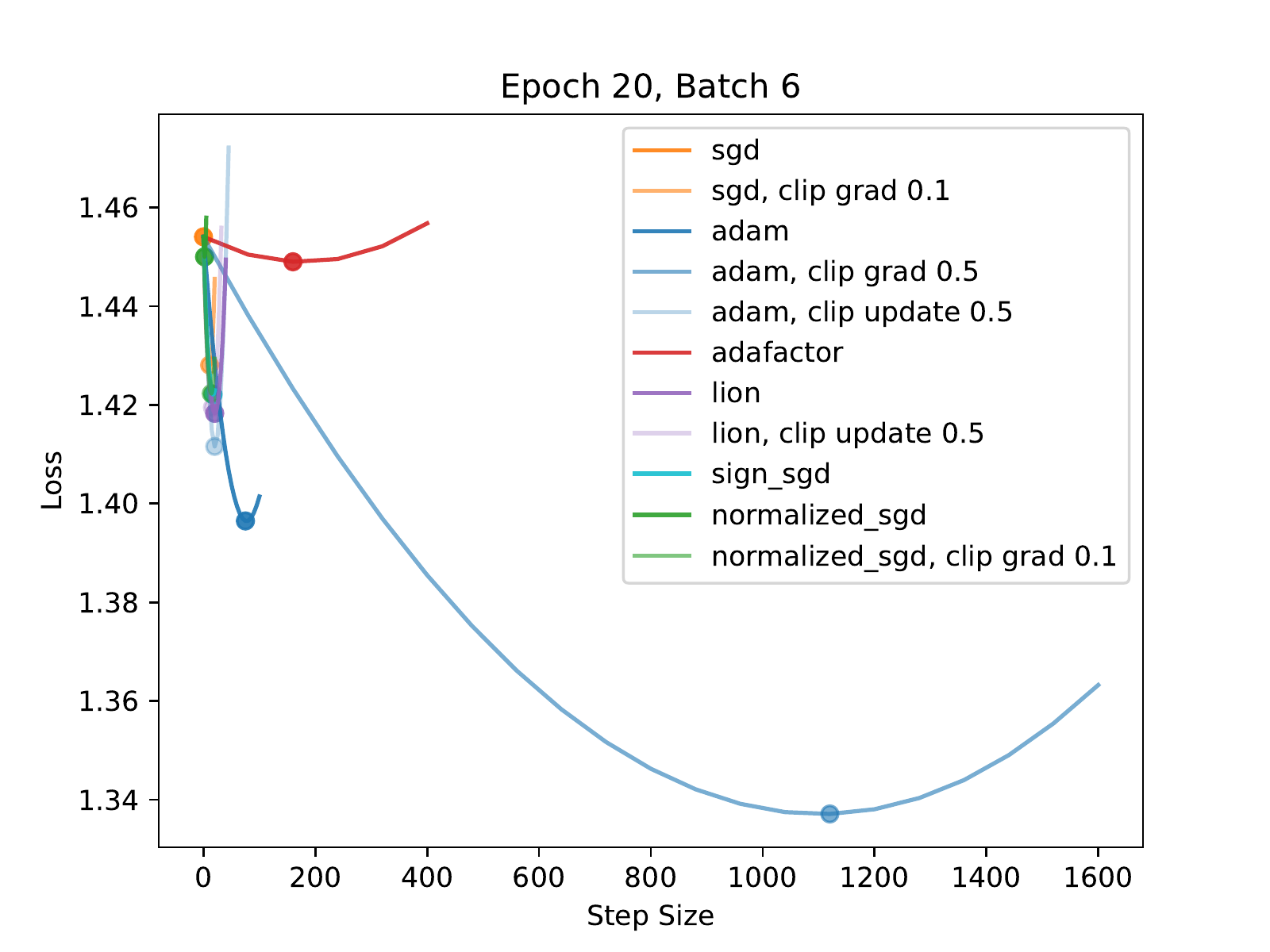}
        \caption{Experiment 2}
    \end{subfigure}
    \begin{subfigure}{\textwidth}\centering
        \includegraphics[width=0.32\textwidth]{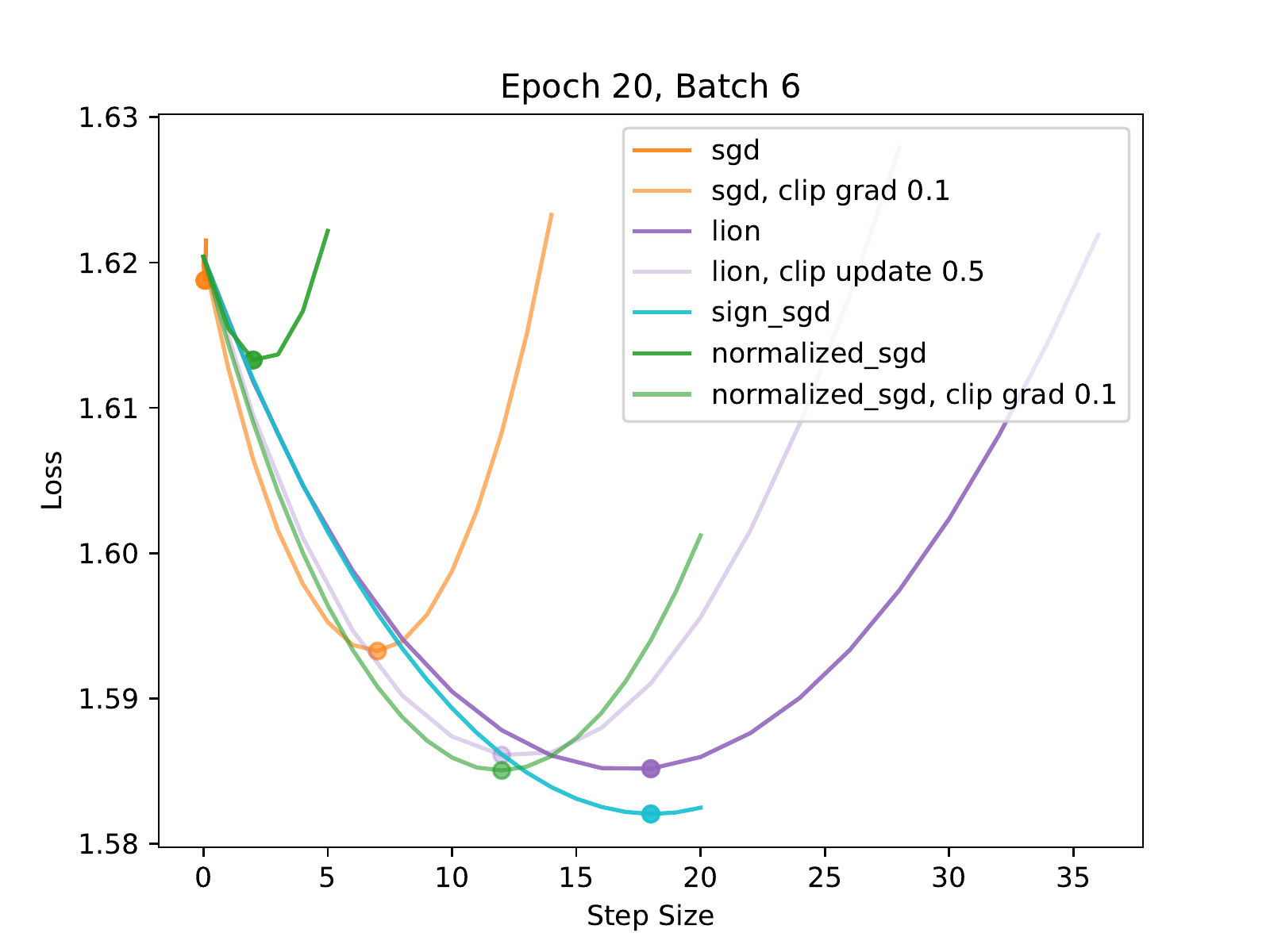}
        \includegraphics[width=0.32\textwidth]{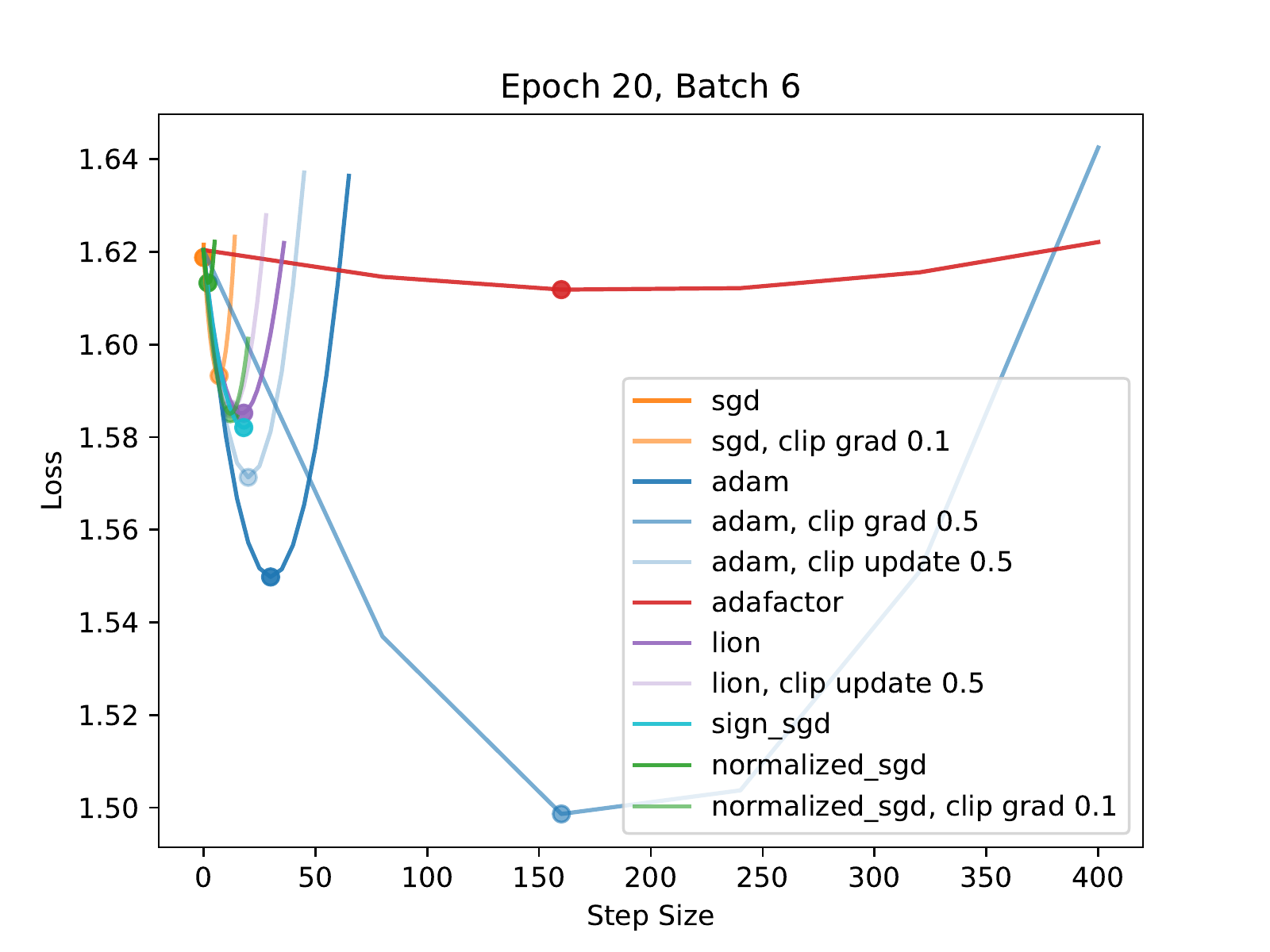}
        \includegraphics[width=0.32\textwidth]{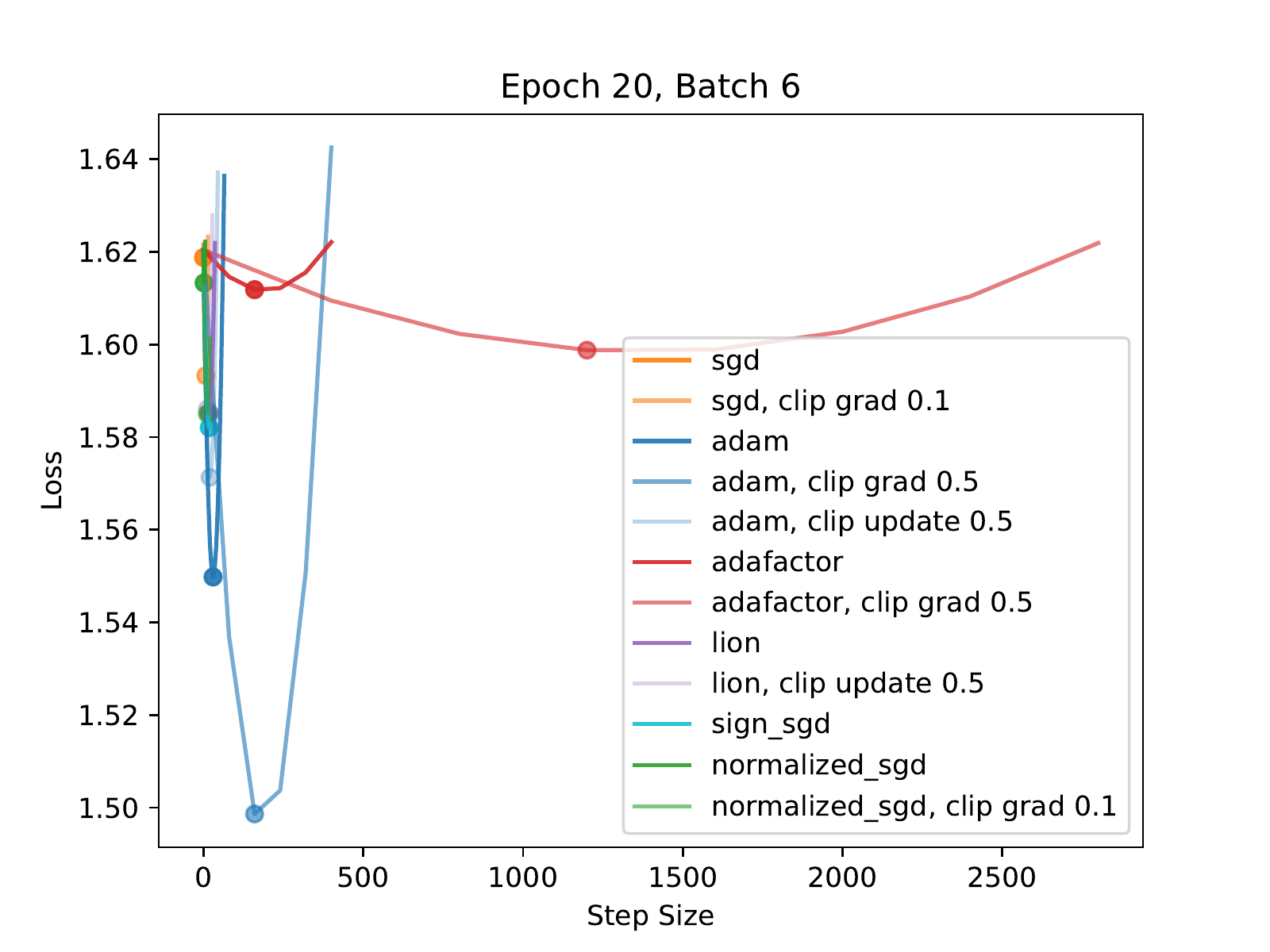}
        \caption{Experiment 3}
    \end{subfigure}
    \caption{Landscape visualization of machine translation in Adam trajectory at Epoch 20.}
\end{figure}

\begin{table}[h]
    \centering
    \begin{tabular}[h]{|l|l|l|l|l|l|}
        \hline
        \textbf{Epoch} & \textbf{Algorithm} & \textbf{Ratio 1} & \textbf{Ratio 2} & \textbf{Ratio 3} & \textbf{Mean}\\\hline
        \multirow{12}{*}{2} & sgd & $1.0$ & $1.0$ & $1.0$ & $1.0$ \\\cline{2-6}
        & sgd, clip grad 0.1 & $0.002899$ & $0.003474$ & $0.003715$ & $0.003363$ \\\cline{2-6}
        & adam & $0.003248$ & $0.003711$ & $0.00396$ & $0.00364$ \\\cline{2-6}
        & adam, clip grad 0.5 & $6.24\times 10^{-5}$ & $5.58\times 10^{-5}$ & $6.71\times 10^{-5}$ & $6.18\times 10^{-5}$ \\\cline{2-6}
        & adam, clip update 0.5 & $0.00284$ & $0.003167$ & $0.003346$ & $0.003118$ \\\cline{2-6}
        & adafactor & $4.1\times 10^{-7}$ & $3.42\times 10^{-7}$ & $5.83\times 10^{-7}$ & $4.45\times 10^{-7}$ \\\cline{2-6}
        & adafactor, clip grad 0.5 & $1.53\times 10^{-8}$ & $1.14\times 10^{-8}$ & $1.79\times 10^{-8}$ & $1.49\times 10^{-8}$ \\\cline{2-6}
        & lion & $0.001195$ & $0.001467$ & $0.001468$ & $0.001377$ \\\cline{2-6}
        & lion, clip update 0.5 & $0.00212$ & $0.002613$ & $0.002611$ & $0.002448$ \\\cline{2-6}
        & sign sgd & $0.000638$ & $0.000762$ & $0.000765$ & $0.000722$ \\\cline{2-6}
        & normalized sgd & $0.01143$ & $0.009482$ & $0.011515$ & $0.010809$ \\\cline{2-6}
        & normalized sgd, clip grad 0.1 & $0.001342$ & $0.001393$ & $0.00158$ & $0.001438$ \\\cline{1-6}
        \multirow{12}{*}{5} & sgd & $1.0$ & $1.0$ & $1.0$ & $1.0$ \\\cline{2-6}
        & sgd, clip grad 0.1 & $0.003316$ & $0.002051$ & $0.002137$ & $0.002501$ \\\cline{2-6}
        & adam & $0.000294$ & $0.000151$ & $0.000345$ & $0.000263$ \\\cline{2-6}
        & adam, clip grad 0.5 & $8.33\times 10^{-7}$ & $5.14\times 10^{-7}$ & $1.05\times 10^{-5}$ & $3.94\times 10^{-6}$ \\\cline{2-6}
        & adam, clip update 0.5 & $0.001268$ & $0.000636$ & $0.000565$ & $0.000823$ \\\cline{2-6}
        & adafactor & $2.5\times 10^{-7}$ & $1.46\times 10^{-7}$ & $1.16\times 10^{-7}$ & $1.7\times 10^{-7}$ \\\cline{2-6}
        & adafactor, clip grad 0.5 & $2.59\times 10^{-9}$ & $1.83\times 10^{-9}$ & $5.99\times 10^{-9}$ & $3.47\times 10^{-9}$ \\\cline{2-6}
        & lion & $0.000495$ & $0.000338$ & $0.000274$ & $0.000369$ \\\cline{2-6}
        & lion, clip update 0.5 & $0.000862$ & $0.000586$ & $0.000483$ & $0.000644$ \\\cline{2-6}
        & sign sgd & $0.000673$ & $0.000408$ & $0.000373$ & $0.000485$ \\\cline{2-6}
        & normalized sgd & $0.00463$ & $0.003971$ & $0.003808$ & $0.004136$ \\\cline{2-6}
        & normalized sgd, clip grad 0.1 & $0.001301$ & $0.000971$ & $0.000925$ & $0.001066$ \\\cline{1-6}
        \multirow{12}{*}{10} & sgd & $1.0$ & $1.0$ & $1.0$ & $1.0$ \\\cline{2-6}
        & sgd, clip grad 0.1 & $0.000693$ & $0.000805$ & $0.00096$ & $0.000819$ \\\cline{2-6}
        & adam & $1.44\times 10^{-5}$ & $1.94\times 10^{-5}$ & $8.62\times 10^{-5}$ & $4.0\times 10^{-5}$ \\\cline{2-6}
        & adam, clip grad 0.5 & $1.04\times 10^{-7}$ & $1.38\times 10^{-7}$ & $3.71\times 10^{-6}$ & $1.32\times 10^{-6}$ \\\cline{2-6}
        & adam, clip update 0.5 & $0.00013$ & $0.000168$ & $0.000148$ & $0.000149$ \\\cline{2-6}
        & adafactor & $1.13\times 10^{-7}$ & $7.67\times 10^{-8}$ & $1.33\times 10^{-7}$ & $1.08\times 10^{-7}$ \\\cline{2-6}
        & adafactor, clip grad 0.5 & $1.69\times 10^{-9}$ & $1.23\times 10^{-9}$ & $6.34\times 10^{-9}$ & $3.09\times 10^{-9}$ \\\cline{2-6}
        & lion & $0.000195$ & $0.000206$ & $0.000185$ & $0.000196$ \\\cline{2-6}
        & lion, clip update 0.5 & $0.000329$ & $0.000349$ & $0.000325$ & $0.000334$ \\\cline{2-6}
        & sign sgd & $0.000167$ & $0.000216$ & $0.00018$ & $0.000188$ \\\cline{2-6}
        & normalized sgd & $0.00202$ & $0.002424$ & $0.001556$ & $0.002$ \\\cline{2-6}
        & normalized sgd, clip grad 0.1 & $0.000385$ & $0.000471$ & $0.000414$ & $0.000423$ \\\cline{1-6}
        \multirow{12}{*}{20} & sgd & $1.0$ & $1.0$ & $1.0$ & $1.0$ \\\cline{2-6}
        & sgd, clip grad 0.1 & $0.00028$ & $0.000492$ & $0.001019$ & $0.000597$ \\\cline{2-6}
        & adam & $7.68\times 10^{-6}$ & $1.62\times 10^{-5}$ & $0.000118$ & $4.73\times 10^{-5}$ \\\cline{2-6}
        & adam, clip grad 0.5 & $8.58\times 10^{-8}$ & $1.64\times 10^{-7}$ & $4.54\times 10^{-6}$ & $1.6\times 10^{-6}$ \\\cline{2-6}
        & adam, clip update 0.5 & $9.61\times 10^{-5}$ & $0.000178$ & $0.00019$ & $0.000155$ \\\cline{2-6}
        & adafactor & $3.77\times 10^{-7}$ & $3.97\times 10^{-7}$ & $2.06\times 10^{-7}$ & $3.27\times 10^{-7}$ \\\cline{2-6}
        & adafactor, clip grad 0.5 & $4.89\times 10^{-9}$ & $5.12\times 10^{-9}$ & $8.16\times 10^{-9}$ & $6.06\times 10^{-9}$ \\\cline{2-6}
        & lion & $0.000222$ & $0.000296$ & $0.000244$ & $0.000254$ \\\cline{2-6}
        & lion, clip update 0.5 & $0.000369$ & $0.000491$ & $0.000427$ & $0.000429$ \\\cline{2-6}
        & sign sgd & $0.00014$ & $0.000223$ & $0.000197$ & $0.000187$ \\\cline{2-6}
        & normalized sgd & $0.002141$ & $0.002281$ & $0.001938$ & $0.00212$ \\\cline{2-6}
        & normalized sgd, clip grad 0.1 & $0.000247$ & $0.000393$ & $0.000405$ & $0.000348$ \\\cline{1-6}
    \end{tabular}
    \caption{Ratio of directional sharpness of optimization algorithms with respect to SGD on the machine translation task in Adam trajectory in 3 experiments.}
\end{table}

\begin{figure}[h]
    \centering
    \begin{subfigure}{0.45\textwidth}
        \centering
        \includegraphics[width=\textwidth]{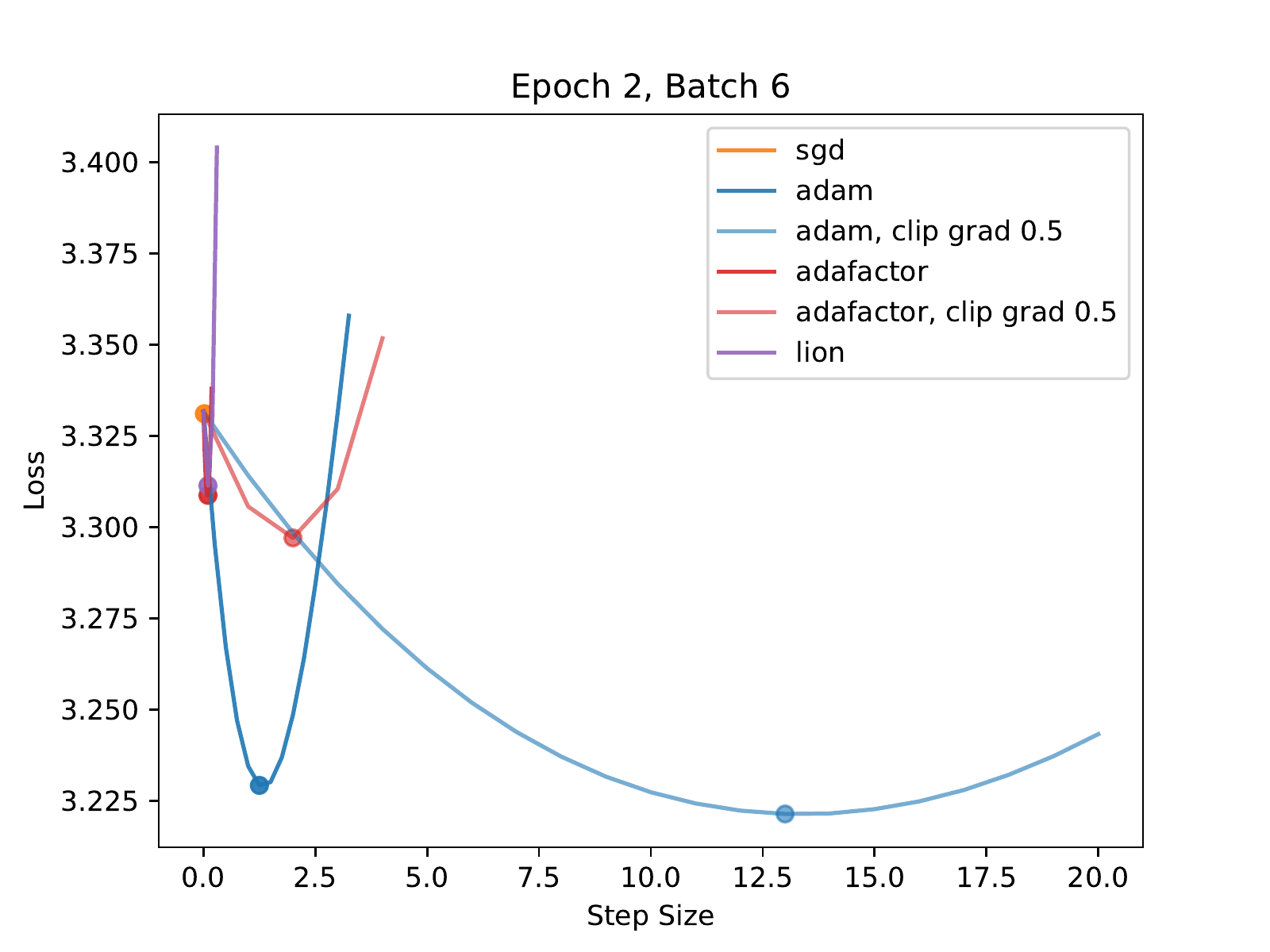}
        \caption{Epoch 2}
    \end{subfigure}
    \begin{subfigure}{0.45\textwidth}
        \centering
        \includegraphics[width=\textwidth]{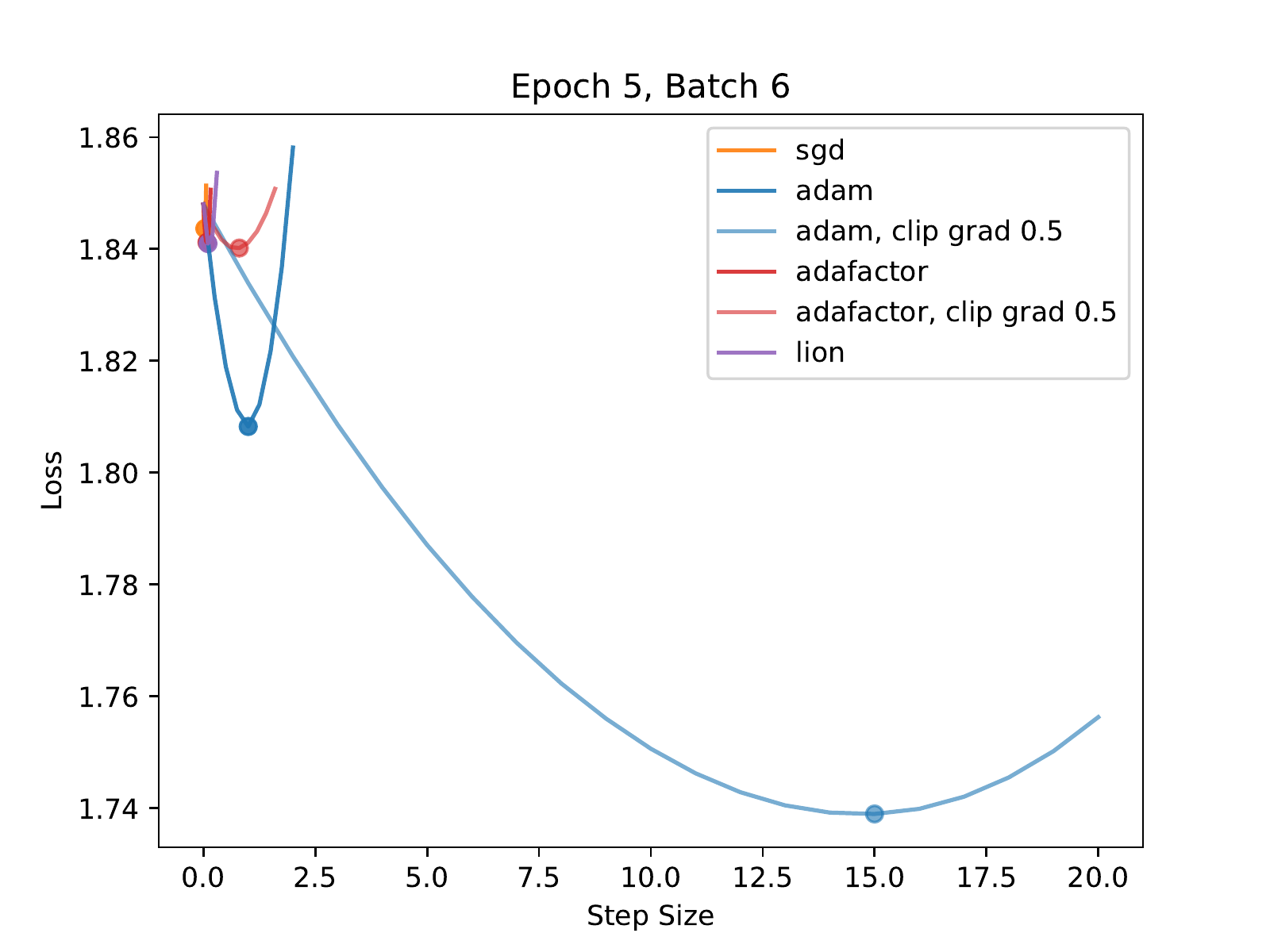}
        \caption{Epoch 5}
    \end{subfigure}
    \begin{subfigure}{0.45\textwidth}
        \centering
        \includegraphics[width=\textwidth]{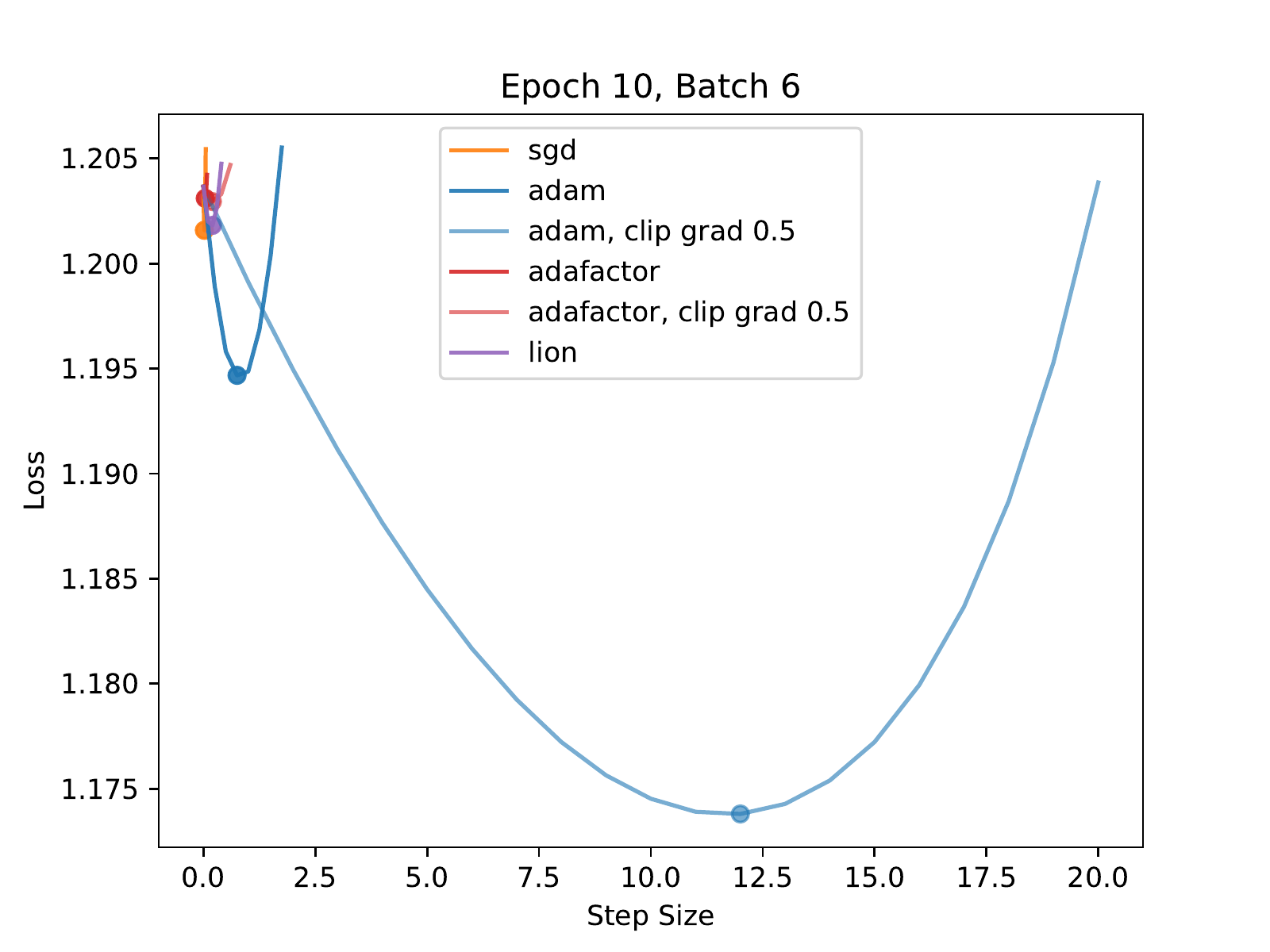}
        \caption{Epoch 10}
    \end{subfigure}
    \begin{subfigure}{0.45\textwidth}
        \centering
        \includegraphics[width=\textwidth]{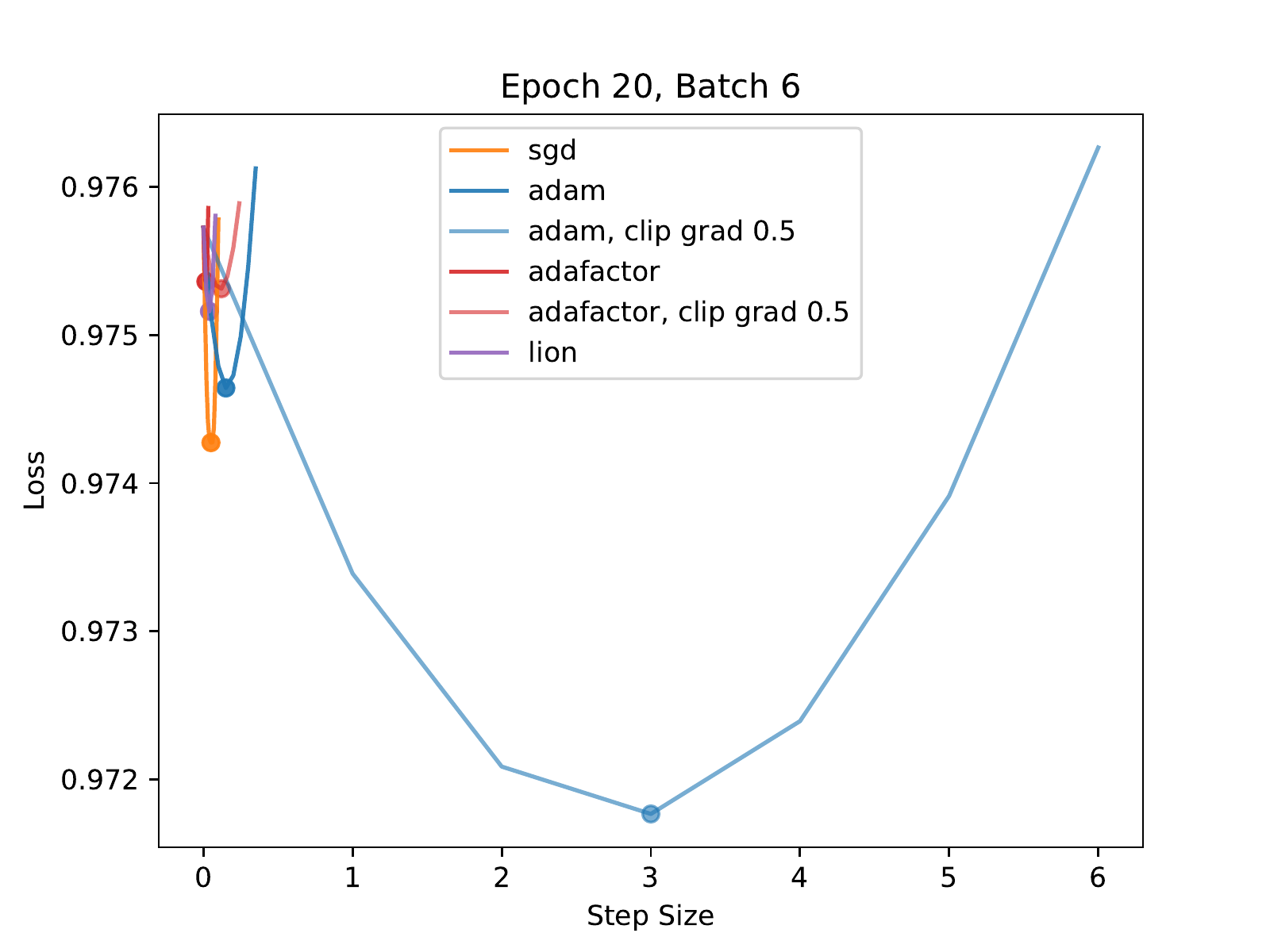}
        \caption{Epoch 20}
    \end{subfigure}
    \caption{Landscape visualization of autoregressive in Adam trajectory.}
\end{figure}

\clearpage
\begin{table}[h]
    \centering
    \begin{tabular}[h]{|l|l|l|}
        \hline
        \textbf{Epoch} & \textbf{Algorithm} & \textbf{Ratio}\\\hline
        \multirow{6}{*}{2} & sgd & $1.0$ \\\cline{2-3}
        & adam & $-0.000237$ \\\cline{2-3}
        & adam, clip grad 0.5 & $-7.8\times 10^{-6}$ \\\cline{2-3}
        & adafactor & $0.002897$ \\\cline{2-3}
        & adafactor, clip grad 0.5 & $4.66\times 10^{-7}$ \\\cline{2-3}
        & lion & $-0.003176$ \\\cline{1-3}
        \multirow{6}{*}{5} & sgd & $1.0$ \\\cline{2-3}
        & adam & $0.000166$ \\\cline{2-3}
        & adam, clip grad 0.5 & $-2.97\times 10^{-8}$ \\\cline{2-3}
        & adafactor & $-8.17\times 10^{-5}$ \\\cline{2-3}
        & adafactor, clip grad 0.5 & $6.31\times 10^{-7}$ \\\cline{2-3}
        & lion & $1.56\times 10^{-5}$ \\\cline{1-3}
        \multirow{6}{*}{10} & sgd & $1.0$ \\\cline{2-3}
        & adam & $-0.041286$ \\\cline{2-3}
        & adam, clip grad 0.5 & $-1.3\times 10^{-7}$ \\\cline{2-3}
        & adafactor & $-0.002071$ \\\cline{2-3}
        & adafactor, clip grad 0.5 & $2.04\times 10^{-6}$ \\\cline{2-3}
        & lion & $-0.001063$ \\\cline{1-3}
        \multirow{6}{*}{20} & sgd & $1.0$ \\\cline{2-3}
        & adam & $-0.000795$ \\\cline{2-3}
        & adam, clip grad 0.5 & $3.02\times 10^{-6}$ \\\cline{2-3}
        & adafactor & $0.010273$ \\\cline{2-3}
        & adafactor, clip grad 0.5 & $0.000208$ \\\cline{2-3}
        & lion & $0.000946$ \\\cline{1-3}
    \end{tabular}
    \caption{Ratio of directional sharpness of optimization algorithms with respect to SGD on the autoregressive task in Adam trajectory.}
\end{table}

\clearpage
\subsection{Discussion\label{sec:discussion}}
As we can observe, our observation is very coherent across different tasks, model architectures, iterations, and local geometry. The directional sharpness is relatively stable for the same task across iterations, and coordinate-wise clipping always improve the sharpness of the direction and find a better direction to optimize.

\textbf{Trade-off Between Directional Sharpness and Gradient Correlation.}
While we want the directional sharpness of our optimization algorithm to be small in order to decrease loss faster, having as small sharpness as possible does not necessarily lead to fast loss decrement.
Adafactor almost always has the lowest directional sharpness across all tasks, iterations, and local geometry, but Adafactor does not always find a good direction to optimize. In many cases, the loss does not decrease significantly even for the optimal step size, and the direction can be even worse than SGD.
This shows that merely minimizing the directional sharpness is not enough for an optimization algorithm to work well.
As discussed in~\Cref{sec:clip}, gradient correlation is also important in the convergence of optimization algorithms.
However, we can conclude that high sharpness will lead to bad performance, as demonstrated by the performance of SGD, even if SGD has good gradient correlation.

\textbf{Effect of Trajectory.}
It is well known that different optimization algorithms can follow different trajectory and converge to different in deep learning.
\cite{jiang2022does} also point out the impact of local geometry of adaptive algorithms such that they implicitly select the trajectory with good smoothness.
For SGD on machine translation, the landscape in the direction found by different optimization algorithms were similar in SGD geometry at all epochs.
For Adam, the geometry is similar to SGD in the first few iterations, but changes significantly after more itereations.
Landscape visualizations show that Adafactor performs well in SGD trajectory but not Adam trajectory on the machine translation task.
This shows that different optimization algorithms has local geometry with different properties. The effect of trajectory is therefore an interesting problem to study.
However, we point out that almost in all cases, Adam has good performance and significantly outperforms SGD, so trajectory is not necessarily related to the explanation for Adam's excellent performance in practice.

\clearpage
\section{Experiment on the Smoothness of Hessian\label{sec:hessian}}
\subsection{Experiment Setup and Hyperparameters}
We use the standard BERT model~\cite{devlin2019bert} for binary text classification task trained with Adam on the IMDb dataset.
The reason we use binary classification is that the output dimension of binary classification is 2, so the Gauss-Newton Hessian approximation of a batch of size $k$ has rank at most $2k$, so we can have a larger batch size as compared to the cases where the logits have high dimension.
Similarly, due to space constraints, we cannot use a large batch.

We use the PyTorch framework for all of our experiments. We use the Huggingface implementation of BERT. We use pretrained \texttt{BertForSequentialClassification} model. We use learning rate $10^{-4}$ for both experiments and batch size $25$.

\subsection{Approximation of Hessian with Gauss-Newton Matrix}
We use the Gauss-Newton matrix~\cite{martens2016second,bottou2018optimization} to approximate the Hessian of the neural network. Let $h : \R^d \times \R^p \to \R^k$ be a neural network that maps input data and parameters to a prediction, $\ell : \R^k \to \R$ be the cross entropy loss function, and $f_i(\theta) = \ell(h(x_i; \theta), y_i)$, then the Gauss-Newton Hessian approximation with respect to one data point is
\[
    \nabla^2 f_i(\theta) \approx \nabla_\theta h(x_i; \theta) \nabla^2 \ell(z_i; y_i) \nabla_\theta h(x_i; \theta)^\top.
\]
where $\nabla_\theta h(x_i; \theta) \in \R^{k \times p}$ is the Jacobian of the logits, and $\nabla^2 \ell(z_i; y_i) = \mathrm{diag}(p_i) - p_ip_i^\top$, where $p_i = \frac{\exp(z_i)}{\sum_j\exp([z_i]_j)}$. See Appendix B of~\cite{cohen2021gradient} for additional details. In the case of $k = 2$ for our experiment, we can then compute the exact formula for $\nabla^2 \ell(z_i; y_i)$ as
\begin{align*}
    \nabla^2 \ell(z_i; y_i)
    &= \frac{\exp([z_i]_1)\exp([z_i]_2)}{(\exp([z_i]_1) + \exp([z_i]_2))^2} \begin{bmatrix}
        1 & -1\\
        -1 & 1
    \end{bmatrix}
\end{align*}
so it is positive semidefinite and one square root of it is
\[
    (\nabla^2 \ell(z_i; y_i))^{1/2} = \frac{\sqrt{\exp([z_i]_1) \exp([z_i]_2)}}{\exp([z_i]_1) + \exp([z_i]_2)}\begin{bmatrix}
        -1 \\ 1
    \end{bmatrix}.
\]
Hence let $g_i = \nabla_\theta h(x_i; \theta) (\nabla^2 \ell(z_i; y_i))^{1/2}$, then $\nabla^2 f_i(\theta) \approx g_i g_i^\top$.
We notice that this is essentially an outer product between the gradient of the neural network with respect to its parameters and the gradient itself, so this matrix has rank 1. Then, for any batch $S \subseteq [n]$, the Hessian is
\[
    \nabla^2 f_{S}(\theta) \approx \frac{1}{|S|}\sum_{i \in S} g_i g_i^\top =: \frac{1}{|S|} G_S G_S^\top.
\]
Then, by $\|G_S G_S^\top\|_2 = \|G_S\|_2^2$, it suffices to compute the spectral norm of a $p \times |S|$ matrix.

\clearpage
\subsection{Results}
The experimental result is shown in \Cref{tab:rs}. We remove all the coordinates that the row norm is at least 4 times the mean of the row norms. This removes $1\%$ to $2.5\%$ of the coordinates, and as a consequence the smoothness of the function is 2 to 3 times better compared to the smoothness of the full function, and this is the case for all batches and epochs. This shows that our definition of robust smoothness is reasonable, that it is indeed possible to optimize most of the coordinates under the robust smoothness setting.
\begin{table}[h]
    \centering
    \begin{tabular}{|l|l|l|l|l|}
        \hline
        \textbf{Epoch} & $L$ & $\ell$ & $\frac{L}{\ell}$ & $\varepsilon$ \\
        \hline
        0 & $3582.06$ & $1229.51$ & $2.91$ & $0.0159$ \\
        1 & $2892.63$ & $899.70$ & $3.21$ & $0.0114$ \\
        2 & $10142.38$ & $4245.65$ & $2.39$ & $0.0235$ \\
        7 & $15094.21$ & $7118.42$ & $2.12$ & $0.0183$\\
        8 & $27881.08$ & $12477.13$ & $2.23$ & $0.0315$ \\
        9 & $8046.26$ & $3471.40$ & $2.32$ & $0.0226$ \\
        \hline
    \end{tabular}
    \caption{Result for smoothness experiment, where $L$ is the spectral norm of the full Hessian, $\varepsilon$ is the fraction removed, and $\ell$ is the spectral norm of the remaining Hessian.}
    \label{tab:rs}
\end{table}

\clearpage
\section{Directional Sharpness of ResNet\label{sec:resnet}}
We do an additional simple experiment with ResNet~\cite{he2016deep} on the CIFAR-10 dataset~\cite{krizhevsky2009learning} that shows the properties of the directional sharpness that we discovered are related to the transformer architecture.
We use the ResNet-152 architecture with batch sizes of 1000.
We compute the directional sharpness in the same setting as~\Cref{sec:exp_appendix}.
The results are shown in~\Cref{tab:resnet}.
As we can see, the directional sharpness of adaptive algorithms can be much worse than SGD.
This shows that the property we discovered is related to the transformer architecture, and does not hold for ResNet.

\begin{table}[h]
    \centering
    \begin{tabular}{|l|l|l|}
        \hline
        \textbf{Epoch} & \textbf{Algorithm} & \textbf{Ratio}\\\hline
        \multirow{10}{*}{2} & sgd & $1.0$ \\\cline{2-3}
        & sgd, clip grad 0.1 & $0.208685$ \\\cline{2-3}
        & adam & $4.812922$ \\\cline{2-3}
        & adam, clip grad 0.1 & $1.215817$ \\\cline{2-3}
        & adafactor & $18.85215$ \\\cline{2-3}
        & adafactor, clip grad 0.1 & $5.098444$ \\\cline{2-3}
        & lion & $2.106887$ \\\cline{2-3}
        & sign sgd & $0.605515$ \\\cline{2-3}
        & normalized sgd & $1.472489$ \\\cline{2-3}
        & normalized sgd, clip grad 0.1 & $0.266836$ \\\cline{1-3}
        \multirow{10}{*}{5} & sgd & $1.0$ \\\cline{2-3}
        & sgd, clip grad 0.1 & $0.097681$ \\\cline{2-3}
        & adam & $1.181251$ \\\cline{2-3}
        & adam, clip grad 0.1 & $0.30124$ \\\cline{2-3}
        & adafactor & $1.919904$ \\\cline{2-3}
        & adafactor, clip grad 0.1 & $0.43885$ \\\cline{2-3}
        & lion & $0.089412$ \\\cline{2-3}
        & sign sgd & $0.133457$ \\\cline{2-3}
        & normalized sgd & $1.323987$ \\\cline{2-3}
        & normalized sgd, clip grad 0.1 & $0.172629$ \\\cline{1-3}
        \multirow{10}{*}{10} & sgd & $1.0$ \\\cline{2-3}
        & sgd, clip grad 0.1 & $0.283633$ \\\cline{2-3}
        & adam & $0.377285$ \\\cline{2-3}
        & adam, clip grad 0.1 & $0.109241$ \\\cline{2-3}
        & adafactor & $0.981839$ \\\cline{2-3}
        & adafactor, clip grad 0.1 & $0.283508$ \\\cline{2-3}
        & lion & $0.069054$ \\\cline{2-3}
        & sign sgd & $0.078127$ \\\cline{2-3}
        & normalized sgd & $0.755517$ \\\cline{2-3}
        & normalized sgd, clip grad 0.1 & $0.317746$ \\\cline{1-3}
    \end{tabular}
    \caption{Directional sharpness of ResNet on the image classification task in SGD trajectory. The directional sharpness of adaptive algorithms can be higher than SGD.}
    \label{tab:resnet}
\end{table}

\end{document}